\documentclass{article}

\usepackage{microtype}
\usepackage{graphicx}
\usepackage{subcaption}
\usepackage{booktabs} 
\usepackage{setspace}
\usepackage{hyperref}
\usepackage{enumitem}\setlist[enumerate]{itemsep=-2pt, topsep=0pt}

\usepackage[preprint]{icml2026}

\usepackage{amsmath}
\usepackage{amssymb}
\usepackage{mathtools}
\usepackage{amsthm}
\usepackage{bbm}
\usepackage[mathscr]{euscript}

\usepackage[capitalize,noabbrev]{cleveref}

\usepackage{booktabs}
\usepackage{longtable}

\theoremstyle{plain}
\newtheorem{theorem}{Theorem}[section]
\newtheorem{proposition}[theorem]{Proposition}
\newtheorem{lemma}[theorem]{Lemma}
\newtheorem{corollary}[theorem]{Corollary}
\theoremstyle{definition}

\theoremstyle{remark}

\usepackage[textsize=tiny]{todonotes}

\newcommand{\R}{\mathbb{R}}

\newcommand{\N}{\mathbb{N}}
\newcommand{\E}{\mathbb{E}}

\newcommand{\U}{\mathbb{U}}
\newcommand{\Prob}{\mathbb{P}}

\newcommand{\of}[1]{\left(#1\right)}
\newcommand{\off}[1]{\left[#1\right]}
\newcommand{\offf}[1]{\left\{#1\right\}}
\newcommand{\abs}[1]{\left|#1\right|}

\newcommand{\cH}{\mathcal{H}}

\newcommand{\Var}{\text{Var}}
\newcommand{\Cov}{\text{Cov}}
\newcommand{\cN}{\mathcal{N}}
\newcommand{\cR}{\mathcal{R}}
\newcommand{\cD}{\mathcal{D}}
\newcommand{\cF}{\mathcal{F}}
\newcommand{\cZ}{\mathcal{Z}}
\newcommand{\cP}{\mathcal{P}}

\newcommand{\cG}{\mathcal{G}}
\newcommand{\cC}{\mathcal{C}}
\newcommand{\cB}{\mathcal{B}}

\newcommand{\cO}{\mathcal{O}}

\newcommand{\cM}{\mathcal{M}}

\newcommand{\cA}{\mathcal{A}}

\newcommand{\cT}{\mathcal{T}}
\newcommand{\cY}{\mathcal{Y}}
\newcommand{\cX}{\mathcal{X}}

\newcommand{\bX}{\mathbf{X}}

\newcommand{\bA}{\mathbf{A}}

\newcommand{\bD}{\mathbf{D}}
\newcommand{\bE}{\mathbf{E}}

\newcommand{\bU}{\mathbf{U}}
\newcommand{\bI}{\mathbf{I}}
\newcommand{\bF}{\mathbf{F}}
\newcommand{\bG}{\mathbf{G}}
\newcommand{\bK}{\mathbf{K}}
\newcommand{\bW}{\mathbf{W}}

\newcommand{\bM}{\mathbf{M}}

\newcommand{\by}{\mathbf{y}}
\newcommand{\bP}{\mathbf{P}}
\newcommand{\fP}{\mathfrak{P}}

\newcommand{\ucH}{\mathscr{H}}

\newcommand{\ucD}{\mathscr{D}}
\newcommand{\ucU}{\mathscr{U}}
\newcommand{\ucR}{\mathscr{R}}
\newcommand{\ucM}{\mathscr{M}}
\newcommand{\ucW}{\mathscr{W}}
\newcommand{\ucE}{\mathscr{E}}

\newcommand{\wh}[1]{\widehat{#1}}

\newcommand{\hP}{\widehat{P}}

\newcommand{\1}{\mathbbm{1}}

\newcommand{\MMD}[3]{\text{MMD}^{#1}\of{#2,\, #3}}
\newcommand{\OT}[2]{\text{OT}\of{#1,\, #2}}
\newcommand{\SinkDiv}[2]{{S}_\eps\of{#1,\, #2}}
\newcommand{\STE}{\mathscr{S}}
\newcommand{\EntOT}[2]{\text{OT}_\eps\of{#1,\, #2}}

\newcommand{\TV}{\text{TV}}

\newcommand{\supp}{\text{supp}}

\newcommand{\eps}{\varepsilon}
\newcommand{\spt}{\text{supp}}
\newcommand{\iprod}[1]{\left\langle #1 \right\rangle}
\newcommand{\norm}[1]{\left \| #1 \right \|}
\newcommand{\KL}[2]{\mathrm{KL}(#1 \mid #2)}

\theoremstyle{plain}

\theoremstyle{definition}
\newcommand{\commentout}[1]{}

\newcommand{\ind}{\mathrel{\perp\!\!\!\perp}}

\DeclareMathOperator*{\essinf}{ess\,inf}

\newcommand{\mymid}{\,|\,}

\newcommand{\tand}{\textrm{and }}

\newcommand{\agg}{\textrm{agg}}

\definecolor{gold}{HTML}{EEE8AA}
\definecolor{indigo}{HTML}{32006e}

\crefname{section}{Sec.}{Secs.}\crefname{appendix}{Appx.}{Appxs.}\crefname{theorem}{Thm.}{Thms.}\crefname{lemma}{Lem.}{Lems.}\crefname{corollary}{Cor.}{Cors.}\crefname{proposition}{Prop.}{Props.}\crefname{assumption}{Asm.}{Asms.}\crefname{property}{Propt.}{Propts.}\crefname{algorithm}{Alg.}{Algs.}\crefname{appendix}{Appx.}{Appendices}\crefname{figure}{Fig.}{Figs.}\crefname{table}{Tab.}{Tabs.}

\usepackage{appendix}
\usepackage{etoc}

\makeatletter
\let\orig@colorbox\colorbox
\renewcommand{\colorbox}{\@ifnextchar[{\my@colorbox@opt}{\my@colorbox@NoOpt}}

\newcommand{\my@colorbox@opt}[3][]{%
  {%
    \setlength{\fboxsep}{1.1pt}
    \orig@colorbox[#1]{#2}{#3}%
  }%
}

\newcommand{\my@colorbox@NoOpt}[2]{%
  {%
    \setlength{\fboxsep}{1.1pt}
    \orig@colorbox{#1}{#2}%
  }%
}
\makeatother

\definecolor{OAIpurple}{HTML}{D9D6FE}
\definecolor{OAIgreen}{HTML}{6CE9A6}
\definecolor{OAIyellow}{HTML}{FEDF89}
\definecolor{OAIred}{HTML}{FDA29B}
\definecolor{OAIblue}{HTML}{B2CCFF}

\usepackage{nccmath} 

\icmltitlerunning{Sinkhorn Treatment Effects: A Causal Optimal Transport Measure}

\allowdisplaybreaks
\begin{document}

\onecolumn
\icmltitle{Sinkhorn Treatment Effects: A Causal Optimal Transport Measure}



\icmlsetsymbol{equal}{*}
\begin{icmlauthorlist}
\icmlauthor{Medha Agarwal}{yyy}
\icmlauthor{Alex Luedtke}{sch}
\end{icmlauthorlist}

\icmlaffiliation{yyy}{Department of Statistics, University of Washington, Seattle, WA, USA}
\icmlaffiliation{sch}{Department of Health Care Policy, Harvard Medical School, Boston, MA, USA}

\icmlcorrespondingauthor{Medha Agarwal}{medhaaga@uw.edu}

\icmlkeywords{Machine Learning, ICML}

\vskip 0.3in



\printAffiliationsAndNotice{}  

\begin{abstract}
  We introduce the Sinkhorn treatment effect, an entropic optimal transport measure of divergence between counterfactual distributions. Unlike classical quantities such as the average treatment effect, this measure captures differences across entire distributions. We analyze this divergence as a statistical functional and show it can be written as a smooth transformation of counterfactual mean embeddings with an appropriate kernel. This characterization allows us to establish first-order pathwise differentiability in general, and second-order pathwise differentiability under the null hypothesis of equal counterfactual distributions. Leveraging this smoothness, we construct debiased estimators and use them to obtain asymptotically valid tests for distributional treatment effects with a fixed entropic regularization parameter. Because the power of the test depends on this unknown parameter, we further propose an aggregated test that combines evidence across a grid of regularization choices. Experiments on simulated and image data demonstrate the practical advantages of our estimator and testing procedure.
\end{abstract}

\section{Introduction}\label{sec:introduction}
Estimating causal treatment effects is a central goal of causal inference \cite{rubin1974estimating,rubin2005causal, rosenbaum1983central, chernozhukov2017double}. In the canonical binary treatment setting, the scientific goal is the discrepancy between the counterfactual outcome distributions under treatment and control, denoted by $Y_1 \sim P_1$ (treatment) and $Y_0 \sim P_0$ (control). However, $P_1$ and $P_0$ are never observed via independent samples. Instead, we observe independent copies of $(X, A, Y) \sim P$, where treatment $A$ and {observed} outcome $Y$ may depend on confounders $X$. Under standard causal identifiability conditions (see \cref{sec:background}), $P_1$ and $P_0$ can be learned from $P$. Most empirical work summarizes treatment effects through the average treatment effect (ATE),  $\E_{P_1}[Y_1] - \E_{P_0}[Y_0]$ \cite{imbens2004nonparametric}. 
While convenient for computation and inference, mean-based summaries can mask important effects that occur away from the center of the distribution. Empirical studies have shown that interventions may induce large distributional shifts despite a negligible ATE \cite{bitler2006mean}.

These limitations motivate inference on the entire counterfactual outcome distributions, commonly referred to as distributional treatment effects (DTEs) \cite{abadie2002bootstrap}. Classical approaches target specific functionals of $P_1$ and $P_0$, including quantiles \cite{firpo2007efficient}, cumulative distribution functions \cite{chernozhukov2013inference}---which imply results for the 1-Wasserstein distance in univariate outcome settings \citep{lin2023causal,balakrishnan2025conservative}---and probability density functions \cite{robins2001comment,kennedy2023semiparametric}. More recently, kernel-based methods estimate counterfactual distributions via kernel mean embeddings (KME) into characteristic reproducing kernel Hilbert spaces (RKHS) \cite{muandet2017kernel} and quantify DTE using maximum mean discrepancy (MMD) \cite{fawkes2024doubly,martinez2023efficient,luedtke2024one}. Although MMD metrizes weak convergence, it induces a flat geometry on the space of probability measures and may fail to capture meaningful geometric discrepancies between counterfactual distributions, especially when their supports weakly overlap or separate. Related approaches based on $f$-divergence further require mutual absolute continuity, limiting applicability \cite{kennedy2023semiparametric}.
Unlike $f$-divergences, MMD is finite for disjoint distributions; however, MMD saturates quickly as this disjointness increases, a phenomenon visualized in \cref{fig:intro}.

\begin{figure*}[t]
  \vskip 0.2in
  \begin{center}
    \centerline{\includegraphics[width=\textwidth]{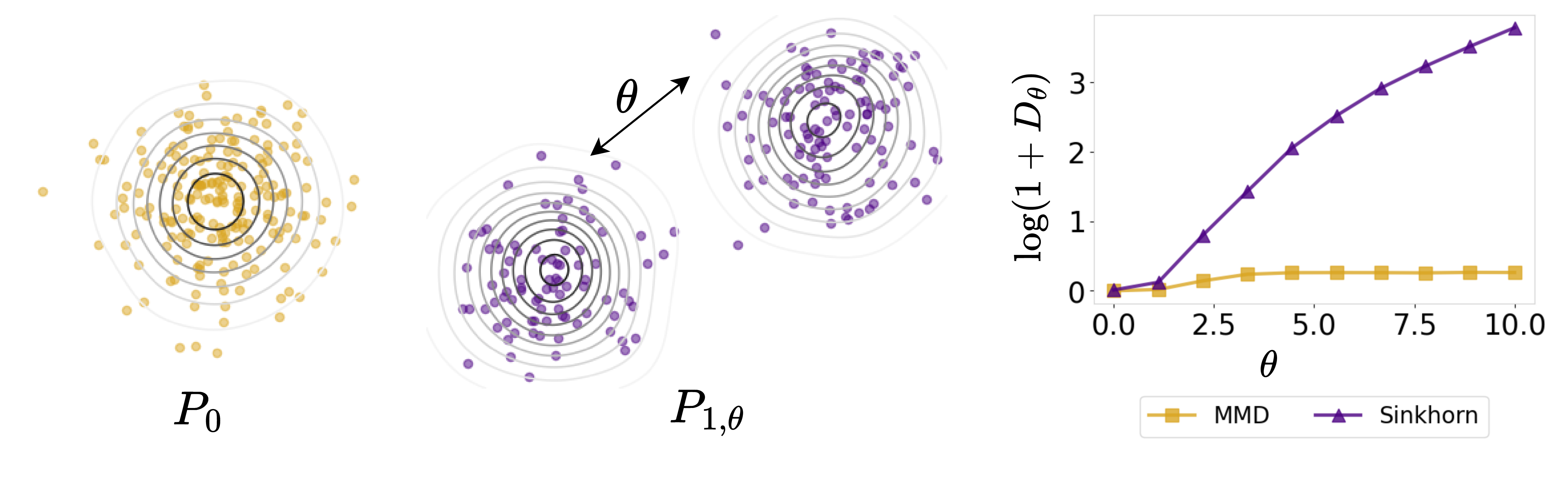}}
    \caption{MMD vs. Sinkhorn divergence across increasing separation $\theta$ between counterfactual outcome distribution under control {$P_0 = \cN(\mathbf{0}_2, I_2)$} and under treatment {$P_{1, \theta} = \tfrac{1}{2} \cN(-\theta\mathbf{1}_2, I_2) + \tfrac{1}{2} \cN(\theta\mathbf{1}_2, I_2)$}. The average treatment effect is zero for all $\theta > 0$. Here $D_\theta$ denotes either MMD or Sinkhorn divergence between $P_0$ and $P_{1, \theta}$. As $\theta$ increases, the distributions diverge and \textbf{MMD saturates, failing to distinguish between `far' and `very far' distributions}, whereas the Sinkhorn divergence continues to grow.
    }
    \label{fig:intro}
  \end{center}
\end{figure*}
To overcome these limitations, we propose measuring DTE via the Sinkhorn divergence \cite{feydy2019interpolating, ramdas2015wasserstein}, a centered entropy-regularized optimal transport (EOT) cost, that respects the intrinsic geometry of the outcome space. Under identifiability, we define the \emph{Sinkhorn treatment effect} (STE) as
\begin{equation}\label{eq:STE_def}
    \text{STE}(P) = \EntOT{P_1}{P_0} - \tfrac{1}{2} \EntOT{P_1}{P_1} - \tfrac{1}{2} \EntOT{P_0}{P_0}.
\end{equation}
For measures $\mu$ and $\nu$, the EOT cost with quadratic cost function $c(x,y) = \frac{1}{2}\|x-y\|^2$ and $\eps >0 $ regularization is 
\begin{equation}\label{eq:EOT}
    \EntOT{\mu}{\nu} = \inf_{\pi \in \Pi(\mu, \nu)} \int c(x,y) \, d\pi(x,y) + \eps \KL{\pi}{\mu \otimes \nu},
\end{equation}
where $\Pi(\mu, \nu)$ denotes the set of couplings of $(\mu, \nu)$, and $\KL{\cdot}{\cdot}$ is the Kullback–Leibler divergence. Here $c$ reflects the cost of moving mass between two outcome values, thereby encoding domain-specific notions of discrepancy between outcomes, while $\eps$ controls the degree of entropic smoothing. As $\eps \to 0$, STE approaches the corresponding unregularized OT cost, and as $\eps \to \infty$, it approaches the MMD distance corresponding to the Gibbs kernel $e^{-c / \eps}$. More broadly, Sinkhorn divergence interpolates between the OT cost and MMD \citep{feydy2019interpolating}.

We show that STE admits a pathwise differentiable representation, enabling construction of a doubly robust, asymptotically normal one-step estimator \cite{pfanzagl1982lecture}. While this first-order estimator is degenerate under the null of equal counterfactual outcome distributions, mirroring known degeneracy phenomena for debiased MMD-based tests \cite{muandet2017kernel}, we further develop a second-order influence function-based correction, yielding a valid and powerful test with an $n$-rate limiting distribution. 
With our asymptotic result, we construct a test with provable type~I error control, for each fixed regularization parameter $\eps > 0$. Since $\eps$ is unknown in practice and an unfavorable choice may result in low power, we further propose STEAgg, a multiple testing procedure that combines evidence across a grid of $\eps$ values. This approach is inspired by aggregated kernel tests such as \citet{schrab2023mmd}, which combine MMD statistics across bandwidths while maintaining non-asymptotic type~I error guaranties. We prove that STEAgg-based tests achieve nominal asymptotic type~I error control.

\paragraph{Our Contributions.}
\begin{enumerate}

    \item We introduce Sinkhorn treatment effects. Unlike prior causal OT approaches, ours allows for multivariate outcomes.
    \item We characterize the smoothness of the STE functional, showing it is first-order differentiable and second-order differentiable under the null of no DTE.
    \item We use this differentiability to construct efficient first- and second-order bias-corrected estimators of STE.
    \item We propose STE- and STEAgg-based hypothesis tests for the causal null, at a fixed $\eps>0$ and aggregated over a finite grid of $\eps$ values, respectively.  
    \item We demonstrate the practical advantages of our approach on simulated and high-dimensional image data.
\end{enumerate}

\section{Background}\label{sec:background}
\paragraph{Notation.} 
For a Polish space $\ucW$, let $\cP(\ucW)$ be the space of Borel probability measures and $\cM(\ucW)$ be the Banach space of finite signed Radon measures, equipped with total variation norm. We further define the subspace of balanced measures $\cM_0(\ucW) := \{\mu \in \cM(\ucW): \mu(\ucW) = 0\}$ and, for $\ucW$ compact, the Banach space $\cC(\ucW)$ of continuous functions on $\ucW$, endowed with the uniform norm. For any $\mu \in \cP(\ucW)$, let $\supp(\mu) \subset \ucW$ denote the support of $\mu$. For every multi-index $\alpha = (\alpha_1, \dots, \alpha_d) \in \N_0^d$ with $|\alpha| = \sum_{i=1}^d \alpha_i$, define the differential operator $D^\alpha = \frac{\partial^{|\alpha|}}{\partial x_1^{\alpha_1} \dots \partial^{\alpha_d}}$ with $D^0 f = f$. For $s \in \N_0$, when $\ucW$ has non-empty interior $\Omega := \text{int}(\ucW)$ and $\overline{\Omega} = \ucW$, let $\cC^s(\ucW)$ denote the set of functions $f \in \cC(\ucW)$ that have continuous derivatives of all orders $\leq s$ on $\Omega$ and the derivatives have continuous extensions to $\ucW$. When $\ucW \subset \R^d$ is a compact set with non-empty interior $\Omega$ a bounded Lipschiz domain (for example when $\ucW$ is a closed ball), we define for $s \in \N_0$ the Hilbert case of Sobolev spaces on $\ucW$:
\[
W^s(\ucW) := W^s(\Omega) := \offf{f \in L^2(\Omega): D^\alpha f \in L^2(\Omega)}.
\]
The space $W^s(\ucW)$ is a Hilbert space with inner product $\iprod{f, g}_{W^s(\ucW)} := \sum_{|\alpha| \leq s}\iprod{D^\alpha f, D^\alpha f}_{L^2(\ucW)}$. 
For any measurable $f: \ucW \to \R$ and $P \in \cP(\ucW)$, we use the shorthand notation $Pf := \int f \, dP$. For any bounded linear operator $\cA : \cF\to \cG$, with $\cF$ and $\cG$ Banach spaces, we denote the operator norm by $\|\cA\|_{\cF \to \cG}$. We use $\odot$ to denote the elementwise product of functions or tensors. The notation $\otimes$ denotes the tensor product (of functions, tensors, or linear operators) or the product measure.

\paragraph{Nonparametric Statistical Model.}
We consider a model $\cP$ of distributions of $Z= (X, A, Y)$, where $X \in \cX \subset \R^p$ are covariates, $A \in \{0,1\}$ is a binary treatment indicator, and $Y \in \cY \subset \R^d$ is the observed outcome. Let $\cZ := \cX \times \{0,1\} \times \cY$ be endowed with the product topology. The model $\cP\subset \cP(\cZ)$ is unrestricted, save for three conditions:
\begin{enumerate}[noitemsep]
    \item $\cX$ is Polish and $\cY$ is a bounded closed ball.
    \item The model is dominated: there exists $\sigma$-finite $\lambda$ such that $p:=\frac{dP}{d\lambda}$ is well defined for all $P\in \cP$.
    \item Strong positivity holds: $e_P(a\mymid x):= P(A=a\mymid X=x)$ satisfies $\inf_{P \in \cP} \essinf_{(a,x)} e_P(a|x) > 0$.
\end{enumerate}
The first condition ensures $\cZ$ is Polish, making regular conditional probability distributions well-defined. The second simplifies the notion of pathwise differentiability discussed in the sequel. The third is common in causal inference settings, and will be used in the causal identifiability result to follow.

\paragraph{Causal Setup.}
Let $Y_1$ and $Y_0$ denote the potential outcomes under treatment and control, respectively. Our objective is to assess the Sinkhorn divergence between the marginal distributions of $Y_1$ and $Y_0$. Consistent with other DTE works, we call each such distribution a \emph{counterfactual distribution} \citep{martinez2023efficient,kennedy2023semiparametric}, with the understanding that it would be called an interventional distribution in the structural causal model framework \citep{bareinboim2022pearl}.

Under standard causal conditions \citep{hernan2024causal}, the distribution of $Y_a$, $a\in \{0,1\}$, is identifiable through an observed data distribution $P\in\cP$. Concretely, this identifiability result involves $P_a\in \cP(\cY)$ defined so that, for all measurable $\cY'\subset \cY$:
\begin{align}\label{eq:counterfactual_dist}
    P_a(\cY')&= \medint\int P\{Y\in \cY'\mymid A=a,X=x\}\, dP_X(x),
\end{align}
where $P_X$ is the marginal of $X$ under $P$. 
Formally, \eqref{eq:counterfactual_dist} holds if $Z$ has an associated potential outcome $Y_a$ and the following hold: no unmeasured confounders ($Y_a\ind A\mymid X$), consistency ($A=a$ implies $Y=Y_a$), and positivity ($e_P(a\mymid X)>0$ $P_X$-a.s.).

\paragraph{Distributional Kernel Mean Embeddings.}
For a fixed choice of $s>d/2$, we work with the Sobolev space $W^s(\cY)$, which is an RKHS due to the lower bound on $s$ \citep[Thm.~4.12]{adams2003sobolev}. 
Let $k: \cY \times \cY \to \R$, $k_y := k(y, \cdot)$, be the kernel associated with $W^s(\cY)$, and let $\of{\cH, \langle{\cdot, \cdot}\rangle_\cH}$ denote the corresponding RKHS with norm $\norm{\cdot}_\cH$ and unit ball $\cH_1 \subset \cH$. By Sobolev embedding and compactness of $\cY$, the kernel $k$ is bounded, i.e., $M_k := \sup_{y,y' \in \cY} |k(y,y')| < \infty$. Consequently, for every $P \in \cP(\cY)$, the map $h \mapsto Ph$ defines a bounded linear functional on $\cH$, and admits a kernel mean embedding $m(P) \in \cH$ given by the Bochner integral $m(P) = \int k_y \, dP(\by)$
\cite{smola2007hilbert, sriperumbudur2010hilbert}. Identifying $\cH$ isometrically with a subspace of $\ell^\infty(\cH_1)$ via the map $J(h): f\in \cH_1 \mapsto \langle h, f\rangle_\cH$, the composition $J \circ m$ embeds $\cP(\cY)$ into $\ell^\infty(\cH_1)$. Since $W^s(\cY)$ contains all polynomial functions restricted to $\cY$, $W^s(\cY)$ is dense in $\cC(\cY)$ by the Stone-Weierstrass theorem, implying that $k$ is a $c$-universal kernel \citep[Thm.~9]{steinwart2001influence} and characteristic \citep{micchelli2006universal}. Hence, the KME map $m$ is injective and induces the maximum mean discrepancy $\MMD{}{\mu}{\nu} := \norm{m(\mu) - m(\nu)}_\cH = \norm{\mu - \nu}_{\ell^\infty(\cH_1)}$, which metrizes $\cP(\cY)$. 

Following \citet{muandet2017kernel}, we represent counterfactual distributions via their KMEs. 
The counterfactual mean embedding for $P_a$ is
\begin{equation}\label{eq:CME}
\psi^a(P) = \medint\int 
\mathbb E_P\!\left[k_y\mymid A=a, X=x\right] \, dP_X(x).
\end{equation}
Let $\Psi(P) := \of{\psi^1(P), \psi^0(P)}$.
\citet{muandet2021counterfactual} define the \emph{kernel treatment effect} (KTE) as the MMD between $\psi^1(P)$ and $\psi^0(P)$. A natural estimator of the KTE is obtained by estimating $\psi^a(P)$ via inverse propensity weighting (IPW) \cite{imbens2004nonparametric}. While the resulting estimator is unbiased when the propensity score is known \citep[Thm.~5]{muandet2017kernel}, valid root-$n$ inference in double machine learning settings typically requires debiasing based on the efficient influence function \citep{luedtke2024one}.

\paragraph{Sinkhorn Divergence.} 

The EOT problem defined in \eqref{eq:EOT} between measures $\mu$ and $\nu$ admits a unique optimal coupling $\pi^{(\mu,\nu)}$. The dual EOT problem admits optimal potentials $(\varphi_1^{(\mu,\nu)},\varphi_0^{(\mu,\nu)})\in L^1(\mu)\times L^1(\nu)$ called entropic potentials. The entropic potentials are unique up to an additive constant; see \cref{appendix:eot} for details. The primal and dual solutions satisfy
\begin{equation}\label{eq:primal_dual_EOT_relationship}
    \frac{d\pi^{(\mu,\nu)}}{d(\mu \otimes \nu)} =\exp\of{\tfrac{1}{\eps}\big(\varphi_1^{(\mu,\nu)} \oplus \varphi_0^{(\mu,\nu)} - c \big)},
\end{equation}
where $(\varphi_1^{(\mu,\nu)} \oplus \varphi_0^{(\mu,\nu)})(y_1, y_2) = \varphi_1^{(\mu,\nu)}(y_1) + \varphi_0^{(\mu,\nu)}(y_2)$, and $c(y_1, y_2) = \|y_1-y_2\|^2/2$. Since $\EntOT{\mu}{\mu}\neq0$, the EOT cost is not a divergence. The Sinkhorn divergence \citep{ramdas2015wasserstein,feydy2019interpolating} corrects this through the centering
\begin{equation}\label{eq:sinkhorn_div}
\SinkDiv{\mu}{\nu}
=
\EntOT{\mu}{\nu}
-\tfrac12[\EntOT{\mu}{\mu}
+\EntOT{\nu}{\nu}].
\end{equation}
It admits dual representation $\SinkDiv{\mu}{\nu} = \mu \upsilon_1^{(\mu,\nu)} +\nu \upsilon_0^{(\mu,\nu)}$ where $(\upsilon_1^{(\mu,\nu)}, \upsilon_0^{(\mu,\nu)})$ are the \textit{centered entropic potentials}
\[
\upsilon_1^{(\mu,\nu)} := \varphi_1^{(\mu,\nu)}-\varphi_1^{(\mu,\mu)}, \quad \upsilon_0^{(\mu,\nu)} = \varphi_0^{(\mu,\nu)}-\varphi_0^{(\nu,\nu)}.
\]

\paragraph{Higher-order Efficient Influence Function.}

Let $\cP$ be a nonparametric statistical model and $\Phi: \cP \to \R$ a parameter that is pathwise differentiable at $P \in \cP$ \citep{van1991differentiable}. The efficient influence function (EIF) at $P$, denoted by $\dot{\Phi}_P \in L_0^2(P)$, characterizes the semiparametric efficiency bound for estimating $\Phi(P)$ \cite{bickel1993efficient}. 
For any regular parametric submodel $(P_t, t\in[0, \delta)) \subset \cP$ with $P_0 = P$ and (Fisher) score function $s$---which take the form $s= \tfrac{d}{dt} \log p_t \vert_{t=0}$ under regularity conditions---pathwise differentiability implies $\tfrac{d}{dt} \Phi(P_t) \vert_{t=0} = \langle \dot{\Phi}_P, s \rangle_{L^2(P)}$. This representation directly generates the usual notion of a gradient from calculus, with the EIF playing the role of a gradient and the score playing the role of a direction along which an input to a function is perturbed. The EIF can be used to construct a one-step estimator that corrects for first-order plug-in bias and is semiparametrically efficient under regularity conditions. However, for nonnegative functionals such as the STE, first-order linearization may be insufficient under boundary null hypotheses where the first-order term vanishes \citep{luedtke2019omnibus,williamson2023general}.

Following \citet{robins2008higher} and \citet{van2014higher}, we therefore employ second-order influence functions, which arise from the quadratic term in a higher-order von Mises expansion. Formally, the second-order influence function is a symmetric, degenerate kernel $\ddot{\Phi}_P\in L^2(P^{\otimes 2})$, satisfying: 
\[
\frac{d^2}{dt^2} \Phi(P_t) \Big|_{t=0} = \E_{P^{\otimes 2}} [\ddot{\Phi}_P(Z_1,Z_2) \, r(Z_1,Z_2)],
\]
where $r$ is the second-order score function \cite{small2011hilbert}. For paths of the form $P_t = (1+ts)P$, this reduces to $s \otimes s$. In a nonparametric model, there is only one second-order influence function, and this is known as the second-order \emph{efficient} influence function \citep{robins2008higher}. The second-order EIF is analogous to the Hessian from multivariable calculus. Technical details on second-order tangent spaces, score functions, and influence functions are deferred to \cref{appendix:eif}.

An unfortunate fact uncovered in the literature is that most functionals of interest fail to be smooth enough to admit a second-order EIF \citep{robins2008higher,van2014higher}. In the DTE setting, this appears to be true for counterfactual $f$-divergences, complicating the study of estimators' limiting distributions under the null \citep[][Sec. 5.2]{kennedy2023semiparametric}. A major finding of this work is that the Sinkhorn treatment effect is smooth enough to admit a second-order EIF under the null of equal counterfactual distributions. We leverage this property to construct a bias-corrected estimator of the STE with tractable asymptotic behavior.

\section{Sinkhorn Treatment Effect}\label{sec:STE}

Let $P\in\cP$ denote a data-generating distribution and fix entropic regularization $\eps>0$. We define the {Sinkhorn treatment effect} from \eqref{eq:STE_def} as the functional $\STE: \cP \to \R$ given by
\begin{equation*}
    \STE(P)=S_\eps \circ J \circ \Psi (P).
\end{equation*}
In this chaining argument, first, $\Psi$ maps $P$ to the pair of counterfactual mean embeddings defined in \eqref{eq:CME}. Second, $J$ is the canonical isometric embedding of $\cH$ in $\ell^\infty(\cH_1)$, as discussed in \cref{sec:background}. For notational convenience, we use $J$ to denote the induced canonical embedding of the product space $\cH \times \cH$. Lastly, $S_\eps$ computes the Sinkhorn divergence between the counterfactual distributions. 

The goal of this section is to characterize the smoothness properties of $\STE$ and leverage them for efficient semiparametric estimation. Our analysis builds on recent advances in the differentiability theory of EOT \cite{goldfeld2024limit, kokot2025coreset}. In particular, \citet{goldfeld2024limit} analyze the Hadamard differentiability of the map $(\mu,\nu) \mapsto \SinkDiv{\mu}{\nu}$ in the $\ell^\infty(B^s) \times \ell^\infty(B^s)$ topology, where $B^s$ is a unit ball in $\cC^s(\cY)$. Following this, we embed $\cP(\cY)$ into the normed linear space $\ell^\infty(\cH_1)$ via the KME, and view the Sinkhorn divergence as a smooth functional $S_\eps: \cP(\cY) \times \cP(\cY) \subset \ell^\infty(\cH_1)\times \ell^\infty(\cH_1) \to \R$. Such identifications have been employed in literature on other spaces to derive limit theorems for EOT objects---see \cref{appendix:Hadamard_EOT} for details.

We show that this smoothness propagates through the causal composition defining $\STE$, yielding first-order pathwise differentiability at all $P\in \cP$ and second-order pathwise differentiability under the null that $P_1 = P_0$. We present the corresponding first- and second-order EIFs in what follows. 

To express these influence functions compactly, we introduce linear expectation operators acting on $L^2(P)$. For any $P \in \cP(\cZ)$, we define the following three linear operators $P_X, P_{A \mymid X}, P_{Y \mymid A, X}: L^2(P) \to L^2(P)$ such that for any $f\in L^2(P)$, the evaluations at $z = (x,a,y)$ satisfy
\begin{align}\label{eq:expectation_operators}
    \of{P_{Y \mymid A, X} f}(z) &:=  \medint\int_\cY f(x, a, y^\prime) \, P(dy^\prime \mymid A=a, X=x) \nonumber\\
    \of{P_{A \mymid X} f}(z) &:= \medop\sum\limits_{a^\prime \in \{1,0\}} f(x, a^\prime, y) \, e_P(a^\prime \mymid x) \nonumber\\
    \of{P_{X} f}(z) &:= \medint\int_\cX f(x^\prime, a, y)\, P(dx^\prime).
\end{align}
These operators provide an alternative way to write standard augmented inverse probability weighted influence functions \citep{robins1994estimation}. For example, for $\theta(P):=\int \eta_P(x) P_X(dx)= \int h_P(x,a,y)dP(x,a,y)$ with $\eta_P(x):=E_P[Y|A=1,X=x]$ and $h_P(x,a,y):=\frac{a}{e_P(1|x)}y$, that influence function is
\begin{align*}
    (x,a,y)&\mapsto \frac{a}{e_P(a|x)}\{y-\eta_P(x)\} + \eta_P(x) - \theta(P) \\
    &= \of{I -  \of{ I - (I - P_X) P_{A \mymid X} } P_{Y \mymid A, X}}h_P.
\end{align*}
This representation allows for a concise expression of the influence functions of $\STE$, particularly in the second-order case. We begin by presenting the first-order result.

\begin{lemma}[First-order differentiability]\label{lem:first_order_EIF}
    The parameter $\STE: \cP \to \R$ is pathwise differentiable at all $P \in \cP$ with first-order efficient influence function
    \begin{equation}\label{eq:first-order-EIF}
        \dot{\STE}_P = \of{I -  \of{ I - (I - P_X) P_{A \mymid X} } P_{Y \mymid A, X}} \, f_P,
    \end{equation}
    where $f_P(x,a,y) = \upsilon_{a}^{(P_1, P_0)}(y)/e_P(a \mymid x)$.
\end{lemma}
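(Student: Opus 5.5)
The plan is a chain rule: view $\STE = S_\eps \circ J \circ \Psi$ and combine pathwise differentiability of $\Psi$ with Hadamard differentiability of $S_\eps$ on the embedded space.

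\textbf{Step 1: pathwise differentiability of $\Psi$.} For each $a$, $\psi^a(P) = \int \E_P[k_y \mymid A=a,X=x]\,dP_X(x)$ is an $\cH$-valued counterfactual mean. I will show it is pathwise differentiable in the sense of \citet{luedtke2024one}. That is, along any regular submodel $P_t$ with score $s$,
\[
t^{-1}\{\psi^a(P_t)-\psi^a(P)\} \to \E_P[\dot\psi^a_P(Z)\,s(Z)]
\]
in $\cH$. The $\cH$-valued AIPW influence function is
\[
\dot\psi^a_P(z) = \tfrac{\1\{a'=a\}}{e_P(a\mymid x)}\{k_y - \mu_a(x)\} + \mu_a(x) - \psi^a(P),
\qquad \mu_a(x)=\E_P[k_Y\mymid A=a,X=x],
\]
where $z=(x,a',y)$. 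Since $\|k_y\|_\cH\le M_k^{1/2}$ and strong positivity holds, these influence functions are Bochner square-integrable. Because $J$ is a linear isometry, $J\circ\Psi$ inherits the same derivative in $\ell^\infty(\cH_1)\times\ell^\infty(\cH_1)$, with convergence uniform over $h\in\cH_1$.

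\textbf{Step 2: Hadamard differentiability of $S_\eps$.} I will use the results of \citet{goldfeld2024limit}, adapted in \cref{appendix:Hadamard_EOT} to the $\ell^\infty(\cH_1)$ topology. They give Hadamard differentiability of $(\mu,\nu)\mapsto S_\eps(\mu,\nu)$ tangentially to the embedded balanced measures, with derivative
\[
(h_1,h_0)\mapsto h_1(\upsilon_1^{(\mu,\nu)}) + h_0(\upsilon_0^{(\mu,\nu)}).
\]
The main obstacle is this transfer. It needs two facts:
\begin{enumerate}
\item The centered potentials lie in $\cH=W^s(\cY)$. This holds because entropic potentials for the quadratic cost on a compact ball are $\cC^\infty$ with uniformly bounded derivatives.
\item The unit ball $\cH_1$ controls the $\cC^{s'}$ balls used by \citet{goldfeld2024limit}.
\end{enumerate}
The second fact is delicate because $s>d/2$ may be smaller than the smoothness those authors require. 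I would first establish Hadamard differentiability of the potentials as maps into $\cC(\cY)$ under $\ell^\infty(\cH_1)$ perturbations. The route is the implicit function theorem applied to the Schrödinger system, viewed as an operator on $\cC(\cY)$. Its invertibility is standard, and the Gibbs kernel $e^{-c/\eps}$ restricted to a compact domain lies in the RKHS tensor product, so kernel integrals against perturbations are controlled by the $\ell^\infty(\cH_1)$ norm.

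\textbf{Step 3: chain rule and identification of the EIF.} The chain rule for Hadamard$\,\circ\,$pathwise maps yields
\[
\tfrac{d}{dt}\STE(P_t)\big|_{t=0}=\sum_a \E_P[\langle \upsilon_a, \dot\psi^a_P(Z)\rangle_\cH\, s(Z)],
\]
where I abbreviate $\upsilon_a=\upsilon_a^{(P_1,P_0)}$. The reproducing property turns $\langle \upsilon_a, k_y\rangle_\cH$ into $\upsilon_a(y)$, so the gradient is
\[
\sum_a\Big[\tfrac{\1\{a'=a\}}{e_P(a\mymid x)}\{\upsilon_a(y)-\E_P[\upsilon_a(Y)\mymid a,x]\} + \E_P[\upsilon_a(Y)\mid a,x] - P_a\upsilon_a\Big].
\]
Because the model is nonparametric, this gradient is the EIF. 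Finally, I will check algebraically, exactly as in the $\theta(P)$ example preceding the lemma, that this expression equals
\[
\of{I - (I-(I-P_X)P_{A\mymid X})P_{Y\mymid A,X}}f_P,
\qquad f_P(x,a,y)=\upsilon_a(y)/e_P(a\mymid x).
\]
The check expands $P_{A\mymid X}P_{Y\mymid A,X}f_P=\sum_a \E[\upsilon_a(Y)\mid a,x]$ and $P_XP_{A\mymid X}P_{Y\mymid A,X}f_P=\sum_aP_a\upsilon_a$.
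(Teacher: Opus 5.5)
Your skeleton matches the paper's proof. You write $\STE=S_\eps\circ J\circ\Psi$ and take the $\cH$-valued AIPW derivative of $\Psi$ from \citet{luedtke2024one}. You compose it with the Hadamard derivative $(\gamma^1,\gamma^2)\mapsto\gamma^1\upsilon_1^{(P_1,P_0)}+\gamma^2\upsilon_0^{(P_1,P_0)}$ of $S_\eps$ in the $\ell^\infty(\cH_1)$ topology. You then use the reproducing property and the operator algebra to reach the displayed EIF, and your final algebraic check is correct.

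The one real divergence is how you obtain the Hadamard derivative of $S_\eps$. The paper proves it directly with a dual sandwich (envelope) argument:
\begin{itemize}
\item inserting the optimal potentials of $(\mu,\nu)$ into the dual problem at $(\mu_t,\nu_t)$ gives a lower bound;
\item inserting the potentials of $(\mu_t,\nu_t)$ into the dual at $(\mu,\nu)$ gives an upper bound;
\item both error terms have the form $\pm t\eps\iint\xi\,d\gamma_t^1\,d\gamma_t^2$, which is $O(t)$ uniformly by the Sobolev algebra and composition estimates.
\end{itemize}
This argument needs only \emph{continuity} of the potentials in $\cH$, inherited from $\cC^k$-continuity under weak convergence. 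So your implicit-function-theorem detour proves more than is needed at first order. The paper uses differentiability of the potentials only for the second-order result.

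If you keep your route, two corrections apply.

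First, the obstacle you flag is backwards. Suppose the H\"older order $s'$ used by \citet{goldfeld2024limit} satisfies $s'\ge s$. Then $\|f\|_{W^s(\cY)}\lesssim\|f\|_{\cC^{s'}(\cY)}$, hence $\|\gamma\|_{\ell^\infty(B^{s'})}\lesssim\|\gamma\|_{\ell^\infty(\cH_1)}$. A Hadamard expansion in the weaker topology therefore transfers verbatim to the stronger one; the paper uses exactly this dualization to pass from the Gaussian RKHS to $\cH$. A small $s$ is thus the easy case, not the delicate one.

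Second, an implicit function theorem for the Schr\"odinger system should be posed on $\cH$, which is a Banach algebra for $s>d/2$, rather than on $\cC(\cY)$. The measure-dependence $\nu\mapsto\int e^{(\varphi_0(y')-c(\cdot,y'))/\eps}\,d\nu(y')$ is controlled by $\|\nu\|_{\ell^\infty(\cH_1)}$ only when $e^{\varphi_0/\eps}\in\cH$. That fails for generic $\varphi_0$ in a $\cC(\cY)$-neighborhood.

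Even with the potential derivatives in hand, converting them into the derivative of $S_\eps$ requires the envelope cancellation $\mu\varphi_1'+\nu\varphi_0'=0$. That cancellation rests on the same bound for the cross term $\iint\xi\,d\gamma_t^1\,d\gamma_t^2$, so the sandwich argument is the shorter path.
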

The proof of \cref{lem:first_order_EIF} relies on Hadamard differentiability of $S_\eps$ \cite{kokot2025coreset} and pathwise differentiability of the Hilbert-valued parameter $\Psi$ \cite{luedtke2024one}, and is provided in \cref{appendix:STE_first_order_pathwise_diff}.

Before establishing the second-order pathwise differentiability of $\STE$, we set some kernel operator notation. We define the \textit{self-transport kernel}, written as the density of the same-marginal Schrödinger bridge: $\xi_{\mu} = d\pi^{(\mu, \mu)} / d(\mu \otimes \mu)$ as in \eqref{eq:primal_dual_EOT_relationship} with $\mu=\nu$.
We define two kernel operators $T_\mu: \cC(\cY) \to \cC(\cY)$ and $H_\mu: \cM(\cY) \to \cC(\cY)$ as,
\begin{align}\label{eq:TH_operators}
    T_\mu f = \medint\int \xi_\mu(\cdot, y) \, f(y) \, d\mu(y), \, \,
    H_\mu \gamma = \medint\int \xi_\mu(\cdot, y) \, d\gamma(y).
\end{align}
Let $\cC(\cY) / \R$ denote the quotient space of all continuous functions identified up to shifts by constant functions, and denote by $[f]$ the corresponding equivalence class of $f \in \cC(\cY)$. Using the operators introduced above, define 
\[
K_\mu: \cM_0(\cY) \to \cC(\cY) / \R, \quad K_\mu = \eps (I - T_\mu^2)^{-1} H_\mu.
\]
The operator $K_\mu$ is termed as the Hadamard operator of Sinkhorn divergence in \citet{kokot2025coreset} and \citet{gonzalez2022weak}, and plays a central role in establishing second-order pathwise differentiability of STE. Its definition relies on lifting $T_\mu$ to quotient spaces: since $T_\mu \1_\cY = \1_\cY$, $T_\mu$ has a nontrivial null space spanned by constant functions, rendering it non-invertible on $\cC(\cY)$. As shown in Thm.~3.8 in \citet{lavenant2024riemannian}, $T_\mu$ is a well-defined linear operator on $\cC(\cY) / \R$, and $(I - T_\mu^2)^{-1}$ exists and is a bounded linear operator on $\cC(\cY) / \R$ \citep[see Section 3 of][]{lavenant2024riemannian}. 

Since $\cC(\cY) / \R$ is canonically dual to $\cM_0(\cY)$, the bilinear form $\gamma_0^\prime K_\mu \gamma_0 $ is well-defined for all $\gamma_0, \gamma_0^\prime \in \cM_0(\cY)$. As shown in \citet{kokot2025coreset}, $K_\mu$ is precisely the Hadamard operator of Sinkhorn divergence, governing the second-order expansion of the Sinkhorn divergence. We denote by $k_\mu: \cY \times \cY \to \R$ the associated symmetric kernel, representing the bilinear form $\gamma_1 (K_\mu \gamma_2)$ as $(\gamma_1 \otimes \gamma_2) k_\mu$ for $\gamma_1, \gamma_2 \in \cM_0(\cY)$. The explicit form of $k_\mu$ is derived in \cref{lem:K_mu_kernel}. Letting $\ucH_0 := \{P \in \cP: P_1 = P_0\}$ with $(P_1, P_0)$ as defined in \eqref{eq:counterfactual_dist}, we now state our second-order pathwise differentiability result.
\begin{theorem}[Second-order differentiability under the null]\label{thm:second_order_EIF}
    At any $P \in \ucH_0$, $\STE: \cP \to \R$ is second-order pathwise differentiable with the second-order efficient influence function
    \begin{equation}\label{eq:second-order-EIF}
            \ddot{\STE}_P = \of{I -  \of{ I - (I - P_X) P_{A \mymid X} } P_{Y \mymid A, X}}^{\otimes 2} g_P,
    \end{equation}
    where $g_P(z_1,z_2) = \omega_P(a_1,x_1) \, \omega_P(a_2,x_2) \, k_{P_1}(y_1,y_2)$ for $z_j=(x_j,a_j,y_j)$ and $\omega_P(a,x) = (2a-1)/e_P(a \mymid x)$.
\end{theorem}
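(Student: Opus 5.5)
We argue by a second-order chain rule through the composition $\STE = S_\eps \circ J \circ \Psi$. The argument uses two ingredients: a second-order expansion of $S_\eps$ at the diagonal, and the first-order structure of $\Psi$.

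First, the outer map. At a diagonal point $(\mu,\mu)$ we have $S_\eps(\mu,\mu)=0$, and the first-order Hadamard derivative vanishes because the centered potentials $\upsilon_a^{(\mu,\mu)}\equiv 0$. By \citet{kokot2025coreset} and \citet{gonzalez2022weak}, $S_\eps$ is then second-order Hadamard differentiable at $(\mu,\mu)$ with
\[
S_\eps(\mu+\gamma_1,\mu+\gamma_0) = \tfrac12 (\gamma_1-\gamma_0)K_\mu(\gamma_1-\gamma_0) + o(\|(\gamma_1,\gamma_0)\|^2),
\]
which depends only on $\gamma_1-\gamma_0\in\cM_0(\cY)$. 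We transfer this to the $\ell^\infty(\cH_1)$ topology via the KME embedding, exactly as in the proof of \cref{lem:first_order_EIF}. That step uses the fact that $\cH=W^s(\cY)$ with $s>d/2$ controls the relevant potentials. Along a regular path $P_t$ with $P\in\ucH_0$, we write $\mu=P_1=P_0$ and $\Delta_t := (P_{1,t}-P_1)-(P_{0,t}-P_0)$. Then $\STE(P_t)=\tfrac12\, (\Delta_t\otimes\Delta_t)k_{\mu} + o(t^2)$, provided $\|\Psi(P_t)-\Psi(P)\|_\cH=O(t)$. This holds by the pathwise differentiability of $\Psi$ \citep{luedtke2024one}.

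Next, we differentiate twice. Since $\Delta_t=t\,\dot\Delta+o(t)$, we get $\tfrac{d^2}{dt^2}\STE(P_t)|_{t=0}=(\dot\Delta\otimes\dot\Delta)k_\mu$. Here $\dot\Delta$ is the pathwise derivative of $P_1-P_0$. Its action on a function $h\in\cH$ is given by the AIPW influence function from \cref{lem:first_order_EIF}:
\[
\dot\Delta h = \big\langle (I-(I-(I-P_X)P_{A\mymid X})P_{Y\mymid A,X})[\omega_P\, h],\, s\big\rangle_{L^2(P)},
\]
where $\omega_P(a,x)=(2a-1)/e_P(a\mymid x)$ collects the two arms with opposite signs. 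Write $\Pi$ for this projection operator. We expand $k_\mu(y_1,y_2)=\sum_j \lambda_j e_j(y_1)e_j(y_2)$, or more simply apply the tensorized operator $\Pi^{\otimes 2}$ to the bivariate function directly. This gives
\[
(\dot\Delta\otimes\dot\Delta)k_\mu = \E_{P^{\otimes 2}}\big[(\Pi^{\otimes2}g_P)(Z_1,Z_2)\,s(Z_1)s(Z_2)\big].
\]
For paths $P_t=(1+ts)P$ the second-order score is $r=s\otimes s$. This matches the defining identity, and extends to general second-order scores as in \cref{appendix:eif}. The kernel $\Pi^{\otimes2}g_P$ is symmetric because $k_\mu$ is symmetric. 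It is degenerate because $\Pi f$ has mean zero in each argument. It lies in $L^2(P^{\otimes2})$ by strong positivity and boundedness of $k_\mu$. Since the model is nonparametric, it is the second-order EIF.

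The main obstacle is the remainder control in the first step. One must show that the second-order expansion of $S_\eps$ holds uniformly over directions bounded in $\ell^\infty(\cH_1)$, i.e.\ in MMD norm, rather than in the $\cC^s$-dual norm used by \citet{goldfeld2024limit}. One must also show that $k_\mu$, and hence $(I-T_\mu^2)^{-1}H_\mu$ applied to kernel sections, is regular enough that $\gamma\mapsto(\gamma\otimes\gamma)k_\mu$ is continuous in that norm. I would first try to show that $y\mapsto k_\mu(y,\cdot)$ is bounded in $W^s(\cY)$. For this I would use the smoothness of the Gibbs kernel $\xi_\mu$ in both arguments, so that $T_\mu$ and $H_\mu$ map into $\cC^\infty$. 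It then remains to bound the $o(t^2)$ remainder along the path using the $O(t)$ Hilbert-norm rate of $\Psi(P_t)$.
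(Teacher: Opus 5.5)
Your proposal follows essentially the same route as the paper: a chain rule through $S_\eps\circ J\circ\Psi$; the vanishing first Hadamard derivative of $S_\eps$ on the diagonal, which removes any need for a second-order derivative of $\Psi$; and evaluation of the quadratic form $(\gamma^1-\gamma^2)K_{P_1}(\gamma^1-\gamma^2)$ at $J(D\Psi_P(s))$, rewritten through the AIPW-type operator as $\iint \ddot{\STE}_P\, s\otimes s\, dP^{\otimes 2}$. The uniform (Fr\'echet-type) remainder control you flag as the main obstacle is more than is needed, since the paper uses only second-order Hadamard differentiability of $S_\eps$ in the $\ell^\infty(\cH_1)$ topology (\cref{lem:sinkhorn_second_order_Hadamard_diff}), and that suffices because $(\Psi(P_t)-\Psi(P))/t$ converges in $\cH\times\cH$.
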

For shorthand, we will occasionally write $g_P = \omega_P^{\otimes 2} \odot k_{P_1}$,
identifying $\omega_P^{\otimes 2}$ and $k_{P_1}$ with their canonical extensions
to $\cZ^2$ that ignore the $y$ and $(x,a)$ coordinates, respectively.

A second-order analysis of $\STE$ is possible under the null because the first derivative of the Sinkhorn divergence vanishes in these cases. This eliminates, through the second-order chain rule, the term that would ordinarily involve the second-order pathwise derivative of the KME map $\Psi$, whose presence would typically preclude the existence of a second-order EIF \citep[][pp. 681--682]{van2014higher}.\footnote{The second-order analysis is also possible for the squared counterfactual maximum mean discrepancy. We derive its second-order EIF in \cref{lem:MMD_second_order_EIF}, which to the best of our knowledge, is a novel contribution of our work.} Hence the second-order pathwise derivative of STE depends only on the first-order perturbation of $\Psi$, equivalently on its EIF, together with the second derivative of $S_\eps$. A similar phenomenon was observed in \citet{luedtke2019omnibus} for a test based on a different discrepancy measure.

\section{One-Step Estimation of STE}\label{sec:one-step}
We now leverage the first- and second-order EIFs of STE to construct one-step bias-corrected estimators of $\STE(P)$. Let $P^* \in \cP$ be our data-generating distribution and $Z_1, \dots, Z_n$ be independent draws from $P^*$, with corresponding empirical distribution denoted by $P_n$. Let $\hP$ be an initial estimator of $P^*$ constructed from an independent sample of size $n$. For notational ease, we present our one-step estimators under the sample splitting approach, assuming $P_n$ and $\hP$ are based on independent data. A cross-fitting approach \cite{schick1986asymptotically, klaassen1987consistent, chernozhukov2018double} is recommended, and the resulting efficiency is discussed for both first- and second-order one-step estimators. Our test for the causal null is based on the second-order one-step estimator of $\STE(P^*)$ for a fixed $\eps>0$. To reduce sensitivity to the choice of the unknown entropic regularization parameter $\eps$, we also provide to a way to aggregate tests over a finite grid of $\eps$ values.

\subsection{One-Step Estimator}\label{sec:one-step-estimator}
The one-step estimator of $\STE^* := \STE(P^*)$ is defined as $\wh{\STE} := \STE(\hP) + P_n \dot{\STE}_{\hP}$. We study conditions under which a cross-fitted version of $\wh{\STE}$ is asymptotically linear and semiparametrically efficient. As is standard in semiparametric theory, efficiency follows if $\wh{\STE} - \STE(P^*) = P_n \dot{\STE}_{P^*} + o_p(n^{-1/2})$, in which case the asymptotic variance attains the efficiency bound \citep{bickel1993efficient, chernozhukov2018double}.
This expansion is obtained via the decomposition $\wh{\STE} - \STE(P^*) - P_n \dot{\STE}_{P^*} = \cR_n + \cD_n$, where $\cR_n := \STE(\hP) + P^* \dot{\STE}_{\hP} - \STE(P^*)$ is the remainder term from first order von Mises expansion and $\cD_n := (P_n - P^*) (\dot{\STE}_{\hP} - \dot{\STE}_{P^*})$ is a stochastic drift term. Under appropriate convergence rates for the nuisance components of $\hP$, we show that both terms are $o_p(n^{-1/2})$ in \cref{appendix:proof_of_alt_asymptotics}.
Consequently, $\wh{\STE}$ is asymptotically linear, satisfying the weak convergence 
\begin{equation}\label{eq:first_order_asymptotics}
    \sqrt{n}\big(\wh{\STE} - \STE^*\big) \xrightarrow[]{d} \cN\big(0, \E_{P^*}\big[\dot{\STE}_{P^*}^2(Z)\big]\big).
\end{equation}
This asymptotic normality enables the construction of asymptotically valid Wald-type confidence intervals for $\STE^*$.

Under the null hypothesis ($P^* \in \ucH_0$), the first-order EIF is degenerate, i.e. $\dot{\STE}_P = 0$, since $\upsilon_1^{(P_1, P_1)} = \upsilon_0^{(P_1, P_1)} = 0$. As a consequence, the first-order limit distribution in \eqref{eq:first_order_asymptotics} collapses to a point mass, rendering the first-order inference invalid. To address this, we construct a second-order one-step estimator based on a Newton–Raphson bias correction, in the spirit of \citet{robins2008higher}, which attains $n$-rate convergence rate to $\STE^*$ under the null. Inference based on this second-order expansion yields asymptotically valid hypothesis tests for detecting DTEs, with the second-order one-step estimator serving as the test statistic. 

The second-order one-step estimator for $\STE^*$ is defined as
\begin{equation}\label{eq:second_order_one_step}
    \overline{\STE} := \STE(\hP) + P_n \dot{\STE}_{\hP} + \frac{1}{2} \U_n \Ddot{\STE}_{\hP},
\end{equation}
where $\U_n$ denotes the canonical U-statistic operator from the same samples as $P_n$. However, second-order pathwise differentiability of $\STE$ is only established under the null, and therefore the second-order EIF $\Ddot{\STE}_{\hP}$ need not be well-defined at a general $\hP \notin \ucH_0$. To address this, we construct an extension of the second-order EIF to distributions $P \notin \ucH_0$. This extension is defined by the same analytic expression as in \eqref{eq:second-order-EIF}, where for notational simplicity we denote this extension by the same symbol, $\Ddot{\STE}_P$. Since $k_{P_1} \neq k_{P_0}$ when $P \notin \ucH_0$, the definition in non-null cases depends on the choice of counterfactual distribution used to construct the self-transport kernel. We fix this choice to $\hP_1$ in our theoretical arguments for concreteness, though any convex combination of $\hP_1$ and $\hP_0$ would enjoy similar guarantees. 
Note that $\Ddot{\STE}_P$ is already $P$-centered, and therefore, does not require further centering common in second-order one-step estimation \citep{luedtke2019omnibus}. 

\subsection{Inference Under Null}
We now study the asymptotic properties of \eqref{eq:second_order_one_step}. Define the centering operator $C_{P^*}$ such that for any measurable kernel $\ell: \cZ \to \cZ \to \R$, $C^* \ell (z, z^\prime)$ is equal to
\[
\ell (z, z^\prime) - {P^* \ell(\cdot, z^\prime) - P^* \ell(z, \cdot)} + [(P^*)^{\otimes 2} \ell] \1,
\]
where $\1$ is the constant one function. 
The following breakdown is key to the inference of $\overline{\STE}$:
\[
\overline{\STE} - \STE^* - \frac{1}{2} \U_n \Ddot{\STE}_{P^*} = \ucD_n + \ucU_n + \ucR_n,
\]
where $\ucD_n, \ucU_n, \ucR_n$ are drift, U-process consistency, and remainder terms, respectively, and are defined as
\begin{align*}
     \ucD_n &= (P_n - P^*)\of{\dot{\STE}_{\hP} + \medint\int \Ddot{\STE}_{\hP}(z, \cdot) \, dP^*(z)}, \\
     \ucU_n &= \frac{1}{2} \U_n (C_{P^*} \Ddot{\STE}_{\hP} - C_{P^*}\Ddot{\STE}_{P^*}),\\
     \ucR_n &= \STE(\hP) + (P^* - \hP) \dot{\STE}_{\hP} + \frac{1}{2} (P^* - \hP)^2 \Ddot{\STE}_{\hP} - \STE(P^*).
\end{align*}
In the next result, we show that all three terms above are $o_P(n^{-1})$ under suitable conditions. Consequently, the asymptotic distribution of $n(\overline{\STE} - \STE(P^*))$ is determined by the limiting distribution of $n \U_n \Ddot{\STE}_{P^*} / 2$. The drift term $\ucD_n$ is $o_P(n^{-1})$ provided that the integrand $\dot{\STE}_{\hP} + \int  \Ddot{\STE}_{\hP}(z, \cdot) \, dP^*(z)$ is $o_P(n^{-1/2})$ in $L^2(P^*)$ norm (\cref{lem:sufficient_condition_for_drift_term}). This result follows from a conditioning argument that combines Chebyshev’s inequality with the dominated convergence theorem. We prove the desired rate on our integrand in \cref{lem:drift_term}. 
The U-process term $\ucU_n$ is also $o_P(n^{-1})$ using similar arguments using Chebyshev's inequality, only requiring $L^2(P^* \otimes P^*)$ convergence of $C_{P^*} \Ddot{\STE}_{\hP}$ to $\Ddot{\STE}_{P^*}$---see \cref{lem:U-process_term}. Finally, the remainder term $\ucR_n$ is $o_p(n^{-1})$ under regularity conditions commonly satisfied by smooth statistical functionals \citep{luedtke2019omnibus, robins2009quadratic}. We verify these conditions for $\ucR_n$ in \cref{lem:remainder_term} and discuss the feasibility of the required assumptions later in this section. Similar to the first-order one-step estimator, a cross-fitting approach for $\overline{\STE}$ is recommended \cite{kim2024dimension}, as it can improve sample efficiency.

For brevity, we write $P_{Y \mymid a, X}$ for $P_{Y \mymid A=a, X}$, $\hat{e}\,$ for $e_{\hP}$, and $e^*$ for $e_{P^*}$. Recall $k_{\mu}$ is the kernel associated with operator $K_\mu$ and defined in \cref{lem:K_mu_kernel}. We now state the main theorem on asymptotics of the second-order one-step estimator $\overline{\STE}$. 
\begin{theorem}[Null distribution]\label{thm:null_asymptotics}
Let $P^* \in \ucH_0$. Suppose:
\begin{enumerate}
    \item the estimated nuisance parameters satisfy 
    \[
    P^*_{X} \of{\tfrac{\hat{e}(a \mymid X)}{e^*(a \mymid X)} - 1} \big({\hP_{Y \mymid a, X} - P^*_{Y \mymid a, X}}\big) k_{\hP_1}(Y, y) = o_p(n^{-1/2}),
    \]
    uniformly in $y \in \cY$, $a \in \{1,0\}$,
    \item $\|C_{P^*} \Ddot{\STE}_{\hP} - C_{P^*}\Ddot{\STE}_{P^*}\|_{L^2(P^* \otimes P^*)} \xrightarrow[]{p} 0$, and
    \item the von Mises remainder satisfies $\ucR_n = o_p(n^{-1})$.
\end{enumerate}
Then, 
\begin{equation}\label{eq:second_order_asymptotics}
    n \overline{\STE} = \frac{n}{2} \U_n \Ddot{\STE}_{P^*} + o_p(1) \xrightarrow[]{d} \sum_{j=1}^\infty \frac{\lambda_j}{2}(N_j^2 - 1),
\end{equation}    
where $\{\lambda_j\}_{j=1}^\infty$ are the eigenvalues of the integral operator $f \mapsto \int \Ddot{\STE}_{P^*}(\cdot, z) \, f(z) \, dP^*(z)$, counted with multiplicity, and $\{N_j\}_{j=1}^\infty$ are iid standard normal random variables. 
\end{theorem}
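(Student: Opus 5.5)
The plan is to start from the exact decomposition stated just before the theorem,
\[
\overline{\STE} - \STE^* - \tfrac{1}{2}\U_n \Ddot{\STE}_{P^*} = \ucD_n + \ucU_n + \ucR_n ,
\]
and use $\STE^* = 0$, which holds because $P^* \in \ucH_0$ gives $P_1^* = P_0^*$ and the Sinkhorn divergence of a measure with itself vanishes. The first claim, $n\overline{\STE} = \frac{n}{2}\U_n\Ddot{\STE}_{P^*} + o_p(1)$, then reduces to showing that each of $n\ucD_n$, $n\ucU_n$ and $n\ucR_n$ is $o_p(1)$. Condition 3 gives $n\ucR_n = o_p(1)$ directly.

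\textbf{U-process term.} I will condition on the independent sample used to build $\hP$. The kernel $\ell_n := C_{P^*}\Ddot{\STE}_{\hP} - C_{P^*}\Ddot{\STE}_{P^*}$ is then fixed and $P^*$-degenerate. Moreover $C_{P^*}\Ddot{\STE}_{P^*} = \Ddot{\STE}_{P^*}$, since $\Ddot{\STE}_{P^*}$ is already $P^*$-centered. For a degenerate kernel, $\E[(\U_n \ell_n)^2 \mymid \hP] = \tfrac{2}{n(n-1)}\|\ell_n\|^2_{L^2(P^{*\otimes 2})}$. Chebyshev's inequality, condition 2, and a dominated-convergence argument on the conditional probabilities (\cref{lem:U-process_term}) then give $n\ucU_n = o_p(1)$. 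One point needs checking: $\U_n C_{P^*}\Ddot{\STE}_{\hP}$ differs from $\U_n \Ddot{\STE}_{\hP}$ by linear and constant pieces, and these should be exactly what is absorbed into $\ucD_n$ and $\ucR_n$. I will verify that this bookkeeping matches the stated decomposition.

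\textbf{Drift term.} This is the main obstacle. By \cref{lem:sufficient_condition_for_drift_term}, it suffices to show that
\[
h_n := \dot{\STE}_{\hP} + \medint\int \Ddot{\STE}_{\hP}(z,\cdot)\,dP^*(z)
\]
is $o_p(n^{-1/2})$ in $L^2(P^*)$; the conditional Chebyshev bound $\mathrm{Var}((P_n-P^*)h_n \mymid \hP) \le \|h_n\|^2/n$ then finishes. To prove this rate (\cref{lem:drift_term}), I will linearize the first-order EIF at $\hP$ around the null. Because $\dot{\STE}$ vanishes on $\ucH_0$, the centered potentials $\upsilon_a^{(\hP_1,\hP_0)}$ are first-order equal to $\pm K_{\hP_1}(\hP_1-\hP_0)$, by the Hadamard expansion of $S_\eps$ from \cite{kokot2025coreset}. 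Meanwhile, $\int \Ddot{\STE}_{\hP}(z,\cdot)\,dP^*(z)$ applies the same AIPW-type operator to $\omega_{\hP}\,\cdot\,(P^*$-integrated$\ k_{\hP_1})$. This integrated quantity equals the AIPW estimate of $\psi^1 - \psi^0$ under $P^*$, which is zero, plus a doubly robust bias.

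The leading terms should therefore cancel. What remains is the mixed bias: a product of the propensity error $\hat e/e^* - 1$ and the outcome-regression error $(\hP_{Y\mid a,X} - P^*_{Y\mid a,X})$, integrated against $k_{\hP_1}(Y,y)$. Condition 1 makes this $o_p(n^{-1/2})$ uniformly in $y$. Carrying out the cancellation will need two facts: $k_{\hP_1}$ is uniformly bounded, from boundedness of $K_\mu$ on the quotient space \citep{lavenant2024riemannian}, and the first-order remainder of the Sinkhorn expansion is absorbed. I expect this to be the hardest step.

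\textbf{Limit law.} The kernel $\Ddot{\STE}_{P^*}$ is symmetric, $P^*$-degenerate, and in $L^2(P^{*\otimes 2})$, because $k_{P_1}$ is bounded and $\omega_{P^*}$ is bounded by strong positivity. Hence its integral operator is Hilbert--Schmidt, with eigen-expansion $\sum_j \lambda_j \phi_j\otimes\phi_j$. The classical limit theorem for degenerate U-statistics (Serfling; Gregory) then gives $n\U_n\Ddot{\STE}_{P^*} \xrightarrow{d} \sum_j \lambda_j(N_j^2-1)$. Halving this and applying Slutsky's lemma yields \eqref{eq:second_order_asymptotics}.
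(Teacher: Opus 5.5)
Your architecture is the paper's: the three-term decomposition with $\STE^*=0$, condition~3 for $\ucR_n$, conditional Chebyshev for the degenerate kernel $C_{P^*}\Ddot{\STE}_{\hP}-\Ddot{\STE}_{P^*}$ as in \cref{lem:U-process_term}, \cref{lem:sufficient_condition_for_drift_term} for the drift, and the classical degenerate $U$-statistic limit. The bookkeeping you flag does check out. Since $\hP\dot{\STE}_{\hP}=0$ and $\Ddot{\STE}_{\hP}$ is $\hP$-degenerate (so $(P^*-\hP)^{\otimes 2}\Ddot{\STE}_{\hP}=(P^*)^{\otimes 2}\Ddot{\STE}_{\hP}$), the linear and constant pieces of $\U_n C_{P^*}\Ddot{\STE}_{\hP}$ cancel exactly against $\ucD_n$ and $\ucR_n$.

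The drift step, however, does not close as you describe it. Write your $h_n$ as $Q_{\hP}\big[\tfrac{a}{\hat e(1\mid x)}\zeta_1(y)\big]+Q_{\hP}\big[\tfrac{1-a}{\hat e(0\mid x)}\zeta_2(y)\big]$, with $\zeta_1(y)=\wh{\upsilon}_1(y)+Q_{P^*,\hP}\big(k_{\hP_1}(y,\cdot)\otimes\omega_{\hP}\big)$ and $\zeta_2$ analogous.

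\emph{The integrated term is not ``zero plus a doubly robust bias''.} Because $Q_{P^*,\hP}$ contains the term $-\hP$, it equals $\int k_{\hP_1}(y,y')\,d(P_1^*-P_0^*)(y')-\int k_{\hP_1}(y,y')\,d(\hP_1-\hP_0)(y')$ plus the mixed propensity/outcome bias. Under the null, it is exactly the surviving plug-in term $-K_{\hP_1}(\hP_1-\hP_0)(y)$ that cancels the leading part of $\wh{\upsilon}_1$ (and symmetrically in $\zeta_2$). Without it, $\zeta_1$ would only be $O_p(n^{-r})$, and $\ucD_n$ would not be $o_p(n^{-1})$.

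\emph{The first-order claim for $\wh{\upsilon}_a$ needs an argument.} That $\wh{\upsilon}_a$ is first-order $\pm K_{\hP_1}(\hP_1-\hP_0)$ comes from the Hadamard expansion of the centered potentials, not of $S_\eps$. Hadamard differentiability at a fixed point gives no remainder control when the base point $(\hP_1,\hP_1)$ moves with $n$. The paper therefore expands at the fixed null point $(P_1^*,P_1^*)$ via \cref{cor:upsilon_hadamard_diff}. This yields $K_{P_1^*}(\hP_1-\hP_0)$, plus an additive constant, plus an $O_p(n^{-2r})$ remainder. The constant is harmless, since $Q_{\hP}$ annihilates $a/\hat e(1\mid x)$ and $(1-a)/\hat e(0\mid x)$. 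It then replaces $K_{P_1^*}$ by $K_{\hP_1}$ using \cref{lem:sinkhorn_operator_stability}, at a further cost of $O_p(n^{-2r})$.

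\emph{A rate assumption is required.} Both steps need $\norm{\hP_a-P_a^*}_{\ell^\infty(\cH_1)}=O_p(n^{-r})$ for some $r>1/4$. This rate is not among the three listed conditions. However, \cref{lem:drift_term} and the discussion after the theorem impose it, so your proof must assume it explicitly as well.
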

The proof of \cref{thm:null_asymptotics} is presented in \cref{appendix:proof_of_null_asymptotics}.
 We now discuss the feasibility of the conditions. We let the initial estimator $\hP$ of the data-generating distribution $P^*$ satisfies $\|\hP_a - P^*_a\|_{\ell^\infty(\cH_1)} = \cO_p(n^{-r})$ for some $1/4 < r < 1/2$ and $a \in \{1,0\}$. 
 The first condition is standard in the doubly robust causal inference literature \cite{luedtke2024one} and is implied by $o_p(n^{-1/4})$ convergence of the nuisance estimators in their appropriate norms. The second condition is a consistency requirement ensuring stability of the second-order term; it follows from convergence of $\omega_{\hP}^{\otimes 2} \odot k_{\hP_1}$ to $\omega_{P^*}^{\otimes 2} \odot k_{P^*_1}$ in $L^2(P^* \otimes P^*)$, together with a uniform bound on the Radon–Nikodym derivatives $d\hP / dP^*$ over $\hP \in \mathcal P$. The third condition controls the remainder of the second-order von Mises expansion and is $o_p(n^{-1})$ under $n^{-1/3}$-rate conditions on the nuisance estimators (\cref{lem:remainder_term}).

\subsection{Formulation of Test}

To test the null hypothesis that $P^* \in \ucH_0$, we use the test statistic $n \overline{\STE}$. Under the null, \cref{thm:null_asymptotics} establishes that $n \overline{\STE}$ converges weakly to a weighted sum of centered chi-squared distributions. A non-conservative test of nominal level $\alpha \in (0,1)$ rejects the null hypotheses when $n \overline{\STE}$ exceeds the $(1-\alpha)$th quantile of the limiting distribution, $q_{1-\alpha}$. The weak convergence in \eqref{eq:second_order_asymptotics} implies the corresponding test asymptotically attains nominal type~I error:
\begin{equation}\label{eq:nominal_type1_error}
    \limsup_{n \to \infty} P^*(n \overline{\STE} \geq q_{1-\alpha}) = \alpha.
\end{equation}

Moreover, if $\hat{q}_{1-\alpha}$ is a consistent estimator of $q_{1-\alpha}$, then the same conclusion holds with $q_{1-\alpha}$ replaced by $\hat{q}_{1-\alpha}$ in \eqref{eq:nominal_type1_error}.
To obtain such an estimator, we use the general approximation strategy of \citet{gretton2009fast}, which was adapted to tests based on second-order influence functions in \citet{luedtke2019omnibus}. This approximates the spectrum of the operator $f \mapsto \int \Ddot{\STE}_{P^*}(\cdot, z) f(z) dP(z)$ through that of an empirical Gram matrix. In our case, this is the Gram matrix of the kernel $\Ddot{\STE}_{P^*}$, which depends on ${P^*}$ through three nuisance parameters: the propensity $e^*$, outcome regression model $P_{Y \mymid A, X}^*$, and entropic potentials $(\upsilon_1^{(P_1^*, P_0^*)}, \upsilon_0^{(P_1^*, P_0^*)})$. Since these nuisances are unknown, we estimate them to obtain $\Ddot{\STE}_{\hP}$, form its empirical Gram matrix, and compute its eigenvalues $\{\hat{\lambda}_j\}_{j=1}^n$. The limiting distribution is then approximated by $\sum_{j=1}^n \hat{\lambda}_j (N_j^2 - 1)$, and its $(1-\alpha)$-th quantile $\hat{q}_{1-\alpha}$ is obtained via Monte Carlo simulation. Under sufficient regularity---specifically, if the nuisance estimators converge at rates fast enough to guarantee operator-norm consistency of the plug-in kernel and continuity of the limiting spectral distribution---nominal type 1 error control follows by standard kernel spectral approximation and perturbation arguments \citep{shawe2005eigenspectrum, zwald2005convergence, rosasco2010learning}.

Regarding power, the proposed test will reject fixed alternatives $P^*\not\in \ucH_0$ with probability tending to $1$. This follows from the relation $\overline{\STE}=\wh{\STE} + \U_n \Ddot{\STE}_{\hP}/2$ between the second- and first-order bias-corrected estimators, the approximate 1-degeneracy of $\U_n\Ddot{\STE}_{\hP}$ resulting from its $\hP$-centering, and the asymptotic normality in \eqref{eq:first_order_asymptotics}---see \cref{lem:power_result}.

\subsection{STEAgg: Max-Aggregated Test}\label{sec:STEAgg}

In finite samples, power of the STE-based test can depend on the choice of entropic regularization parameter $\eps$: small values cause dispersion in the asymptotic distribution of the U-statistic, while large values oversmooth the discrepancy and reduce power. To reduce sensitivity to this choice, we propose a multiple testing procedure, STEAgg, that aggregates evidence over a finite set of stable candidate regularization parameters $\Xi=\{\eps_1,\dots,\eps_m\}\subset(0,\infty)$. We also establish asymptotic nominal type~I error guarantee for STEAgg. 

For each $\eps\in\Xi$, let $\cT_{n,\eps}=n\overline{\STE}$ denote the test statistic with explicit dependence on $n$ and $\eps$. Under the null, \cref{thm:null_asymptotics} shows that $\cT_{n,\eps}\xrightarrow{d} W(\eps)$, where $W(\eps)$ is the Gaussian chaos limit in \eqref{eq:second_order_asymptotics}, induced by the spectral decomposition of the degenerate kernel $\ddot{\STE}_{P^*}$. We define the aggregated test statistic
\begin{equation}
\label{eq:STEAgg}
\mathcal T_n^{\mathrm{agg}}
=
\max_{\eps \in \Xi}\frac{\mathcal T_{n,\eps}}{q_{n,\eps}},
\end{equation}
where $q_{n,\eps}$ consistently estimates the $(1-\beta)$ quantile of $W(\eps)$ for a pre-specified $\beta\in(0,1)$. Normalization by $q_{n, \eps}$ calibrates each statistic $\cT_{n, \eps}$ to the scale of its own marginal null distribution. This ensures that aggregation compares evidence for different $\eps$ on a common scale, rather than allowing monotonic dependence on $\eps$ to dominate the maximum.

\begin{theorem}[Asymptotic behavior of STEAgg]
\label{thm:agg_null_distribution}
If the null holds ($P^* \in \ucH_0$), the conditions of \cref{thm:null_asymptotics} hold, and $q_{n,\eps}\xrightarrow{P} q_\eps>0$ for each $\eps\in\Xi$, then
\[
\mathcal T_n^{\mathrm{agg}} \xrightarrow{d} W^{\mathrm{agg}} := \max_{\eps \in \Xi}W(\eps)/q_{\eps}.
\]
Further, if $P^* \notin \ucH_0$ is a fixed alternative, the nuisance estimators satisfy the conditions in \cref{lem:first_order_drift_term}, and \cref{lem:first_order_remainder_term}, and, for each $\eps \in \Xi$, $\ddot{\STE}_{\eps, P^*} \in L^2(P^* \otimes P^*)$, and $\|{C_{P^*} \ddot{\STE}_{\eps, \hP} - C_{P^*} \ddot{\STE}_{\eps, P^*}}\|_{L^2(P^* \otimes P^*)} = o_p(1)$, then the proposed test will reject the null with probability converging to $1$ as $n\to\infty$.
\end{theorem}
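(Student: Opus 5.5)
The proof splits into the null part and the alternative part.

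\emph{Null part.} The key point is that weak convergence must hold jointly over $\eps\in\Xi$. Marginal convergence from \cref{thm:null_asymptotics} is not enough, because the maximum depends on the joint law. For each $\eps$, \cref{thm:null_asymptotics} already gives the representation
\[
\cT_{n,\eps}=\tfrac{n}{2}\U_n\ddot{\STE}_{\eps,P^*}+o_p(1).
\]
Since $\Xi$ is finite, the $o_p(1)$ terms vanish jointly. Hence it suffices to show that the vector
\[
\bigl(\tfrac{n}{2}\U_n\ddot{\STE}_{\eps,P^*}\bigr)_{\eps\in\Xi}
\]
converges jointly to a random vector $(W(\eps))_{\eps\in\Xi}$ whose marginals are the chaos limits in \eqref{eq:second_order_asymptotics}.

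I would apply the Cramér--Wold device. For fixed weights $(c_\eps)$, the combination $\sum_\eps c_\eps\ddot{\STE}_{\eps,P^*}$ is again a symmetric, $P^*$-degenerate kernel in $L^2(P^*\otimes P^*)$, since degeneracy and square integrability are preserved under finite linear combinations. The classical limit theorem for degenerate U-statistics then yields convergence to a Gaussian chaos. An equivalent and more concrete route is to expand every kernel in a common orthonormal basis $\{e_j\}$ of $L^2_0(P^*)$. Then
\[
\tfrac{n}{2}\U_n\ddot{\STE}_{\eps,P^*}
\]
is approximated by a quadratic form in the jointly asymptotically Gaussian vector $\bigl(\sqrt n P_n e_j\bigr)_j$. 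A truncation argument, with tail control by the $L^2$ norm as in the proof of the single-kernel case, gives a joint limit $W(\eps)=\tfrac12\,\mathbb{I}_2(\ddot{\STE}_{\eps,P^*})$, all defined on the same isonormal process.

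Given the joint limit, the rest is routine. Combine it with $q_{n,\eps}\xrightarrow{P}q_\eps>0$ via Slutsky's lemma. Then apply the continuous mapping theorem to $(w,q)\mapsto\max_\eps w_\eps/q_\eps$, which is continuous wherever $q>0$. This gives $\cT_n^{\agg}\xrightarrow{d}W^{\agg}$. I expect the joint-chaos step to be the main technical point, though it is standard.

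\emph{Alternative part.} Fix any $\eps\in\Xi$. Because $P^*\notin\ucH_0$ and the Sinkhorn divergence is a divergence (positive for distinct measures), we have $\STE_\eps(P^*)>0$. I would write
\[
\overline{\STE}_\eps=\wh{\STE}_\eps+\tfrac12\U_n\ddot{\STE}_{\eps,\hP}.
\]
Under the conditions of \cref{lem:first_order_drift_term} and \cref{lem:first_order_remainder_term}, the first-order expansion gives $\wh{\STE}_\eps=\STE_\eps(P^*)+O_p(n^{-1/2})$.

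For the second-order term, I would use that $\ddot{\STE}_{\eps,\hP}$ is $\hP$-centered, so that
\[
\U_n\ddot{\STE}_{\eps,\hP}=\U_n C_{P^*}\ddot{\STE}_{\eps,\hP}+2(P_n-P^*)h_n+(P^*)^{\otimes2}\ddot{\STE}_{\eps,\hP}
\]
for a projection $h_n$. The assumed $L^2$ convergence gives $\U_n C_{P^*}\ddot{\STE}_{\eps,\hP}=O_p(n^{-1})$ by the Chebyshev argument of \cref{lem:U-process_term}. The remaining two terms are $o_p(1)$, either directly or as in \cref{lem:power_result}. Therefore $\overline{\STE}_\eps\xrightarrow{P}\STE_\eps(P^*)>0$, and so $\cT_{n,\eps}\to\infty$ in probability.

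Finally, $q_{n,\eps}$ is bounded in probability; under the alternative I would assume it, or show it, using the stated integrability of $\ddot{\STE}_{\eps,P^*}$. Consequently $\cT_n^{\agg}\ge\cT_{n,\eps}/q_{n,\eps}\to\infty$. The critical value, namely the $(1-\alpha)$ quantile of $W^{\agg}$ or a consistent estimate of it, stays bounded, so the rejection probability tends to $1$.
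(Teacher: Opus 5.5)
Your proposal is correct and takes essentially the same route as the paper. The null part matches the paper's argument step for step: Cram\'er--Wold applied to finite linear combinations of the degenerate kernels $\ddot{\STE}_{\eps,P^*}$, all limits realized as $\tfrac12 I_2(\cdot)$ on a common isonormal process, then Slutsky and the continuous mapping theorem for the maximum. The alternative part has the same skeleton as the paper's: each $\cT_{n,\eps}\to\infty$, $q_{n,\eps}$ is positive and $O_p(1)$, the maximum dominates a single coordinate, and the limiting critical value is finite.

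The one difference is in how you show $\overline{\STE}_\eps\to\STE_\eps(P^*)>0$. You work at the $O(1)$ scale through the explicit split $\U_n\ddot{\STE}_{\eps,\hP}=\U_n C_{P^*}\ddot{\STE}_{\eps,\hP}+2(P_n-P^*)h_n+(P^*)^{\otimes 2}\ddot{\STE}_{\eps,\hP}$. This isolates the non-degenerate part of $\ddot{\STE}_{\eps,\hP}$ more transparently than the paper's appeal to degeneracy in \cref{lem:power_result}. Note also that the last term is automatically nonnegative, because $k_{\hP_1}$ is positive semidefinite, so you do not even need it to vanish.
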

The proof proceeds by first deriving the joint limit distribution of $(\mathcal T_{n,\eps_1}, \dots, \mathcal T_{n,\eps_m})$. Then we apply Slutsky’s theorem together with the continuous mapping theorem to obtain the limiting distribution of the maximum---see \cref{appendix:agg}.

\section{Finite-Sample Estimation}\label{sec:finite-sample}

We derive explicit, computable formulas for the first- and second-order one-step estimators, $\wh{\STE}$ and $\overline{\STE}$. Nuisance parameters are estimated on one data split, with all remaining computations performed on the empirical distribution of the other data. In finite samples, expectation operators in \eqref{eq:first-order-EIF} and \eqref{eq:second-order-EIF} are replaced by matrix operations, kernel evaluations by Gram matrices, and function evaluations by vectors, yielding concise, intuitive, and implementable formulas in closed form.

\paragraph{Sample splitting and nuisance estimation.}
Let $\mathcal D_n^1 = \{z_i\}_{i=1}^n$ and $\mathcal D_n^2 = \{z_i'\}_{i=1}^n$ be independent
samples from $P$, where $z_i = (x_i,a_i,y_i)$. Using $\mathcal D_n^2$, we estimate the nuisances: the
propensity score $e_P$, outcome regression $P_{Y\mymid A,X}$, centered entropic
potentials $(\upsilon_1^{(P_1,P_0)},\upsilon_0^{(P_1,P_0)})$, and self-transport entropic potential $\varphi_1^{(P_1, P_1)}$. Modern nonparametric learning methods can be used for estimating $e_P$ and $P_{Y\mymid A,X}$, while entropic potentials should be computed via
the Sinkhorn algorithm \cite{cuturi2013sinkhorn}. We denote the resulting estimators by $\hat{e}, \hP_{Y \mymid A, X}, \hat{\upsilon}_1, \hat{\upsilon}_0, \hat{\varphi}_1$. Let $\mathbf{1}_n$ and $\mathbf{1}_{n \times n}$ denotes the all-ones vector and matrix, respectively. Also define the data projection operator $\bD_{A,X}$ acting on tensors $\bA \in \R^{2 \times n, p}$ by $(\bD_{AX} \bA)_{[i, :]} = \bA_{[2-a_i, i, :]}$. 

\paragraph{Precomputations.}
Let $\bG$ be the $n \times n$ Gram matrix of $\cD_n^1$. For $i\in [n]$ and $a \in \{1,0\}$, define the nuisance evaluations
\[
\bE_i = \hat{e}(x_i), \,\,
\bU^a_i = \hat\upsilon_a(y_i), \,\, \bF_i = \hat\varphi_1(y_i).
\]
Let $\bP \in \mathbb R^{n\times 2\times n}$ collect evaluations of $\hP_{Y \mymid A, X}$, so $\bP_{i,j,k} = \hP_{Y \mymid A, X}(y_i \mymid 2-j, x_k)$, and define the marginal
$\bP^1 $ (resp. $\bP^0$) by averaging the treatment (resp. control) slice of $\bP$ along  along the third axis; e.g., $\bP^1_i = \frac{1}{n} \sum_{k=1}^n \bP_{i,1,k}$.

\paragraph{Debiasing.} Using \cref{alg:first-order-eif,alg:second-order-eif} in \cref{appendix:finite_sample_alg}, we compute the first- and second-order EIF evaluations on $\cD_n^1$, yielding $\bI^1 \in \mathbb R^n$ and $\bI^2 \in \mathbb R^{n\times n}$. In both cases, the intermediate steps $T_i$ correspond to the four additive components of the operator $(I - P_{Y\mymid A,X} + P_{Y,A\mymid X} - P)$ appearing in \eqref{eq:first-order-EIF} and \eqref{eq:second-order-EIF}. In both algorithms, multiplication by matrix $\bP$ is done along the $Y$ (first) axis of both matrices. The resulting one-step estimator is $\wh{\STE}_n = S_\eps(\bP^1, \bP^0) + \ (1_n^\top \bI^1)/n$, and the test statistic \eqref{eq:second_order_one_step} is 
$\overline{\STE}_n = \wh{\STE}_n +  \of{1_n^\top \bI^2 1_n - \text{Tr}(\bI^2)} / 2 n (n-1).$

\paragraph{Computational complexity.} Treating the generic estimators that yield $\hat{e}$ and $\widehat{P}_{Y\mymid A,X}$ as black boxes, the procedure's remaining complexity is $O(n^2(n+t))$, with $t$ the number of Sinkhorn iterations. This comprises $O(n^2t)$ for the Sinkhorn algorithm \citep{dvurechensky2018computational}, $O(n^2)$ for the precomputations and \cref{alg:first-order-eif}, and $O(n^3)$ for \cref{alg:second-order-eif}. Several strategies can substantially reduce these costs; see \cref{appendix:acceleration}. In particular, low-rank Nyström approximations to the Gibbs kernel matrix used in the Sinkhorn algorithm can reduce the per-iteration Sinkhorn cost from $O(n^2)$ to $O(nr)$, where $r\ll n$ is the Nyström rank. Similar Nyström-based ideas can also be used to accelerate the computations in \cref{alg:first-order-eif} and \cref{alg:second-order-eif}. 

\section{Experiments}\label{sec:experiments}

We evaluate our methods on simulated Gaussian outcomes and microscopy medical image outcomes \cite{veeling2018rotation}. In both settings, the causal mechanism is simulated to control the treatment effect, so ground truth regarding the null hypothesis is known. We compare our proposed test to an MMD-based distributional treatment effect (shortened as MTE) test constructed using a similar second-order influence function–based debiasing strategy---see \cref{appendix:MMD_DTE}.

The propensity score nuisance $e_{P^*}$ is estimated using XGBoost \cite{chen2016xgboost} with hyperparameters selected via 5-fold cross-validation. The distribution $P^*_{Y\mymid A,X}$ is modeled using a conditional normalizing flow with affine coupling layers, following the RealNVP framework \citep{dinh2017density}. Centered entropic potentials are computed via the Sinkhorn algorithm \cite{cuturi2013sinkhorn}. See \cref{appendix:compute} for computational details, memory usage, and wall-clock time for our experiments. 

\subsection{Simulation Study}

\begin{figure*}[t]
  \vskip 0.2in
  \begin{center}
    \centerline{\includegraphics[width=\textwidth]{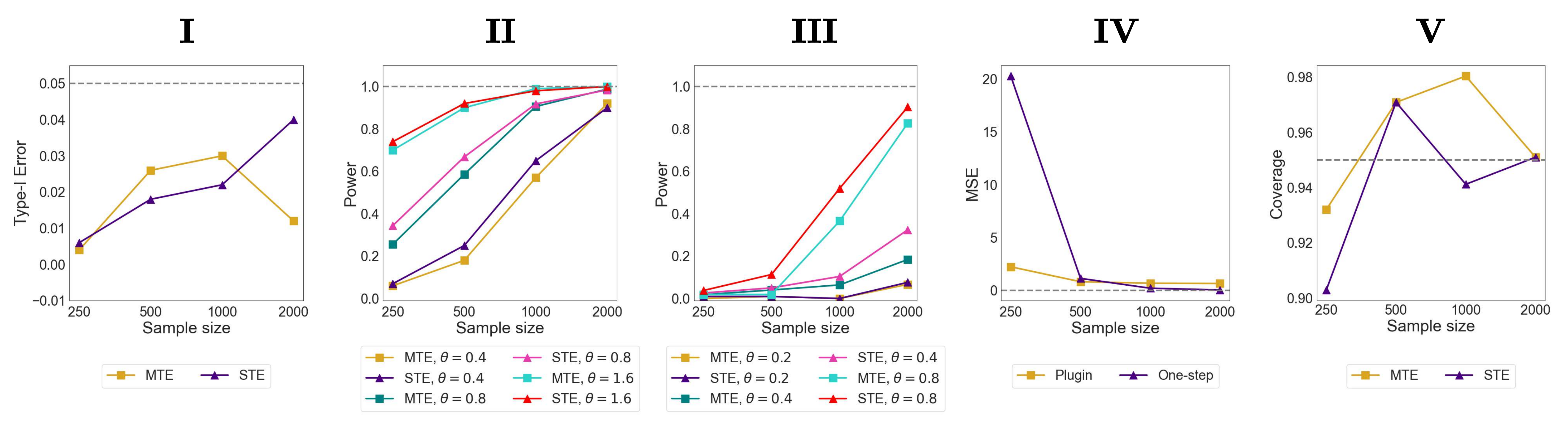}}
    \caption{\textbf{I}: Type~I error of the MTE and STE under null ($\theta=0.0$); \textbf{II}: Power of MTE and STE under increasing separation between counterfactual distributions (increasing $\theta$) under Exp (i); \textbf{III}: Power of MTE and STE under increasing separation between counterfactual distributions (increasing $\theta$) under Exp (ii); \textbf{IV}: Mean squared error of plugin vs one-step STE for $\theta = 1.6$ from Exp (i); \textbf{V}: Coverage of Wald-type $95\%$ confidence intervals for $\theta = 1.6$ from Exp (i). 
    }
    \label{fig:simulations}
  \end{center}
\end{figure*}

We conduct a simulation study to assess the finite-sample performance of the proposed one-step estimator for STE and the associated hypothesis test under known ground truth. We consider a low-dimensional model where the observed data $(X, A, Y) \sim P$ is simulated as follows: the covariate $X \in \R^3$ is sampled from a standard normal distribution, the treatment assignment assumes a logistic model on $X$, and the conditional outcome $Y \mymid A, X$ follows a Gaussian distribution. 

To probe the sensitivity of MTE and STE, we consider two distinct regimes governing the separation between the counterfactual outcome distributions $P_0$ and $P_1$.
\textbf{Exp (i)} (Mean shift): The counterfactual distributions differ only in their means: $P_0 = \cN(0, \Sigma)$ and $P_1 = \cN(\theta \mathbf{1}_2, \Sigma)$, where $\theta \geq 0$ controls the magnitude of the shift. \textbf{Exp (ii)} (Covariance shift): The counterfactual distributions share the same mean but differ in covariance: $P_0 = \cN(0, \Sigma)$ and $P_1 = \cN(\mathbf{0}_2, \Sigma + \theta \Delta)$, where $\Delta = uv^\top + vu^\top$ and $u, v$ are the eigenvectors of $\Sigma$. This is a symmetric rank-2 covariance perturbation that induces a shearing-type geometric deformation. The parameter $\theta$ controls the separation between counterfactual distributions, with $\theta=0$ corresponding to the null and $\theta>0$ alternatives. See exact simulation setup in \cref{appendix:datasets-simulations}. Results are averaged over $10^3$ Monte Carlo replications and we report the results using the median heuristic to select $\eps$. Additional sensitivity analysis over a finite grid of $\eps$ values, together with the performance of the corresponding max-aggregated tests, are reported in \cref{appendix:datasets-simulations}.

\cref{fig:simulations} reports
(I) empirical type~I error of the MTE and STE tests at the nominal $5\%$ level,
(II) empirical power across values of $\theta$ in Exp (i), (III) empirical power across values of $\theta$ in Exp (ii),
(IV) mean squared error of the plug-in and one-step STE estimators, and (V) empirical coverage of Wald-type $95\%$ confidence intervals.
As the sample size increases, both tests approach a nominal type~I error and unit
power in this setting.
Panel~(IV) shows that one-step estimation substantially reduces plug-in bias. 
Though the confidence interval coverages in (V) are similar, the estimand covered by the STE interval better captures the geometry of the problem than does the one covered by the MTE interval. Indeed for Exp (ii), when $\theta = 0.2$, the (MTE, STE) values are $(0.26, 4.12) \times 10^{-3}$, when $\theta = 0.4$, the (MTE, STE) values are $(1.04, 16.42) \times 10^{-3}$, and at $\theta = 0.8$, the (MTE, STE) values are $(4.60, 68.43) \times 10^{-3}$.

\subsection{Image Dataset}

We next consider a setting with high-dimensional, structured outcomes. We use the PatchCamelyon (PCam) dataset \cite{veeling2018rotation}, which consists of $96\times96$ microscopy medical images, labeled by the presence or absence of metastatic tissue. We partition the images into two sets: $\mathcal D_1$ (metastatic) and $\mathcal D_0$ (non-metastatic), each containing over $100K$ images. To reduce dimensionality, we use the first ten principal components of ResNet-18 \cite{he2016deep} embeddings.

We generate synthetic observational data $(X,A,Y)$.
Covariates $X\in\mathbb R^3$ are drawn from a standard Gaussian distribution and determine
both latent disease status and treatment assignment via logistic models.
Outcomes are generated by sampling PCam images conditional on disease status and
treatment. Specifically, non-diseased units always generate outcomes from
$\mathcal D_0$. Diseased untreated units generate outcomes from $\mathcal D_1$,
while diseased treated units generate outcomes from $\mathcal D_0$ with
covariate-dependent probability
$q(x) = \mathrm{expit}\!\left(\theta + 10^{-2}\,\mathbf 1_3^\top x\right)$,
and from $\mathcal D_1$ otherwise. This construction yields heterogeneous,
covariate-dependent treatment effects on the distribution of image outcomes. Each simulation uses $10^4$ observations, split evenly to form $\hP$ and
$P_n$. Results are averaged over $10^3$ Monte Carlo replications for a range of values of $\theta$ between $[-20, 20]$. The case $\theta=-20$ serves as an approximate
null, as treatment success is effectively unobservable at this sample size,
while $\theta=20$ corresponds to near-certain treatment success. Additional implementation
details are provided in \cref{appendix:datasets-pcam}.

Empirical type~I error and power are reported in \cref{fig:pcam} against the probability of success of treatment. For a moderate sample size of $n=5000$ and $\alpha=\textnormal{0.05}$, STE reliably controls type~I error at 0.04, while MTE has it inflated at roughly 0.11. Given that it fails to control type~I error, it is not surprising that MTE has slightly higher power in this setting, though both have power that increases with the probability of success of treatment. Similarly to the previous simulation, STE better captures the geometry of the problem than does MTE. When $\theta=20$, $\textnormal{STE}\approx 2$ while $\textnormal{MTE}\approx 5\times 10^{-3}$.

\section{Conclusion}\label{sec:conclusion}
We propose a novel optimal transport-based DTE that addresses limitations of existing approaches. Unlike kernel-based measures such as MMD, which suffer from flat geometry under limited overlap, and 
$f$-divergence-based DTEs, which require mutual absolute continuity, our formulation faithfully lifts Euclidean geometry to probability spaces and naturally accommodates discrete outcomes.
Our contributions invite future work in several directions. One involves confronting the numerical instability that can arise from large eigenvalues of $(I - T_\mu^2)^{-1}$ \cite{lavenant2024riemannian}. Recent work proposes replacing the Sinkhorn kernel $k_\mu$ with $\xi_\mu$ to enable faster computations \citep[Lem.~12]{kokot2025coreset}; fully establishing the theoretical validity of this substitution for STE is an open problem. Another is to scale inference to larger datasets. For this, we recommend extending our approach to batched Sinkhorn divergences \cite{peyre2019computational}.

\subsubsection*{Acknowledgements}
The authors thank Alex Kokot for helpful discussions. This work was supported by the Patient Centered Outcomes Research Initiative (PCORI, ME-2024C2-39990). The
content is solely the responsibility of the authors and does not necessarily represent the official views of the funding
agency.

\bibliography{ref}

@article{van1991differentiable,
  title={On differentiable functionals},
  author={van der Vaart, Aad},
  journal={The Annals of Statistics},
  pages={178--204},
  year={1991},
  publisher={JSTOR}
}

@book{bickel1993efficient,
  title={Efficient and adaptive estimation for semiparametric models},
  author={Bickel, Peter J and Klaassen, Chris AJ and Bickel, Peter J and Ritov, Ya’acov and Klaassen, J and Wellner, Jon A and Ritov, YA'Acov},
  volume={4},
  year={1993},
  publisher={Springer}
}

@article{robins1994estimation,
  title={Estimation of regression coefficients when some regressors are not always observed},
  author={Robins, James M and Rotnitzky, Andrea and Zhao, Lue Ping},
  journal={Journal of the American statistical Association},
  volume={89},
  number={427},
  pages={846--866},
  year={1994},
  publisher={Taylor \& Francis}
}

@misc{villani2009optimal,
  title={Optimal transport: old and new},
  author={Villani, C{\'e}dric},
  volume={338},
  year={2009},
  publisher={Springer}
}

@misc{genevay2019sample,
      title={Sample Complexity of {S}inkhorn divergences}, 
      author={Aude Genevay and Lénaic Chizat and Francis Bach and Marco Cuturi and Gabriel Peyré},
      year={2019},
      eprint={1810.02733},
      archivePrefix={arXiv},
      primaryClass={math.ST}
}

@article{cuturi2013sinkhorn,
  title={Sinkhorn distances: {L}ightspeed computation of optimal transport},
  author={Cuturi, Marco},
  journal={Advances in neural information processing systems},
  volume={26},
  year={2013}
}

@article{muandet2021counterfactual,
  title={Counterfactual mean embeddings},
  author={Muandet, Krikamol and Kanagawa, Motonobu and Saengkyongam, Sorawit and Marukatat, Sanparith},
  journal={Journal of Machine Learning Research},
  volume={22},
  number={162},
  pages={1--71},
  year={2021}
}

@article{manole2024sharp,
  title={Sharp convergence rates for empirical optimal transport with smooth costs},
  author={Manole, Tudor and Niles-Weed, Jonathan},
  journal={The Annals of Applied Probability},
  volume={34},
  number={1B},
  pages={1108--1135},
  year={2024},
  publisher={Institute of Mathematical Statistics}
}

@article{lavenant2024riemannian,
  title={The {R}iemannian geometry of {S}inkhorn divergences},
  author={Lavenant, Hugo and Luckhardt, Jonas and Mordant, Gilles and Schmitzer, Bernhard and Tamanini, Luca},
  journal={arXiv preprint arXiv:2405.04987},
  year={2024}
}

@article{goldfeld2024limit,
  title={Limit theorems for entropic optimal transport maps and {S}inkhorn divergence},
  author={Goldfeld, Ziv and Kato, Kengo and Rioux, Gabriel and Sadhu, Ritwik},
  journal={Electronic Journal of Statistics},
  volume={18},
  number={1},
  pages={980--1041},
  year={2024},
  publisher={The Institute of Mathematical Statistics and the Bernoulli Society}
}

@article{gonzalez2022weak,
  title={Weak limits of entropy regularized optimal transport; potentials, plans and divergences},
  author={Gonzalez-Sanz, Alberto and Loubes, Jean-Michel and Niles-Weed, Jonathan},
  journal={arXiv preprint arXiv:2207.07427},
  year={2022}
}

@article{luedtke2024one,
  title={One-step estimation of differentiable {H}ilbert-valued parameters},
  author={Luedtke, Alex and Chung, Incheoul},
  journal={The Annals of Statistics},
  volume={52},
  number={4},
  pages={1534--1563},
  year={2024},
  publisher={Institute of Mathematical Statistics}
}

@book{van2000asymptotic,
  title={Asymptotic statistics},
  author={van der Vaart, Aad W},
  volume={3},
  year={2000},
  publisher={Cambridge University Press}
}

@article{kokot2025coreset,
  title={Coreset selection for the {S}inkhorn divergence and generic smooth divergences},
  author={Kokot, Alex and Luedtke, Alex},
  journal={arXiv preprint arXiv:2504.20194},
  year={2025}
}

@article{sriperumbudur2010hilbert,
  title={Hilbert space embeddings and metrics on probability measures},
  author={Sriperumbudur, Bharath K and Gretton, Arthur and Fukumizu, Kenji and Sch{\"o}lkopf, Bernhard and Lanckriet, Gert RG},
  journal={The Journal of Machine Learning Research},
  volume={11},
  pages={1517--1561},
  year={2010},
  publisher={JMLR. org}
}

@article{muandet2017kernel,
  title={Kernel mean embedding of distributions: A review and beyond},
  author={Muandet, Krikamol and Fukumizu, Kenji and Sriperumbudur, Bharath and Sch{\"o}lkopf, Bernhard and others},
  journal={Foundations and Trends{\textregistered} in Machine Learning},
  volume={10},
  number={1-2},
  pages={1--141},
  year={2017},
  publisher={Now Publishers, Inc.}
}

@article{luedtke2019omnibus,
  title={An omnibus non-parametric test of equality in distribution for unknown functions},
  author={Luedtke, Alex and Carone, Marco and van der Laan, Mark J},
  journal={Journal of the Royal Statistical Society Series B: Statistical Methodology},
  volume={81},
  number={1},
  pages={75--99},
  year={2019},
  publisher={Oxford University Press}
}

@inproceedings{feydy2019interpolating,
  title={Interpolating between optimal transport and {MMD} using {S}inkhorn divergences},
  author={Feydy, Jean and S{\'e}journ{\'e}, Thibault and Vialard, Fran{\c{c}}ois-Xavier and Amari, Shun-ichi and Trouv{\'e}, Alain and Peyr{\'e}, Gabriel},
  booktitle={The 22nd International Conference on Artificial Intelligence and Statistics},
  pages={2681--2690},
  year={2019},
  organization={PMLR}
}

@article{van2014higher,
  title={Higher order tangent spaces and influence functions},
  author={van der Vaart, Aad},
  journal={Statistical Science},
  pages={679--686},
  year={2014},
  publisher={JSTOR}
}

@article{imbens2004nonparametric,
  title={Nonparametric estimation of average treatment effects under exogeneity: A review},
  author={Imbens, Guido W},
  journal={Review of Economics and statistics},
  volume={86},
  number={1},
  pages={4--29},
  year={2004},
  publisher={MIT Press 238 Main St., Suite 500, Cambridge, MA 02142-1046, USA journals~…}
}

@inproceedings{smola2007hilbert,
  title={A {H}ilbert space embedding for distributions},
  author={Smola, Alex and Gretton, Arthur and Song, Le and Sch{\"o}lkopf, Bernhard},
  booktitle={International conference on algorithmic learning theory},
  pages={13--31},
  year={2007},
  organization={Springer}
}

@article{gretton2012kernel,
  title={A kernel two-sample test},
  author={Gretton, Arthur and Borgwardt, Karsten M and Rasch, Malte J and Sch{\"o}lkopf, Bernhard and Smola, Alexander},
  journal={The Journal of Machine Learning Research},
  volume={13},
  number={1},
  pages={723--773},
  year={2012},
  publisher={JMLR. org}
}

@incollection{robins2008higher,
  title={Higher order influence functions and minimax estimation of nonlinear functionals},
  author={Robins, James and Li, Lingling and Tchetgen, Eric and van der Vaart, Aad and others},
  booktitle={Probability and statistics: essays in honor of David A. Freedman},
  volume={2},
  pages={335--422},
  year={2008},
  publisher={Institute of Mathematical Statistics}
}

@article{gretton2009fast,
  title={A fast, consistent kernel two-sample test},
  author={Gretton, Arthur and Fukumizu, Kenji and Harchaoui, Zaid and Sriperumbudur, Bharath K},
  journal={Advances in Neural Information Processing Systems},
  volume={22},
  year={2009}
}

@article{rosasco2010learning,
  title={On Learning with Integral Operators.},
  author={Rosasco, Lorenzo and Belkin, Mikhail and De Vito, Ernesto},
  journal={Journal of Machine Learning Research},
  volume={11},
  number={2},
  year={2010}
}

@article{shawe2005eigenspectrum,
  title={On the eigenspectrum of the {G}ram matrix and the generalization error of kernel-{PCA}},
  author={Shawe-Taylor, John and Williams, Christopher KI and Cristianini, Nello and Kandola, Jaz},
  journal={IEEE Transactions on Information Theory},
  volume={51},
  number={7},
  pages={2510--2522},
  year={2005},
  publisher={IEEE}
}

@article{zwald2005convergence,
  title={On the convergence of eigenspaces in kernel principal component analysis},
  author={Zwald, Laurent and Blanchard, Gilles},
  journal={Advances in neural information processing systems},
  volume={18},
  year={2005}
}

@article{leonard2013survey,
  title={A survey of the {S}chrödinger problem and some of its connections with optimal transport},
  author={L{\'e}onard, Christian},
  journal={arXiv preprint arXiv:1308.0215},
  year={2013}
}

@article{kantorovich2006translocation,
  title={On the Translocation of Masses.},
  author={Kantorovich, Leonid V},
  journal={Journal of mathematical sciences},
  volume={133},
  number={4},
  year={2006}
}

@article{fournier2015rate,
  title={On the rate of convergence in {W}asserstein distance of the empirical measure},
  author={Fournier, Nicolas and Guillin, Arnaud},
  journal={Probability theory and related fields},
  volume={162},
  number={3},
  pages={707--738},
  year={2015},
  publisher={Springer}
}

@article{chen2016xgboost,
  title={XGBoost: A Scalable Tree Boosting System},
  author={Chen, Tianqi},
  journal={Cornell University},
  year={2016}
}

@inproceedings{
dinh2017density,
title={Density estimation using Real {NVP}},
author={Laurent Dinh and Jascha Sohl-Dickstein and Samy Bengio},
booktitle={International Conference on Learning Representations},
year={2017},
url={https://openreview.net/forum?id=HkpbnH9lx}
}

@inproceedings{veeling2018rotation,
  title={Rotation equivariant {CNN}s for digital pathology},
  author={Veeling, Bastiaan S and Linmans, Jasper and Winkens, Jim and Cohen, Taco and Welling, Max},
  booktitle={International Conference on Medical image computing and computer-assisted intervention},
  pages={210--218},
  year={2018},
  organization={Springer}
}

@article{martinez2023efficient,
  title={An efficient doubly-robust test for the kernel treatment effect},
  author={Martinez Taboada, Diego and Ramdas, Aaditya and Kennedy, Edward},
  journal={Advances in Neural Information Processing Systems},
  volume={36},
  pages={59924--59952},
  year={2023}
}

@inproceedings{he2016deep,
  title={Deep residual learning for image recognition},
  author={He, Kaiming and Zhang, Xiangyu and Ren, Shaoqing and Sun, Jian},
  booktitle={Proceedings of the IEEE conference on computer vision and pattern recognition},
  pages={770--778},
  year={2016}
}

@article{bitler2006mean,
  title={What mean impacts miss: Distributional effects of welfare reform experiments},
  author={Bitler, Marianne P and Gelbach, Jonah B and Hoynes, Hilary W},
  journal={American Economic Review},
  volume={96},
  number={4},
  pages={988--1012},
  year={2006},
  publisher={American Economic Association}
}

@article{rubin2005causal,
  title={Causal inference using potential outcomes: Design, modeling, decisions},
  author={Rubin, Donald B},
  journal={Journal of the American statistical Association},
  volume={100},
  number={469},
  pages={322--331},
  year={2005},
  publisher={Taylor \& Francis}
}

@article{rubin1974estimating,
  title={Estimating causal effects of treatments in randomized and nonrandomized studies.},
  author={Rubin, Donald B},
  journal={Journal of educational Psychology},
  volume={66},
  number={5},
  pages={688},
  year={1974},
  publisher={American Psychological Association}
}

@article{abadie2002bootstrap,
  title={Bootstrap tests for distributional treatment effects in instrumental variable models},
  author={Abadie, Alberto},
  journal={Journal of the American statistical Association},
  volume={97},
  number={457},
  pages={284--292},
  year={2002},
  publisher={Taylor \& Francis}
}

@article{chernozhukov2013inference,
  title={Inference on counterfactual distributions},
  author={Chernozhukov, Victor and Fern{\'a}ndez-Val, Iv{\'a}n and Melly, Blaise},
  journal={Econometrica},
  volume={81},
  number={6},
  pages={2205--2268},
  year={2013},
  publisher={Wiley Online Library}
}

@article{kennedy2023semiparametric,
  title={Semiparametric counterfactual density estimation},
  author={Kennedy, Edward H and Balakrishnan, Sivaraman and Wasserman, LA},
  journal={Biometrika},
  volume={110},
  number={4},
  pages={875--896},
  year={2023},
  publisher={Oxford University Press}
}

@article{firpo2007efficient,
  title={Efficient semiparametric estimation of quantile treatment effects},
  author={Firpo, Sergio},
  journal={Econometrica},
  volume={75},
  number={1},
  pages={259--276},
  year={2007},
  publisher={Wiley Online Library}
}

@article{williamson2023general,
  title={A general framework for inference on algorithm-agnostic variable importance},
  author={Williamson, Brian D and Gilbert, Peter B and Simon, Noah R and Carone, Marco},
  journal={Journal of the American Statistical Association},
  volume={118},
  number={543},
  pages={1645--1658},
  year={2023},
  publisher={Taylor \& Francis}
}

@article{lin2023causal,
  title={Causal inference on distribution functions},
  author={Lin, Zhenhua and Kong, Dehan and Wang, Linbo},
  journal={Journal of the Royal Statistical Society Series B: Statistical Methodology},
  volume={85},
  number={2},
  pages={378--398},
  year={2023},
  publisher={Oxford University Press US}
}

@Article{ramdas2015wasserstein,
AUTHOR = {Ramdas, Aaditya and Trillos, Nicolás García and Cuturi, Marco},
TITLE = {On Wasserstein Two-Sample Testing and Related Families of Nonparametric Tests},
JOURNAL = {Entropy},
VOLUME = {19},
YEAR = {2017},
NUMBER = {2},
ARTICLE-NUMBER = {47},
URL = {https://www.mdpi.com/1099-4300/19/2/47},
ISSN = {1099-4300},
DOI = {10.3390/e19020047}
}

@article{luedtke2026simplifying,
  title={Simplifying debiased inference via automatic differentiation and probabilistic programming},
  author={Luedtke, Alex},
  journal={Journal of the Royal Statistical Society Series B: Statistical Methodology},
  volume={88},
  number={1},
  pages={313--329},
  year={2026},
  publisher={Oxford University Press UK}
}

@inproceedings{dvurechensky2018computational,
  title={Computational optimal transport: Complexity by accelerated gradient descent is better than by {S}inkhorn’s algorithm},
  author={Dvurechensky, Pavel and Gasnikov, Alexander and Kroshnin, Alexey},
  booktitle={International conference on machine learning},
  pages={1367--1376},
  year={2018},
  organization={PMLR}
}

@article{balakrishnan2025conservative,
  title={Conservative inference for counterfactuals},
  author={Balakrishnan, Sivaraman and Kennedy, Edward and Wasserman, Larry},
  journal={Journal of Causal Inference},
  volume={13},
  number={1},
  pages={20230071},
  year={2025},
  publisher={De Gruyter}
}

@book{hernan2024causal,
  title={Causal Inference: What If},
  author={Hern\'{a}n, Miguel A. and Robins, James M.},
  isbn={9781420076165},
  lccn={2022050839},
  series={Chapman \& Hall/CRC Monographs on Statistics \& Applied Probab},
  year={2024},
  publisher={CRC Press}
}

@incollection{bareinboim2022pearl,
  title={On {P}earl’s hierarchy and the foundations of causal inference},
  author={Bareinboim, Elias and Correa, Juan D and Ibeling, Duligur and Icard, Thomas},
  booktitle={Probabilistic and causal inference: the works of Judea Pearl},
  pages={507--556},
  year={2022}
}

@article{
fawkes2024doubly,
title={Doubly Robust Kernel Statistics for Testing Distributional Treatment Effects},
author={Jake Fawkes and Robert Hu and Robin J. Evans and Dino Sejdinovic},
journal={Transactions on Machine Learning Research},
issn={2835-8856},
year={2024},
url={https://openreview.net/forum?id=5g5zFVj33K},
note={}
}

@article{robins2001comment,
  title={Comment on “Inference for semiparametric models: Some questions and an answer,” by {PJ} {B}ickel and {J}. {K}won},
  author={Robins, James M and Rotnitzky, Andrea},
  journal={Statistica Sinica},
  volume={11},
  number={4},
  pages={920--936},
  year={2001}
}

@article{pfanzagl1982lecture,
  title={Lecture notes in statistics},
  author={Pfanzagl, Johann},
  journal={Contributions to a general asymptotic statistical theory},
  volume={13},
  pages={11--15},
  year={1982},
  publisher={Springer}
}

@misc{small2011hilbert,
  title={Hilbert space methods in probability and statistical inference},
  author={Small, Christopher G and McLeish, Don L},
  year={2011},
  publisher={John Wiley \& Sons}
}

@misc{chernozhukov2018double,
  title={Double/debiased machine learning for treatment and structural parameters},
  author={Chernozhukov, Victor and Chetverikov, Denis and Demirer, Mert and Duflo, Esther and Hansen, Christian and Newey, Whitney and Robins, James},
  year={2018},
  publisher={Oxford University Press Oxford, UK}
}

@article{chernozhukov2017double,
  title={Double/debiased/{N}eyman machine learning of treatment effects},
  author={Chernozhukov, Victor and Chetverikov, Denis and Demirer, Mert and Duflo, Esther and Hansen, Christian and Newey, Whitney},
  journal={American Economic Review},
  volume={107},
  number={5},
  pages={261--265},
  year={2017},
  publisher={American Economic Association 2014 Broadway, Suite 305, Nashville, TN 37203}
}

@article{schick1986asymptotically,
  title={On asymptotically efficient estimation in semiparametric models},
  author={Schick, Anton},
  journal={The Annals of Statistics},
  pages={1139--1151},
  year={1986},
  publisher={JSTOR}
}

@article{klaassen1987consistent,
  title={Consistent estimation of the influence function of locally asymptotically linear estimators},
  author={Klaassen, Chris AJ},
  journal={The Annals of Statistics},
  volume={15},
  number={4},
  pages={1548--1562},
  year={1987},
  publisher={Institute of Mathematical Statistics}
}

@article{del2023improved,
  title={An improved central limit theorem and fast convergence rates for entropic transportation costs},
  author={del Barrio, Eustasio and Sanz, Alberto Gonz{\'a}lez and Loubes, Jean-Michel and Niles-Weed, Jonathan},
  journal={SIAM Journal on Mathematics of Data Science},
  volume={5},
  number={3},
  pages={639--669},
  year={2023},
  publisher={SIAM}
}

@article{robins2009quadratic,
  title={Quadratic semiparametric von {M}ises calculus},
  author={Robins, James and Li, Lingling and Tchetgen, Eric and van der Vaart, Aad W},
  journal={Metrika},
  volume={69},
  number={2},
  pages={227--247},
  year={2009},
  publisher={Springer}
}

@article{peyre2019computational,
  title={Computational optimal transport: With applications to data science},
  author={Peyr{\'e}, Gabriel and Cuturi, Marco and others},
  journal={Foundations and Trends{\textregistered} in Machine Learning},
  volume={11},
  number={5-6},
  pages={355--607},
  year={2019},
  publisher={Now Publishers, Inc.}
}

@article{kim2024dimension,
  title={Dimension-agnostic inference using cross {U}-statistics},
  author={Kim, Ilmun and Ramdas, Aaditya},
  journal={Bernoulli},
  volume={30},
  number={1},
  pages={683--711},
  year={2024},
  publisher={Bernoulli Society for Mathematical Statistics and Probability}
}

@article{luise2019sinkhorn,
  title={Sinkhorn barycenters with free support via {F}rank-{W}olfe algorithm},
  author={Luise, Giulia and Salzo, Saverio and Pontil, Massimiliano and Ciliberto, Carlo},
  journal={Advances in neural information processing systems},
  volume={32},
  year={2019}
}

@article{rosenbaum1983central,
  title={The central role of the propensity score in observational studies for causal effects},
  author={Rosenbaum, Paul R and Rubin, Donald B},
  journal={Biometrika},
  volume={70},
  number={1},
  pages={41--55},
  year={1983},
  publisher={Oxford University Press}
}

@article{waterman1996projected,
  title={Projected score methods for approximating conditional scores},
  author={Waterman, Richard P and Lindsay, Bruce G},
  journal={Biometrika},
  volume={83},
  number={1},
  pages={1--13},
  year={1996},
  publisher={Oxford University Press}
}

@book{taylor1996partial1,
  title={Partial differential equations. 1, Basic theory},
  author={Taylor, Michael Eugene},
  year={1996},
  publisher={Springer}
}

@book{adams2003sobolev,
  title={Sobolev spaces},
  author={Adams, Robert A and Fournier, John JF},
  volume={140},
  year={2003},
  publisher={Elsevier}
}

@book{taylor1996partial3,
  title={Partial differential equations III},
  author={Taylor, Michael Eugene and others},
  volume={2},
  year={1996},
  publisher={Springer}
}

@article{moser1966rapidly,
  title={A rapidly convergent iteration method and non-linear partial differential equations-I},
  author={Moser, J{\"u}rgen},
  journal={Annali della Scuola Normale Superiore di Pisa-Scienze Fisiche e Matematiche},
  volume={20},
  number={2},
  pages={265--315},
  year={1966}
}

@article{gagliardo1959ulteriori,
  title={Ulteriori propriet{\`a} di alcune classi di funzioni in pi{\`u} variabili},
  author={Gagliardo, Emilio},
  journal={Ricerche mat.},
  volume={8},
  pages={24},
  year={1959}
}

@article{nirenberg1959elliptic,
  title={On elliptic partial differential equations},
  author={Nirenberg, Louis},
  journal={Annali della Scuola Normale Superiore di Pisa-Scienze Fisiche e Matematiche},
  volume={13},
  number={2},
  pages={115--162},
  year={1959}
}

@article{schrab2023mmd,
  title={{MMD} aggregated two-sample test},
  author={Schrab, Antonin and Kim, Ilmun and Albert, M{\'e}lisande and Laurent, B{\'e}atrice and Guedj, Benjamin and Gretton, Arthur},
  journal={Journal of Machine Learning Research},
  volume={24},
  number={194},
  pages={1--81},
  year={2023}
}

@book{janson1997gaussian,
  title={Gaussian {H}ilbert spaces},
  author={Janson, Svante},
  number={129},
  year={1997},
  publisher={Cambridge University Press}
}

@article{leucht2013degenerate,
  title={Degenerate {$U$}- and {$V$}-statistics under ergodicity: asymptotics, bootstrap and applications in statistics},
  author={Leucht, Anne and Neumann, Michael H},
  journal={Annals of the Institute of Statistical Mathematics},
  volume={65},
  number={2},
  pages={349--386},
  year={2013},
  publisher={Springer}
}

@book{nualart2019malliavin,
  title={Malliavin calculus and normal approximations},
  author={Nualart, David and Etheridge, Alison},
  year={2019},
  publisher={SBM}
}

@article{altschuler2017near,
  title={Near-linear time approximation algorithms for optimal transport via {S}inkhorn iteration},
  author={Altschuler, Jason and Niles-Weed, Jonathan and Rigollet, Philippe},
  journal={Advances in neural information processing systems},
  volume={30},
  year={2017}
}

@inproceedings{kostic2022batch,
  title={Batch {G}reenkhorn algorithm for entropic-regularized multimarginal optimal transport: {L}inear rate of convergence and iteration complexity},
  author={Kostic, Vladimir R and Salzo, Saverio and Pontil, Massimiliano},
  booktitle={International Conference on Machine Learning},
  pages={11529--11558},
  year={2022},
  organization={PMLR}
}

@article{altschuler2019massively,
  title={Massively scalable {S}inkhorn distances via the {N}ystr{\"o}m method},
  author={Altschuler, Jason and Bach, Francis and Rudi, Alessandro and Niles-Weed, Jonathan},
  journal={Advances in neural information processing systems},
  volume={32},
  year={2019}
}

@article{schrab2022efficient,
  title={Efficient aggregated kernel tests using incomplete {U}-statistics},
  author={Schrab, Antonin and Kim, Ilmun and Guedj, Benjamin and Gretton, Arthur},
  journal={Advances in Neural Information Processing Systems},
  volume={35},
  pages={18793--18807},
  year={2022}
}

@article{steinwart2001influence,
  title={On the influence of the kernel on the consistency of support vector machines},
  author={Steinwart, Ingo},
  journal={Journal of machine learning research},
  volume={2},
  number={Nov},
  pages={67--93},
  year={2001}
}

@article{micchelli2006universal,
  title={Universal Kernels.},
  author={Micchelli, Charles A and Xu, Yuesheng and Zhang, Haizhang},
  journal={Journal of Machine Learning Research},
  volume={7},
  number={12},
  year={2006}
}

@book{billingsley2013convergence,
  title={Convergence of probability measures},
  author={Billingsley, Patrick},
  year={2013},
  publisher={John Wiley \& Sons}
}
\bibliographystyle{icml2026}

\newpage
\appendix
\onecolumn
\onehalfspacing

\makeatletter
\def\addcontentsline#1#2#3{%
  \addtocontents{#1}{\protect\contentsline{#2}{#3}{\thepage}{\@currentHref}}}
\makeatother

\renewcommand{\thepart}{}
\renewcommand{\partname}{}
\part{Appendix}

\etocsettocstyle{}{}
\localtableofcontents

\section{Notations} \label{appendix:notaions}

\begin{center}
\begin{small}
\begin{longtable}{@{}ll@{}}
\caption{Notations.}\label{tab:notations}\\
\toprule
\textbf{Symbol} & \textbf{Description}\\
\midrule
\endfirsthead

\toprule
\textbf{Symbol} & \textbf{Description}\\
\midrule
\endhead

\midrule
\multicolumn{2}{r}{\small\emph{Continued on next page}}\\
\midrule
\endfoot

\bottomrule
\endlastfoot
            $\cC^s(\ucW)$ & Set of functions on $\ucW$ with continuous derivatives of order $\leq s$. \\
            $W^{s}(\ucW)$ & Sobolev space of order $s$ on $\ucW$. \\
            $\cP(\ucW)$ & Borel probability measures on $\ucW$. \\
            $\cM(\ucW)$ & Finite signed Radon measures on $\ucW$. \\
            $\cM_0(\ucW)$ & Balanced measures: $\{\mu\in\mathcal M(\ucW):\mu(\ucW)=0\}$. \\
            $C(\ucW)$ & Continuous functions on $\ucW$, equipped with supnorm. \\
            $P f$ & Shorthand for $\int f\,dP$. \\
            $\|\cA\|_{\cF\to \cG}$ &  Operator norm of a bounded linear operator $\cA:\cF\to \cG$. \\
            $\odot$ & Elementwise product of functions/tensors. \\
            $\otimes$ & Tensor product of functions/tensors. Product measure. \\

            $Z=(X,A,Y)$ &  Observed data: covariates $X$, treatment $A\in\{0,1\}$, outcome $Y$. \\
            $\cX\subset\R^p$ & Covariate space. \\
            $\cY\subset\R^d$ & Outcome space. \\
            $\cZ=\mathcal X\times\{0,1\}\times\cY$ & Sample space for $Z$. \\
            $\cP$ & Nonparametric statistical model in $\cP(\cZ)$ satisfying conditions in \cref{sec:background}. \\
            $e_P(a\mid x)$ & Propensity score $P(A=a\mid X=x)$. \\
            $Y_1,Y_0$ & Counterfactual outcomes under treatment/control. \\
            $P_1, P_0$ & Counterfactual outcome distributions under treatment/control.\\

            $c(x,y)$ & Cost function for optimal transport, assume quadratic $c(x,y)=\|x-y\|^2/2$. \\
            $\eps$ & Entropic regularization parameter. \\
            $\KL{\cdot}{\cdot}$ & Kullback-Leibler divergence. \\
            $\EntOT{\mu}{\nu}$ &  Entropic optimal transport cost between $\mu$ and $\nu$ with regularization parameter $\eps$.\\
            $\pi^{(\mu,\nu)}$ &  Unique optimal coupling for $\EntOT{\mu}{\nu}$. \\
            $\varphi^{(\mu,\nu)}_1,\varphi^{(\mu,\nu)}_0$ & Dual entropic potentials. \\
            $\upsilon^{(\mu,\nu)}_1,\upsilon^{(\mu,\nu)}_0$ & Centered entropic potentials between $\mu$ and $\nu$.. \\
            $S_\varepsilon(\mu,\nu)$ & Sinkhorn divergence between $\mu$ and $\nu$. \\
            
            $g_\eps:\cY\times\cY\to\R$ & Gaussian kernel $g_\eps = \exp(-c/\eps)$). \\
            $\cG$ & Gaussian RKHS associated with $g_\eps$). \\
            $\cH$ & Sobolev RKHS $W^s(\cY)$ for some fixed $s > d/2$; inner product $\langle\cdot,\cdot\rangle_{\mathcal H}$. \\
            $k: \cY \times \cY \to \R$ & RKHS kernel associated with $\cH$. \\
            $\mathcal H_1$ & Unit ball in $\mathcal H$. \\
            $k_y$ &  Representer $k_y(\cdot)=k(y,\cdot)$. \\
            $m(\mu)$ & Kernel mean embedding (KME): $m(\mu)=\int k_y\,d\mu(y)$. \\
            $J$ &  Canonical embedding $J:\mathcal H\hookrightarrow \ell_\infty(\mathcal H_1)$ (extended to product spaces). \\
            $\MMD{}{\mu}{\nu}$  & $\|m(\mu)-m(\nu)\|_{\mathcal H}$ = maximum mean discrepancy. \\
            $\psi_a(P)$ & Counterfactual mean embedding:
            $\psi_a(P)=\int \mathbb E_P[k_Y\mid A=a,X=x]\,dP_X(x)$. \\
            $\Psi(P)$ & Pair of counterfactual embeddings $(\psi_1(P),\psi_0(P))$. \\

            $\STE(P)$ & Sinkhorn treatment effect. \\
            $\ucH_0$ & Null set $\{P: P_1=P_0\}$ . \\

            $L^2(P)$ & Square-integrable functions under $P$. \\
            $P_X$ &  Marginalization operator: $(P_X f)(x,a,y)=\int f(x',a,y)\,P(dx')$. \\
            $P_{A\mid X}$ & Treatment-marginalization:
            $(P_{A\mid X} f)(x,a,y)=\sum_{a'\in\{0,1\}} f(x,a',y)e_P(a'\mid x)$. \\
            $P_{Y\mid A,X}$ &  Outcome regression operator:
            $(P_{Y\mid A,X} f)(x,a,y)=\int f(x,a,y')\,P(dy'\mid A=a,X=x)$. \\
            $\dot \STE_P$ &  First-order efficient influence function of $\STE: Q \mapsto \STE(Q)$ at $P$. \\
            $\ddot \STE_P$ & (Extended) Second-order efficient influence function of $\STE: Q \mapsto \STE(Q)$ at $P$. \\
            $\STE^*$ & Population STE, $\STE^* = \STE(P^*)$ for data-generating distribution $P^* \in \cP$.\\
            $\wh{\STE}$ & First-order one-step estimator of $\STE^*$.\\
            $\overline{\STE}$ & Second-order one-step estimator of $\STE^*$.\\

            $\xi_\mu$ & Self-transport kernel (density of $\pi^{(\mu,\mu)}$ w.r.t.\ $\mu\otimes\mu$). \\
            $T_\mu$  & $(T_\mu f)(\cdot)=\int \xi_\mu(\cdot,y)f(y)\,d\mu(y)$. \\
            $H_\mu$ & $(H_\mu\gamma)(\cdot)=\int \xi_\mu(\cdot,y)\,d\gamma(y)$. \\
            $C(\mathcal Y)/\mathbb R$ & Continuous functions modulo additive constants; $[f]$ denotes the equivalence class. \\
            $K_\mu$ & Hadamard operator: $K_\mu=\varepsilon(I-T_\mu^2)^{-1}H_\mu$. \\
            $k_\mu$ &  Kernel representing the bilinear form induced by $K_\mu$ on $\cM_0(\cY)$. \\

\end{longtable}
\end{small}
\end{center}

\section{Background} \label{appendix:background}
\subsection{Pathwise differentiability and EIF}\label{appendix:eif}

We refer the reader to \citet{luedtke2024one} for a detailed exposition of pathwise differentiability. We briefly recall the key definitions needed for our analysis.
As introduced in \cref{sec:background}, let $\cP \subset \cP(\cZ)$ denote our statistical model, where $\cZ := \cX \times \{0,1\} \times \cY$ is a Polish space equipped with its Borel $\sigma$-algebra $\cB_\cZ$. We assume that all distributions in $\cP$ are dominated by a $\sigma$-finite measure $\lambda$. 

A parametric submodel $(P_t : t \in [0,\delta)) \subset \cP$ with $P_0 = P$ is said to be \emph{quadratic mean differentiable} (QMD) at $P$ if there exists a (Fisher) score function $s \in L_0^2(P)$ such that
\[
\big\| \sqrt{p_t} - \sqrt{p} - t s \sqrt{p} \big\|_{L^2(\lambda)} = o(t),
\]
where $p_t = dP_t/d\lambda$ and $p = dP/d\lambda$. We denote by $\fP(P,\cP,s)$ the collection of all QMD submodels at $P$ with score function $s$. The set $\{ s \in L_0^2(P) : \fP(P,\cP,s) \neq \emptyset \}$ is called the \emph{tangent set} of $\cP$ at $P$. Its closed linear span in $L_0^2(P)$ is the \emph{tangent space}, denoted by $\dot{\cP}_P$.

Let $\Phi : \cP \to \cH$ be a Hilbert-valued parameter. We say that $\Phi$ is \emph{pathwise differentiable} at $P$, relative to the model $\cP$, if there exists a continuous linear operator $D\Phi_P : \dot{\cP}_P \to \cH$ such that, for all $(P_t : t \in [0,\delta)) \in \fP(P,\cP,s)$,
\begin{equation}\label{eq:local_parameter}
\big\| \Phi(P_t) - \Phi(P) - t D\Phi_P(s) \big\|_{\cH} = o(t).
\end{equation}
The operator $D\Phi_P$ is called the \emph{local parameter} of $\Phi$ at $P$. Its image is a closed subspace of $\cH$, denoted by $\dot{\cH}_P$, and referred to as the {local parameter space}.
The Hermitian adjoint of $D\Phi_P$, denoted by $D^*\Phi_P : \cH \to \dot{\cP}_P$, is called the \emph{efficient influence operator}. The operators $D\Phi_P$ and $D^*\Phi_P$ satisfy the adjoint relationship $\iprod{h, D\Phi_P(s)}_{\cH} = \iprod{D^*\Phi_P(h), s}_{L^2(P)}$ for all $h \in \cH,\; s \in \dot{\cP}_P$.
We say that $\Phi$ admits an {efficient influence function} (EIF) at $P$ if, for $P$-almost every $z \in \cZ$, the map $h \mapsto D^*\Phi_P(h)(z)$ defines a bounded linear functional on $\cH$. In this case, by the Riesz representation theorem, there exists a function $\phi_P : \cZ \to \cH$ such that
\begin{equation}\label{eq:EIF}
D^*\Phi_P(h)(z)
=
\iprod{h, \phi_P(z)}_{\cH},
\quad
\text{for all } h \in \cH \text{ and } P\text{-a.e. } z.
\end{equation}

Sufficient conditions for the existence of the EIF when $\cH$ and $\dot{\cH}_P$ are RKHS are given in \citep[Theorem~1]{luedtke2024one}. In the special case of real-valued parameters, $\cH = \R$, a function $\phi_P : \cZ \to \R$ is an influence function of $\Phi$ at $P$ if the local parameter admits the representation 
\[
D\Phi_P(s) = \iprod{\phi_P, s}_{L^2(P)}.
\]
The \emph{efficient} influence function is the unique influence function that lies in the tangent space $\dot{\cP}_P$, and can therefore be obtained as the orthogonal projection of any influence function onto $\dot{\cP}_P$. For fully nonparametric models, $\dot{\cP}_P = L_0^2(P)$, so the EIF reduces to $\phi_P - P\phi_P$ \citep{bickel1993efficient}.

\paragraph{Second-order scores and influence functions}
Again, we consider a smooth one-dimensional parametric submodel $(P_t: t \in [0,\delta)) \subset \cP$ through $P_0 = P$, with corresponding densities $p_t$ with respect to a common dominating measure. For a general treatment of higher-order scores and influence functions in $r$-dimensional submodels ($r \geq 1$), see \citet[Sec.~2]{robins2008higher}. 

Let $s: \cZ \to \R$ denote the first-order score function $s(z) = \tfrac{\partial t}{\partial t} p_t(z) \big \vert_{t=0}$. The second-order score function \cite{waterman1996projected, small2011hilbert, van2014higher} $r: \cZ \times \cZ \to \R$ is defined through the perturbation of the product measure $P_t^{\otimes 2} = P_t \otimes P_t$. Specifically, it is the second derivative of the joint density relative to the baseline product measure $P^2$.
A direct calculation is presented below
\begin{align*}
    r(z_1, z_2) &= \frac{1}{2 p(z_1) p(z_2)}\of{\frac{d^2}{dt^2} \of{p_t(z_1) p_t(z_2)}} \Big \vert_{t=0}\\
    &= \frac{1}{2p(z_1)} \frac{d^2  p_t(z_1)}{dt^2}\Big \vert_{t=0} + \frac{1}{2p(z_2)} \frac{d^2 p_t(z_2)}{dt^2}\Big \vert_{t=0} + s(z_1) s(z_2).
\end{align*}
Thus, unlike the first-order score, the second-order score is not simply the second derivative of the univariate log-likelihood; rather, it arises from the second-order perturbation of the product measure. In particular, for a linear path $P_t = (1 + ts)P$, this reduces to the product of scores: $r(z_1, z_2) = s(z_1)s(z_2)$. 

The second-order influence function of $\Phi$ at $P$ is a symmetric measurable map $\ddot{\Phi}_P\in L_0^2(P^{\otimes 2})$ such that $\E_P[\ddot{\Phi}_P] = 0$ and the following holds
\begin{align}\label{eq:second-order_IF_conditions}
    \frac{d}{dt} \Phi(P_t) \Big|_{t=0} &= \iint \ddot{\Phi}_P(z_1, z_2) (s(z_1) + s(z_2)) \, dP(z_1) \, dP(z_2), \quad \tand  \nonumber\\
    \frac{d^2}{dt^2} \Phi(P_t) \Big|_{t=0} &= \iint \ddot{\Phi}_P(z_1, z_2) \, r(z_1, z_2)  \, dP(z_1) \,dP(z_2).
\end{align}

Our presentation of second-order pathwise differentiability differs slightly from that of \citet{robins2008higher}, in that we carry an extra factor of $1/2$ in our expression for $r$ and $2$ in our expression for the second-order part of $\ddot{\Phi}_P$. Concretely, in the notation of \citet{robins2008higher}, the second-order influence function of $\Phi$ at $P$ would be $\Gamma(\ddot{\Phi}_P)$, where $\Gamma : L_0^2(P^{\otimes 2})\to L_0^2(P^{\otimes 2})$ is the bijective map defined by
\begin{align*}
    \Gamma(f) : (z_1,z_2)\mapsto \int [f(z_1,z) + f(z,z_2)] dP(z) + \frac{1}{2}\left[f(z,z_2) - \int [f(z_1,z) + f(z,z_2)] dP(z)\right].
\end{align*}
In the case where the first-order EIF is 0 a.s., the first-order part of $\ddot{\Phi}_P$ is also $0$, and so our second-order EIF is twice that from \citet[Def.~2.1]{robins2008higher}, that is, $\ddot{\Phi}_P=2\Gamma(\ddot{\Phi}_P)$.

\subsection{Entropic Optimal Transport}\label{appendix:eot}

We briefly review entropic optimal transport. We refer to \citet{leonard2013survey} and \citet{cuturi2013sinkhorn} for comprehensive treatments. Throughout, we work on the space of Borel probability measures $\cP(\cY)$ defined on the Polish space $(\cY, d_\cY)$.

Given $\mu, \nu \in \cP(\cY)$, the Kantorovich formulation of the optimal transport problem \citep{kantorovich2006translocation,villani2009optimal} with quadratic cost is
\begin{equation}\label{eq:OT}
    \OT{\mu}{\nu} = \inf_{\pi \in \Pi(\mu, \nu)} \int \frac{\|y_1 - y_2\|^2}{2} d\pi(y_1, y_2),
\end{equation}
where $\Pi(\mu, \nu)$ denotes the set of couplings of $\mu$ and $\nu$, that is joint distributions on $\cY \times \cY$ with marginals $\mu$ and $\nu$. 
Despite its appealing geometric properties, classical OT poses significant computational and statistical challenges. In particular, the empirical plug-in estimator converges at the slow rate $n^{-1/d}$ in dimension $d$, reflecting the curse of dimensionality \cite{fournier2015rate, manole2024sharp}. Entropic optimal transport addresses these difficulties by introducing a Kullback-Leibler (KL) divergence-based regularization term. For $\eps > 0$ regularization, the EOT problem is defined as
\begin{equation}
    \EntOT{\mu}{\nu} = \inf_{\pi \in \Pi(\mu, \nu)} \int \frac{\|y_1-y_2\|^2}{2} d\pi(y_1, y_2) + \eps \KL{\pi}{\mu \otimes \nu},
\end{equation}
where $\KL{\cdot}{\cdot}$ denotes the KL divergence. The regularized problem admits efficient solutions via the Sinkhorn algorithm, whose computational complexity scales quadratically in the sample size \citep{cuturi2013sinkhorn}.
The EOT problem enjoys strong duality, with dual formulation
\begin{equation}\label{eq:dual_EOT}
    \EntOT{\mu}{\nu} = \sup_{(\varphi_1, \varphi_0) \in L^1(\mu) \times L^1(\nu)} \left[\int \varphi_1 \, d\mu + \int \varphi_0 \, d\nu - \eps \int \exp\of{\frac{\varphi_1 \oplus \varphi_0 - \|\cdot - \cdot\|^2/2}{\eps}} d(\mu \otimes \nu) + \eps\right],
\end{equation}
where $(\varphi_1 \oplus \varphi_0)(y, y^\prime) = \varphi_1(y) + \varphi_0(y^\prime)$. The dual maximizers $(\varphi_1^{(\mu, \nu)}, \varphi_0^{(\mu, \nu)})$, known as entropic potentials, satisfy the Schrödinger system
\begin{align}\label{eq:schrodinger_system}
    \varphi_1^{(\mu, \nu)} &= -\eps \log \of{\int \exp\of{\frac{1}{\eps} \of{\varphi_0^{(\mu, \nu)}(y^\prime) - \|\cdot- y^\prime\|^2/2} \, d\nu(y^\prime)}}, \nonumber\\
    \varphi_0^{(\mu, \nu)} &= -\eps \log \of{\int \exp\of{\frac{1}{\eps} \of{\varphi_1^{(\mu, \nu)}(y) - \|y - \cdot\|^2/2} \, d\mu(y)}}.
\end{align}
Note that $(\varphi_1^{(\mu, \nu)}, \varphi_0^{(\mu, \nu)})$ are unique up to an additive constant, meaning that if $(\Tilde{\varphi}_1^{(\mu, \nu)}, \Tilde{\varphi}_2^{(\mu, \nu)})$ is another pair of entropic potentials satisfying \eqref{eq:schrodinger_system}, there exists a constant $c > 0$ such that $(\Tilde{\varphi}_1^{(\mu, \nu)}, \Tilde{\varphi}_2^{(\mu, \nu)}) \equiv (\varphi_1^{(\mu, \nu)}+c, \varphi_0^{(\mu, \nu)}-c)$. Following \citet{luise2019sinkhorn}, we define unique potentials $(\varphi_1^{(\mu, \nu)}, \varphi_0^{(\mu, \nu)})$ such that---at an arbitrarily chosen reference point $ y_0 \in \cY $---$\varphi_1^{(\mu, \nu)}(y_0) = 0$.

A limitation of EOT is that $\EntOT{\mu}{\mu} \neq 0$, which precludes its direct use as a discrepancy measure. To remove this bias, \citet{ramdas2015wasserstein} and \citet{feydy2019interpolating} introduced the Sinkhorn divergence
\[
\SinkDiv{\mu}{\nu} = \EntOT{\mu}{\nu} -\frac{1}{2}\EntOT{\mu}{\mu} - \frac{\1}{2} \EntOT{\nu}{\nu}.
\]

\section{Hadamard Differentiability of EOT} \label{appendix:Hadamard_EOT}

\subsection{Setup}
In this section, we present the results on the Hadamard differentiability of EOT objects, specifically, the entropic potentials and the EOT cost, when viewed as mappings on the Banach space $\ell^\infty(\cH_1) \times \ell^\infty(\cH_1)$. As discussed in \cref{sec:STE}, the map $J$ allows us to identify each probability measure in $\cP(\cY) \times \cP(\cY)$ with a unique element of $\ell^\infty(\cH_1) \times \ell^\infty(\cH_1)$. Working with this representation, we equip the space of measures with the product norm
\[
\norm{(\gamma^1, \gamma^2)}_{\infty} := \norm{\gamma^1}_{\ell^\infty(\cH_1)} \lor \norm{\gamma^2}_{\ell^\infty(\cH_1)}, \quad  (\gamma^1, \gamma^2) \in \ell^\infty(\cH_1) \times \ell^\infty(\cH_1), 
\]
Related differentiability results have been established in closely related normed spaces. In particular, on $\ell^\infty(B^s) \times \ell^\infty(B^s)$, where $B^s = \{f \in \cC^s(\cY) : \|f\|_{\cC^s(\cY)} \leq 1\}$ denotes the unit ball of the $s$-Hölder space $\cC^s(\cY)$, \citet{goldfeld2024limit} proved first- and second-order Hadamard differentiability for the entropic potentials and the Sinkhorn divergence. More recently, \citet{kokot2025coreset} derived an explicit expression for the first-order Hadamard derivative of the entropic potentials and for the second-order derivative of the Sinkhorn divergence under the null in the Gaussian RKHS topology. In this section, we extend these Hadamard differentiability results for the relevant EOT objects to the Sobolev RKHS topology.

Let $\mu \in \cP(\cY)$ be a reference measure, and define $\cP_\mu$ at $\mu$ by 
\[
\cP_\mu = \offf{\nu \in \cP(\cY): \spt(\nu) \subset \spt(\mu)}.
\]
We define the tangent cone to $\cP_\mu$ at $\mu$ by
\[
 \cM_{0, \mu} = \overline{\offf{t(\nu - \mu): \nu \in \cP_\mu, t>0}}^{\ell^\infty(\cH_1)}.
\]
Tangent cone represents the collection of all admissible first-order perturbation directions at $\mu$ induced by paths remaining in $\cP_\mu$, with the closure taken in $\ell^\infty(\cH_1)$ topology.  Hadamard differentiability at $\mu$ is therefore understood with respect to perturbations along paths $(\mu_t: t\in \R) \subset \cP_\mu$, whose first-order increments lie in $\cM_{0, \mu}$. 

\subsection{Sinkhorn Hadamard operator}\label{appendix:sinkhorn_Hadamard_operator}

Fix $\mu \in \cP(\cY)$ and recall the operators $H_\mu$ and $T_\mu$ defined in \eqref{eq:TH_operators}. Since $T_\mu \1_\cY = \1_\cY$, the operator $T_\mu$ induces a well-defined operator on the quotient space $\cC(\cY)/\R$, still denoted by $T_\mu: \cC(\cY)/\R \to \cC(\cY)/\R$ for convenience. Because $\cM_0(\cY)$ is canonically dual to $\cC(\cY)/\R$, the Sinkhorn operator is defined by
\[
K_\mu: \cM_0(\cY) \to \cC(\cY) / \R, \quad K_\mu \gamma = \eps (I - T_\mu^2)^{-1} [H_\mu \gamma].
\]
This expression is well-defined because, by \citet[Thm.~3.8]{lavenant2024riemannian}, the inverse $(I - T_\mu^2)^{-1}$ exists as a bounded operator on $\cC(\cY)/\R$. Consequently, for any $\gamma_1, \gamma_2 \in \cM_0(\cY)$, the evaluation of the Sinkhorn Hadamard operator $B_\mu(\gamma_1, \gamma_2) := \iprod{\gamma_1, K_\mu \gamma_2}$ is well-defined by duality between $\cM_0(\cY)$ and $\cC(\cY) / \R$.

 Moreover, the quotient space $\cC(\cY)/\R$ is isometrically isomorphic to the closed subspace of $\cC(\cY)$ consisting of functions orthogonal to constants (equivalently, functions with zero constant component). For any $\gamma \in \cM_0(\cY)$, the function $H_\mu \gamma$ has zero constant component, hence belongs to this subspace and canonically represents its equivalence class $[H_\mu \gamma] \in \cC(\cY)/\R$. Accordingly, when it will not cause confusion, we write $(I - T_\mu^2)^{-1} H_\mu \gamma$ instead of $(I - T_\mu^2)^{-1} [H_\mu \gamma]$.

Following \citet{lavenant2024riemannian}, we now refine this construction in the RKHS setting.
Let $\cH_\mu$ denote the RKHS associated with the self-transport kernel $\xi_\mu$. The RKHS is well-defined because $\xi_\mu$ is continuous, symmetric, and positive definite.
By construction, both $H_\mu$ and $T_\mu$ map into $\cH_\mu$. Since constant functions lie in $\cH_\mu$ and satisfy $T_\mu \1_{\cY} = \1_{\cY}$, the operator $T_\mu$ preserves constants and therefore induces a well-defined operator on the quotient space $T_\mu : \cH_\mu/\R \to \cH_\mu/\R$.
Similarly, because $H_\mu \gamma : \cM(\cY) \to \cH_\mu$ is injective and has no constant component when $\gamma \in \cM_0(\cY)$, the map $[H_\mu(\cdot)] : \cM_0(\cY) \to \cH_\mu/\R$ is also injective; for brevity, we will simply denote this map as $H_\mu : \cM_0(\cY) \to \cH_\mu/\R$ when clear from context.  
The quotient space $\cH_\mu/\R$, equipped with the norm $\|[h]\|_{{\cH_\mu/\R}} = \inf_{\lambda \in \R}
\|h - \lambda \1_{\cY}\|_{\cH_\mu}$, is isometrically isomorphic to the closed subspace ${\mathbf{1}_{\cY}}^{\perp} := \{h \in \cH_\mu: \iprod{h, \1_{\cY}}_{\cH_\mu} = 0\}$.
Hence $\cH_\mu/\R$ is itself a Hilbert space.
The spectrum of $T_\mu: \cH_\mu/\R \to \cH_\mu/\R$ is contained in $[0,q]$ for some $q<1$, and so the spectrum of $(I - T_\mu^2)^{-1} $ is contained in $[\eps, \eps/(1-q^2)]$ \citep[Thm.~4.3]{lavenant2024riemannian}.

We now derive a kernel representation of $B_\mu(\gamma_1,\gamma_2)$ in terms of the inner product in the Hilbert space $\cH_\mu/\R$.

\begin{lemma}\label{lem:K_mu_kernel}
    The kernel $k_\mu: \cY \times \cY \to \R$, defined so
    \[
    k_\mu(y_1, y_2) = \iprod{(I - T_\mu^2)^{-1/2} [\xi_\mu(y_1, \cdot)], (I - T_\mu^2)^{-1/2} [\xi_\mu(y_2, \cdot)]}_{\cH_\mu / \R},
    \]
    is symmetric positive semi-definite and, for any $\gamma_1, \gamma_2 \in \cM_0(\cY)$, satisfies
    \[
    B_\mu(\gamma_1, \gamma_2) = \int_\cY \int_\cY k_\mu(y, y^\prime) \, d\gamma_1(y) \, d\gamma_2 (dy^\prime).
    \]
\end{lemma}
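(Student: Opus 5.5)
The plan is to work in the Hilbert space $\cH_\mu/\R$ and to use the reproducing property of $\xi_\mu$ together with the spectral facts recalled above. Write $A := (I - T_\mu^2)^{-1}$ on $\cH_\mu/\R$. The first step is to check that $T_\mu$ is self-adjoint and positive on $\cH_\mu/\R$. For $f = \xi_\mu(y,\cdot)$ the reproducing property gives $\iprod{T_\mu f, g}_{\cH_\mu} = \int (T_\mu f)(y)\, g(y)\,d\mu(y)$-type identities. More concretely, $T_\mu$ is the operator $\cH_\mu \to \cH_\mu$ induced by the integral operator with kernel $\xi_\mu$ against $\mu$, which is the covariance-type operator $\int \xi_\mu(\cdot,y)\otimes \xi_\mu(\cdot,y)\,d\mu(y)$. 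This operator is symmetric and positive semidefinite and preserves $\1_\cY$, so it descends to $\1_\cY^\perp \cong \cH_\mu/\R$. Combined with the spectral bound $\sigma(T_\mu)\subset[0,q]$ with $q<1$, functional calculus shows that $A$ is bounded, self-adjoint and positive, and that $A^{1/2}$ exists, is self-adjoint, and satisfies $A = A^{1/2}A^{1/2}$. With this in hand, symmetry and positive semidefiniteness of $k_\mu$ are immediate: $k_\mu(y_1,y_2)=\iprod{A^{1/2}[\xi_\mu(y_1,\cdot)],A^{1/2}[\xi_\mu(y_2,\cdot)]}$ is a Gram kernel of the feature map $y\mapsto A^{1/2}[\xi_\mu(y,\cdot)]$, so $\sum_{i,j} c_ic_j k_\mu(y_i,y_j)=\|\sum_i c_i A^{1/2}[\xi_\mu(y_i,\cdot)]\|^2\ge 0$.

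For the bilinear-form identity, the idea is to pass integrals through the feature map. Since $\xi_\mu$ is continuous on compact $\cY\times\cY$, the map $y\mapsto \xi_\mu(y,\cdot)\in\cH_\mu$ is continuous and bounded. Hence for $\gamma\in\cM_0(\cY)$ the Bochner integral $\int \xi_\mu(y,\cdot)\,d\gamma(y)$ exists in $\cH_\mu$ and equals $H_\mu\gamma$, both pointwise and in $\cH_\mu$. Passing to the quotient and using boundedness of $A^{1/2}$ and of the inner product, Fubini for Bochner integrals gives
\[
\iint k_\mu\, d\gamma_1\, d\gamma_2 = \iprod{A^{1/2}[H_\mu\gamma_1], A^{1/2}[H_\mu\gamma_2]}_{\cH_\mu/\R} = \iprod{[H_\mu\gamma_1], A[H_\mu\gamma_2]}_{\cH_\mu/\R}.
\]

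It remains to identify this with $B_\mu(\gamma_1,\gamma_2)=\iprod{\gamma_1, \eps A H_\mu\gamma_2}$, which is the duality pairing between $\cM_0$ and $\cC/\R$. This reconciliation of the two pairings is the main obstacle. The key link is the reproducing identity: for $h\in\cH_\mu$ and $\gamma\in\cM_0$, $\int h\,d\gamma = \iprod{h, H_\mu\gamma}_{\cH_\mu}$, which follows by integrating $h(y)=\iprod{h,\xi_\mu(y,\cdot)}$ against $\gamma$. This identity is invariant under adding constants to $h$, because $H_\mu\gamma\perp\1_\cY$ in the relevant sense; that step needs checking via $\iprod{\1, H_\mu\gamma}=\gamma(\cY)=0$. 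Applying it with $h = A H_\mu\gamma_2$, I must also verify that $A$ acting on $\cH_\mu/\R$ agrees with the bounded inverse on $\cC(\cY)/\R$ from \citet{lavenant2024riemannian}. Both are Neumann series $\sum_k T_\mu^{2k}$, convergent because the spectral radius is at most $q^2<1$, and $\cH_\mu\hookrightarrow\cC(\cY)$ continuously, so the two inverses agree on $\cH_\mu$.

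This yields $\iint k_\mu\,d\gamma_1 d\gamma_2 = \int A H_\mu \gamma_2\, d\gamma_1$, which matches $B_\mu$ up to the factor $\eps$. I expect the factor of $\eps$ to be absorbed through the normalization of the $\cH_\mu/\R$ inner product, consistent with the stated spectrum $[\eps,\eps/(1-q^2)]$. Alternatively it can be absorbed by rescaling $k_\mu$ by $\eps$, and I would track this constant carefully, adjusting the definition by $\eps$ if necessary.
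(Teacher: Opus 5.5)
Your argument is essentially the paper's: the reproducing identity $\int h\,d\gamma=\iprod{h,H_\mu\gamma}_{\cH_\mu}$, together with $\iprod{\1_\cY,H_\mu\gamma}_{\cH_\mu}=\gamma(\cY)=0$, turns the $\cM_0(\cY)$--$\cC(\cY)/\R$ pairing into the $\cH_\mu/\R$ inner product. From there $(I-T_\mu^2)^{-1}$ is split through its self-adjoint square root, the Bochner integrals are pulled through, and positive semi-definiteness comes from the Gram form; your covariance-operator proof of self-adjointness and your check that the inverses on $\cH_\mu/\R$ and $\cC(\cY)/\R$ agree fill in points the paper only cites or glosses over.

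The factor $\eps$ you flag is a genuine discrepancy in the statement, which the paper's proof silently drops when it replaces $[K_\mu\gamma_2]$ by $(I-T_\mu^2)^{-1}[H_\mu\gamma_2]$. Your first remedy is not available: the $\cH_\mu$ inner product is pinned down by the very reproducing identity you use, and the quoted spectrum $[\eps,\eps/(1-q^2)]$ is that of $\eps(I-T_\mu^2)^{-1}$, not evidence of a rescaled inner product. So the correct conclusion is your second option, $B_\mu(\gamma_1,\gamma_2)=\eps\iint k_\mu\,d\gamma_1\,d\gamma_2$, i.e.\ $\eps k_\mu$ is the representing kernel.
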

A kernel representation satisfying the above display is not unique; in particular, $B_\mu(\gamma_1, \gamma_2) = \int \int [k_\mu(y, y^\prime) + a(y) + b(y^\prime)] \, d\gamma_1(y) \, d\gamma_2 (dy^\prime)$ for any measurable functions $a, b: \cY \to \R$.
\begin{proof}
    With a slight abuse of notation, we will use $K_\mu \gamma_2$ to denote its representative element in $\mathbf{1}_\cY^\perp \subset \cH_\mu$. Then, $B_\mu(\gamma_1, \gamma_2) = \int K_\mu \gamma_2 \, d\gamma_1$. Since $H_\mu \gamma_1$ is a kernel mean embedding, $\int K_\mu \gamma_2 \, d\gamma_1 = \iprod{H_\mu \gamma_1, K_\mu \gamma_2}_{\cH_\mu}$. Because $H_\mu \gamma_1, K_\mu \gamma_2 \in \mathbf{1}_\cY^\perp$, we have that
    \[
    B(\gamma_1, \gamma_2) = \iprod{H_\mu \gamma_1, K_\mu\gamma_2}_{\cH_\mu} = \iprod{[H_\mu \gamma_1], [K_\mu \gamma_2]}_{\cH_\mu / \R}.
    \]
    Since $(I - T_\mu^2)^{-1}$ is strictly positive, $(I - T_\mu^2)^{-1/2}$ exists on $\cH_\mu / \R$. This and its self-adjointness \citep[Prop.~4.2]{lavenant2024riemannian} give
    \begin{align*}
        B_\mu(\gamma_1, \gamma_2) &= \iprod{[H_\mu \gamma_1], (I - T_\mu^2)^{-1} [H_\mu \gamma_2]}_{\cH_\mu / \R} =  \iprod{(I - T_\mu^2)^{-1/2} [H_\mu \gamma_1], (I - T_\mu^2)^{-1/2}[H_\mu \gamma_2]}_{\cH_\mu / \R}.
    \end{align*}
    Let $Q: \cH_\mu \to \cH_\mu/\R$ be the standard quotient map such that $Q: h \mapsto [h]$. Since $Q$ is a bounded linear map, it commutes with Bochner integration. That is for any Bochner integrable $\cH_\mu$-valued function $h$, $Q(\int h \, d\gamma) = \int Q(h) \, d\gamma$. The boundedness of $\xi_\mu$ on compact $\cY$ implies Bochner integrability of $h = \xi_\mu$, and then we have that $[H_\mu \gamma_i] = [\int \xi_\mu(\cdot, y) \, d\gamma_i(y)] = \int [\xi_\mu(\cdot, y)] \, d\gamma_i(y)$ for $i\in \{1,2\}$. Since $(I - T_\mu^2)^{-1}$ is a bounded linear operator on $\cH_\mu / \R$ \citep[Thm.~4.3]{lavenant2024riemannian}, we can exchange the operator $(I - T_\mu^2)^{-1/2}$ and this gives
    \[
    (I - T_\mu^2)^{-1/2} [H_\mu \gamma] = (I - T_\mu^2)^{-1/2} \medint\int_\cY [\xi_\mu(\cdot, y)]\, d\gamma(y) = \medint\int_\cY (I - T_\mu^2)^{-1/2} [\xi_\mu(\cdot, y)]\, d\gamma(y).
    \]
    Using the above, and the bilinearity and continuity of inner products, we have
    \begin{align*}
        B_\mu(\gamma_1, \gamma_2) &=  \iprod{\medint\int_\cY (I - T_\mu^2)^{-1/2} [\xi_\mu(y_1, \cdot)] d\gamma_1(y_1), \medint\int_\cY (I - T_\mu^2)^{-1/2} [\xi_\mu(y_2, \cdot)] d\gamma_2(y_2)}_{\cH_\mu / \R}\\
        &= \medint\int_\cY \medint\int_\cY \iprod{(I - T_\mu^2)^{-1/2} [\xi_\mu(y_1, \cdot)] , (I - T_\mu^2)^{-1/2} [\xi_\mu(y_2, \cdot)] }_{\cH_\mu / \R} d\gamma_1(y_1) \, d\gamma_2(y_2)\\
        &=: \medint\int_\cY \medint\int_\cY k_\mu(y_1, y_2)\, d\gamma_1(y_1) \, d\gamma_2(y_2).
    \end{align*}

    Symmetry of $k_\mu$ follows by the symmetry of inner products. To see that the kernel $k_\mu$ is positive semi-definite, note that for every choice of points $y_1, \dots, y_m \in \cY$ and vector $(c_1, \dots, c_m) \in \R^m$,
    \begin{align*}
        \sum_{i,j} c_i c_j k_\mu(y_i, y_j) &= \sum_{i,j} c_i c_j \iprod{(I - T_\mu^2)^{-1/2} [\xi_\mu(y_i, \cdot)] , (I - T_\mu^2)^{-1/2} [\xi_\mu(y_j, \cdot)] }_{\cH_\mu / \R}\\
        &=  \norm{ \sum_i c_i \, (I - T_\mu^2)^{-1/2} [\xi_\mu(y_i, \cdot)] }_{\cH_\mu/\R}^2 \ge 0.
    \end{align*}
\end{proof}

\subsection{Hadamard differentiability of EOT potentials and divergence}\label{appendix:hadamard_diff_sinkhorn_objects}

Now we are ready to present the Hadamard differentiability results of entropic potentials and Sinkhorn divergence in the $\ell^\infty(\cH_1)$ topology. First, we characterize some properties of the entropic potentials without proof.

\begin{lemma}[Properties of entropic potentials]\label{lem:EOT_potentials_properties}
    Recalling that the cost function $c $ is $ \cC^\infty$ and $\cY$ is compact subset of $\R^d$, for every $(\mu, \nu) \in \cP(\cY) \times \cP(\cY)$,
    \begin{enumerate}
        \item \citet[Thm.~2]{genevay2019sample}: The entropic potentials $\big({\varphi_1^{(\mu, \nu)}, \varphi_0^{(\mu, \nu)}}\big)$ are uniformly bounded in $\cH$, and their norms satisfy $\big \|{\varphi_i^{(\mu, \nu)}}\big\|_{\cH} \leq \cO\of{1 + \eps^{-s+1}}$ for $i \in \{1,0\}$, and the constant depends on $d$ and $\abs{\cY}$. 
        \item \citet[Lem.~1]{goldfeld2024limit}: The map $(\mu, \nu) \mapsto \big({\varphi_1^{(\mu, \nu)}, \varphi_0^{(\mu, \nu)}}\big)$ is continuous relative to the topology of weak convergence, that is if $(\mu_n, \nu_n)$ converges to $(\mu, \nu)$ weakly as $n \to \infty$, then $\big({\varphi_1^{(\mu_n, \nu_n)}, \varphi_0^{(\mu_n, \nu_n)}}\big) \to \big({\varphi_1^{(\mu, \nu)}, \varphi_0^{(\mu, \nu)}}\big)$ in $\cH \times \cH$. 
    \end{enumerate}
\end{lemma}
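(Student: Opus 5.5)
The plan is to reduce both claims to known results stated in the Hölder topology $\cC^m(\cY)$, and then transfer them to the Sobolev RKHS $\cH = W^s(\cY)$ with $s > d/2$.

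\emph{Part 1 (uniform $\cH$-bound).} I would begin with the Schrödinger system \eqref{eq:schrodinger_system}. It writes each potential as $-\eps\log$ of an integral of $\exp\{(\varphi(y') - c(\cdot,y'))/\eps\}$ against a probability measure. Since $\cY$ is a bounded ball and $c$ is $\cC^\infty$, differentiating under the integral sign gives the bound of \citet[Thm.~2]{genevay2019sample}:
\[
\|D^\alpha \varphi_i^{(\mu,\nu)}\|_\infty \le C_{|\alpha|}\,(1+\eps^{1-|\alpha|})
\]
for every multi-index $\alpha$. The constants depend only on $d$ and the diameter of $\cY$, not on $(\mu,\nu)$. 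Two points make this uniform:
\begin{itemize}
\item Each derivative of $-\eps\log\int e^{(\cdot)/\eps}$ is a polynomial in cumulants of $\nabla_y c$ under a Gibbs probability measure, and these are bounded by powers of $\sup\|\nabla^k c\|$ times $\eps^{-(\text{order}-1)}$.
\item The normalization $\varphi_1(y_0)=0$ together with the bounded oscillation of $c$ gives a uniform sup-norm bound on $\varphi_i$.
\end{itemize}
Then, because $\Omega = \mathrm{int}(\cY)$ is bounded, $\|f\|_{W^s} \le |\Omega|^{1/2}\sum_{|\alpha|\le s}\|D^\alpha f\|_\infty$. This yields the claimed $\cO(1+\eps^{-s+1})$ bound.

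\emph{Part 2 (continuity).} I would work along a subsequence and use compactness. The bound above applied with order $s+1$ shows that the family $\{\varphi_i^{(\mu_n,\nu_n)}\}$ is bounded in $\cC^{s+1}(\cY)$. By Arzelà–Ascoli applied to all derivatives up to order $s$, the family is relatively compact in $\cC^s(\cY)$, and hence in $W^s(\cY)$ by the norm inequality above. Now take any subsequence converging in $\cC^s$ to a limit $(\bar\varphi_1,\bar\varphi_0)$. I would pass to the limit in \eqref{eq:schrodinger_system}: the integrands $\exp\{(\varphi_0^{(n)}(y') - c(y,y'))/\eps\}$ converge uniformly in $(y,y')$, and $\nu_n \to \nu$ weakly, so the integrals converge uniformly in $y$. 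Thus $(\bar\varphi_1,\bar\varphi_0)$ solves the Schrödinger system for $(\mu,\nu)$. The normalization $\bar\varphi_1(y_0)=0$ is preserved in the limit. By uniqueness of the normalized potentials, the limit equals $(\varphi_1^{(\mu,\nu)},\varphi_0^{(\mu,\nu)})$. Since every subsequence has a further subsequence converging to the same limit, the whole sequence converges in $\cH\times\cH$. This is the content of \citet[Lem.~1]{goldfeld2024limit}, upgraded from $\cC^s$ to $W^s$.

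\emph{Main obstacle.} Two technical points need care. The first is that the potentials are defined via \eqref{eq:schrodinger_system} on all of $\cY$, not only on the supports, so that the smooth extension is canonical; this is what makes uniformity over arbitrary $(\mu,\nu)$ work. The second is keeping track of the $\eps$-dependence in the higher-order derivative bounds, via a Faà di Bruno count of the cumulant terms.
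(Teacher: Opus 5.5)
Your proposal is correct and follows the same route as the paper. The paper gives no separate proof: it cites \citet[Thm.~2]{genevay2019sample} for the uniform bound and \citet[Lem.~1]{goldfeld2024limit} for continuity in $\cC^k(\cY)$, and transfers both to $\cH=W^s(\cY)$ via $\|f\|_{W^s}\lesssim\|f\|_{\cC^s}$ on the bounded domain. You reconstruct those two cited results yourself (derivative bounds read off the Schr\"odinger system, then Arzel\`a--Ascoli plus uniqueness of the normalized potentials) and use the same norm comparison.
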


The second claim of \cref{lem:EOT_potentials_properties} follows directly from a similar continuity of entropic potentials in the $C^k(\cY)$ norm for any arbitrary $k\in\N$ \citep[Lem.~1]{goldfeld2024limit} because the Sobolev norm can be upperbounded by the $C^s(\cY)$ norm for the compact domain.

\begin{lemma}[Sobolev RKHS properties]\label{lem:Sobolev_RKHS_properties}
    The following properties hold for Sobolev RKHS $\cH = H^s(\cY)$ for $s > d/2$:
    \begin{enumerate}
        \item For every smooth function $F: \R \to \R$ such that $F(0) = 0$, there exists a non-decreasing function $C_F: \R^+ \to \R^+$ such that 
        \[
        \|F(h)\|_\cH \leq C_F\of{\| h\|_{L^\infty(\cY)}} \|h\|_\cH.
        \]
        \item For all $h_1, h_2 \in \cH$, the pointwise product $h_1 h_2$ belongs to $\cH$ (defined a.e. in $\cY$). Further, there exists a constant $K_2 \equiv K_2(s, d, |\cY|)$ such that 
        \[
        \| h_1 h_2\|_\cH \leq K_2 \|h_1\|_\cH \|h_2\|_\cH.
        \]
        \item For $h\in\cH$ and $\gamma \in \cM(\cY)$, define $T_\gamma h: y \mapsto \int_\cY e^{-\|y-y^\prime\|^2/2\eps} \, h(y^\prime) \, d\gamma(y^\prime)$. Then for all $h, \gamma$, it holds that $T_\gamma h \in \cH$ and moreover, there exists a constant $K_3 \equiv K_3(s, d, |\cY|)$ such that
        \[
        \norm{T_\gamma h}_{\cH} \leq K_3 \|h\|_\cH \|\gamma\|_{\ell^\infty(\cH)}.
        \]
    \end{enumerate}
\end{lemma}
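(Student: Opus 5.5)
We prove the three items of \cref{lem:Sobolev_RKHS_properties} separately. Throughout we use that $\Omega=\mathrm{int}(\cY)$ is a bounded Lipschitz domain (a ball) and $s>d/2$. Hence we have the Sobolev embedding $W^s(\Omega)\hookrightarrow L^\infty(\Omega)$, $\|h\|_{L^\infty}\le C\|h\|_\cH$, and the bounded Stein extension operator $E:W^s(\Omega)\to W^s(\R^d)$. Via $E$, every estimate can be proved on $\R^d$ and then restricted back to $\Omega$.

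\textbf{Item 2 (algebra property).} First extend $h_1,h_2$ to $\R^d$ using $E$. Expand $D^\alpha(h_1h_2)$, for $|\alpha|\le s$, by the Leibniz rule into terms $D^\beta h_1\, D^{\alpha-\beta}h_2$. Take $|\beta|=j$ and $|\alpha-\beta|\le s-j$. By Hölder with exponents $p,q$ and $1/p+1/q=1/2$, we have $\|D^\beta h_1 D^{\alpha-\beta}h_2\|_{L^2}\le \|D^\beta h_1\|_{L^p}\|D^{\alpha-\beta}h_2\|_{L^q}$. The natural choice is $p=2s/j$ and $q=2s/(s-j)$.

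Gagliardo--Nirenberg interpolation then gives $\|D^\beta h\|_{L^{2s/j}}\lesssim \|h\|_{L^\infty}^{1-j/s}\|h\|_{W^s}^{j/s}$. This yields the Moser-type estimate
\[
\|h_1h_2\|_{W^s}\lesssim \|h_1\|_{L^\infty}\|h_2\|_{W^s}+\|h_1\|_{W^s}\|h_2\|_{L^\infty}.
\]
Combining it with the Sobolev embedding gives $K_2\|h_1\|_\cH\|h_2\|_\cH$.

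\textbf{Item 1 (composition).} This is the classical Moser estimate. By the Faà di Bruno formula, $D^\alpha F(h)$ is a sum of terms $F^{(m)}(h)\prod_{i=1}^m D^{\beta_i}h$ with $\sum_i|\beta_i|=|\alpha|$. Since $h$ is bounded, $|F^{(m)}(h)|\le \sup_{|t|\le\|h\|_\infty}|F^{(m)}(t)|$, and this supremum is non-decreasing in $\|h\|_\infty$.

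For the product of derivatives, apply Hölder with $p_i=2|\alpha|/|\beta_i|$, followed by Gagliardo--Nirenberg. This bounds it by $\|h\|_\infty^{m-1}\|h\|_{W^{|\alpha|}}$. For the zeroth-order term, $F(0)=0$ gives $|F(h)|\le \sup|F'|\,|h|$. Summing over $\alpha$ yields the stated $C_F(\|h\|_\infty)\|h\|_\cH$.

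\textbf{Item 3 (Gaussian smoothing).} Write $T_\gamma h(y)=\int g(y-y')h(y')\,d\gamma(y')$ with $g$ the Gaussian. For each $|\alpha|\le s$, differentiate under the integral: $D^\alpha T_\gamma h(y)=\int (D^\alpha g)(y-y')h(y')\,d\gamma(y')$. Because $g$ and its derivatives are smooth, the function $y'\mapsto (D^\alpha g)(y-\cdot)(y')$ lies in $\cH$ with norm bounded uniformly in $y\in\cY$ by a constant $C_{\alpha,\eps}$. By item 2, the product $(D^\alpha g)(y-\cdot)\,h$ has $\cH$-norm at most $K_2C_{\alpha,\eps}\|h\|_\cH$.

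Next, interpret $\|\gamma\|_{\ell^\infty(\cH)}$ as $\sup_{f\in\cH_1}|\gamma f|$. Then $|D^\alpha T_\gamma h(y)|\le K_2C_{\alpha,\eps}\|h\|_\cH\|\gamma\|_{\ell^\infty(\cH_1)}$ uniformly in $y$. Since $\cY$ is bounded, the $L^2(\Omega)$ norm is at most $|\cY|^{1/2}$ times this sup bound. Summing over $\alpha$ gives $K_3$, which also depends on $\eps$ through $C_{\alpha,\eps}$.

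\textbf{Main obstacle.} The main obstacle is items 1–2 on a bounded domain rather than $\R^d$. I would handle this via the extension operator plus the Gagliardo--Nirenberg inequality on $\R^d$ (citing \citet{nirenberg1959elliptic,gagliardo1959ulteriori,moser1966rapidly} and \citet{taylor1996partial3} for the Moser estimates). One care point is that the extension $E(h_1)E(h_2)$ restricts to $h_1h_2$, and similarly $F(Eh)$ restricts to $F(h)$. This is what lets us bound norms on $\Omega$ by the $\R^d$ norms.
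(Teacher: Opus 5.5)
Your proposal is correct and matches the paper's proof. For items 1 and 2, the paper simply cites the Gagliardo--Nirenberg--Moser composition estimate (Taylor) and the Banach-algebra property of $W^s$ for $s>d/2$ (Adams), and you sketch their standard proofs. Your item 3 is the paper's argument: differentiate under the integral against $\gamma$, use the algebra property and duality with $\gamma$, then bound the $L^2$ norm by $|\cY|^{1/2}$ times the sup norm.

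Your remark that $K_3$ also depends on $\eps$ is right, since the paper's constants $M_\alpha$ do too. Two small points in your sketches are worth making explicit. First, Stein's extension is also bounded on $L^\infty$, which is what keeps $C_F$ a function of $\|h\|_{L^\infty(\cY)}$. Second, for $|\alpha|<s$ the H\"older/Gagliardo--Nirenberg exponents in item 2 should be $2|\alpha|/j$ and $2|\alpha|/(|\alpha|-j)$ rather than $2s/j$ and $2s/(s-j)$.
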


\begin{proof}
    The first result follows from Sobolev inequalities (Gagliardo–Nirenberg–Moser estimates) \citep{gagliardo1959ulteriori, nirenberg1959elliptic, moser1966rapidly} for composition. Since $s > d/2$, $H^s(\cY) \subset \cC(\cY)$ \citep[Chap.~4, Prop.~1.3]{taylor1996partial1}. Therefore, all $h \in \cH$ are continuous and hence bounded on $\cY$, i.e. $h \in L^\infty(\cY)$. For every smooth function $F: \R \to \R$ such that $F(0) = 0$, there exists a non-decreasing function $C_F: \R^+ \to \R^+$ such that $\| F(h) \|_{\cH} \leq C_F(\| h \|_{L^\infty(\cY)}) \| h \|_{\cH}$ \citep[Chap.~13, Prop.~3.9]{taylor1996partial3}.

    Because we chose $s>d/2$, $\cH$ is a Banach algebra \citep[Thm.~4.39]{adams2003sobolev}, which implies that for any $h_1, h_2 \in \cH$, the pointwise product $h_1 h_2$ (defined a.e. in $\cY$) belongs to $\cH$. Further, there exists a constant $K_2 \equiv K_2(s, d, \cY)$, such that $\| h_1 h_2 \|_\cH \leq K_2 \| h_1\|_\cH \|h_2\|_\cH$.
    
    For the third result, the function $T_\gamma h$ is well-defined because $\cH \hookrightarrow \cC(\cY)$, so $h$ has a continuous representative in $\cC(\cY)$. Let $k(y, y^\prime) = e^{-\frac{1}{2\eps} \|y - y^\prime\|^2}$. We have that 
    \begin{align*}
        \abs{T_\gamma h(y)} &= \abs{\int k(y, y^\prime) \, h(y^\prime) \, d\gamma(y^\prime)} \leq \| h\|_{L^\infty(\cY)} \|\gamma\|_{\TV} < \infty.
    \end{align*}
    Now we show that $T_\gamma h \in \cH$. Let $k(y, y^\prime) = e^{-\frac{1}{2\eps} \|y - y^\prime\|^2}$. For every $y^\prime \in \cY$, the function $y^\prime \mapsto k(y, y^\prime)$ is $C^\infty$. For each $y \in \cY$, $T_\gamma h(y) = \gamma \of{k(y, \cdot) \, h}$. Since $y^\prime \mapsto k(y, y^\prime) \in C^\infty(\cY)$ and $\cH$ is an algebra $k(y, \cdot) h $ belongs to $\cH$. As a consequence,
    \begin{align*}
        \abs{T_\gamma h (y)} &\leq \| k(y, \cdot) h \|_{\cH} \|\gamma\|_{\ell^\infty(\cH_1)}\\
        &\leq K_2 \| k(y, \cdot) \|_\cH \|h\|_\cH \, \|\gamma\|_{\ell^\infty(\cH_1)}\\
        \implies \| T_\gamma h\|_{L^\infty(\cY)} &\leq K_2 M_0 \|h\|_\cH \, \|\gamma\|_{\ell^\infty(\cH_1)},
    \end{align*}
    where $M_0 = \sup_{y} \| k(y, \cdot) \|_\cH < \infty$ because $\cY $ is compact. Similarly, for any multi-index $\alpha$, define $M_\alpha = \sup_{y} \|\partial^\alpha_y k(y, \cdot) \|_\cH < \infty$. Using the same argument and knowing that ${\partial^\alpha T_\gamma h (y)} = {\gamma (\partial^\alpha k(y, \cdot) h)}$, we have that $\|\partial^\alpha T_\gamma h\|_{L^\infty(\cY)} < \infty$. Then it holds that
    \begin{align*}
        \| T_\gamma h \|_\cH &= \of{\sum_{\abs{\alpha} \leq s} \norm{\partial^\alpha T_\gamma h}_{L^2(\cY)}^2}^{1/2}\\
        &\leq \abs{\cY}^{1/2} \sum_{\abs{\alpha} \leq s} \norm{\partial^\alpha T_\gamma h}_{L^\infty(\cY)}\\
        &\leq K_2 \abs{\cY}^{1/2}  \|h\|_\cH \|\gamma \|_{\ell^\infty(\cH_1)} \sum_{\abs{\alpha} \leq s} M_\alpha =: K_3 \|h\|_\cH \|\gamma \|_{\ell^\infty(\cH_1)}.
    \end{align*}
    The first inequality follows from elementary inequalities between $L^2$ and $L^\infty$ norms. The last inequality uses that $\cH$ is an algebra from previous result.
\end{proof}

Next, we prove the Hadamard differentiability of Sinkhorn divergence in the dual Sobolev RKHS topology. The Hadamard derivative is of the same form as presented in \citet[Lem.~3]{goldfeld2024limit}, but in an ambient RKHS space ($\mathcal{H}$) rather than a dual H\"{o}lder space. We present a proof here for completeness.

\begin{lemma}[Hadamard derivative of $S_\eps$]\label{lem:Sinkhorn_first_order_Hadamard_diff}
    For a fixed $\eps > 0$, the functional $S_\eps: (\mu, \nu) \mapsto \SinkDiv{\mu}{\nu}, \cP(\cY) \times \cP(\cY) \subset \ell^\infty(\cH_1) \times \ell^\infty(\cH_1) \to \R$ is Hadamard differentiable at $(\mu, \nu)$, tangentially to $\cM_{0, \mu} \times \cM_{0, \nu}$, with Hadamard derivative $S_\eps^\prime[\mu, \nu]: \cM_{0, \mu} \times \cM_{0, \nu} \to \R$ given by
    \[
    S_\eps^\prime[\mu, \nu](\gamma^1, \gamma^2) = \int \upsilon_1^{(\mu, \nu)} \,d\gamma^1 + \int \upsilon_0^{(\mu, \nu)} \,d\gamma^2.
    \]
\end{lemma}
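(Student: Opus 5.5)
The plan is to first handle the OT cost $(\mu,\nu)\mapsto \EntOT{\mu}{\nu}$ and then assemble $S_\eps$ from its three EOT terms. Fix sequences $t_n\downarrow 0$ and $(\gamma^1_n,\gamma^2_n)\to(\gamma^1,\gamma^2)$ in $\ell^\infty(\cH_1)\times\ell^\infty(\cH_1)$ with $\mu_n:=\mu+t_n\gamma^1_n\in\cP_\mu$ and $\nu_n:=\nu+t_n\gamma^2_n\in\cP_\nu$. The aim is
\[
t_n^{-1}\off{\EntOT{\mu_n}{\nu_n}-\EntOT{\mu}{\nu}}\to \gamma^1\varphi_1^{(\mu,\nu)}+\gamma^2\varphi_0^{(\mu,\nu)} .
\]
Because $\gamma^i$ is balanced, this value does not depend on the additive normalization of the potentials. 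The strategy is to sandwich the difference using the dual formulation \eqref{eq:dual_EOT}.

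For the lower bound, plug the fixed pair $(\varphi_1^{(\mu,\nu)},\varphi_0^{(\mu,\nu)})$ into the dual objective at $(\mu_n,\nu_n)$. The Schrödinger system \eqref{eq:schrodinger_system} gives $\int e^{(\varphi_1\oplus\varphi_0-c)/\eps}\,d\nu=1$ pointwise in $y_1$, and symmetrically in $y_2$. Expanding the exponential term at the product measure $\mu_n\otimes\nu_n$ therefore leaves $\eps$, two linear terms that vanish because $\gamma^i(\cY)=0$, and a bilinear term of order $t_n^2$. This yields
\[
\EntOT{\mu_n}{\nu_n}-\EntOT{\mu}{\nu}\ge t_n\off{\gamma^1_n\varphi_1^{(\mu,\nu)}+\gamma^2_n\varphi_0^{(\mu,\nu)}}-O(t_n^2).
\]
The upper bound is symmetric: plug $(\varphi_1^{(\mu_n,\nu_n)},\varphi_0^{(\mu_n,\nu_n)})$ into the dual at $(\mu,\nu)$ and use the Schrödinger system at $(\mu_n,\nu_n)$.

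To make these expansions rigorous in the $\ell^\infty(\cH_1)$ topology, I need two controls.
\begin{enumerate}
\item Integrals of potentials against $\gamma_n$ satisfy $|\gamma_n\varphi|\le\|\gamma_n\|_{\ell^\infty(\cH_1)}\|\varphi\|_\cH$. This is finite and uniform by \cref{lem:EOT_potentials_properties}(1).
\item The bilinear term $(\gamma^1_n\otimes\gamma^2_n)\,e^{(\varphi_1\oplus\varphi_0-c)/\eps}$ must be bounded. For this I write it as $\gamma^1_n$ applied to $y\mapsto \gamma^2_n\off{e^{(\varphi_1(y)+\varphi_0-c(y,\cdot))/\eps}}$. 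I bound $e^{\varphi/\eps}$ in $\cH$ via \cref{lem:Sobolev_RKHS_properties}(1), applied to $F(x)=e^{x}-1$. I then handle the product with the Gibbs kernel using parts (2)--(3) of that lemma.
\end{enumerate}
This gives $|(\gamma^1_n\otimes\gamma^2_n)\,\cdot|\lesssim\|\gamma^1_n\|\,\|\gamma^2_n\|$, so the remainder is $O(t_n^2)$.

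The main obstacle is the upper bound, which needs the perturbed potentials to converge. Here $\mu_n\to\mu$ weakly, since convergence in $\ell^\infty(\cH_1)$ (MMD with a characteristic bounded kernel on compact $\cY$) metrizes weak convergence. Hence \cref{lem:EOT_potentials_properties}(2) gives $\varphi_i^{(\mu_n,\nu_n)}\to\varphi_i^{(\mu,\nu)}$ in $\cH$. Combined with $\gamma^i_n\to\gamma^i$, this shows $\gamma^i_n\varphi_i^{(\mu_n,\nu_n)}\to\gamma^i\varphi_i^{(\mu,\nu)}$, and the two bounds meet.

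For the full divergence, apply the same argument to $\EntOT{\mu}{\mu}$ along the direction $(\gamma^1,\gamma^1)$ and to $\EntOT{\nu}{\nu}$ along $(\gamma^2,\gamma^2)$. By symmetry of self-transport, the derivative of $\EntOT{\mu}{\mu}$ is $2\gamma^1\varphi_1^{(\mu,\mu)}$, and similarly $2\gamma^2\varphi_0^{(\nu,\nu)}$ for the other. Combining with the $\tfrac12$ weights in \eqref{eq:sinkhorn_div}, the cross-term derivative minus these self-terms gives $\gamma^1\upsilon_1^{(\mu,\nu)}+\gamma^2\upsilon_0^{(\mu,\nu)}$. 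This is linear and bounded on $\cM_{0,\mu}\times\cM_{0,\nu}$, as claimed.
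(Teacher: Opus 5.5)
Your proposal is correct and follows essentially the same route as the paper. Both reduce to $\EntOT{\cdot}{\cdot}$, sandwich the difference quotient via the dual (fixed potentials for the lower bound, perturbed potentials for the upper bound), control the $O(t^2)$ Gibbs bilinear term with the composition, Banach-algebra and Gaussian-integral-operator bounds of \cref{lem:Sobolev_RKHS_properties}, and close the gap using the $\cH$-continuity of potentials from \cref{lem:EOT_potentials_properties}. You also spell out two steps the paper leaves implicit: the assembly of the three EOT terms (self-transport potentials agree up to a constant, which balanced directions annihilate) and why $\ell^\infty(\cH_1)$-convergence gives the weak convergence that continuity lemma needs.
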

\begin{proof}
    By linearity of derivatives, it suffices to compute the derivative of $(\alpha, \beta) \mapsto \EntOT{\alpha}{\beta}$. Consider paths $(\mu_t)_{t>0}$ and $(\nu_t)_{t>0}$ in $\cP_\mu$ and $\cP_\nu$ respectively, such that $\mu_t = \mu + t \gamma^1_t$ and $\nu_t = \nu + t \gamma^2_t$, with $\gamma_t^i \to \gamma^i$ as $t \downarrow 0$ in $\ell^\infty(\cH_1)$, for $i \in \{1,2\}$ and some $(\gamma^1, \gamma^2) \in \cM_{0, \mu} \times \cM_{0, \nu}$. Then,
    \[
    \frac{1}{t} \of{\EntOT{\mu_t}{\nu_t} - \EntOT{\mu}{\nu}} = \frac{1}{t} \off{\of{\mu_t \varphi_1^{(\mu_t, \nu_t)} + \nu_t \varphi_0^{(\mu_t, \nu_t)}} - \of{\mu \varphi_1^{(\mu, \nu)} + \nu \varphi_0^{(\mu, \nu)}}}.
    \]
    For any $(\mu, \nu) \in \cP(\cY) \times \cP(\cY)$, recall the density of the EOT plan
    \[
    \xi^{(\mu, \nu)}(y_1, y_2) = \exp\of{\frac{1}{\eps} \of{\varphi_1^{(\mu, \nu)}(y_1) + \varphi_0^{(\mu, \nu)}(y_2) - \frac{\|y_1 - y_2\|^2}{2}}}.
    \]
    From the optimality of $\of{\varphi_1^{(\mu_t, \nu_t)}, \varphi_0^{(\mu_t, \nu_t)}}$, we have that
    \begin{align}\label{eq:OT_lowerbound}
        \frac{1}{t} \of{\EntOT{\mu_t}{\nu_t} - \EntOT{\mu}{\nu}} &\geq \frac{1}{t}\off{{\of{\mu_t \varphi_1^{(\mu, \nu)} + \nu_t \varphi_0^{(\mu, \nu)}} - \of{\mu \varphi_1^{(\mu, \nu)} + \nu \varphi_0^{(\mu, \nu)}}}} \nonumber\\
        & \quad - \frac{\eps}{t} \off{\int \int \xi^{(\mu, \nu)}(y_1, y_2) \, d\mu_t(y_1) \, d\nu_t(y_2) - 1} \nonumber\\
        &= \gamma_t^1 \, \varphi_1^{(\mu, \nu)} + \gamma_t^2 \varphi_0^{(\mu, \nu)} \nonumber - t \eps \int \int \xi^{(\mu, \nu)}(y_1, y_2) \, d\gamma_t^1(y_1) \, d\gamma_t^2(y_2),
    \end{align}
    where the last equality follows by expanding $\mu_t$ (and $\nu_t$) as $\mu+t\gamma_t^1$ (and $\nu + t \gamma_t^2$), and noting that $\mu \of{\xi^{(\mu, \nu)}(\cdot, y_2)} = 1$ and $\nu \of{\xi^{(\mu, \nu)}(y_1, \cdot)} = 1$ for all $y_1, y_2 \in \cY$. Consequently,
    \[
    \liminf_{t \downarrow 0} \frac{1}{t} \of{\EntOT{\mu_t}{\nu_t} - \EntOT{\mu}{\nu}} \geq \lim_{ t\downarrow 0}  \of{\gamma_t^1 \, \varphi_1^{(\mu, \nu)} + \gamma_t^2 \varphi_0^{(\mu, \nu)} \nonumber - t \eps \int \int \xi^{(\mu, \nu)}(y_1, y_2) \, d\gamma_t^1(y_1) \, d\gamma_t^2(y_2)}.
    \]
    We will show that the limit on the RHS is equal to $\gamma^1 \varphi_1^{(\mu, \nu)} + \gamma^2 \varphi_0^{(\mu, \nu)}$. We already have that  
    \begin{align*}
        \abs{(\gamma_t^1-\gamma^1) \, \varphi_1^{(\mu, \nu)} + (\gamma_t^2 - \gamma^2 ) \varphi_0^{(\mu, \nu)}} 
         & \leq \norm{\gamma_t^1-\gamma^1}_{\ell^\infty(\cH_1)} \norm{\varphi_1^{(\mu, \nu)}}_{\cH} +  \norm{\gamma_t^2-\gamma^2}_{\ell^\infty(\cH_1)} \norm{\varphi_0^{(\mu, \nu)}}_{\cH} \xrightarrow[]{t \downarrow 0} 0
    \end{align*}
    because $\gamma_t^i \to \gamma^i$ as $t \to 0$ in $\ell^\infty(\cH_1)$ for $i \in \{1,2\}$ and entropic potentials are uniformly bounded by the first part of \cref{lem:EOT_potentials_properties}. Therefore, it suffices to show that
    \begin{equation}\label{eq:Sinkhorn_density_uniform_integral}
        \int \int \xi^{(\mu, \nu)}(y_1, y_2) \, d\gamma_t^1(y_1) \, d\gamma_t^2(y_2) = \cO(1) \quad \text{as } t\downarrow 0.
    \end{equation}
    We show this by using three properties for the Sobolev RKHS $\cH$ detailed in \cref{lem:Sobolev_RKHS_properties}. Choosing $F(h) = e^h - 1$ in the first property gives us that there exists a non-decreasing function $C_F: \R^+ \to \R^+$ such that for all $h \in \cH$,
    \[
    \|e^h - 1\|_\cH \leq C_F(\|h\|_{L^\infty(\cY)}) \|h\|_\cH.
    \]
    Because the constant function $\mathbf{1} \in \cH$, we have that
    \begin{equation}\label{eq:exp_h_sobolev_inequality}
        \| e^{h} \|_\cH \leq \|e^h-1\|_\cH + \|\mathbf{1}\|_\cH \leq C_F(\|h\|_{L^\infty(\cY)}) (\|h\|_\cH + 1).
    \end{equation}
    The Sobolev embedding theorem \citep[Thm.~4.12]{adams2003sobolev} gives that $\cH$ embeds continuously in $L^\infty(\cY)$. Further, \cref{lem:EOT_potentials_properties} gives that the functions of the form $h = \varphi_i^{(\mu, \nu)}$ are bounded in $\cH$, uniformly over $i \in \{1,0\}$ and $(\mu, \nu) \in \cP(\cY) \times \cP(\cY)$; denote the bound by $M_1 \equiv M_1(\eps, s, d,  \abs{\cY})$. As a consequence, $h = \varphi_i^{(\mu, \nu)}$ are uniformly bounded in $L^\infty(\cY)$ norm, denote the bound by $M_2 \equiv M_2(\eps, s, d, \abs{\cY})$. Therefore, using \eqref{eq:exp_h_sobolev_inequality}, for all $i \in \{1,0\}$ and $(\mu, \nu) \in \cP(\cY) \times \cP(\cY)$,
    \begin{equation}\label{eq:exp_EOT_potential_bound}
        \norm{e^{\frac{1}{\eps} \varphi_i^{(\mu, \nu)}}}_\cH \leq C_F\of{\frac{M_2}{\eps}} \of{\frac{M_1}{\eps} + 1} =: K_1(\eps, s, d,  \abs{\cY}) \equiv K_1.
    \end{equation}
    The second property from \cref{lem:Sobolev_RKHS_properties} shows that $\cH$ is a Banach algebra. Therefore, for every $h_1, h_2$, the pointwise product $h_1 h_2 \in \cH$ (defined a.e. $\cY$), belongs to $\cH$ and there exists a constant $K_2 \equiv K_2(s, d,  \abs{\cY})$ such that 
    \begin{equation}\label{eq:Banach_algebra}
        \|h_1 h_2 \|_\cH \leq K_2 \,\|h_1\|_\cH \,\|h_2\|_\cH.
    \end{equation}
    Let $k_\eps (y, y^\prime) = e^{-\frac{1}{2\eps} \|y-y^\prime\|^2}$, and define $G_\gamma h: y \mapsto \int_\cY k_\eps(y, y^\prime) \, h(y^\prime) \,d\gamma(y^\prime)$. The third property from \cref{lem:Sobolev_RKHS_properties} shows that for all $h\in\cH$ and $\gamma \in \cM(\cY)$, $G_\gamma h \in \cH$, and there exists a constant $K_3 \equiv K_3(s,d, \abs{\cY})$ such that
    \begin{equation}\label{eq:T_gamma_bound}
        \| G_\gamma h \|_\cH \leq K_3 \, \|h \|_\cH \,\|\gamma\|_{\ell^\infty(\cH_1)}.
    \end{equation} 
    The three bounds \eqref{eq:exp_h_sobolev_inequality}, \eqref{eq:Banach_algebra}, and \eqref{eq:T_gamma_bound} allow us to establish the following string of inequalities
    \begin{align}\label{eq:xi_integral_uniform_bound}
        \abs{\int \int \xi^{(\mu, \nu)}(y_1, y_2) \, d\gamma_t^1(y_1) \, d\gamma_t^2(y_2)} &= \abs{\iprod{\gamma_t^2,\, e^{\frac{1}{\eps}\varphi_0^{(\mu, \nu)}} \,T_{\gamma_t^1} e^{\frac{1}{\eps}\varphi_1^{(\mu, \nu)}} }} \nonumber \\
        &\leq \norm{\gamma_t^2}_{\ell^\infty(\cH_1)} \norm{e^{\frac{1}{\eps}\varphi_0^{(\mu, \nu)}} \,T_{\gamma_t^1} e^{\frac{1}{\eps}\varphi_1^{(\mu, \nu)}}}_\cH \nonumber\\
        & \leq K_2 \norm{\gamma_t^2}_{\ell^\infty(\cH_1)} \norm{e^{\frac{1}{\eps}\varphi_0^{(\mu, \nu)}}}_\cH \norm{\,T_{\gamma_t^1} e^{\varphi_1^{(\mu, \nu)}/\eps}}_\cH  \nonumber \\
        & \leq K_2 K_3 \norm{e^{\frac{1}{\eps}\varphi_0^{(\mu, \nu)}}}_\cH \norm{e^{\frac{1}{\eps}\varphi_1^{(\mu, \nu)}}}_\cH  \norm{\gamma_t^1}_{\ell^\infty(\cH_1)} \norm{\gamma_t^2}_{\ell^\infty(\cH_1)} \nonumber \\
        &\leq K_1^2 K_2 K_3 \,\norm{\gamma_t^1}_{\ell^\infty(\cH_1)} \,\norm{\gamma_t^2}_{\ell^\infty(\cH_1)} .
    \end{align}
    The first equality follows by the fact that $e^{\frac{1}{\eps}\varphi_0^{(\mu, \nu)}}, T_{\gamma_t^1} e^{\frac{1}{\eps}\varphi_1^{(\mu, \nu)}} \in \cH$, and $\cH$ is a Banach algebra and therefore, their product also belongs to $\cH$. The second, third, and last inequalities follow from \eqref{eq:Banach_algebra}, \eqref{eq:T_gamma_bound}, and \eqref{eq:exp_EOT_potential_bound} respectively. Because $\gamma_t^i \to \gamma^i$ in $\ell^\infty(\cH_1)$, there exists a $t_0 > 0$ such that $\sup_{t \le t_0} \| \gamma_t^i \|_{\ell^\infty(\cH_1)} < \infty$ for $i \in \{1,2\}$. And therefore, \cref{eq:Sinkhorn_density_uniform_integral} holds. Following this, we have that
    \begin{equation}\label{eq:EOT_lowerbound}
        \gamma^1 \varphi_1^{(\mu, \nu)} + \gamma^2 \varphi_0^{(\mu, \nu)} \leq \liminf_{t \downarrow 0} \frac{1}{t} \of{\EntOT{\mu_t}{\nu_t} - \EntOT{\mu}{\nu}}.
    \end{equation}

    Now by optimality of $(\varphi_1^{(\mu, \nu)}, \varphi_0^{(\mu, \nu)})$, we have that
    \begin{align}\label{eq:OT_upperbound}
        \frac{1}{t} \of{\EntOT{\mu_t}{\nu_t} - \EntOT{\mu}{\nu}} &\leq \frac{1}{t}\off{{\of{\mu_t \varphi_1^{(\mu_t, \nu_t)} + \nu_t \varphi_0^{(\mu_t, \nu_t)}} - \of{\mu \varphi_1^{(\mu_t, \nu_t)} + \nu \varphi_0^{(\mu_t, \nu_t)}}}} \nonumber\\
        & \quad + \frac{\eps}{t}\off{\int \int \xi^{(\mu_t, \nu_t)}(y_1, y_2) \, d\mu(y_1) \, d\nu(y_2) - 1}\nonumber\\
        &= \gamma_t^1 \, \varphi_1^{(\mu_t, \nu_t)} + \gamma_t^2 \varphi_0^{(\mu_t, \nu_t)} + t \eps \int \int \xi^{(\mu_t, \nu_t)}(y_1, y_2) \, d\gamma_t^1(y_1) \, d\gamma_t^2(y_2),
    \end{align}
    where the equality follows because $\mu_t\of{\xi^{(\mu_t, \nu_t)}(\cdot, y_2)} = 1$ and $\nu_t\of{\xi^{(\mu_t, \nu_t)}(y_1, \cdot)} = 1$ for all $y_1, y_2 \in \cY$. Consequently,
    \begin{align*}
        \limsup_{t \downarrow 0} \frac{1}{t} \of{\EntOT{\mu_t}{\nu_t} - \EntOT{\mu}{\nu}} &\leq \lim_{ t\downarrow 0}  \of{\gamma_t^1 \, \varphi_1^{(\mu_t, \nu_t)} + \gamma_t^2 \varphi_0^{(\mu_t, \nu_t)} + t \eps \int \int \xi^{(\mu_t, \nu_t)}(y_1, y_2) \, d\gamma_t^1(y_1) \, d\gamma_t^2(y_2)}\\
        &= \lim_{ t\downarrow 0}  \of{\gamma_t^1 \, \varphi_1^{(\mu_t, \nu_t)} + \gamma_t^2 \varphi_0^{(\mu_t, \nu_t)}} .
    \end{align*}
    The equality follows again from \cref{eq:xi_integral_uniform_bound} because the upper bound established there is uniform over all $(\mu, \nu)$. For any sequence $(f_t)_{t > 0} \subset \cH$ and $(\eta_t, t>0) \subset \cM_0(\cY)$, such that $f_t \to f$ in $\cH$ and $\eta_t \to \eta $ in $\ell^\infty(\cH_1)$, we have that $\eta_t f_t \to \eta f$ as $ t \downarrow 0$. This hold because 
    \begin{align*}
        \abs{\eta_t f_t - \eta f} &\leq \abs{(\eta_t - \eta) f_t} + \abs{\eta (f_t - f)} \leq \| \eta_t - \eta \|_{\ell^\infty(\cH_1)} \|f_t\|_\cH + \|\eta\|_{\ell^\infty(\cH_1)} \|f_t - f\|_\cH.
    \end{align*}
    Since $f_t \to f$ in $\cH$, for every $\epsilon>0$, there exists a $t_0>0$ such that $\|f_t\|_\cH < \epsilon$ for all $t < t_0$. Further, $\lim_{t \downarrow 0} \| f_t - f\|_\cH = 0$ and $\lim_{t \downarrow 0} \| \eta_t - \eta\|_{\ell^\infty(\cH_1)} = 0$, which proves the claim. Therefore,
    \begin{equation}\label{eq:EOT_upperbound}
    \limsup_{t \downarrow 0} \frac{1}{t} \of{\EntOT{\mu_t}{\nu_t} - \EntOT{\mu}{\nu}} \leq  \gamma^1 \varphi_1^{(\mu, \nu)} + \gamma^2 \varphi_0^{(\mu, \nu)}.
    \end{equation}
    Combining \eqref{eq:EOT_lowerbound} and \eqref{eq:EOT_upperbound}, we prove that
    \[
    \lim_{t \downarrow 0} \frac{1}{t} \of{\EntOT{\mu_t}{\nu_t} - \EntOT{\mu}{\nu}} = \gamma^1 \varphi_1^{(\mu, \nu)} + \gamma^2 \varphi_0^{(\mu, \nu)}.
    \]
\end{proof}

Now we focus on the first-order Hadamard differentiability of the function $(\mu, \nu) \mapsto (\varphi_1^{(\mu, \nu)}, \varphi_0^{(\mu, \nu)}), \cP(\cY) \times \cP(\cY) \subset \ell^\infty(\cH_1) \times \ell^\infty(\cH_1) \to \cH \times \cH$. 

\begin{lemma}[Hadamard differentiability of entropic potentials in $\cC(\cY)$] \label{lem:EOT_potential_derivative_C}
    The $\cP(\cY) \times \cP(\cY) \to \cC(\cY) \times \cC(\cY)$ map $(\mu, \nu) \mapsto (\varphi_1^{(\mu, \nu)}, \varphi_0^{(\mu, \nu)})$ is Hadamard differentiable at $(\mu, \nu)$, tangentially to $\cM_{0, \mu} \times \cM_{0, \nu}$. 
\end{lemma}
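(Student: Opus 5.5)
We will prove the lemma with the implicit function theorem applied to the Schrödinger system \eqref{eq:schrodinger_system}, following the strategy of \citet[Lem.~2]{goldfeld2024limit} and \citet{kokot2025coreset}, but with perturbations measured in $\ell^\infty(\cH_1)$. Write the system as $\Phi(\varphi_1,\varphi_0;\mu,\nu)=0$, where
\[
\Phi_1 = \varphi_1 + \eps\log \medint\int e^{(\varphi_0(y')-c(\cdot,y'))/\eps}\,d\nu(y'), \qquad
\Phi_0 = \varphi_0 + \eps\log \medint\int e^{(\varphi_1(y)-c(y,\cdot))/\eps}\,d\mu(y).
\]
The normalization $\varphi_1(y_0)=0$ removes the additive degeneracy. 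Equivalently, we work in $\cC(\cY)/\R \times \cC(\cY)$, or impose the constraint explicitly.

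\textbf{Step 1 (linearization in the potentials).} At the solution $(\varphi_1^{(\mu,\nu)},\varphi_0^{(\mu,\nu)})$, the partial derivative of $\Phi$ in $(\varphi_1,\varphi_0)$ is
\[
(h_1,h_0)\mapsto \bigl(h_1 + A_{\nu} h_0,\; h_0 + A_{\mu}^{*} h_1\bigr).
\]
Here $A_\nu h_0 = \int \xi^{(\mu,\nu)}(\cdot,y')h_0(y')\,d\nu(y')$ and $A^*_\mu h_1 = \int \xi^{(\mu,\nu)}(y,\cdot)h_1(y)\,d\mu(y)$ are Markov operators. Eliminating $h_0$ reduces invertibility to invertibility of $I - A_\nu A^*_\mu$ on $\cC(\cY)/\R$. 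The operator $A_\nu A_\mu^*$ is compact (its kernel is smooth and bounded) and fixes constants. Because $\xi^{(\mu,\nu)}>0$, its eigenvalue $1$ is simple, with eigenfunctions the constants. Fredholm alternative then gives a bounded inverse on the quotient, exactly as in \citet{goldfeld2024limit} and \citet[Sec.~3]{lavenant2024riemannian}.

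\textbf{Step 2 (derivative in the measures).} For paths $\mu_t=\mu+t\gamma^1_t$ and $\nu_t=\nu+t\gamma^2_t$ with $\gamma^i_t\to\gamma^i$ in $\ell^\infty(\cH_1)$, the measure-dependence of $\Phi$ enters only through integrals of the form
\[
\gamma\bigl(e^{(\varphi(\cdot)-c(y,\cdot))/\eps}\bigr), \quad y\in\cY.
\]
We must show these are continuous in $\gamma\in\ell^\infty(\cH_1)$, uniformly in $y$, as maps into $\cC(\cY)$. The integrand $y'\mapsto e^{\varphi(y')/\eps}e^{-c(y,y')/\eps}$ has $\cH$-norm bounded uniformly in $y$, by \eqref{eq:exp_EOT_potential_bound} and the Banach algebra property \eqref{eq:Banach_algebra}. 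Continuity in $y$ in $\cH$ comes from smoothness of $c$. Hence
\[
\sup_y \bigl|(\gamma_t-\gamma)(\cdots)\bigr| \le \|\gamma_t-\gamma\|_{\ell^\infty(\cH_1)}\cdot \mathrm{const}.
\]
This gives Fréchet (hence Hadamard) differentiability of $\Phi$ jointly in $(\varphi,\gamma)$ near the base point, with derivative in the direction $(\gamma^1,\gamma^2)$ equal to
\[
\Bigl(\eps\, \gamma^2\bigl(\xi^{(\mu,\nu)}(\cdot,\cdot')\bigr),\; \eps\, \gamma^1\bigl(\xi^{(\mu,\nu)}(\cdot',\cdot)\bigr)\Bigr).
\]
A subtlety is that $\log$ of integrals must stay bounded away from $0$. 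This holds because the perturbed integrals are close in sup-norm to the unperturbed ones, which are bounded below.

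\textbf{Step 3 (conclusion).} An implicit function argument in the Hadamard form yields the conclusion: differentiate $\Phi(\varphi^{(\mu_t,\nu_t)};\mu_t,\nu_t)=0$ along the path and subtract the base equation. The required continuity, $\varphi^{(\mu_t,\nu_t)}\to\varphi^{(\mu,\nu)}$ in $\cC(\cY)$, follows from \cref{lem:EOT_potentials_properties}(2), since $\ell^\infty(\cH_1)$ convergence of probability measures implies weak convergence (MMD metrizes $\cP(\cY)$). Solving with the inverse from Step 1 gives the derivative
\[
-\bigl(D_\varphi\Phi\bigr)^{-1} D_{(\mu,\nu)}\Phi\,[\gamma^1,\gamma^2].
\]

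\textbf{Main obstacle.} The main difficulty is Step 1: proving a bounded inverse on the normalized space and checking that it is compatible with the normalization $\varphi_1(y_0)=0$. If this is too delicate, the fallback is to cite \citet[Thm.~3.8]{lavenant2024riemannian} and use the uniform remainder bounds from Step 2 to control the second-order Taylor term. Those bounds are similar to \eqref{eq:xi_integral_uniform_bound}.
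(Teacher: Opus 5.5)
Your route is genuinely different from the paper's. The paper never linearizes the Schr\"odinger system. It takes the Hadamard differentiability already established by \citet[Cor.~43]{kokot2025coreset} for measures viewed in $\ell^\infty(\cG_1)\times\ell^\infty(\cG_1)$, where $\cG$ is the Gaussian RKHS of $e^{-c/\eps}$. It then observes that $\cG$ embeds continuously into $\cH=W^s(\cY)$, so that $\|\gamma\|_{\ell^\infty(\cG_1)}\le C_s\|\gamma\|_{\ell^\infty(\cH_1)}$. Hence any perturbation $\gamma_t\to\gamma$ in the stronger $\ell^\infty(\cH_1)$ topology is also admissible in the weaker one, and the defining limit and the derivative carry over unchanged. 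That is a two-line transfer whose only cost is complete reliance on the cited result. Your implicit-function argument is self-contained and uses the same Sobolev facts the paper uses for $S_\eps$ (Banach algebra, composition bound, \eqref{eq:exp_EOT_potential_bound}). It also yields the derivative $-(D_\varphi\Phi)^{-1}D_{(\mu,\nu)}\Phi[\gamma^1,\gamma^2]$ explicitly, but it obliges you to redo the invertibility and normalization analysis.

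As written, one step must be restated. The closing claim of Step 2, that $\Phi$ is jointly Fr\'echet differentiable in $(\varphi,\gamma)$ near the base point with $\varphi$ ranging over a $\cC(\cY)$-neighbourhood, is false in this mixed topology. For $\varphi_0\notin\cH$ the integrand $e^{(\varphi_0-c(y,\cdot))/\eps}$ is not in $\cH$, so $\gamma(\cdot)$ is undefined for general $\gamma\in\cM_{0,\nu}$. Even for genuine measures, the mixed second-order term contains $\beta\bigl(\xi^{(\mu,\nu)}(y,\cdot)\,\delta_0\bigr)$, which is controlled only by $\|\beta\|_{\TV}\|\delta_0\|_\infty$. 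The bound \eqref{eq:exp_EOT_potential_bound} holds only for actual entropic potentials.

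So instead of a textbook implicit function theorem, work along the solution path $\varphi_t:=\varphi^{(\mu_t,\nu_t)}$ and split
\[
0=\bigl[\Phi(\varphi_t;\mu_t,\nu_t)-\Phi(\varphi_t;\mu,\nu)\bigr]+\bigl[\Phi(\varphi_t;\mu,\nu)-\Phi(\varphi;\mu,\nu)\bigr].
\]
The second bracket is $L(\varphi_t-\varphi)+O(\|\varphi_t-\varphi\|_\infty^2)$ with $L:=D_\varphi\Phi(\varphi;\mu,\nu)$, since $\Phi(\cdot\,;\mu,\nu)$ is smooth in sup-norm at fixed measures. The first bracket is $tB_t$ with $B_t\to D_{(\mu,\nu)}\Phi[\gamma^1,\gamma^2]$ uniformly in $y$. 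To pass from $\gamma^2_t$ evaluated at $\varphi_t$ to $\gamma^2$ evaluated at $\varphi$, use the uniform $\cH$-bounds together with the $\cH$-convergence $\varphi_t\to\varphi$ supplied by \cref{lem:EOT_potentials_properties}(2), since $\gamma^2$ need not be a measure.

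This also settles your ``main obstacle''. On $\{h_1(y_0)=0\}\times\cC(\cY)$ the operator $L$ is injective, and its range is the closed codimension-one subspace $\{(f_1,f_0):\mu f_1=\nu f_0\}$. So $L$ is bounded below rather than onto. That gives first $\|\varphi_t-\varphi\|_\infty=O(t)$, and then $(\varphi_t-\varphi)/t\to-L^{-1}D_{(\mu,\nu)}\Phi[\gamma^1,\gamma^2]$ with $L^{-1}$ taken on the range. With these changes your Steps 1--3 give a complete proof.
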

\begin{proof}
    The Hadamard differentiable of the map $\Phi: (\mu, \nu) \mapsto (\varphi_1^{(\mu, \nu)}, \varphi_0^{(\mu, \nu)})$ in $\cC(\cY) \times \cC(\cY)$ has been established in the dual Hölder space topology by \citet[Thm.~3]{goldfeld2024limit}, and in the Gaussian RKHS topology by \citet[Cor.~43]{kokot2025coreset}, with the latter deriving explicit derivatives. Extending the differentiability in the Sobolev RKHS topology for measures follows by continuously embedding the Gaussian RKHS in the Sobolev RKHS. 
    Let $\cG$ be the Gaussian RKHS with kernel $(y_1, y_2) \mapsto \exp(-\|y_1 - y_2\|^2/2\eps)$, with norm $\|\cdot\|_{\cG}$ and unit ball denoted by $\cG_1$. The Hadamard differentiability of $\Phi$ as a map $\cP(\cY)\times \cP(\cY)\subset \ell^\infty(\cG_1)\times \ell^\infty(\cG_1)
    \to \cC(\cY)\times \cC(\cY)$, is established in \citet[Cor.~43]{kokot2025coreset}. We show that the same map is Hadamard differentiable when the domain is instead equipped with the stronger topology induced by $\ell^\infty(\cH_1) \times \ell^\infty(\cH_1)$. 
    
    By the continuous embedding of the Gaussian RKHS into the Sobolev RKHS, there exists a constant $C_s>0$ such that for any $g \in \cG$, $\|g\|_{\cH} \leq C_s \|g\|_\cG$. Now we dualize this inclusion. If $g\in \cG_1$, then $C_s^{-1} g\in \cH_1$. Therefore, for every signed measure $\gamma$,
    \[
    \norm{\gamma}_{\ell^\infty(\cG_1)} = \sup_{g \in \cG_1} \abs{\int g \, d\gamma} \leq C_s \sup_{h \in \cH_1} \abs{\int h \, d\gamma} = C_s\norm{\gamma}_{\ell^\infty(\cH_1)}.
    \]
    Therefore, the identity map from $\ell^\infty(\cH_1) \to \ell^\infty(\cG_1)$ is continuous. 
Since $\Phi$ is Hadamard differentiable at $(\mu,\nu)$ as a map defined on $\cP(\cY)\times \cP(\cY)\subset \ell^\infty(\cG_1)\times \ell^\infty(\cG_1)$,
the defining limit remains valid for any perturbation sequences converging in the stronger topology $\ell^\infty(\cH_1)\times \ell^\infty(\cH_1)$. So the Hadamard expansion proved in \citet[Cor.~43]{kokot2025coreset} applies without change.

Therefore, $\Phi$ as a map $\cP(\cY)\times \cP(\cY)\subset \ell^\infty(\cH_1)\times \ell^\infty(\cH_1)
\to \cC(\cY)\times \cC(\cY)$, is Hadamard differentiable at $(\mu,\nu)$ tangentially to $\cM_{0,\mu}\times \cM_{0,\nu}$. Moreover, the Hadamard derivative is exactly the same as the derivative obtained when the domain is viewed as a subset of $\ell^\infty(\cG_1)\times \ell^\infty(\cG_1)$.
\end{proof}
    Now, we lift the Hadamard differentiability of entropic potentials from $\cC(\cY) \times \cC(\cY)$ to $\cH \times \cH$.
    \begin{proposition}[Hadamard differentiability of entropic potentials in $\cH$]\label{lem:EOT_potential_derivative_H}
         The $\cP(\cY) \times \cP(\cY) \to \cH \times \cH$ map $(\mu, \nu) \mapsto (\varphi_1^{(\mu, \nu)}, \varphi_0^{(\mu, \nu)})$ is Hadamard differentiable at $(\mu, \nu)$, tangentially to $\cM_{0, \mu} \times \cM_{0, \nu}$.
    \end{proposition}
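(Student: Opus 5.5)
The plan is to bootstrap the $\cC(\cY)$-valued Hadamard differentiability of \cref{lem:EOT_potential_derivative_C} up to $\cH$ by means of the Schrödinger system \eqref{eq:schrodinger_system}. The system writes each potential as a smooth nonlinear map applied to a Gaussian-kernel integral of the other potential, and such maps regularize. Write $\Phi(\mu,\nu)=(\varphi_1^{(\mu,\nu)},\varphi_0^{(\mu,\nu)})$. Fix a path $\mu_t=\mu+t\gamma^1_t$, $\nu_t=\nu+t\gamma^2_t$ with $\gamma^i_t\to\gamma^i$ in $\ell^\infty(\cH_1)$. Let $(\dot\varphi_1,\dot\varphi_0)\in\cC(\cY)^2$ be the $\cC(\cY)$-Hadamard derivative. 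We must show that the difference quotients $t^{-1}(\varphi_i^{(\mu_t,\nu_t)}-\varphi_i^{(\mu,\nu)})$ converge in $\cH$ to a limit that is linear and continuous in $(\gamma^1,\gamma^2)$. That limit will necessarily coincide with $\dot\varphi_i$, since convergence in $\cH$ implies uniform convergence.

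\textbf{Step 1 (rewrite via Schrödinger).} Set $u_t:=e^{\varphi_0^{(\mu_t,\nu_t)}/\eps}$ and $G_\gamma h(y)=\int e^{-\|y-y'\|^2/2\eps}h(y')\,d\gamma(y')$, as in \cref{lem:Sobolev_RKHS_properties}. Then
\[
\varphi_1^{(\mu_t,\nu_t)}=-\eps\log G_{\nu_t}u_t ,
\]
and symmetrically for $\varphi_0$. The difference quotient of $G_{\nu_t}u_t$ splits as
\[
t^{-1}\big(G_{\nu_t}u_t-G_\nu u_0\big)=G_{\gamma^2_t}u_t+G_\nu\big[t^{-1}(u_t-u_0)\big].
\]
For the first term, item 3 of \cref{lem:Sobolev_RKHS_properties} gives $\|G_{\gamma^2_t}u_t-G_{\gamma^2}u_0\|_\cH\le K_3(\|u_t\|_\cH\|\gamma_t^2-\gamma^2\|_{\ell^\infty(\cH_1)}+\|u_t-u_0\|_\cH\|\gamma^2\|)$. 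This tends to $0$ by the uniform bound \eqref{eq:exp_EOT_potential_bound} and by the continuity in \cref{lem:EOT_potentials_properties}, applied through the composition estimate (item 1) to $e^{h/\eps}$. For the second term I would not use item 3. Instead I would use that $G_\nu$ maps $\cC(\cY)$ boundedly into $\cH$, indeed into $\cC^k$ for every $k$, with norm bounded by $\|\nu\|_{\TV}$ and derivatives of the Gaussian kernel. So it suffices that $t^{-1}(u_t-u_0)\to u_0\dot\varphi_0/\eps$ in $\cC(\cY)$. This follows from \cref{lem:EOT_potential_derivative_C} and the chain rule for the smooth map $h\mapsto e^{h/\eps}$ on $\cC(\cY)$. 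Hence $t^{-1}(G_{\nu_t}u_t-G_\nu u_0)\to G_{\gamma^2}u_0+\eps^{-1}G_\nu(u_0\dot\varphi_0)$ in $\cH$.

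\textbf{Step 2 (outer nonlinearity).} The functions $G_{\nu_t}u_t$ are bounded below by a positive constant uniformly in $t$, because the potentials are uniformly bounded in sup norm. So we may compose with $-\eps\log$ and invoke a chain rule in $\cH$. Item 1 of \cref{lem:Sobolev_RKHS_properties}, together with the Banach algebra property (item 2), shows that $F:h\mapsto-\eps\log h$ is Fréchet differentiable on the open set $\{h\in\cH:\inf h>c\}$, with derivative $\eta\mapsto-\eps\eta/h$. Concretely, I write $F(h+\delta)-F(h)-F'(h)\delta=\delta^2\int_0^1(\ldots)$ and bound it by $K_2$ and $C_F$. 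Combined with Step 1, this yields $\cH$-convergence of the difference quotient of $\varphi_1$. The same argument applies to $\varphi_0$. The limit is linear and continuous in $(\gamma^1,\gamma^2)$ because $\dot\varphi_i$ is.

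\textbf{Main obstacle.} The delicate point is the normalization $\varphi_1(y_0)=0$ together with the mixed term in Step 1. Specifically, we need $\|u_t-u_0\|_\cH\to0$, which uses the $\cH$-continuity in \cref{lem:EOT_potentials_properties} along $(\mu_t,\nu_t)\to(\mu,\nu)$. The convergence $\mu_t\to\mu$ in $\ell^\infty(\cH_1)$ must be upgraded to weak convergence; this holds because $\cH$ is dense in $\cC(\cY)$ and the measures are probabilities. The normalization is then handled by subtracting the constant $\varphi^{(\mu_t,\nu_t)}_1(y_0)$, which is a bounded linear operation on $\cH$ since point evaluation is continuous for $s>d/2$.
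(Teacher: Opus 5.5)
Your argument is correct, and it takes a genuinely different route from the paper. The paper works coordinatewise in the Sobolev norm. It takes Hadamard differentiability of every derivative map $(\mu,\nu)\mapsto D^\alpha\varphi_i^{(\mu,\nu)}$, $|\alpha|\le s$, into $L^2(\cY)$ from \citet{goldfeld2024limit}. It then deduces that the difference quotients are Cauchy in $\cH$, because $\|\cdot\|_\cH^2$ is a finite sum of squared $L^2$ norms of weak derivatives, and identifies the limit through those weak derivatives.

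You instead use only the $\cC(\cY)$-valued derivative of \cref{lem:EOT_potential_derivative_C} and recover the missing $s$ derivatives from the Schr\"odinger system. $G_\nu$ smooths $\cC(\cY)$ into $\cC^k(\cY)\subset\cH$. The perturbation $G_{\gamma^2_t}u_t$ is controlled by item 3 of \cref{lem:Sobolev_RKHS_properties}. Finally, $-\eps\log$ acts as a $\cC^1$ Nemytskii map on $\{h\in\cH:\inf h>c\}$.

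The paper's route is shorter, but it needs differentiability of the derivatives $D^\alpha\varphi_i$. That has to come from a $\cC^k$-valued result such as the one in \citet{goldfeld2024limit}, since $\cC(\cY)$-differentiability together with $\cC(\cY)\hookrightarrow L^2(\cY)$ does not by itself control derivatives. Your route is self-contained relative to the $\cC(\cY)$ result already in the paper. It also produces the $\cH$-valued derivative explicitly as the linearized Schr\"odinger relation
\[
\varphi_1'(\gamma^1,\gamma^2)=-\eps\,e^{\varphi_1/\eps}G_{\gamma^2}e^{\varphi_0/\eps}-e^{\varphi_1/\eps}G_\nu\bigl(e^{\varphi_0/\eps}\dot\varphi_0\bigr),
\]
with $\varphi_i=\varphi_i^{(\mu,\nu)}$ and $\dot\varphi_0$ evaluated at $(\gamma^1,\gamma^2)$; the formula for $\varphi_0$ is symmetric.

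Two minor points. First, the normalization is not actually an obstacle. The Schr\"odinger system is invariant under $(\varphi_1+c,\varphi_0-c)$, so the normalized pair satisfies it exactly and $\varphi_1^{(\mu_t,\nu_t)}(y_0)=0$ is preserved with nothing to subtract. Second, item 1 of \cref{lem:Sobolev_RKHS_properties} is a bound rather than a continuity statement. To get $\|u_t-u_0\|_\cH\to0$, write $u_t-u_0=u_0\bigl(e^{(\varphi_0^{(\mu_t,\nu_t)}-\varphi_0^{(\mu,\nu)})/\eps}-1\bigr)$ and apply items 1 and 2.
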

    \begin{proof}

    The argument follows the same strategy as \citet[Thm.~3]{goldfeld2024limit}, so we only sketch the main steps and refer to that result for full details. For each $i \in \{0,1\}$, \citet[Lem.~4]{goldfeld2024limit} establish that the maps $(\mu, \nu) \mapsto \varphi_i^{(\mu, \nu)}$, $i \in \{1,0\}$, are Hadamard differentiable in $\cC(\cY)$. Since $\cY$ is compact, the canonical embedding $\cC(\cY) \hookrightarrow L^2(\cY)$ is continuous. It follows that, for every multi-index $\alpha = (\alpha_1, \dots, \alpha_d) \in \N_0^d$ ($\N_0 = \N \cup \{0\}$) with $0<\abs{\alpha} \leq s$, the functional $(\mu, \nu) \mapsto D^\alpha \varphi_i^{(\mu, \nu)}, \cP_{\mu} \times \cP_{\nu} \subset \ell^\infty(\cH_1) \times  \ell^\infty(\cH_1) \to L^2(\cY)$ is Hadamard differentiable. We denote the derivative by $[D^\alpha \varphi_i]^\prime[\mu, \nu]: \cM_{0, \mu} \times \cM_{0, \nu} \to L^2(P^*)$.  
    
    Now consider paths $(\mu_t)_{t \geq 0} = (\mu + t \gamma_t^1)_{t \geq 0} \subset \cP_\mu$ and $(\nu_t)_{t \geq 0} = (\nu + t \gamma_t^2)_{t \geq 0} \subset \cP_\nu$ such that $\gamma_t^i \to \gamma^i$ as $t \downarrow 0$ in $\ell^\infty(\cH_1)$. Pick any sequence $t_n \downarrow 0$ as $n \to \infty$, then $q_{t_n} := t_n^{-1}\of{\varphi_i^{(\mu_{t_n}, \nu_{t_n})} - \varphi_i^{(\mu, \nu)}}$ is Cauchy in $\cH$. This is because for any $n, m \in \N$,
    \[
    \norm{q_{t_n} - q_{t_m}}_\cH^2  = \sum_{\abs{\alpha} \leq s} \norm{\frac{D^\alpha\varphi_i^{(\mu_{t_n}, \nu_{t_n})} - D^\alpha\varphi_i^{(\mu, \nu)}}{t_n} - \frac{D^\alpha\varphi_i^{(\mu_{t_m}, \nu_{t_m})} - D^\alpha\varphi_i^{(\mu, \nu)}}{t_m}}_{L^2(\cY)}^2,
    \]
    which converges to $0$ because each component term in the finite sum of the RKHS above is Cauchy in $L^2$. Since there are finitely many terms for $\abs{\alpha} \leq s$, the RHS converges to 0 as $n, m \to \infty$. By completeness of $\cH$, the limit exists in $\cH$. So let $t_n^{-1}\of{\varphi_i^{(\mu_{t_n}, \nu_{t_n})} - \varphi_i^{(\mu, \nu)}} \to \overline{\varphi_i}$ in $\cH$. Then, by definition of the limit in the Sobolev space $\mathcal{H}$, $t_n^{-1}\of{D^\alpha \varphi_i^{(\mu_{t_n}, \nu_{t_n})} - D^\alpha \varphi_i^{(\mu, \nu)}} \to D^\alpha \overline{\varphi}_i$ in $L^2(\cY)$. On the other hand, by the Hadamard differentiability of $(\mu,\nu)\mapsto D^\alpha \varphi_i^{(\mu,\nu)}$, the same limit must equal $[D^\alpha \varphi_i]'[\mu,\nu](\gamma^1,\gamma^2)$.
    Therefore,
    \[
    D^\alpha \bar\varphi_i
    =
    [D^\alpha \varphi_i]'[\mu,\nu](\gamma^1,\gamma^2)
    \qquad
    \text{for all } \alpha\in\mathbb N_0^d \text{ with } |\alpha|\le s.
    \]
    This shows that $\overline{\varphi}_i$ is exactly the Sobolev function in $\cH$ whose weak derivatives coincide with the coordinatewise directional derivatives, that is $D^\alpha \overline{\varphi}_i = [D^\alpha \varphi_i]^\prime[\mu, \nu])(\gamma^1, \gamma^2)$. Therefore, if the Hadamard derivative $\varphi_i^\prime[\mu, \nu]$ exists, then $\varphi_i^\prime[\mu, \nu](\gamma^1, \gamma^2) = \overline{\varphi}_i$. Using this pointwise characterization, indeed the map $(\gamma^1, \gamma^2) \mapsto \varphi_i^\prime[\mu,\nu](\gamma^1, \gamma^2)$ is linear and continuous from $\cM_{0, \mu} \times \cM_{0, \nu}$ into $\cH$ because $D^\alpha \varphi_i^\prime[\mu, \nu] = [D^\alpha \varphi_i]^\prime [\mu, \nu]$ is a linear and continuous function on $\cM_{0, \mu} \times \cM_{0, \nu}$ and summing over finitely such $\alpha$ gives the linearity and continuity of $\varphi_i^\prime[\mu, \nu]$ on $\cM_{0, \mu} \times \cM_{0, \nu}$. The explicit form of $\varphi_i^\prime[\mu, \nu]$ is presented in \citet[Cor.~43]{kokot2025coreset}.
    \end{proof}

     The following corollary provides the explicit form of the Hadamard derivatives of the centered entropic potentials maps $\upsilon_1: (\mu, \nu) \mapsto \upsilon_1^{(\mu, \nu)}$ and $\upsilon_0: (\mu, \nu) \mapsto \upsilon_0^{(\mu, \nu)}$ under the null. 
    \begin{corollary}\label{cor:upsilon_hadamard_diff}
        The Hadamard derivatives of $\upsilon_i : (\mu, \nu) \mapsto \upsilon_i^{(\mu, \nu)}, \cP(\cY) \times \cP(\cY) \subset \ell^\infty(\cH_1) \times \ell^\infty(\cH_1) \to \cH$ for $i \in \{1,0\}$ are given by
         \[
        \upsilon_1^\prime [\mu, \mu](\gamma^1, \gamma^2) = K_\mu(\gamma^1 - \gamma^2) + \eps \rho \mathbf{1} \quad \tand \quad \upsilon_0^\prime[\mu, \mu)](\gamma^1, \gamma^2) = - K_\mu(\gamma^1 - \gamma^2) - \eps \rho \mathbf{1}
        \]
        for a constant $\rho\in \R$. This constant is fully expressed in \citet[Thm.~40]{kokot2025coreset}, and omitted here because it is inconsequential in our case and ends up getting canceled out.
    \end{corollary}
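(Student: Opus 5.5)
The starting point is \cref{lem:EOT_potential_derivative_H}. It gives Hadamard differentiability of $(\mu,\nu)\mapsto(\varphi_1^{(\mu,\nu)},\varphi_0^{(\mu,\nu)})$ into $\cH\times\cH$, tangentially to $\cM_{0,\mu}\times\cM_{0,\nu}$. Since
\[
\upsilon_1^{(\mu,\nu)}=\varphi_1^{(\mu,\nu)}-\varphi_1^{(\mu,\mu)},\qquad \upsilon_0^{(\mu,\nu)}=\varphi_0^{(\mu,\nu)}-\varphi_0^{(\nu,\nu)},
\]
the chain rule and linearity give
\[
\upsilon_1'[\mu,\mu](\gamma^1,\gamma^2)=\varphi_1'[\mu,\mu](\gamma^1,\gamma^2)-\varphi_1'[\mu,\mu](\gamma^1,\gamma^1),
\]
and analogously for $\upsilon_0$ with $(\gamma^2,\gamma^2)$. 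The diagonal maps $\mu\mapsto(\mu,\mu)$ and $\nu\mapsto(\nu,\nu)$ are linear isometries, so they compose cleanly with the Hadamard derivative. The task therefore reduces to inserting the explicit formula for $\varphi_i'[\mu,\mu]$ from \citet[Cor.~43, Thm.~40]{kokot2025coreset} and simplifying.

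For the simplification I would linearize the Schrödinger system \eqref{eq:schrodinger_system} at $(\mu,\mu)$. There both potentials coincide with the self-transport potential up to the normalization $\varphi_1(y_0)=0$, and the plan density is $\xi_\mu$. Differentiating the two equations along $(\mu+t\gamma^1,\mu+t\gamma^2)$ gives
\[
\dot\varphi_1=-T_\mu\dot\varphi_0-\eps H_\mu\gamma^2,\qquad \dot\varphi_0=-T_\mu\dot\varphi_1-\eps H_\mu\gamma^1 .
\]
Working modulo constants in $\cC(\cY)/\R$, substitution yields
\[
(I-T_\mu^2)\dot\varphi_1=\eps H_\mu(T_\mu\gamma^1)-\eps H_\mu\gamma^2 ,
\]
with the operator ordering to be written out carefully (here I abuse notation, since $T_\mu H_\mu\gamma$ corresponds to the appropriate kernel action). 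I would then subtract the $(\gamma^1,\gamma^1)$ evaluation, which is exactly what the centering does. This cancels the terms that depend on $\gamma^1$ alone through $(I+T_\mu)$ and leaves
\[
[\upsilon_1'](\gamma^1,\gamma^2)=\eps(I-T_\mu^2)^{-1}H_\mu(\gamma^1-\gamma^2)=K_\mu(\gamma^1-\gamma^2).
\]
Here I would use the identity $(I-T_\mu^2)^{-1}(I-T_\mu)=(I+T_\mu)^{-1}$ on the quotient space to match signs. If the signs fail to match, I would instead read the answer directly off Kokot--Luedtke's stated derivative. The same computation for $\upsilon_0$, centered by $(\gamma^2,\gamma^2)$, gives $-K_\mu(\gamma^1-\gamma^2)$ modulo constants. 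The symmetry $\gamma^1\leftrightarrow\gamma^2$ of the problem at $(\mu,\mu)$ supplies this sign flip.

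The remaining step is to lift from equivalence classes to actual functions. The normalization $\varphi_1(y_0)=0$ fixes the constants, and the representative of $K_\mu(\gamma^1-\gamma^2)$ is chosen in $\mathbf 1_\cY^\perp$, so the derivatives differ from these representatives by a constant $\eps\rho\mathbf 1$. Two facts have to be checked. First, this constant is the same, with opposite signs, for $\upsilon_1'$ and $\upsilon_0'$. This follows by differentiating the Sinkhorn divergence dual representation, or the identity that $\varphi_1\oplus\varphi_0$ is invariant under the shift, so that shifts cancel. Second, the constant is linear in $(\gamma^1,\gamma^2)$; for its explicit value I would cite \citet[Thm.~40]{kokot2025coreset}. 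I expect the main obstacle to be this bookkeeping of constants and signs through the quotient lift, together with verifying that the $\cC(\cY)$-level formula coincides with the $\cH$-valued derivative. The latter follows because \cref{lem:EOT_potential_derivative_H} identifies the two derivatives.
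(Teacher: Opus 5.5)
Your proposal is correct and follows the paper's route. The paper also writes $\upsilon_i'[\mu,\mu](\gamma^1,\gamma^2)=\varphi_i'[\mu,\mu](\gamma^1,\gamma^2)-\varphi_i'[\mu,\mu](\gamma^{2-i},\gamma^{2-i})$, reads the result off Kokot--Luedtke's explicit potential derivative (their Cor.~43), and leaves the constant to their Thm.~40. Your linearization of the Schr\"odinger system is a valid independent check of the identity modulo constants; on subtraction the $\eps T_\mu H_\mu\gamma^1$ terms cancel directly, so the $(I+T_\mu)^{-1}$ identity is not needed.

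For the opposite-sign constants, rely on your dual-representation argument. It gives $\mu(\upsilon_1'+\upsilon_0')=0$, hence $\upsilon_1'+\upsilon_0'=0$ since that sum is constant, under the symmetric self-potential normalization that this representation requires. Shift-invariance of $\varphi_1\oplus\varphi_0$ alone does not suffice, because $\upsilon_1'$ and $\upsilon_0'$ subtract potentials of two different self-pairs.
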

    The corollary is stated without proof and follows directly from \citet[Cor.~43]{kokot2025coreset} by plugging $\upsilon_i^\prime[\mu, \mu](\gamma^1, \gamma^2) = \varphi_i^\prime[\mu, \mu](\gamma^1, \gamma^2) - \varphi_i^\prime[\mu, \mu](\gamma^{2-i}, \gamma^{2-i})$ for $i \in \{1,0\}$.

\begin{lemma}[Thm.~48 in \citealp{kokot2025coreset}]\label{lem:sinkhorn_second_order_Hadamard_diff}
    The map $S_\eps : (\mu, \nu) \mapsto S_\eps(\mu, \nu), \cP(\cY) \times \cP(\cY) \subset \ell^\infty(\cH_1) \times \ell^\infty(\cH_1) \to \R$ is Hadamard differentiable at $(\mu, \mu)$ with Hadamard derivative $S_\eps^\prime[\mu, \mu]: \cM_{0, \mu} \times \cM_{0, \mu} \to \R$ given by
    \[
    S_\eps^{\prime\prime}[\mu, \mu](\gamma^1, \gamma^2) = (\gamma^1 - \gamma^2) K_\mu (\gamma^1 - \gamma^2).
    \]
\end{lemma}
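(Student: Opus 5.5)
The plan is to follow the route of \cref{lem:EOT_potential_derivative_C}: transfer the result of \citet[Thm.~48]{kokot2025coreset}, proved in the Gaussian RKHS topology $\ell^\infty(\cG_1)\times\ell^\infty(\cG_1)$, to the stronger Sobolev topology. As a cross-check, and to see explicitly where $K_\mu$ comes from, I would also give a direct argument built from the first-order results already established here.

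\emph{Step 1 (transfer).} By $\|\gamma\|_{\ell^\infty(\cG_1)}\le C_s\|\gamma\|_{\ell^\infty(\cH_1)}$, any paths $\mu_t=\mu+t\gamma^1_t$ and $\nu_t=\mu+t\gamma^2_t$ in $\cP_\mu$ with $\gamma^i_t\to\gamma^i$ in $\ell^\infty(\cH_1)$ also converge in $\ell^\infty(\cG_1)$. The tangent cone in the Sobolev topology embeds in the Gaussian one. Hence the second-order expansion
\[
S_\eps(\mu_t,\nu_t)=\tfrac{t^2}{2}(\gamma^1-\gamma^2)K_\mu(\gamma^1-\gamma^2)+o(t^2)
\]
holds verbatim. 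Because the defining limit is only evaluated along fewer sequences, the derivative is unchanged.

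\emph{Step 2 (direct derivation).} For fixed $t$, consider $f_t(r)=S_\eps(\mu+rt\gamma^1_t,\mu+rt\gamma^2_t)$ for $r\in[0,1]$. Convex combinations of $\mu$ and $\mu_t$ stay in $\cP_\mu$. We have $f_t(0)=0$, and, because $\upsilon_i^{(\mu,\mu)}=0$, the first-order term $S_\eps'[\mu,\mu]$ vanishes. By \cref{lem:Sinkhorn_first_order_Hadamard_diff} applied at each intermediate point,
\[
f_t'(r)=t\big(\gamma^1_t\upsilon_1^{(\mu_{rt},\nu_{rt})}+\gamma^2_t\upsilon_0^{(\mu_{rt},\nu_{rt})}\big).
\]
Dividing by $t^2$ gives
\[
t^{-2}S_\eps(\mu_t,\nu_t)=\int_0^1\Big(\gamma^1_t\,\frac{\upsilon_1^{(\mu_{rt},\nu_{rt})}-\upsilon_1^{(\mu,\mu)}}{t}+\gamma^2_t\,\frac{\upsilon_0^{(\mu_{rt},\nu_{rt})}-\upsilon_0^{(\mu,\mu)}}{t}\Big)\,dr .
\]
The difference quotients converge in $\cH$ by \cref{lem:EOT_potential_derivative_H}, and \cref{cor:upsilon_hadamard_diff} identifies their limit as $\pm r\big(K_\mu(\gamma^1-\gamma^2)+\eps\rho\1\big)$. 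Pairing with $\gamma^i_t\to\gamma^i$, using the product-continuity argument from the proof of \cref{lem:Sinkhorn_first_order_Hadamard_diff}, gives an integrand converging to $r(\gamma^1-\gamma^2)K_\mu(\gamma^1-\gamma^2)$. The constant $\eps\rho\1$ drops out because $\gamma^1,\gamma^2\in\cM_0(\cY)$. Finally, $\int_0^1 r\,dr=\tfrac12$ yields the claimed second derivative.

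\emph{Main obstacle.} The hard step is justifying the interchange of limit and $dr$-integral in Step 2. Pointwise convergence in $r$ is not enough; the difference quotients must converge uniformly in $r\in[0,1]$. I would obtain this from the standard fact that Hadamard derivatives are uniform over compact sets of directions: $\{r\gamma: r\in[0,1]\}$ is compact, and perturbed directions $r\gamma^i_t\to r\gamma^i$ uniformly. An equivalent approach is to take an arbitrary sequence $r_n\to r$ and use the sequential characterization of Hadamard differentiability.

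A domination bound is also needed. The uniform $\cH$-bounds on the potentials (\cref{lem:EOT_potentials_properties}) and the bound \eqref{eq:xi_integral_uniform_bound} keep the integrand bounded, so dominated convergence applies. One must also verify that $f_t$ is differentiable, and not merely one-sidedly, at interior $r$. This holds since $S_\eps$ is differentiable at every pair in $\cP(\cY)\times\cP(\cY)$ along directions in both signs within the segment.
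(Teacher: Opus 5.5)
Your proposal is correct. Step 2 follows, in substance, the route the paper indicates. The paper says that in \citet[Thm.~48]{kokot2025coreset} the second-order differentiability of $S_\eps$ at the diagonal reduces to first-order Hadamard differentiability of the entropic potentials. It then substitutes the Sobolev-topology derivative from \cref{lem:EOT_potential_derivative_H} and \cref{cor:upsilon_hadamard_diff} into that proof, which goes through Kokot's localized quadratic approximation. You instead carry out the reduction explicitly, writing $S_\eps(\mu_t,\nu_t)=\int_0^1 f_t'(r)\,dr$ and using $\upsilon_i^{(\mu,\mu)}=0$. This makes the lemma self-contained given the first-order results, and it shows directly where the factor $\tfrac12$ and the cancellation of $\eps\rho\1$ come from.

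Your Step 1 is a genuinely different and shorter justification: it is the norm-comparison transfer used in \cref{lem:EOT_potential_derivative_C}. It is valid provided Kokot's theorem is a second-order Hadamard statement for $S_\eps$ itself in the $\ell^\infty(\cG_1)$ topology. The paper describes that theorem as concerning a localized quadratic approximation, which may be why it re-runs the argument rather than transferring the conclusion. So Step 2 is the more reliable of your two steps.

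One correction to Step 2 concerns where the bound on the integrand comes from. It does not come from \cref{lem:EOT_potentials_properties} or \eqref{eq:xi_integral_uniform_bound}. Uniform $\cH$-bounds on the potentials only give
\[
\|q_{t,r}\|_{\cH} = O(1/t), \qquad q_{t,r}:=t^{-1}\upsilon_i^{(\mu+rt\gamma^1_t,\,\mu+rt\gamma^2_t)},
\]
and nothing in the paper supplies a Lipschitz bound for the potential map. What actually controls the integrand is the uniform-in-$r$ convergence you obtain from the sequential characterization of Hadamard differentiability. This includes sequences with $r_n\to 0$, where the limit is $0$. That argument gives $\sup_{r\in[0,1]}\|q_{t,r}\|_{\cH}=O(1)$ for small $t$. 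Combined with $\gamma^i_t\to\gamma^i$ in $\ell^\infty(\cH_1)$, it makes the integrand converge uniformly in $r$, so no dominated-convergence step is needed. Your remark that pointwise convergence in $r$ is not enough is also slightly off: pointwise convergence plus a genuine bound would suffice. The real point is that obtaining the bound itself requires the uniform argument.
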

We do not reproduce the full proof here, since the argument follows from \citet[Thm.~48]{kokot2025coreset} after replacing the required topologies. \citet[Thm.~48]{kokot2025coreset} derive the second-order Hadamard derivative (under the null) of a localized quadratic approximation of the Sinkhorn divergence from \citet[Lem.~45]{kokot2025coreset}. In that argument, second-order Hadamard differentiability of the localized Sinkhorn divergence reduces to first-order Hadamard differentiability of the entropic potentials. This required first-order differentiability in the $\ell^\infty(\cH_1) \times \ell^\infty(\cH_1)$ topology is established in \cref{lem:EOT_potential_derivative_H}, together with an explicit characterization of the corresponding derivative. Substituting this derivative formula into the proof of \citet[Thm.~48]{kokot2025coreset} yields the stated Hadamard derivative $S_\eps^{\prime\prime}[\mu,\mu](\gamma^1,\gamma^2)
=
(\gamma^1-\gamma^2)\,K_\mu\,(\gamma^1-\gamma^2)$.

\section{Pathwise Differentiability of STE}

In this appendix, we prove the main results stated in \cref{sec:STE}. We begin by introducing notation that will be used throughout the proofs. For any score function $s \in \dot{\cP}_P$, define 
\begin{equation}\label{eq:score_conditionals}
    s_X(x) := \E_P\off{s(X, A, Y) \mymid X=x} \quad \tand \quad s_{Y \mymid A, X}(y \mymid a,x) := s(x,a,y) - \E_P\off{s(X, A, Y) \mymid A=a, X=x}.
\end{equation}
For notational convenience, we also introduce evaluation operators: for functions $f: \cZ \to \R$, $g: \{0,1\} \times \cX \to \R,
\, \tand h: \cX \to \R$ , we use the operator notation $\ucE_z f = f(z)$, $\ucE_{a,x} g = g(a,x)$, and $\ucE_x h = h(x)$. 

We recall the expectation operators $P_{Y \mymid A, X}, P_{A \mymid X}, \tand P_X$ introduced in \eqref{eq:expectation_operators}.
For $f:\mathcal{Z}\to\mathbb{R}$, $a\in\{0,1\}$, $x\in\mathcal{X}$, and $z'\in\mathcal{Z}'$, we define the shorthand notation
\begin{align}\label{eq:expectation_operators_shorthand}
    (P_{Y \mymid a, X}f)(z')&:= (P_{Y \mymid A, X}f)(x',a,y') \nonumber\\
    (P_{Y \mymid a, x}f)(z')&:= (P_{Y \mymid A, X}f)(x,a,y')
\end{align}
Since the right-hand side of the first equality is invariant in $(a',y')$ and the second is invariant in $z'$, we sometimes write $P_{Y \mymid a, X}f)(x'):= (P_{Y \mymid a, X}f)(z')$ for a generic $(a',y')$ or $P_{Y \mymid a, x}f:=(P_{Y \mymid a, x}f)(z')$ for a generic $z'$. We also define the composed operators
\begin{equation}\label{eq:composed_expectation_operators}
    P_{A, X} := P_X P_{A \mymid X} \quad \text{and} \quad P_{Y,A \mymid X} := P_{A \mymid X} P_{Y \mymid A,X}.
\end{equation}
Similarly, because $(P_{Y \mymid A,X}f)(z)$ is invariant in $y$ and
$(P_{Y,A \mymid X}f)(z)$ is invariant in $(a,y)$, we sometimes write $P_{Y \mymid A, X} f(a,x) := P_{Y \mymid A, X} f(z)$ and $P_{Y, A \mymid X} f(x) := P_{Y, A \mymid X} f(z)$. Finally, when clear from context, we extend $P_{A \mymid X}$ (and $P_{A, X}$) and $P_X$ to functions
of $(A,X)$ and $X$, respectively. Specifically, for
$f:\{0,1\}\times\mathcal{X}\to\mathbb{R}$ and
$g:\mathcal{X}\to\mathbb{R}$, we set $P_{A \mymid X} f(x) = \sum_{a \in \{1,0\}} f(a, x) \,e_P(a \mymid x)$, $P_{A, X} f(x) = \int_{\cX} \sum_{a \in \{1,0\}} f(a, x) \,e_P(a \mymid x) \,dP_X(x)$, and $P_X g = \int_\cX g(x) \, dP_X(x)$. 

\subsection{First-order pathwise differentiability of STE}\label{appendix:STE_first_order_pathwise_diff}

The first-order pathwise differentiability of $\STE = S_\eps \circ J \circ \Psi$ is established in three steps: (1) Hadamard differentiability of $S_\eps$ (\cref{lem:Sinkhorn_first_order_Hadamard_diff}), (2) Hadamard differentiability of $J$ (\cref{lem:J_hadamard_diff}), and (3) pathwise differentiability of $\Psi$ (\cref{lem:KME_pathwise_derivative}).

\begin{lemma}[Hadamard derivative of $J$]\label{lem:J_hadamard_diff}
    The functional $J: \cH \times \cH \to \ell^{\infty}(\cH_1) \times \ell^{\infty}(\cH_1)$ is Hadamard differentiable at every $(f,g) \in \cH \times \cH$ and the Hadamard derivative $J^\prime[f,g]: \cH \times \cH \to \ell^{\infty}(\cH_1) \times \ell^{\infty}(\cH_1)$ is given by
    \[
    J^\prime[f,g](h^1, h^2) = J(h^1, h^2).
    \]
\end{lemma}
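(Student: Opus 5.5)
The plan is to show that $J$ is a bounded linear operator; Hadamard differentiability, with derivative equal to the map itself, then follows at once.

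First, $J$ is well defined and linear. For $h \in \cH$, the map $J(h): f \mapsto \iprod{h, f}_\cH$ on $\cH_1$ satisfies $\sup_{f \in \cH_1} |\iprod{h,f}_\cH| = \norm{h}_\cH$ by Cauchy--Schwarz, with equality attained at $f = h/\norm{h}_\cH$ when $h \neq 0$. Hence $J(h) \in \ell^\infty(\cH_1)$ and $\norm{J(h)}_{\ell^\infty(\cH_1)} = \norm{h}_\cH$. Linearity of $h \mapsto J(h)$ follows from linearity of the inner product in its first argument. On the product space $\cH \times \cH$, equip the domain with the max norm, matching the norm $\norm{\cdot}_\infty$ used on $\ell^\infty(\cH_1)\times\ell^\infty(\cH_1)$. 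Then $J(h^1,h^2) = (J(h^1), J(h^2))$ is a linear isometry, and in particular it is bounded.

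Next, verify the Hadamard condition directly. Take any $t_n \downarrow 0$ (or $t_n \to 0$, $t_n \neq 0$) and any $(h^1_n, h^2_n) \to (h^1, h^2)$ in $\cH \times \cH$. By linearity,
\[
\frac{J((f,g) + t_n(h^1_n,h^2_n)) - J(f,g)}{t_n} = J(h^1_n, h^2_n).
\]
Applying linearity and the isometry property again,
\[
\norm{J(h^1_n,h^2_n) - J(h^1,h^2)}_\infty = \norm{h^1_n - h^1}_\cH \lor \norm{h^2_n - h^2}_\cH \to 0 .
\]
So the difference quotients converge to $J(h^1,h^2)$. The limit is a continuous linear map of $(h^1,h^2)$ and does not depend on the base point $(f,g)$, which gives $J'[f,g] = J$ on all of $\cH\times\cH$.

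There is no genuine obstacle here. The only point requiring care is consistent bookkeeping of the product norms, so that the isometry statement is exact. If a different product norm were used on $\cH \times \cH$, such as the sum or Euclidean norm, it is equivalent to the max norm, so boundedness and the conclusion are unaffected.
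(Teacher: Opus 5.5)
Your proposal is correct and uses essentially the same argument as the paper: by linearity, the difference quotient reduces to $J(h^1_t,h^2_t)$, and its convergence to $J(h^1,h^2)$ follows from the Cauchy--Schwarz bound $\sup_{f\in\cH_1}|\langle h,f\rangle_\cH|\le\|h\|_\cH$. The only cosmetic difference is that you state the exact isometry under the max product norm, whereas the paper bounds the error by the sum $\|h_t^1-h^1\|_\cH+\|h_t^2-h^2\|_\cH$.
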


\begin{proof}
    Consider paths $(f_t)_{t>0}$ and $(g_t)_{t>0}$ in $\cH$ such that $f_t = f + t h_t^1$ and $g_t = g + t h_t^2$ such that $h_t^1 \to h^1$ and $h_t^2 \to h^2$ as $t\to 0$ in $\cH$ for some $(h^1, h^2) \in \cH \times \cH$. Then,
    \begin{align*}
        \frac{1}{t}\off{J(f_t, g_t) - J(f,g)} &= J(h_t^1, h_t^2)\\
        \implies \lim_{t \to 0} \frac{1}{t}\off{J(f_t, g_t) - J(f,g)} &= \lim_{t \to 0} J(h_t^1, h_t^2).
    \end{align*}
    We have that $J(h_t^1, h_t^2)$ converges to $J(h^1, h^2)$ in the supnorm as
    \begin{align*}
        \norm{J(h_t^1, h_t^2) - J(h^1, h^2)}_{\ell^\infty(\cH_1)} &= \sup_{h \in \cH_1} \abs{\iprod{h, h_t^1 - h^1}_\cH} + \sup_{h \in \cH_1} \abs{\iprod{h, h_t^2 - h^2}_\cH}\\
        &\leq \norm{h_t^1 - h^1}_\cH + \norm{h_t^2 - h^2}_\cH \to 0.
    \end{align*}
    This completes the proof.
\end{proof}

\begin{lemma}[Pathwise derivative of $\Psi$ \citealp{luedtke2024one}, Ex.~3]\label{lem:KME_pathwise_derivative}
    Recall that (1) the Sobolev kernel $k:\cY \times \cY \to \R$ is bounded, symmetric, and positive definite and (2) the model $\cP$ satisfies strong positivity. Then, the parameter $\Psi: \cP \to \cH \times \cH$ is pathwise differentiable at all $P \in \cP$ with local parameter $D\Psi_P: \dot{\cP}_P \to \cH \times \cH$ given by
    \begin{equation}\label{eq:KME_pathwise_derivative}
        D\Psi_P(s) = \begin{bmatrix}
        D{\psi}^1_P(s)\\
        D{\psi}^0_P(s)
    \end{bmatrix} = \begin{bmatrix}
        \int_\cX \int_\cY k(y, \cdot) \off{s_{Y \mymid A, X}(y \mymid 1, x) + s_X(x)} \, P_{Y \mymid A, X}(dy \mymid 1, x) \,P_X(dx)\\
        \int_\cX \int_\cY k(y, \cdot) \off{s_{Y \mymid A, X}(y \mymid 0, x) + s_X(x)} \,P_{Y \mymid A, X}(dy \mymid 0, x) \,P_X(dx)
    \end{bmatrix}.
    \end{equation}
\end{lemma}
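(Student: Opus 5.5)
The plan is to compute the derivative of $\psi^a(P_t)$ along a QMD submodel directly, by factoring the joint density $p = p_X\, e_P(a\mid x)\, p_{Y\mid A,X}$. Only the $P_X$ and $P_{Y\mid A,X}$ factors enter $\psi^a$, since the propensity does not appear in \eqref{eq:CME}. Under QMD, the score $s$ decomposes orthogonally into an $X$-part $s_X$, a treatment part, and an outcome part $s_{Y\mid A,X}$, as in \eqref{eq:score_conditionals}, with $\E_P[s_{Y\mid A,X}\mid A,X]=0$.

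First, for each fixed $a$ I write
\[
\psi^a(P_t)-\psi^a(P)=\iint k_y\,[p_{t,Y\mid a,x}(y)\,p_{t,X}(x)-p_{Y\mid a,x}(y)\,p_X(x)]\,d\lambda.
\]
I then expand $\sqrt{p_t}$ to first order using QMD and obtain the candidate derivative
\[
\iint k_y\,\bigl[s_{Y\mid A,X}(y\mid a,x)+s_X(x)\bigr]\,P_{Y\mid A,X}(dy\mid a,x)\,P_X(dx).
\]
Boundedness of the kernel, $\sup_y\|k_y\|_\cH^2=\sup_y k(y,y)\le M_k$, makes this a well-defined Bochner integral in $\cH$. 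The map $s\mapsto D\psi^a_P(s)$ is linear. It is continuous from $L^2_0(P)$ to $\cH$: by Cauchy--Schwarz its norm is at most $\sqrt{M_k}$ times the $L^2(P)$ norm of $s$ reweighted by $\1\{A=a\}/e_P(a\mid X)$, and strong positivity bounds that weight.

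Second, I verify the remainder in \eqref{eq:local_parameter} is $o(t)$ in $\cH$. It suffices to bound $\sup_{h\in\cH_1}$ of the scalar remainder, because $\|\int k_y\,d\gamma\|_\cH=\sup_{h\in\cH_1}|\gamma h|$. For each $h\in\cH_1$, $|h|\le\sqrt{M_k}$ uniformly. The scalar map $P\mapsto\int\E_P[h(Y)\mid A=a,X=x]\,dP_X$ is the standard treatment-specific mean. I would follow the usual proof of its pathwise differentiability: write the integrand as $\tfrac{\1\{A=a\}}{e_P(a\mid X)}h(Y)$ under $P_t$ versus $P$, and control the error through the QMD bound $\|\sqrt{p_t}-\sqrt p-\tfrac t2 s\sqrt p\|_{L^2(\lambda)}=o(t)$. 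The propensity perturbation also has to be handled, since $e_{P_t}$ replaces $e_P$ in the reweighting; strong positivity holds uniformly over $\cP$, so the ratio $e_P/e_{P_t}$ stays bounded.

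I expect this uniformity over $h\in\cH_1$ to be the main obstacle. Every bound in that argument depends on $h$ only through $\|h\|_\infty\le\sqrt{M_k}$, so the $o(t)$ terms are uniform in $h$. That gives $\cH$-norm convergence, which I would take from \citet[Ex.~3]{luedtke2024one}.

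Finally, I stack the two coordinates $a\in\{1,0\}$ to obtain $D\Psi_P$ as in \eqref{eq:KME_pathwise_derivative}, and note continuity in the product norm.
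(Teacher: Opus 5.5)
Your proposal is correct and follows essentially the same route as the paper. The paper gives no separate argument for this lemma and simply invokes \citet[Ex.~3]{luedtke2024one}, exactly as you do for the uniform-in-$h$ step. Your added reduction is a sound explanation of why that example applies: you control the $\cH$-norm remainder as a supremum over $h\in\cH_1$ of scalar treatment-specific-mean remainders, whose bounds depend on $h$ only through $\|h\|_\infty\le\sqrt{M_k}$. That reduction is cleanest via the doubly robust decomposition, where the propensity error enters only multiplied by the outcome-regression error, rather than through your IPW split with a bounded propensity ratio.
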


Finally, we combine three ingredients through a chain rule: the Hadamard differentiability of the Sinkhorn divergence $S_\eps$, the Hadamard differentiability of the canonical embedding $J$, and the pathwise differentiability of KME 
$\Psi$. Together, these results yield the local parameter of the STE.

\begin{proof}[Proof of \cref{lem:first_order_EIF}.]
Consider the submodel $(P_t, t \in [0, \delta)) \subset \fP(P, \cP, s)$ starting at $P_0 = P$. In order to derive the local parameter of $\STE$, we consider the following residual
\begin{align*}
    \frac{1}{t} \off{\STE(P_t) - \STE(P)} &= \frac{1}{t} \off{S_\eps \circ J(\Psi(P_t)) - S_\eps \circ J(\Psi(P))}\\
    &= \frac{1}{t} \off{S_\eps \circ J\of{\Psi(P) + t\frac{\off{\Psi(P_t) - \Psi(P)}}{t}} - S_\eps \circ J(\Psi(P))}.
\end{align*}
From \cref{lem:KME_pathwise_derivative}, we know that $\big\|{\frac{\off{\Psi(P_t) - \Psi(P)}}{t} - D{\Psi}_P(s)}\big\|_{\cH \times \cH} = o(1)$. Since $S_\eps$ and $J$ are both Hadamard differentiable functions from \cref{lem:Sinkhorn_first_order_Hadamard_diff} and \cref{lem:J_hadamard_diff}, respectively, $S_\eps \circ J$ is also Hadamard differentiable via the chain rule. Consequently,
\[
\lim_{t\to 0} \frac{1}{t} \off{S_\eps \circ J \of{\Psi(P) + t\frac{\off{\Psi(P_t) - \Psi(P)}}{t}} - S_\eps \circ J (\Psi(P))} = (S_\eps \circ J)^\prime\off{\Psi(P)}\of{D{\Psi}_P(s)}.
\]
The derivative $(S_\eps \circ J)^\prime$ can be obtained via the chain rule: for every $(f, g) \in \cH \times \cH$,
\begin{align*}
    (S_\eps \circ J)^\prime[f, g](h_1, h_2) &=  {S_\eps}^\prime\off{ J(f), J(g)}\of{J(h_1), J(h_2)}.
\end{align*}
Plugging $(f,g) = \Psi(P)$ and $(h_1, h_2) = D\Psi_P(s)$ in the above expression, and using \cref{lem:Sinkhorn_first_order_Hadamard_diff}, we get that
\[
\lim_{t\to 0} \frac{1}{t}\of{\STE(P_t) - \STE(P)} = \int \upsilon_1^{(P_1, P_0)} \, d(J \circ D\psi^1_P(s)) + \int \upsilon_0^{(P_1, P_0)} \, d(J \circ D\psi^2_P(s)).
\]
Using that $\upsilon_1^{(P_1, P_0)}, \upsilon_0^{(P_1, P_0)} \in \cH$ and the form of $D\psi^1_P$ and $D\psi^2_P$ from \eqref{eq:KME_pathwise_derivative}, we get that
\begin{align*}
    D{\STE}_P(s) &= \sum_{a^\prime \in \{1,0\}} \int_\cX \int_\cY \upsilon_{a^\prime}^{(P_1, P_0)}(y) \off{s_{Y \mymid A, X}(y \mymid a^\prime, x) + s_X(x)} P_{Y \mymid A, X}(dy \mymid a^\prime, x) P_X(dx).
\end{align*}
By definition of $s_X$ and $s_{Y \mymid A, X}$ from \eqref{eq:score_conditionals}, we have that

\begin{align*}
    D{\STE}_P(s) &= \sum_{a^\prime \in \{1,0\}} \int \frac{\mathbbm{1}(a = a^\prime)}{e_P(a^\prime \mymid x)} \of{\upsilon_{a^\prime}^{(P_1, P_0)}(y) - \E_P\off{\upsilon_{a^\prime}^{(P_1, P_0)}(Y) \mymid A=a, X=x}} \, s(z) \, P(dz)\\
    & \quad + \sum_{a^\prime \in \{1,0\}} \int \of{\E_P\off{\upsilon_{a^\prime}^{(P_1, P_0)}(Y) \mymid A=a^\prime, X=x} - \E_P\E_P\off{\upsilon_{a^\prime}^{(P_1, P_0)}(Y) \mymid A=a^\prime, X=x}}  \, s(z) \, P(dz).
\end{align*}
Indeed, $D\STE_P$ is a linear function of $s$ and is therefore a valid local parameter for $\STE$ at $P$. The expectations in the above expression can be simplified using operator notation of $P_{Y \mymid A, X}, P_{Y, A \mymid X}, P_X$. 
\begin{align*}
    D{\STE}_P(s) &= \sum_{a^\prime \in \{1,0\}} \int \off{\frac{\mathbbm{1}(a = a^\prime)}{e_P(a^\prime \mymid x)} (I_z - P_{Y \mymid a, x}) \,\upsilon_{a^\prime}^{(P_1, P_0)}(Y)}\, s(z) \, P(dz)\\
    & \quad + \sum_{a^\prime \in \{1,0\}} \int \off{(P_{Y \mymid a^\prime, x} - P_X P_{Y \mymid a^\prime, X}) \, \upsilon_{a^\prime}^{(P_1, P_0)}(Y)} \, s(z) \, P(dz)\\
    &= \int \off{(I_z - P_{Y \mymid a,x}) \of{\frac{A}{e_P(1 \mymid X)} \upsilon_1^{(P_1, P_0)}(Y) + \frac{1-A}{e_P(0 \mymid X)} \upsilon_0^{(P_1, P_0)}(Y)}} \, s(z) \, P(dz)\\
    & \quad + \int \off{(I_x - P_{X}) P_{Y \mid A, X} \of{\frac{A}{e_P(1 \mymid X)} \upsilon_1^{(P_1, P_0)}(Y) + \frac{1-A}{e_P(0 \mymid X)} \upsilon_0^{(P_1, P_0)}(Y)}} \, s(z) \, P(dz).
\end{align*}
The equality above follows from the fact that for any measurable function $h: \cY \to \R$, 
\[
P_{Y \mymid 1, X} h(Y) = P_{Y, A \mymid X} \of{\frac{A}{e_P(1 \mymid X)} h(Y)} \quad \tand \quad P_{Y \mymid 0, X} h(Y) = P_{Y, A \mymid X} \of{\frac{1-A}{e_P(0 \mymid X)} h(Y)}.
\]
The four terms in $D\STE_P(s)$ can be combined into
\begin{align*}
    &= \int \of{I_z - P_{Y \mymid a, x} + (I_x - P_X) P_{Y, A \mymid X}} \of{\frac{A}{e_P(1 \mymid X)} \upsilon_1^{(P_1, P_0)}(Y) + \frac{1-A}{e_P(0 \mymid X)} \upsilon_0^{(P_1, P_0)}(Y)} \, s(z) \, dP(z)\\
    &= \int \big({I_z - \of{I_{a,x} - \of{I_x - P_X}P_{A \mymid X}} P_{Y \mymid A, X}}\big) \of{\frac{A}{e_P(1 \mymid X)} \upsilon_1^{(P_1, P_0)}(Y) + \frac{1-A}{e_P(0 \mymid X)} \upsilon_0^{(P_1, P_0)}(Y)} s(z) \, dP(z).
\end{align*}
The (locally nonparametric) EIF $\dot{\STE}_P: \cZ \to \R$ is the unique function satisfying $D\STE_P(s) = \langle{\dot{\STE}_P, s}\rangle_{L^2(P)}$ for all $s$, and hence, this completes the proof.
\end{proof}

\subsection{Second-order pathwise differentiability of STE under the null}
\begin{proof}[Proof of \cref{thm:second_order_EIF}.]
Let $P \in \ucH_0$, so that $P_1 = P_0$. Consider the submodel $(P_t, t \in [0, \delta)) \subset \fP(P, \cP, s)$ starting at $P_0 = P$. By \cref{lem:sinkhorn_second_order_Hadamard_diff}, the Sinkhorn divergence $S_\eps$ is second-order Hadamard differentiable under the null. Since $J$ is linear, it is also second-order Hadamard differentiable (with derivative zero). For notational convenience, define $T_\eps = S_\eps \circ J$. Our goal is to derive the second-order local parameter of $\STE$, which amounts to computing the second derivative of $t \mapsto \STE(P_t)$ at $t=0$. By definition,
\begin{align*}
         D^2 \STE_P(s)  = \frac{d^2}{d t^2} \STE(P_t) \Big \vert_{t=0}&= \lim_{t \to 0} \frac{\STE(P_t) - \STE(P) - t \frac{d}{d t} \STE(P_t) \vert_{t=0}}{t^2/2}\\
         &= \lim_{t \to 0} \frac{\STE(P_t) - \STE(P)- t D\STE_P(s)}{t^2/2}\\
         &= \lim_{t \to 0} \frac{2}{t^2} \offf{T_\eps(\Psi(P_t)) - T_\eps(\Psi(P)) - t T_\eps^\prime[\Psi(P)](D\Psi_P(s)) }.
    \end{align*}
The last equality follows from the proof of \cref{lem:first_order_EIF} where we showed that $D\STE_P(s) = T_\eps^\prime[\Psi(P)](D\Psi_P(s))$. Adding and subtracting the term $T_\eps^\prime[\Psi(P)](\Psi(P_t) - \Psi(P))$ to the RHS inside the limit yields
\begin{align*}
     &\frac{2}{t^2} \off{T_\eps(\Psi(P_t)) - T_\eps(\Psi(P)) - t T_\eps^\prime[\Psi(P)](D\Psi_P(s)) }\\
     &\quad = \frac{ \off{T_\eps(\Psi(P_t)) - T_\eps(\Psi(P)) - T_\eps^\prime[\Psi(P)]\of{\Psi(P_t) - \Psi(P)}}}{t^2/2} + T_\eps^\prime[\Psi(P)]\of{\frac{\Psi(P_t) - \Psi(P) - t D\Psi_P(s)}{t^2/2}}.
\end{align*}
The first additive term in the above equality can be written as 
\begin{align*}
    &\frac{2}{t^2}{ \off{T_\eps(\Psi(P_t)) - T_\eps(\Psi(P)) - T_\eps^\prime[\Psi(P)]\of{\Psi(P_t) - \Psi(P)}}} \\
    & \quad = \frac{2}{t^2}{ \off{T_\eps\of{\Psi(P) + \of{\Psi(P_t) - \Psi(P)}} - T_\eps(\Psi(P)) - T_\eps^\prime[\Psi(P)]\of{\Psi(P_t) - \Psi(P)}}}\\
    & \quad = T_\eps^{\prime \prime}[\Psi(P)](D\Psi_P(s)) + o(1).
\end{align*}
Because $T_\eps^\prime[\Psi(P)]$ is a linear map, the limit of the second additive term can be written as
\[
\lim_{t\to 0} T_\eps^\prime[\Psi(P)]\of{\frac{\Psi(P_t) - \Psi(P) - t D\Psi_P(s)}{t^2/2}} = T_\eps^\prime[\Psi(P)]\of{D^2 \Psi_P(s)}.
\]
However, under the null hypothesis, we have $T_\eps^\prime[\Psi(P)] \equiv 0$, so the second term vanishes. Consequently,
\[
D^2 \STE_P(s)  = T_\eps^{\prime \prime}[\Psi(P)](D\Psi_P(s)).
\]
By the second-order chain rule for Hadamard derivatives \citep[Lemma 12]{goldfeld2024limit},
$${T}^{\prime \prime}_\eps[f, g](\gamma^1, \gamma^2) = {S}^{\prime \prime}_\eps\off{J(f,g)}\circ {J}^\prime[f,g]\of{\gamma^1, \gamma^2} + {S}_\eps^\prime\off{J(f,g)} \circ {J}^{\prime \prime}[f,g]\of{\gamma^1, \gamma^2} = {S}^{\prime \prime}_\eps\off{J(f,g)}\circ J\of{\gamma^1, \gamma^2}.$$
The last equality follows because $J$ is linear and hence $J^{\prime \prime} \equiv 0$.
Setting $(f,g) = \Psi(P) = (m(P_1), m(P_1))$, we obtain
\begin{align*}
    {T}^{\prime \prime}_\eps[\Psi(P)]\of{D{\Psi}_P(s)} &= {S}^{\prime \prime}_\eps\off{P_{1}, P_{1}}\of{J(D\Psi_P(s))}
\end{align*}
From \cref{lem:sinkhorn_second_order_Hadamard_diff},
\[
{S}^{\prime \prime}_\eps\off{P_{1}, P_{0}}\of{J(D\Psi_P(s))} = (\gamma^1 - \gamma^2) K_{P_1}(\gamma^1 - \gamma^2),
\]
where $(\gamma^1, \gamma^2) \in \ell^\infty(\cH_1) \times \ell^\infty(\cH_1)$ are given by
\begin{align*}
    \gamma^1&: h \in \cH_1  \mapsto \int_\cX \int_\cY \offf{s_{Y \mymid A, X}(y \mymid 1, x) + s_X(x)} \, h(y)\, P_{Y \mymid A, X}(dy \mymid 1, x) \, P_X(dx), \\
    \gamma^2&: h \in \cH_1 \mapsto \int_\cX \int_\cY \offf{s_{Y \mymid A, X}(y \mymid 0, x) + s_X(x)} h(y) P_{Y \mymid A, X}(dy \mymid 0, x) \, P_X(dx).
\end{align*}
Although $k_{P_1}(y, \cdot\,)$ may not belong to $\cH$, universality of the RKHS $\cH$ guarantees approximation by a convergent sequence of elements in $\cH$. Writing the second-order local parameter of $\STE$ in kernel $k_{P_1}$ notation, 
\[
D^2 \STE_P(s) = \int_\cY \int_\cY k_{P_1}(y_1, y_2) \, d(\gamma^1 - \gamma^2)(y_1) \, d(\gamma^1 - \gamma^2)(y_2).
\]
Expanding the above using the definitions of $s_{Y \mymid A, X}$ and $s_X$ from \eqref{eq:score_conditionals}, we obtain that $D^2 \STE_P(s)$ is equal to
\begin{align*}
    &\int_\cZ \int_\cZ  (Q_{P, z_1}  \otimes Q_{P, z_2}) \of{\frac{a_1}{e_P(1 \mymid x_1)} - \frac{1-a_1}{e_P(0 \mymid x_1)}} \of{\frac{a_2}{e_P(1 \mymid x_2)} - \frac{1-a_2}{e_P(0 \mymid x_2)}} k_{P_1}(y_1, y_2)\\
    &\hspace{2.5em}\, \times s(z_1) \, s(z_2) \, P(dz_1) \, P(dz_2),
\end{align*}
where $Q_{P, z_j} =  (\ucE_{z_j} - (\ucE_{a_j,x_j} - (\ucE_{x_j} - P_X) P_{A \mymid X}) P_{Y \mymid A, X})$. 
By definition, the (locally nonparametric) second-order EIF  $\ddot{\STE}_P: \cZ \times \cZ\to \R$ is the unique function that satisfies 
$$D^2 \STE_P(s) = \int \int \ddot{\STE}_P(z_1, z_2) \, s(z_1) \, s(z_2) \, P(dz_1) \, P(dz_2),$$
which yields the claimed form of $\ddot{\STE}_P$ and completes the proof.
\end{proof}

\section{One-Step Estimators of STE}

Let $\hP$ be an initial estimator of $P^*$ constructed from $n$ independent samples drawn from $P^*$, and let $P_n$ denote the empirical measure based on another set of $n$ independent samples from $P^*$, taken independently of $\hP$. In this section, we study the asymptotic distributions of the one-step estimator for STE under the alternative hypothesis $(P_1^* \neq P_0^*)$, and of the second-order one-step estimator for STE under the null hypothesis $(P_1^* = P_0^*)$.
Throughout this section, for notational convenience, we write $\wh{\upsilon}_a$ for $\upsilon_a^{(\hP_1,\hP_0)}$, $\upsilon_a^*$ for $\upsilon_a^{(P_1^*,P_0^*)}$, $\hat e(a \mid x)$ for $e_{\hP}(a \mid x)$, and $e^*(a \mid x)$ for $e_{P^*}(a \mid x)$, for $a \in \{1,0\}$.
\subsection{Limit theorem for first-order one-step estimator}\label{appendix:proof_of_alt_asymptotics}

As discussed in \cref{sec:one-step}, the one-step estimator $\widehat{\STE} = \STE(\hP) + P_n \dot{\STE}_{\hP}$ can be expanded as 
\[
\widehat{\STE} - \STE^* =\underbrace{ (P_n - P^*) (\dot{\STE}_{\hP} - \dot\STE_{P^*})}_{\cD_n} + \underbrace{\offf{\STE(\hP) + P^* \dot\STE_{\hP} - \STE(P^*)}}_{\cR_n} + P_n \dot\STE_{P^*}.
\]
Here, the goal is to show that the drift and remainder terms, $\cD_n$ and $\cR_n$ respectively, are both $o_p(n^{-1/2})$. This is proven in \cref{lem:first_order_drift_term} and \cref{lem:first_order_remainder_term} respectively. 

We begin with a helper lemma.
\begin{lemma}[Empirical process drift bound]\label{lem:sufficient_condition_for_drift_term}
    Let $P \in \cP$ and $f_n \in L^2(P)$ be a sequence of random measurable functions such that $\|f_n\|_{L^2(P)} = o_p(n^{-1/2})$. Let $P_n$ be an empirical distribution based on $n$ iid samples from $P$, independent of $f_n$, then $(P_n - P)f_n = o_P(n^{-1})$.
\end{lemma}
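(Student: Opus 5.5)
The plan is to condition on the random function $f_n$ and apply Chebyshev's inequality to the resulting centered empirical average. Then I remove the conditioning with a bounded-convergence argument.

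Let $\cF_n$ denote the $\sigma$-field generated by the data used to construct $f_n$. This sample is independent of the $n$ iid draws $Z_1,\dots,Z_n\sim P$ that define $P_n$. So conditionally on $\cF_n$, the function $f_n$ is a fixed element of $L^2(P)$, and
\[
(P_n-P)f_n=\frac1n\sum_{i=1}^n\{f_n(Z_i)-Pf_n\}
\]
is an average of iid mean-zero terms. It follows that $\E[(P_n-P)f_n\mymid\cF_n]=0$ and
\[
\E\big[\{(P_n-P)f_n\}^2\mymid\cF_n\big]=\frac{\Var_P(f_n)}{n}\le\frac{\|f_n\|_{L^2(P)}^2}{n}.
\]

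Fix $\delta>0$. The conditional Chebyshev inequality gives
\[
\Prob\big(n\abs{(P_n-P)f_n}>\delta\mymid\cF_n\big)\le \min\Big\{1,\ \frac{n^2}{\delta^2}\cdot\frac{\|f_n\|_{L^2(P)}^2}{n}\Big\}=\min\Big\{1,\ \frac{n\|f_n\|_{L^2(P)}^2}{\delta^2}\Big\}=:W_n.
\]
The hypothesis $\|f_n\|_{L^2(P)}=o_p(n^{-1/2})$ says exactly that $n\|f_n\|^2_{L^2(P)}=o_p(1)$, so $W_n\to0$ in probability. Moreover $0\le W_n\le1$.

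Taking expectations,
\[
\Prob\big(n\abs{(P_n-P)f_n}>\delta\big)=\E\big[\Prob(n\abs{(P_n-P)f_n}>\delta\mymid\cF_n)\big]\le\E[W_n].
\]
Since $W_n$ is uniformly bounded and converges to $0$ in probability, the dominated (bounded) convergence theorem for convergence in probability gives $\E[W_n]\to0$. Because $\delta>0$ was arbitrary, $(P_n-P)f_n=o_P(n^{-1})$.

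The only delicate point is the first step. We need $f_n$ to be jointly measurable, and $\Prob(\,\cdot\mymid\cF_n)$ must be computable by treating $f_n$ as fixed. Independence of $f_n$ from the sample defining $P_n$, together with $\cZ$ being Polish (so regular conditional distributions exist), provides both. Everything else is routine.
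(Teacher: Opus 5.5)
Your proposal is correct and follows essentially the same route as the paper. Both condition on the sample used to build $f_n$, apply conditional Chebyshev to obtain the bound $\min\{1,\, n\|f_n\|_{L^2(P)}^2/\delta^2\}$, and remove the conditioning by bounded convergence; the paper additionally splits on the event $\{n\|f_n\|_{L^2(P)}^2\le\delta^2\}$ and treats only the upper tail, and your two-sided version needs neither.
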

\begin{proof}
    We will show that for every $\delta > 0$,
    \[
    \lim_{n \to \infty} \Prob\of{n(P_n - P) f_n > \delta} = 0,
    \]
    where $\Prob$ is the probability taken over the measure $(P^*)^n$. Let $\{Z_1, \dots, Z_n\}$ be the iid random variables used to construct $f_n$ and let $E_n$ denote the event that $n\norm{f_n}_{L^2(P^*)}^2 \leq \delta^2$; $E_n^c$ be its complement. Then,
    \begin{align*}
        \Prob\of{n(P_n - P) f_n > \delta} & \leq \Prob\of{\offf{n(P_n - P) f_n > \delta} \cap E_n} + \Prob(E_n^c)\\
        &= \E_{P}\off{\Prob\of{\offf{n(P_n - P) f_n > \delta} \cap E_n \mymid Z_1, \dots, Z_{n}}} + \Prob(E_n^c) \tag{law of total probability}\\
        &= \E_{P}\off{\mathbbm{1}_{E_n}\Prob\of{\offf{n(P_n - P) f_n > \delta}  \mymid Z_1, \dots, Z_{n}}} + \Prob(E_n^c).
    \end{align*}
    Since we are given that $\sqrt{n}\norm{f_n}_{L^2(P^*)} \xrightarrow[]{p} 0$, we have that $\lim_{n \to \infty}\Prob(E_n^c) = 0$. Therefore, we just need to show that the first term on the RHS above converges to 0 as $n \to \infty$. Consider the integrand in the first term. Using Chebyshev's inequality,
    \begin{align*}
    \Prob\of{\offf{n(P_n - P) f_n > \delta}  \mymid Z_1, \dots, Z_{n}} &\leq \frac{n^2}{\delta^2} \E\off{\offf{(P_n - P)f_n}^2 \mymid Z_1, \dots, Z_{n} } \\
    &= \frac{n^2 \Var\of{(P_n - P)f_n \mymid Z_1, \dots, Z_{n}}}{\delta^2}\\
    &= \frac{n \Var\of{f_n \mymid Z_1, \dots, Z_{n} }}{\delta^2}\\
    &\leq \frac{n\norm{f_n}_{L^2(P)}^2}{\delta^2}.
    \end{align*}
    Because under $E_n$, $n \norm{f_n}^2_{L^2(P^*)} \leq \delta$,
    \[
    U_n := \mathbbm{1}_{E_n}\Prob\of{\offf{n(P_n - P) f_n > \delta}  \mymid Z_1, \dots, Z_{n}} \leq \min \offf{1, \frac{n \norm{f_n}_{L^2(P^*)}^2}{\delta^2}}.
    \]
    Since $n \norm{f_n}_{L^2(P^*)}^2 \xrightarrow[]{p} 0$, we have that $U_n \xrightarrow[]{p} 0$ and $U_n$ is uniformly integrable. Using the dominated convergence theorem, we have that $\E[U_n] \xrightarrow[]{p} 0$. This completes the proof.
\end{proof}
Recall the definition of expectations operators \eqref{eq:expectation_operators} and \eqref{eq:expectation_operators_shorthand}. We denote $\wh{p} = \tfrac{d\hP}{d\lambda}$ and $p^* = \tfrac{dP^*}{d \lambda}$. Further, let $\wh{p}_{Y \mymid A, X}$ and $p^*_{Y \mymid A, X}$ be the associated conditional densities. 
\begin{lemma}[Rate for first-order drift term]\label{lem:first_order_drift_term}
    Let $P^* \notin \ucH_0$ and assume that the nuisance estimators satisfy
    \begin{enumerate}
        \item $\norm{\hat{e}(1 \mid \cdot\,) - e^*(1 \mid \cdot\,)}_{L^2(P^*_X)} = o_p(1)$;
        \item $\norm{(x,a,y)\mapsto \wh{\upsilon}_a(y) - \upsilon_a^*(y)}_{L^2(P^*)} = o_p(1)$;
        \item $ \medint\int  \of{\hat{p}_{Y \mid A, X}(y \mid a, x) - p^*_{Y \mid A, X}(y \mid a,x)}^2 \, d\lambda(y) \, dP^*_{X}(x) = o_p(1)$ for $a \in \{1,0\}$;
        \item $\norm{\frac{\wh{p}_{Y \mid A, X}}{{p}^*_{Y \mid A, X}}}_{L^r(P^*)} =\cO_p(1)$ for some $r>1$.
    \end{enumerate}
     Then,
    \[
    \abs{\cD_n} = \abs{(P_n - P^*) (\dot{\STE}_{\hP} - \dot{\STE}_{P^*})} = o_p(n^{-1/2}).
    \]
\end{lemma}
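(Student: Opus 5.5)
The plan is to reduce the claim to an $L^2(P^*)$-consistency statement and then apply a first-order analogue of \cref{lem:sufficient_condition_for_drift_term}. Set $f_n := \dot{\STE}_{\hP} - \dot{\STE}_{P^*}$. This is a random function that depends only on the independent nuisance sample, so it is independent of $P_n$.

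Conditionally on that sample, Chebyshev's inequality gives
\[
\Prob\bigl(\sqrt n\,|(P_n-P^*)f_n|>\delta \mymid \hP\bigr)\le \min\{1,\|f_n\|^2_{L^2(P^*)}/\delta^2\}.
\]
If $\|f_n\|_{L^2(P^*)}=o_p(1)$, the right-hand side tends to zero in probability and is bounded by $1$. Dominated convergence then gives $\cD_n=o_p(n^{-1/2})$, exactly as in the proof of \cref{lem:sufficient_condition_for_drift_term} with $n$ replaced by $\sqrt n$. It therefore suffices to show $\|\dot{\STE}_{\hP}-\dot{\STE}_{P^*}\|_{L^2(P^*)}=o_p(1)$.

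Next, expand the first-order EIF from \cref{lem:first_order_EIF} into its familiar AIPW form:
\[
\dot{\STE}_P(z)=\frac{\upsilon_a^{P}(y)-\mu_{P}(a,x)}{e_P(a\mid x)}+\sum_{a'}\mu_{P}(a',x)-\sum_{a'}P_X\mu_{P}(a',\cdot),
\]
where $\mu_P(a,x)=\int \upsilon_a^{(P_1,P_0)}(y)\,P(dy\mid a,x)$. The difference $f_n$ is controlled term by term using the triangle inequality. All potentials are uniformly bounded in $\cH$, and hence in sup norm, by \cref{lem:EOT_potentials_properties}; strong positivity bounds $1/\hat e$ and $1/e^*$ uniformly. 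The pieces are handled as follows.

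\begin{enumerate}
\item The term $(\hat\upsilon_a-\upsilon^*_a)(y)/\hat e$ is $o_p(1)$ by condition 2 together with the bound on $1/\hat e$.
\item The term $\upsilon^*_a(y)\,(1/\hat e-1/e^*)$ is bounded by a constant times $\|\hat e-e^*\|_{L^2(P^*_X)}$, which is $o_p(1)$ by condition 1.
\item The regression difference $\hat\mu-\mu^*$ splits into $\int(\hat\upsilon_a-\upsilon^*_a)\,\hat p_{Y\mid a,x}\,d\lambda$ and $\int\upsilon^*_a\,(\hat p-p^*)\,d\lambda$.
\end{enumerate}

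The second part of item 3 is bounded, by Cauchy--Schwarz on the compact set $\cY$, by $\|\upsilon^*_a\|_\infty|\cY|^{1/2}$ times the $L^2(\lambda)$ distance of the conditional densities. Its squared $L^2(P^*_X)$ norm is then $o_p(1)$ by condition 3. The marginal centering terms $P_X\mu$ are bounded by the same quantities via Jensen's inequality. The constants $\hat P_X$ versus $P^*_X$ in the centering are harmless, since centering at any constant changes the empirical increment not at all, because $(P_n-P^*)1=0$.

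The main obstacle is the first part of item 3, where $\hat\upsilon-\upsilon^*$ is integrated against the estimated conditional law $\hat p_{Y\mid a,x}$. Condition 2 only controls it under $P^*$. I would write the integrand as $(\hat\upsilon_a-\upsilon^*_a)\cdot(\hat p/p^*)$ against $p^*$ and apply Hölder's inequality with exponents $r$ and $r'=r/(r-1)$. This requires $\|\hat\upsilon_a-\upsilon^*_a\|_{L^{r'}(P^*)}$, which I would obtain by interpolating between the $L^2$ convergence of condition 2 and the uniform sup-norm bound. The interpolation gives $\|g\|_{r'}\le \|g\|_2^{2/r'}\|g\|_\infty^{1-2/r'}$ when $r'\ge2$; when $r'<2$ the $L^{r'}$ norm is dominated directly by the $L^2$ norm. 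In both cases the result is $o_p(1)$, which, combined with condition 4, closes the argument.
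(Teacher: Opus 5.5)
Your proposal follows the paper's route. You reduce, via the conditional Chebyshev argument, to $L^2(P^*)$-consistency of $\dot{\STE}_{\hP}-\dot{\STE}_{P^*}$. You then bound term by term: strong positivity and uniformly bounded potentials handle conditions 1--2, Cauchy--Schwarz with condition 3 handles $\int \upsilon^*_a(\hat p-p^*)\,d\lambda$, and H\"older against the density ratio of condition 4 handles the potential error integrated under $\hP_{Y\mid A,X}$.

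\textbf{One step needs a small repair.} Set $\Delta(a,y)=\wh{\upsilon}_a(y)-\upsilon^*_a(y)$, $R=\hat p_{Y\mid A,X}/p^*_{Y\mid A,X}$, and $g(a,x)=\int \Delta(a,y)\,\hat p_{Y\mid A,X}(y\mid a,x)\,d\lambda(y)$. H\"older with exponents $r$ and $r'=r/(r-1)$ applied to $\Delta R$ bounds $P^*(\abs{\Delta}R)$. That controls only the $L^1(P^*_{A,X})$ norm of $g$, whereas your reduction needs the $L^2$ norm. Squaring a pointwise conditional H\"older bound and integrating does not close with $\norm{R}_{L^r(P^*)}$ either.

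The paper instead applies conditional Jensen first, $g(a,x)^2\le\int\Delta(a,y)^2\,\hat p_{Y\mid A,X}(y\mid a,x)\,d\lambda(y)$. H\"older then acts on $R\Delta^2$ and gives $\norm{g}^2_{L^2(P^*_{A,X})}\le\norm{R}_{L^r(P^*)}\norm{\Delta}^2_{L^{2r'}(P^*)}$. Your interpolation argument then supplies $\norm{\Delta}_{L^{2r'}(P^*)}=o_p(1)$.

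Alternatively, keep your $L^1$ bound and use $\norm{g}^2_{L^2}\le\norm{g}_\infty\norm{g}_{L^1}$, with $\norm{g}_\infty\le 2\sup_{\mu,\nu,a}\norm{\upsilon_a^{(\mu,\nu)}}_\infty$. In either case you also need strong positivity to pass from $L^2(P^*_{A,X})$ to the fixed-arm norms $\norm{g(a',\cdot)}_{L^2(P^*_X)}$ that enter $\sum_{a'}(\hat\mu-\mu^*)(a',x)$.

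Where you depart from the paper, your version is cleaner. Because $P_n-P^*$ annihilates constants, only $\inf_{c\in\R}\norm{f_n-c}_{L^2(P^*)}$ matters. This makes the paper's Step 4 unnecessary; that step bounds $\abs{\hP f_{\hP}-P^*f_{P^*}}$, an integral under $\hP$ that $L^2(P^*)$-consistency alone does not control. Likewise, your AIPW form exposes the exact cancellation $\hP_{A,Y\mid X}f_{\hP}=\sum_{a'}\hat\mu(a',\cdot)$. So the propensity-difference bound in the paper's Step 3 is not needed either.
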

The first three assumptions of this lemma impose standard consistency requirements on the three nuisance parameters—the propensity score, the entropic potentials, and the outcome regression—each in their respective norms \citep[App.~B.2.5]{luedtke2024one}. Consistency of the entropic potentials (the second assumption) in the $L^2$ norm follows directly from consistency in the Hölder norm established by \citet{del2023improved}. The fourth assumption concerns the empirical likelihood ratio; this condition is mild given that $\|\wh{p}_{Y \mid A, X}/{p}^*_{Y \mid A, X}\|_{L^1(P^*)}=1$. 
\begin{proof}
    By \cref{lem:sufficient_condition_for_drift_term}, a sufficient condition for showing that $\abs{\cD_n} = o_p(n^{-1/2})$ is that $\|\dot\STE_{\hP} - \dot{\STE}_{P^*}\|_{L^2(P^*)} = o_p(1)$, which we establish in what follows. Recall from \eqref{eq:first-order-EIF}, the EIF $\dot\STE_P$ can be written as 
    \begin{align*}
        \dot{\STE}_P: (x, a, y) &\mapsto \textcolor{blue}{f_P} - \textcolor{purple}{P_{Y \mid A, X} f_P} + \textcolor{teal}{P_{A, Y \mid X} f_P} - \textcolor{brown}{P f_P}, \quad f_P(x,a,y) = \frac{\upsilon_a^{(P_1, P_0)}(y) }{e_P(a \mid x)}.
    \end{align*}
    Let $E > 0$ be the strong positivity constant, that is $\inf_{P \in \cP} \essinf_{(a,x)} e_P(a \mid x) > E$. 
    
    \textbf{Step 1: Control $\textcolor{blue}{\norm{f_{\hP} - f_{P^*}}_{L^2(P^*)}}$.} 
    For each $a\in\{0,1\}$,
    \[
    f_{\hP}(x,a,y) - f_{P^*}(x,a,y) = \frac{\wh\upsilon_a}{\hat e(a\mid x)}-\frac{\upsilon_a^*}{e^*(a\mid x)}
    =
    \frac{\wh\upsilon_a-\upsilon_a^*}{\hat e(a\mid x)}
    +\upsilon_a^*\frac{e^*(a\mid x)-\hat e(a\mid x)}{\hat e(a\mid x)e^*(a\mid x)}.
    \]
    Using strong positivity, we have
    \begin{align*}
        \|f_{\hP} - f_{P^*}\|_{L^2(P^*)} &\leq \frac{1}{E} \|(x,a,y)\mapsto \wh{\upsilon}_a(y) - \upsilon_a^*(y)\|_{L^2(P^*)} \\
        &\quad+ \frac{1}{E^2}  \sum_{a \in \{1,0\}}\|\upsilon_a^*\|_{\cC(\cY)} \|\hat{e}(a \mid \cdot) - {e}^*(a \mid \cdot)\|_{L^2(P^*_X)}= o_p(1).
    \end{align*}
    by assumptions 1 and 2 of the lemma, and uniform boundedness of $\|\upsilon_a^{(\mu, \nu)}\|_{\cC(\cY)}$. The centered entropic potentials are uniformly bounded because $\|\varphi_a^{(\mu, \nu)}\|_{\cC(\cY)}$ is uniformly bounded by a constant $C$, depending on $\|c\|_{\infty}$ (equivalently $|\cY|$ for quadratic cost), uniformly over $a \in \{1,0\}$ and $\mu, \nu \in \cP(\cY)$ \citep[Lem.~1]{goldfeld2024limit}.

    \textbf{Step 2: Control $\textcolor{purple}{\norm{\hP_{Y \mid A, X}f_{\hP} - P^*_{Y \mid A, X}f_{P^*}}_{L^2(P^*)}}$.} Add and subtract $\hP_{Y \mid A, X} f_{P^*}$ to get
    \[
    \norm{\hP_{Y \mid A, X}f_{\hP} - P^*_{Y \mid A, X}f_{P^*}}_{L^2(P^*)} \leq \norm{ \hP_{Y \mid A, X} \of{f_{\hP} - f_{P^*}}}_{L^2(P^*)} + \norm{  \of{\hP_{Y \mid A, X} - P^*_{Y \mid A, X}} f_{P^*}}_{L^2(P^*)}.
    \]
    For the first additive term, we use conditional Jensen's inequality, followed by the Hölder inequality.
    \begin{align*}
        \norm{\hP_{Y \mid A, X} \of{f_{\hP} - f_{P^*}}}_{L^2(P^*)} &\leq \of{P^*_{A, X} \hP_{Y \mid A, X} (f_{\hP} -f_{P^*})^2}^{1/2}\\
        &= \of{P^* \of {\frac{\wh{p}_{Y \mid A, X}}{p^*_{Y \mid A, X}} (f_{\hP} -f_{P^*})^2}}^{1/2}\\
        &\leq \norm{\frac{\wh{p}_{Y \mid A, X}}{p^*_{Y \mid A, X}}}_{L^r(P^*)}^{1/2}  \norm{f_{\hP} - f_{P^*}}_{L^{2r/r-1}(P^*)}.
    \end{align*}
    By the fourth assumption $\norm{\frac{\wh{p}_{Y \mid A, X}}{p^*_{Y \mid A, X}}}_{L^r(P^*)} = \cO_p(1)$. Moreover, since $e_P$ and $\upsilon_a^{(P_1,P_0)}$ are uniformly bounded, the $L^2(P^*)$-consistency implies $L^s(P^*)$ consistency for all $s > 2$. As a  consequence, following the same argument as in step 1, $\|f_{\hP} - f_{P^*} \|_{L^{2r/r-1}} = o_p(1)$.
    For the second additive term:
    \begin{align*}
        \norm{  \of{\hP_{Y \mid A, X} - P^*_{Y \mid A, X}} f_{P^*}}_{L^2(P^*)}^2 &= P^*_{A, X} \of{(a,x) \mapsto \frac{1}{e^*(a \mid x)^2} \of{\of{\hP_{Y \mid a, x} - P^*_{Y \mid a, x}} \upsilon_a^*}^2} \\
        &= P^*_X \of{\sum_{a \in \{1,0\}} \frac{1}{e^*(a \mymid \cdot)} \of{(\hP_{Y \mymid a, \cdot} - P^*_{Y \mymid a, \cdot}) \, \upsilon_a^*}^2}\\
        &\leq \frac{1}{E} \sum_{a \in \{1,0\}}  \int_\cX \of{\int_\cY \upsilon_a^*(y) \, (\hat{p}_{Y \mid A, X} (y \mid a, x) - {p}^*_{Y \mid A, X} (y \mid a, x)) \, d\lambda(y)}^2 \, dP_X^*(x) \\
        &\leq \frac{1}{E} \sum_{a \in \{1,0\}}  \int_\cX \of{\int_\cY \abs{\upsilon_a^*(y)} \, \abs{\hat{p}_{Y \mid A, X} (y \mid a, \cdot) - {p}^*_{Y \mid A, X} (y \mid a, \cdot)} \, d\lambda(y)}^2 \, dP_X^*(x) \\
        &\leq \frac{C^2}{E} \sum_{a \in \{1,0\}}  \int_\cX \int_\cY \of{\hat{p}_{Y \mid A, X} (y \mid a, x) - {p}^*_{Y \mid A, X} (y \mid a, x)}^2 \, d\lambda(y) \, dP_X^*(x)\\
        &= o_p(1).
    \end{align*}
    The last inequality uses the Cauchy-Schwartz inequality and is $o_p(1)$ from the third assumption.

    \textbf{Step 3: Control $\textcolor{teal}{\norm{\hP_{A, Y \mid X}f_{\hP} - P^*_{A, Y \mid X}f_{P^*}}_{L^2(P^*_X)}}$.} Note that $\hP_{A, Y \mid X} = \hP_{A \mid X} \hP_{Y \mid, A, X}$ and $P^*_{A, Y \mid X} = P^*_{A \mid X} P^*_{Y \mid, A, X}$. Add and subtract $P^*_{A \mid X}\hP_{Y\mid A,X}f_{\hP}$ to obtain
    \begin{align*}
         \norm{\hP_{A, Y \mid X}f_{\hP} - P^*_{A, Y \mid X}f_{P^*}}_{L^2(P^*_X)} &\leq  \norm{ \hP_{A \mid X} \of{\hP_{Y \mid A, X} f_{\hP} - P^*_{Y \mid A, X} f_{P^*}}}_{L^2(P^*_X)} \\
         &\quad+ \norm{ \of{\hP_{A \mid X} - P^*_{A \mid X}}  P^*_{Y \mid A, X} f_{P^*}}_{L^2(P^*_X)}.
    \end{align*}
    For the first additive term
    \begin{align*}
        \norm{ \hP_{A \mid X} \of{\hP_{Y \mid A, X} f_{\hP} - P^*_{Y \mid A, X} f_{P^*}}}_{L^2(P^*_X)}^2 &=  P^*_X \of{ x\mapsto \sum_{a \in \{1,0\}} \hat{e}(a \mid x) \of{\hP_{Y \mid a, x} f_{\hP} - P^*_{Y \mid a, x} f_{P^*}}}^2\\
        &\leq 2 P_X^* \of{x \mapsto \sum_{a \in \{1,0\}} \hat{e}(a \mid x)^2 \of{\hP_{Y \mid a, x} f_{\hP} - P^*_{Y \mid a, x} f_{P^*}}^2}\\
        &\leq 2 P_X^* \of{x \mapsto \sum_{a \in \{1,0\}} \hat{e}(a \mid x) \of{\hP_{Y \mid a, x} f_{\hP} - P^*_{Y \mid a, x} f_{P^*}}^2}\\
        &\leq 2 \frac{1-E}{E} P_X^* \of{x \mapsto \sum_{a \in \{1,0\}} e^*(a \mid x) \of{\hP_{Y \mid a, x} f_{\hP} - P^*_{Y \mid a, x} f_{P^*}}^2}\\
        &= 2 \frac{1-E}{E} \norm{\hP_{Y \mid A, X} f_{\hP} - P^*_{Y \mid A, X} f_{P^*}}_{L^2(P^*)}^2 = o_p(1).
    \end{align*}
    The rate in the last equality follows from previous step.
    For the second additive term, because $\hat{e}(0 \mid X) - e^*(0 \mid X) = - (\hat{e}(1 \mid X) - e^*(1 \mid X))$, we have that
    \begin{align*}
        \norm{(\hP_{A \mid X} - P^*_{A \mid X}) P^*_{Y \mid A, X} f_{P^*}}_{L^2(P^*_X)} &= \norm{x \mapsto \sum_{a \in \{1,0\}} \of{\hat{e}(a \mid x) - {e}^*(a \mid x)} P^*_{Y \mid a, x} \, f_{P^*}}_{L^2(P^*_X)}\\
        &= \norm{x \mapsto \of{\hat{e}(1 \mid x) - {e}^*(1 \mid x)} \of{P^*_{Y \mid 1, x} - P^*_{Y \mid 0, x}} f_{P^*}}_{L^2(P^*_X)}
    \end{align*}
    Using Hölder's inequality,
    \begin{align*}
         \norm{(\hP_{A \mid X} - P^*_{A \mid X}) P^*_{Y \mid A, X} f_{P^*}}_{L^2(P^*)} &\leq \of{\sup_{x \in \cX} \abs{\of{P^*_{Y \mid 1, x} - P^*_{Y \mid 0, x}} f_{P^*}}} \norm{\hat{e}(1 \mid \cdot) - {e}^*(1 \mid \cdot)}_{L^2(P^*_X)}\\
         &\leq 2 \sup_{a \in \{1,0\}, x \in \cX} \abs{\frac{P^*_{Y \mid a, x}\, \upsilon_a^*}{e^*(a \mid x)}} \norm{\hat{e}(1 \mid \cdot) - {e}^*(1 \mid \cdot)}_{L^2(P^*_X)}\\
         &\leq 2 \frac{C}{E} \norm{\hat{e}(1 \mid \cdot) - {e}^*(1 \mid \cdot)}_{L^2(P^*_X)} = o_p(1).
    \end{align*}
    The last inequality follows from the uniform upper bound in $\wh{\upsilon}_a$, the uniform strong positivity lower bound, and the first assumption.
    
    \textbf{Step 4: Control  $\textcolor{brown}{\abs{\hP f_{\hP} - P^* f_{P^*}}}$.} Decompose
    \[
    \abs{\hP f_{\hP}- P^* f_{P^*}} \leq \abs{\hP (f_{\hP} - f_{P^*})} + \abs{(\hP - P^*)f_{P^*}}.
    \]
    The first additive term is $o_p(1)$ from the $L^2(P^*)$ convergence of $f_{\hP} - f_{P^*}$ to zero established in Step 1 above.
    The second additive term is $o_p(1)$ by the weak law of large numbers, 
    \end{proof}
\begin{lemma}[Rate for first-order remainder term]\label{lem:first_order_remainder_term}
    Let $P^* \notin \ucH_0$ and $\|n^r(\hP_a - P^*_a)\|_{\ell^\infty(\cH_1)} = \cO_p(1)$ for $a \in \{1,0\}$ and some $r > 1/4$. Further, for $a \in \{1,0\}$, assume that the estimated nuisance parameters satisfy 
        \[
        P^*_{X}\off{ \of{\frac{e^*(a \mymid X)}{\hat{e}(a \mymid X)}-1} \big({P^*_{Y \mymid a, X} - \hP_{Y \mymid a, X}}\big) \, \wh{\upsilon}_a} = o_p(n^{-1/2}).
        \]
    Then, the remainder term satisfies
    \[
    \cR_n = \STE(\hP) + P^* \dot{\STE}_{\hP} - \STE(P^*) = o_p(n^{-1/2}).
    \]
\end{lemma}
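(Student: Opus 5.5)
We decompose $\cR_n$ into a second-order term coming from the Sinkhorn divergence $S_\eps$ and a doubly robust cross-product term coming from the counterfactual mean embeddings. Write $\gamma^a := \hP_a - P^*_a$ for $a\in\{1,0\}$, and abbreviate $\wh{\upsilon}_a = \upsilon_a^{(\hP_1,\hP_0)}$.

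\textbf{Step 1: exact form of $P^*\dot\STE_{\hP}$.} By \cref{lem:first_order_EIF}, $\dot\STE_{\hP}$ is the augmented IPW influence function applied to $f_{\hP}$. The standard AIPW computation gives, for each arm $a$,
\begin{align*}
P^*\Big[\tfrac{\1(A=a)}{\hat e(a\mid X)}\big(\wh\upsilon_a(Y) - \hP_{Y\mid a,X}\wh\upsilon_a\big) + \hP_{Y\mid a,X}\wh\upsilon_a\Big] - \hP_a\wh\upsilon_a
&= P^*_a\wh\upsilon_a - \hP_a\wh\upsilon_a \\
&\quad + P^*_X\Big[\Big(\tfrac{e^*(a\mid X)}{\hat e(a\mid X)}-1\Big)\big(P^*_{Y\mid a,X}-\hP_{Y\mid a,X}\big)\wh\upsilon_a\Big].
\end{align*}
This uses $P^*_a\wh\upsilon_a = P^*_X P^*_{Y\mid a,X}\wh\upsilon_a$ and the definition $\hP_a\wh\upsilon_a = \hP_X\hP_{Y\mid a,X}\wh\upsilon_a$. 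There is one subtlety. The centering term in $\dot\STE_{\hP}$ integrates against $\hP_X$, while the augmentation term integrates against $P^*_X$. The resulting mismatch $(P^*_X - \hP_X)\hP_{Y\mid a,X}\wh\upsilon_a$ combines with the other pieces to give exactly $(P^*_a - \hP_a)\wh\upsilon_a$ plus the product term; I will verify this bookkeeping carefully. Summing over $a$ yields
\[
P^*\dot\STE_{\hP} = -\big(\gamma^1\wh\upsilon_1 + \gamma^0\wh\upsilon_0\big) + \sum_{a\in\{1,0\}} \mathrm{DR}_a,
\]
where $\mathrm{DR}_a$ denotes the cross-product term above. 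Each $\mathrm{DR}_a$ is $o_p(n^{-1/2})$ by the stated nuisance assumption.

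\textbf{Step 2: reduction to a Sinkhorn Taylor remainder.} By Step 1,
\[
\cR_n = S_\eps(\hP_1,\hP_0) - S_\eps(P^*_1,P^*_0) - \big(\gamma^1\wh\upsilon_1 + \gamma^0\wh\upsilon_0\big) + o_p(n^{-1/2}).
\]
By \cref{lem:Sinkhorn_first_order_Hadamard_diff}, $\gamma^1\wh\upsilon_1 + \gamma^0\wh\upsilon_0 = S_\eps'[\hP_1,\hP_0](\gamma^1,\gamma^0)$. So the main term is a first-order Taylor remainder expanded around $\hP$ rather than around $P^*$. Since $\|\gamma^a\|_{\ell^\infty(\cH_1)} = \cO_p(n^{-r})$ with $r>1/4$, it suffices to show that this remainder is $\cO\big(\|\gamma^1\|^2_{\ell^\infty(\cH_1)} + \|\gamma^0\|^2_{\ell^\infty(\cH_1)}\big)$. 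That bound gives $\cO_p(n^{-2r}) = o_p(n^{-1/2})$.

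\textbf{Step 3 (main obstacle): the quadratic bound.} I will reuse the sandwich argument from the proof of \cref{lem:Sinkhorn_first_order_Hadamard_diff}, working with $\EntOT{\cdot}{\cdot}$ directly and then handling the centered self-transport terms the same way. Apply the dual optimality inequalities with the roles of the two measure pairs exchanged. The dual lower bound at $(\hP_1,\hP_0)$, using the potentials of $(P^*_1,P^*_0)$, together with the upper bound, gives
\[
\big|\EntOT{P^*_1}{P^*_0} - \EntOT{\hP_1}{\hP_0} + \gamma^1\hat\varphi_1 + \gamma^0\hat\varphi_0\big| \le \eps\,\Big|\iint \xi\, d\gamma^1\, d\gamma^0\Big| + \big|\gamma^1(\varphi_1^* - \hat\varphi_1)\big| + \big|\gamma^0(\varphi_0^* - \hat\varphi_0)\big|.
\]
The double integral is bounded by $K_1^2K_2K_3\|\gamma^1\|\,\|\gamma^0\|$, uniformly over measure pairs, as in \eqref{eq:xi_integral_uniform_bound}. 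For the cross terms I need the potentials to be Lipschitz in the measures, that is,
\[
\|\varphi_i^{(\hP_1,\hP_0)} - \varphi_i^{(P_1^*,P_0^*)}\|_\cH \lesssim \|\gamma^1\| + \|\gamma^0\|.
\]
I would obtain this from \cref{lem:EOT_potential_derivative_H} via a mean-value argument along the segment joining the two measure pairs. That in turn needs the Hadamard derivative to be bounded uniformly along the segment, which I would justify from the uniform Sobolev bounds of \cref{lem:EOT_potentials_properties} and the bounded inverse $(I-T^2)^{-1}$. Failing that, I would use a direct Lipschitz estimate via the Schr\"odinger system \eqref{eq:schrodinger_system}, using the Banach-algebra bounds of \cref{lem:Sobolev_RKHS_properties}. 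The self-transport terms $\EntOT{\hP_a}{\hP_a}$ are handled identically, which completes the quadratic bound and hence the proof.
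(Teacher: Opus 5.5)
Your proposal is correct. Step~1 matches the paper, but after that you take a genuinely different route. Step~1 is the same AIPW bookkeeping the paper does, and the $\hP_X$ versus $P^*_X$ mismatch you flag cancels exactly. For each arm,
$P^*_X\big[\tfrac{e^*}{\hat e}(P^*_{Y\mid a,X}-\hP_{Y\mid a,X})\wh\upsilon_a\big]+P^*_X\hP_{Y\mid a,X}\wh\upsilon_a-\hP_a\wh\upsilon_a=\mathrm{DR}_a+(P^*_a-\hP_a)\wh\upsilon_a$.

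The difference is in the Sinkhorn part. The paper expands around $P^*$ rather than $\hP$, in three moves:
\begin{enumerate}
\item It asserts second-order Hadamard differentiability of $S_\eps$, with bounded second derivative, at the non-null point $(P^*_1,P^*_0)$. This gives an $O(t^2)$ first-order Taylor remainder along paths.
\item It swaps $\upsilon^*_a$ for $\wh\upsilon_a$, arguing that $t\gamma^a_t(\upsilon_a^{(\mu,\nu)}-\upsilon_a^{(\mu_t,\nu_t)})=O(t^2)$ from Hadamard differentiability of the centered potentials.
\item It invokes the functional delta method to turn this into $\cO_p(n^{-2r})$.
\end{enumerate}
Your dual sandwich instead gives a deterministic inequality valid for every pair of measures. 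What this buys you:
\begin{enumerate}
\item You do not need second-order differentiability of $S_\eps$ away from the null; the paper only proves it at $(\mu,\mu)$.
\item You do not need a delta method, which strictly requires tightness of $n^r(\hP-P^*)$ in $\ell^\infty(\cH_1)$, not just an $\cO_p(1)$ norm bound.
\item The quadratic bound itself involves no tangent-cone condition on $\hP_a-P^*_a$.
\end{enumerate}

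The price is that everything rests on the estimate you flag: $\|\varphi_i^{(\hP_1,\hP_0)}-\varphi_i^{(P^*_1,P^*_0)}\|_\cH\lesssim\|\gamma^1\|_{\ell^\infty(\cH_1)}+\|\gamma^0\|_{\ell^\infty(\cH_1)}$, under a fixed normalization of the potentials, plus its self-transport analogue. This is a locally uniform, Fr\'echet-type statement, strictly stronger than the pointwise Hadamard differentiability the paper establishes. The paper's swap step tacitly needs the same thing. You actually need less than Lipschitz: $\|\hat\varphi_i-\varphi_i^*\|_\cH=\cO_p(n^{-r'})$ with $r+r'>1/2$ suffices.

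If you take the mean-value route, note that at a non-null pair $(\mu,\nu)$ the linearized Schr\"odinger system is governed by the two-marginal operator built from $\xi^{(\mu,\nu)}$, not by $T_\mu$. You would therefore need a spectral-gap bound for that operator that is uniform along the segment and over the random $\hP$. A direct contraction estimate on the Schr\"odinger system, bootstrapped to $\cH$ via the Banach-algebra bounds, is likely cleaner.

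Two small details in the sandwich:
\begin{enumerate}
\item The upper and lower bounds involve $\xi$ at the two different pairs. This is harmless, since the double-integral bound is uniform over pairs.
\item Use the potentials extended to all of $\cY$ through the Schr\"odinger system. Then $\int\xi^{(P^*_1,P^*_0)}(y_1,\cdot)\,dP^*_0=1$ for every $y_1\in\cY$, and the mixed terms $\iint\xi^{(P^*_1,P^*_0)}\,d\gamma^1\,dP^*_0$ vanish even when $\hP_a$ charges points outside $\spt(P^*_a)$.
\end{enumerate}
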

\begin{proof}
    Observe that
\begin{align*}
     P^* \dot\STE_{\hP} &=  \of{P^* - P^*_{A, X} \hP_{Y \mymid A, X} + P^*_X \hP_{Y, A \mymid X} - \hP}  \of{(y,a,x) \mapsto \frac{a \, \wh{\upsilon}_1}{\hat{e}(1 \mymid x)}  + \frac{(1-a)\, \wh{\upsilon}_0}{\hat{e}(0 \mymid x)} } \\
     &= P^*_X \of{\frac{e^*(1 \mymid \cdot)}{\hat{e}(1 \mymid \cdot)} P^*_{Y \mymid 1, \cdot} \,\wh{\upsilon}_1 + \frac{e^*(0 \mymid \cdot)}{\hat{e}(0 \mymid \cdot)} P^*_{Y \mymid 0, \cdot}\, \wh\upsilon_0} - P^*_X \of{\frac{e^*(1 \mymid \cdot)}{\hat{e}(1 \mymid \cdot)} \hP_{Y \mymid 1, \cdot}\, \wh\upsilon_1 + \frac{e^*(0 \mymid \cdot)}{\hat{e}(0 \mymid \cdot)} \hP_{Y \mymid 0, \cdot}\, \wh\upsilon_0} \\
     & \quad + P^*_X \of{\hP_{Y \mymid 1, \cdot}\, \wh\upsilon_1 + \hP_{Y \mymid 0, \cdot}\, \wh\upsilon_0} - P_1 \wh\upsilon_1 - P_0 \wh\upsilon_0.
\end{align*}
Add and subtract $P_1^* \wh\upsilon_1 + P_0^* \wh\upsilon_0$ on the RHS to obtain
\begin{align*}
    P^* \dot\STE_{\hP} &= P^*_X \of{\frac{e^*(1 \mymid \cdot)}{\hat{e}(1 \mymid \cdot)} (P^*_{Y \mymid 1, \cdot} - \hP_{Y \mymid 1, \cdot}) \, \wh\upsilon_1 + \frac{e^*(0 \mymid \cdot)}{\hat{e}(0 \mymid \cdot)} (P^*_{Y \mymid 0, \cdot} - \hP_{Y \mymid 0, \cdot})\, \wh\upsilon_0}\\
     & \quad - P^*_X \of{ (P^*_{Y \mymid 1, \cdot} - \hP_{Y \mymid 1, \cdot}) \, \wh\upsilon_1 + (P^*_{Y \mymid 0, \cdot} - \hP_{Y \mymid 0, \cdot}) \, \wh\upsilon_0} + (P_1^* - P_1) \wh\upsilon_1 + (P_0^*-P_0) \upsilon_0 \\
     &=  (P_1^* - P_1) \wh\upsilon_1 + (P_0^*-P_0) \wh\upsilon_0 \\
     & \quad + P^*_X \of{\of{\frac{e^*(1 \mymid \cdot)}{\hat{e}(1 \mymid \cdot)}-1} (P^*_{Y \mymid 1, \cdot} - \hP_{Y \mymid 1, \cdot}) \,\wh\upsilon_1 + \of{\frac{e^*(0 \mymid \cdot)}{\hat{e}(0 \mymid \cdot)}-1} (P^*_{Y \mymid 0, \cdot} - \hP_{Y \mymid 0, \cdot}) \, \wh\upsilon_0}
\end{align*}
From assumption, the second and third additive terms are $o_p(n^{-1/2})$. The assumption is valid because the term is a product of estimation error of two nuisance parameters - propensity score $e_{P^*}$ and outcome regression $P^*_{Y \mymid A, X}$. The requirement of $o_p(n^{-1/2})$ convergence rate for this double product is standard. Therefore, now ignoring the established $o_p(n^{-1/2})$ term, it remains to show that
\begin{equation}\label{eq:sinkhorn_remainder}
    S_\eps(\hP_1, \hP_0) - S_\eps(P_1^*, P_0^*) - \off{(P_1^* - \hP_1) \upsilon_1 + (P_0^*-\hP_0) \upsilon_0} = o_p(n^{-1/2}).
\end{equation}
The above is true via the Hadamard differentiability of $(\alpha, \beta) \mapsto S_\eps(\alpha, \beta)$.
For any $(\mu, \nu) \in \cP(\cY) \times \cP(\cY)$ and $(\gamma^1, \gamma^2) \in \cM_{0, \mu} \times \cM_{0, \mu}$, consider paths of the form $\mu_t = \mu + t \gamma_t^1$ and $\nu_t = \nu + t \gamma_t^2$ such that $\gamma_t^1 \to \gamma_1$ and $\gamma_t^2 \to \gamma_2$ as $t \downarrow 0$ in $\ell^\infty(\cH_1)$. Because $(\alpha, \beta) \mapsto S_\eps(\alpha, \beta)$ is second-order Hadamard differentiable with a bounded second-order Hadamard derivative, the following first-order expansion is immediate from the definition
\[
S_\eps(\mu_t, \nu_t)- S_\eps(\mu, \nu) - S_\eps^\prime[\mu, \nu](\mu_t - \mu, \nu_t - \nu) = O(t^2).
\]
Plugging in the form of $S_\eps^\prime[\mu, \nu]$ from \cref{lem:Sinkhorn_first_order_Hadamard_diff}, we have
\[
S_\eps(\mu_t, \nu_t)- S_\eps(\mu, \nu) - \off{(\mu_t - \mu)\upsilon_1^{(\mu, \nu)} + (\nu_t - \nu) \upsilon_0^{(\mu, \nu)}} = O(t^2).
\]
Subtracting $t\gamma_t^1 \upsilon_1^{(\mu_t, \nu_t)} + t\gamma_t^2 \upsilon_0^{(\mu_t, \nu_t)}$ from both LHS and RHS of the above display, we have that 
\[
S_\eps(\mu_t, \nu_t)- S_\eps(\mu, \nu) -t \gamma_t^1 \upsilon_1^{(\mu_t, \nu_t)} - t \gamma_t^2 \upsilon_0^{(\mu_t, \nu_t)} = 
t \gamma_t^1 (\upsilon_1^{(\mu, \nu)} - \upsilon_1^{(\mu_t, \nu_t)}) +t \gamma_t^2 (\upsilon_0^{(\mu, \nu)} - \upsilon_0^{(\mu_t, \nu_t)}) + O(t^2).
\]
From the Hadamard differentiability of the map $(\alpha, \beta) \mapsto (\upsilon_1^{(\alpha, \beta)}, \upsilon_0^{(\alpha, \beta)})$ at $(\mu, \nu)$ and boundedness of the derivative, we have that the RHS in the above display is $O(t^2)$. If $\|n^r(\hP_a - P^*_a)\|_{\ell^\infty(\cH_1)} = \cO_p(1)$ for $a \in \{1,0\}$, then by functional delta method, we get that $\cR_n$ is $\cO_p(n^{-2r})$. This completes the proof for any $r > 1/4$.
\end{proof}

\subsection{Limit theorem for second-order one-step estimator}\label{appendix:proof_of_null_asymptotics}

This section proves \cref{thm:null_asymptotics} by proving that all three terms $\ucD_n$, $\ucU_n$, and $\ucR_n$ are $o_p(n^{-1})$. 
Throughout this section, for the sake of notational simplicity, we suppress superscripts and write $\wh{\upsilon}_1 $ for $ \upsilon_1^{(\hP_1, \hP_0)}$ and $\wh{\upsilon}_0 $ for $ \upsilon_0^{(\hP_1, \hP_0)}$. Similarly, we suppress the subscripts and write $\hat{e}$ for $ e_{\hP}$ and ${e}^* $ for $ e_{P^*}$. Recall the expectation operators defined in \eqref{eq:expectation_operators}. Define the operators $Q_{\hP}$ and $Q_{P^*, \hP}$ as
\begin{equation}\label{eq:Q_operators}
       Q_{\hP} := I - \hP_{Y \mymid A, X} + \hP_{Y, A \mymid X} - \hP \quad \tand \quad Q_{P^*, \hP} := P^* \circ Q_{\hP} = P^* - P^*_{A^\prime, X^\prime} \hP_{Y^\prime \mymid A^\prime, X^\prime} + P^*_{X^\prime} \hP_{Y^\prime, A^\prime \mymid X^\prime}  - \hP.
\end{equation}

First, we study the stability of the Sinkhorn operator. 
\begin{lemma}[Stability of Sinkhorn Operator]\label{lem:sinkhorn_operator_stability}
    Let $\mu \in \ell^{\infty}(\cH_1)$ and $\hat{\mu}$ be an initial estimator of $\mu$ constructed using $n$ independent samples from $\mu$, and $\|n^r(\hat{\mu} - \mu)\|_{\ell^\infty(\cH_1)} = \cO_p(1)$ for some $r > 0$. Then,
    \[
    \norm{K_{\hat{\mu}} - K_\mu}_{\cM_0(\cY) \to {\cC(\cY) / \R}} = \cO_p(n^{-r}).
    \]
\end{lemma}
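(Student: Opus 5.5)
The plan is to write $K_\mu=\eps(I-T_\mu^2)^{-1}H_\mu$ and treat the two factors separately. The target is a Lipschitz-type bound
\[
\|K_{\hat\mu}-K_\mu\|_{\cM_0(\cY)\to\cC(\cY)/\R}\le C\,\|\xi_{\hat\mu}-\xi_\mu\|_{\cC(\cY\times\cY)},
\]
with $C$ uniform over a neighborhood of $\mu$. After that, the task reduces to showing $\|\xi_{\hat\mu}-\xi_\mu\|_{\infty}=\cO_p(n^{-r})$.

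\textbf{Step 1 (resolvent identity).} Abbreviate $R_\mu:=(I-T_\mu^2)^{-1}$ on $\cC(\cY)/\R$ and decompose
\[
K_{\hat\mu}-K_\mu=\eps R_{\hat\mu}(H_{\hat\mu}-H_\mu)+\eps(R_{\hat\mu}-R_\mu)H_\mu,
\qquad
R_{\hat\mu}-R_\mu=R_{\hat\mu}\,(T_{\hat\mu}^2-T_\mu^2)\,R_\mu .
\]
\begin{itemize}
\item For the $H$ term: $\|H_{\hat\mu}-H_\mu\|_{\cM\to\cC}\le\|\xi_{\hat\mu}-\xi_\mu\|_\infty$, and $\|H_\mu\|\le\|\xi_\mu\|_\infty$.
\item For the $T$ term: $T_{\hat\mu}^2-T_\mu^2=T_{\hat\mu}(T_{\hat\mu}-T_\mu)+(T_{\hat\mu}-T_\mu)T_\mu$. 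Moreover,
\[
(T_{\hat\mu}-T_\mu)f=\int(\xi_{\hat\mu}-\xi_\mu)(\cdot,y)f(y)\,d\hat\mu(y)+\int\xi_\mu(\cdot,y)f(y)\,d(\hat\mu-\mu)(y).
\]
The first piece is bounded by $\|\xi_{\hat\mu}-\xi_\mu\|_\infty\|f\|_\infty$.
\end{itemize}

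\textbf{Main obstacle.} The second piece involves $\hat\mu-\mu$ tested against $\xi_\mu(x,\cdot)f$. Here $f$ is only continuous, so the $\ell^\infty(\cH_1)$ rate does not control it directly. To handle this, I would use the fact that the composition $R_\mu H_\mu$ (and $T_\mu R_\mu$) only ever produces functions of the form $T_\mu g$ or $H_\mu\gamma$. These have the smoothing structure $e^{\varphi/\eps}\int e^{-c/\eps}(\cdot)$, and \cref{lem:Sobolev_RKHS_properties} (parts 2 and 3) puts them in $\cH$ with norm bounded by their sup-norm. Concretely, I would reorder the factors as
\[
T_{\hat\mu}^2-T_\mu^2=(T_{\hat\mu}-T_\mu)T_{\hat\mu}+T_\mu(T_{\hat\mu}-T_\mu),
\]
so that $T_{\hat\mu}-T_\mu$ always acts on an element of $\cH$. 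For $h\in\cH$, the Banach algebra bound \eqref{eq:Banach_algebra} gives
\[
\Big|\int\xi_\mu(x,y)h(y)\,d(\hat\mu-\mu)(y)\Big|\le K_2\sup_x\|\xi_\mu(x,\cdot)\|_\cH\,\|h\|_\cH\,\|\hat\mu-\mu\|_{\ell^\infty(\cH_1)} .
\]
The quantity $\sup_x\|\xi_\mu(x,\cdot)\|_\cH$ is finite by \eqref{eq:exp_EOT_potential_bound} and smoothness of the Gibbs kernel. If this reordering leaves a factor acting on a bare $\cC(\cY)$ function, I would instead work on $\cH_\mu/\R$ and use \citet[Thm.~4.3]{lavenant2024riemannian}.

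\textbf{Step 2 (uniform invertibility).} I need $\|R_{\hat\mu}\|$ bounded in probability. By the perturbation bound from Step 1, $\|T_{\hat\mu}^2-T_\mu^2\|\to0$ in probability. Then the Neumann series $R_{\hat\mu}=R_\mu\sum_k\big((T_{\hat\mu}^2-T_\mu^2)R_\mu\big)^k$ converges on an event of probability tending to one, which gives $\|R_{\hat\mu}\|\le 2\|R_\mu\|$ there. The finiteness of $\|R_\mu\|$ itself is given by \citet[Thm.~3.8]{lavenant2024riemannian}.

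\textbf{Step 3 (kernel stability).} Since $\xi_\mu=\exp\big((\varphi_1^{(\mu,\mu)}\oplus\varphi_0^{(\mu,\mu)}-c)/\eps\big)$ and the potentials are uniformly bounded, the mean value theorem gives
\[
\|\xi_{\hat\mu}-\xi_\mu\|_\infty\lesssim\sum_i\|\varphi_i^{(\hat\mu,\hat\mu)}-\varphi_i^{(\mu,\mu)}\|_\infty .
\]
By Hadamard differentiability of the potentials (\cref{lem:EOT_potential_derivative_C}) and the functional delta method applied to $n^r(\hat\mu-\mu)=\cO_p(1)$, this is $\cO_p(n^{-r})$.

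Combining Steps 1–3 with the bounded operator norms, each term in the decomposition is $\cO_p(n^{-r})$, which proves the claim.
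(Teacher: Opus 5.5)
Your Step~2 fails, and for the reason you yourself flag as the ``main obstacle'' in Step~1. The Neumann series $R_{\hat\mu}=R_\mu\sum_k\big((T_{\hat\mu}^2-T_\mu^2)R_\mu\big)^k$ needs $\|T_{\hat\mu}^2-T_\mu^2\|\to0$ in operator norm on $\cC(\cY)/\R$, i.e.\ uniformly over \emph{arbitrary} continuous inputs. Your Step~1 estimate, however, controls $(T_{\hat\mu}-T_\mu)h$ only through $\|h\|_\cH$. The claimed convergence is in fact false in general.

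Take $\hat\mu$ to be the empirical measure of $n$ draws from a diffuse $\mu$; this satisfies the hypothesis with $r=1/2$. Fix a smooth $s$, and let $f$ be continuous with $\|f\|_\infty\le1$, equal to $s(y_i)$ at the sample points, and supported in small balls around them of total $\mu$-mass $o(1)$. Then $T_\mu f\approx0$ while $T_{\hat\mu}f\approx T_\mu s$ uniformly. Hence $(T_{\hat\mu}-T_\mu)f\approx T_\mu s$ and $(T_{\hat\mu}^2-T_\mu^2)f\approx T_\mu^2 s$, neither of which is constant or shrinking. So perturbing around $R_\mu$ in this norm gives no bound on $\|R_{\hat\mu}\|$.

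The fix is to bound $R_\nu$ uniformly rather than perturbatively, which is what the paper does. Each $T_\nu$ is a Markov operator. Its kernel satisfies $\xi_\nu(x,y)/\xi_\nu(x',y)\ge e^{-2\|c\|_\infty/\eps}$, since the Schr\"odinger equation gives $|\varphi(x)-\varphi(x')|\le\|c\|_\infty$. Hence its Dobrushin coefficient is at most $q:=1-e^{-2\|c\|_\infty/\eps}<1$, so $\|T_\nu\|_{\cC(\cY)/\R\to\cC(\cY)/\R}\le q$ for every $\nu\in\cP(\cY)$ (cf.\ \citet{lavenant2024riemannian}, Prop.~3.7). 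This gives $\|R_{\hat\mu}\|\le(1-q^2)^{-1}$ deterministically.

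With that substitution the rest of your argument closes, and your Step~1 is in one respect more careful than the paper's. The paper bounds $\|T_{\hat\mu}-T_\mu\|$ over the whole unit ball of $\cC(\cY)$ by calling that ball compact, which it is not; the example above shows that bound cannot hold. You instead let $T_{\hat\mu}-T_\mu$ act only on $R_\mu H_\mu\gamma=H_\mu\gamma+T_\mu^2R_\mu H_\mu\gamma$ and on $T_{\hat\mu}R_\mu H_\mu\gamma$. Both lie in $\cH$ with norm $\lesssim\|\gamma\|_{\TV}$, which is what actually makes the resolvent term $\cO_p(n^{-r})$.

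For that norm bound you need $\|T_\nu u\|_\cH\lesssim\|u\|_\infty$ for merely continuous $u$. This follows by differentiating the Gaussian factor of $\xi_\nu$ under the integral, not from part~3 of the Sobolev lemma, which assumes $u\in\cH$. Your sup-norm bound on $H_{\hat\mu}-H_\mu$ is simpler than the paper's Gaussian-RKHS factorization and suffices for the total-variation operator norm. Step~3 coincides with the paper's delta-method argument.
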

\begin{proof}

Consider the following breakdown:
    \begin{align*}
        \norm{K_{\hat{\mu}} - K_\mu}_{\cM_0(\cY) \to \cC(\cY) / \R} &= \norm{(I - T_{\hat{\mu}}^2)^{-1} H_{\hat{\mu}} - (I - T_\mu^2)^{-1} H_\mu}_{\cM_0(\cY) \to \cC(\cY) / \R}\\
        &\le \norm{(I - T_{\hat{\mu}}^2)^{-1}}_{\cC(\cY) / \R \to \cC(\cY) / \R}\,\norm{(H_{\hat{\mu}} - H_\mu)}_{\cM_0(\cY) \to \cC(\cY) / \R}\\
        & \quad + \norm{(I - T_{\hat{\mu}}^2)^{-1} - (I - T_{{\mu}}^2)^{-1}}_{\cC(\cY) / \R \to \cC(\cY) / \R} \norm{H_\mu}_{\cM_0(\cY) \to \cC(\cY) / \R}.
    \end{align*}
    Since centering (to find the representor element in $\cC(\cY) / \R$) is a contraction, we have that
    \begin{align*}
        \norm{K_{\hat{\mu}} - K_\mu}_{\cM_0(\cY) \to \cC(\cY) / \R} &\leq \underbrace{\norm{(I - T_{\hat{\mu}}^2)^{-1}}_{\cC(\cY) / \R \to \cC(\cY) / \R}\,\norm{(H_{\hat{\mu}} - H_\mu)}_{\cM_0(\cY)  \to \cC(\cY)}}_{\textbf{(i)}}\\
        & \quad + \underbrace{\norm{(I - T_{\hat{\mu}}^2)^{-1} - (I - T_{{\mu}}^2)^{-1}}_{\cC(\cY) / \R \to \cC(\cY) / \R} \norm{H_\mu}_{\cM_0(\cY)  \to \cC(\cY)}}_{\textbf{(ii)}}.
    \end{align*}
    Let $\cG$ be the Gaussian RKHS with kernel $g_\eps: (y_1,y_2) \mapsto \exp(- \tfrac{1}{2\eps}\|y_1 - y_2 \|^2)$ and $\mathfrak{g}$ be the Gaussian KME operator: for any $\mu \in \cP(\cY)$, $\mathfrak{g}(\mu) := \int g_\eps(y, \cdot) \, d\mu(y)$. Define the multiplicative operator $M_u: \ell^\infty(\cH_1) \to \ell^\infty(\cH_1)$ given by $M_u \mu = u \,\mu$. Let $b_\mu := \exp(\varphi_i^{(\mu, \mu)} / \eps)$ for any $\mu \in \cP(\cY)$, then $\xi_\mu = (b_\mu \otimes b_\mu) \, g_\eps$.
    
    \paragraph{Step 1: Control (i).} We can uniformly upperbound the operator norm $\|{(I - T_{\hat{\mu}}^2)^{-1}}\|_{\cC(\cY) / \R \to \cC(\cY) / \R}$ using uniform bounds on the spectrum of $T_{\hat{\mu}}: \cC(\cY) / \R \to \cC(\cY) / \R$ in \citet[Prop.~3.7]{lavenant2024riemannian} and the Neumann series expansion of $(I - T_{\hat\mu}^2)^{-1}$. This shows that there exists $q \in (0,1)$ such that $\|(I - T_\nu^2)^{-1}\| \leq 1 / (1 - q^2)$ on $\cC(\cY) / \R$ for all $\nu$. 
    Now it remains to control the term $\|H_{\hat{\mu}} - H_\mu\|_{\cM_0(\cY)}$. First, trivially, $\|H_{\hat{\mu}} - H_\mu\|_{\cM_0(\cY)  \to \cC(\cY)} \leq \|H_{\hat{\mu}} - H_\mu\|_{\cM(\cY)  \to \cC(\cY)}$. Then, for any $\gamma \in \cM(\cY)$,
    \begin{align*}
        \|H_{\hat{\mu}}\gamma - H_\mu \gamma\|_{\cC(\cY)} &= \norm{b_{\hat{\mu}} \mathfrak{g}(M_{b_{\hat{\mu}}} \gamma) - b_{{\mu}} \mathfrak{g}(M_{b_{{\mu}}} \gamma)}_{\cC(\cY)}\\
        &\le \norm{b_{\hat{\mu}} - b_{{\mu}} }_{\cC(\cY)} \norm{\mathfrak{g}(M_{b_{\hat{\mu}}} \gamma)}_{\cC(\cY)} + \norm{b_{{\mu}} }_{\cC(\cY)} \norm{\mathfrak{g}(M_{b_{\hat{\mu}}} \gamma) - \mathfrak{g}(M_{b_{{\mu}}} \gamma)}_{\cC(\cY)}\\
        &\leq \textcolor{blue}{\norm{b_{\hat{\mu}} - b_{{\mu}} }_{\cC(\cY)}} \textcolor{teal}{\norm{\mathfrak{g}(M_{b_{\hat{\mu}}} \gamma)}_{\cG}} + \norm{b_{{\mu}} }_{\cC(\cY)} \textcolor{purple}{\norm{\mathfrak{g}(M_{b_{\hat{\mu}}} \gamma) - \mathfrak{g}(M_{b_{{\mu}}} \gamma)}_{\cG}}.
    \end{align*}
    The last inequality follows from the fact that, for any $f \in \cG$, $\|f\|_{\cC(\cY)} = \sup_{y} |f(y)| \leq \|f\|_{\cG} \sup_{y} \sqrt{g_\eps(y,y)} = \|f\|_{\cG}$. We tackle the terms above separately. 

    \begin{itemize}
        \item \textbf{Controlling $\textcolor{teal}{\norm{\mathfrak{g}(M_{b_{\hat{\mu}}} \gamma)}_{\cG}}$.} We can bound the $\ell^\infty(\cG_1)$ norm of measures $\{M_{b_\nu} \gamma, \nu \in \cP(\cY)\}$ by a uniform constant times $\|\gamma\|_{\ell^\infty(\cG_1)}$. Consider the set of functions in $\cC(\cY)$, $U := \{{b_\nu}: \nu \in \cP(\cY)\}$. We know that $\{b_\nu: \nu \in \cP(\cY)\}$ is a uniformly bounded subset of $\cC(\cY)$ in supnorm \citep[Lem.~1]{goldfeld2024limit}. Let $C = \sup_{\nu}\|b_\nu\|_{\cC(\cY)}$.
    We now show that \(U\) is a compact subset of $\cC(\cY)$. Indeed, \(\cP(\cY)\) is tight, and hence Prokhorov's theorem \citep[Thm.~5.1]{billingsley2013convergence} gives relative compactness. Since weak limits of probability measures on \(\cY\) are again probability measures, \(\cP(\cY)\) is closed, and therefore compact. The map $\nu \mapsto {b_\nu}$ is continuous from \(\cP(\cY)\) (equipped with the topology of weak convergence) into \(\cC(\cY)\) \citep[Lem.~1]{goldfeld2024limit}. 
    Therefore, it follows that $U$ is the continuous image of a compact set. Hence $U$ is compact in \(\cC(\cY)\). At this point, we can use \citet[Lem.~22]{kokot2025coreset} to show that there exists a constant $C_1 > 0$ such that for all $\gamma \in \cM(\cY) \subset \ell^\infty(\cG_1)$ and $\nu\in \cP(\cY)$,
    \[
    \| M_{b_\nu} \gamma \|_{\ell^\infty(\cG_1)} \leq C_1 \| \gamma \|_{\ell^\infty(\cG_1)}.
    \]
    Therefore, $\textcolor{teal}{\norm{\mathfrak{g}(M_{b_{\hat{\mu}}} \gamma)}_{\cG}}\leq C_1 \|\gamma\|_{\ell^\infty(\cG_1)}$.

    \item \textbf{Controlling $\textcolor{blue}{\norm{b_{\hat{\mu}} - b_{{\mu}} }_{\cC(\cY)}}$.} From \citet[Thm.~3]{goldfeld2024limit}, we know that the map $\Phi: \cP(\cY) \times \cP(\cY) \to \cC(\cY) \times \cC(\cY)$ given by $(\mu, \nu) \mapsto (\varphi_1^{(\mu, \nu)}, \varphi_0^{(\mu, \nu)})$ is Hadamard differentiable. Further the function $b: \cC(\cY) \to \cC(\cY)$ given by $u \mapsto \exp(u/\eps)$ is Fréchet differentiable on $\cC(\cY)$ with derivative given by the linear operator $b^\prime [u](h) = \tfrac{1}{\eps} \exp(u/\eps) h$. Hence $b$ is Hadamard differentiable and by chain rule, the mapping $(\mu, \nu) \mapsto (\exp(\varphi_1^{(\mu, \nu)} / \eps), \exp(\varphi_0^{(\mu, \nu)} / \eps))$ is Hadamard differentiable. 
    
    This implies that the function $\mu \mapsto b_\mu$ is Hadamard differentiable and there exists a continuous linear operator $b^\prime[\mu]: \cC(\cY) \to \cC(\cY)$ such that along the path $\mu_t = \mu + t \gamma_t$ with $\gamma_t \to \gamma$ in $\ell^\infty(\cH_1)$ for some $\gamma \in \cM_{0, \mu}$,
    \[
    \lim_{t \to 0} \frac{\norm{b_{\mu_t} - b_\mu - t b^\prime[\mu](\gamma_t)}_{\cC(\cY)}}{t} = 0.
    \]
    By functional delta theorem, in supremum norm,
    \[
    n^r(b_{\hat{\mu}} - b_\mu) = b^\prime[\mu](n^r (\hat{\mu} - \mu)) + o_p(1).
    \]
    Because $b^\prime[\mu]$ is a tight continuous operator, $b^\prime[\mu](n^r (\hat{\mu} - \mu))$ is tight, and therefore $\textcolor{blue}{\| (b_{\hat{\mu}} - b_\mu)\|_{\cC(\cY)}} = \cO_p(n^{-r})$. 
    
    \item \textbf{Controlling $\textcolor{purple}{\norm{\mathfrak{g}(M_{b_{\hat{\mu}}} \gamma) - \mathfrak{g}(M_{b_{{\mu}}} \gamma)}_{\cG}}$.} By \citet[Cor.~23]{kokot2025coreset}, $\|b_{\hat{\mu}} - b_\mu\|_{\cC(\cY)} = \cO_p(n^{-r})$ immediately implies that $\norm{M_{b_{\hat{\mu}}} - M_{b_{{\mu}}}}_{\ell^\infty(\cG_1) \to \ell^\infty(\cG_1)} = \cO_p(n^{-r})$.
    As a consequence, $\textcolor{purple}{\norm{\mathfrak{g}(M_{b_{\hat{\mu}}} \gamma) - \mathfrak{g}(M_{b_{{\mu}}} \gamma)}_{\cG}} = \cO_p(n^{-r}) \|\gamma\|_{\ell^\infty(\cG_1)}$.
    \end{itemize}
    Altogether, term (i) is $\cO_p(n^{-r}) \|\gamma\|_{\ell^\infty(\cG_1)}$.

    \paragraph{Step 2: Control (ii).} We bound $\| H_\mu\|_{\cM_0(\cY) \to \cC(\cY)}$ as following. For any $\gamma \in \cM_0(\cY) \subset \ell^\infty(\cG_1)$, 
    \[
    \iprod{\gamma, H_\mu \gamma} = \| M_{b_\mu} \gamma\|_{\ell^{\infty}(\cG_1)}^2 \leq C_1^2 \|\gamma\|_{\ell^\infty(\cG_1)}^2.
    \]
    Therefore, the operator norm must be uniformly bounded.
    For the difference term, we consider the following identity
    \begin{align*}
        \norm{(I - T_{\hat{\mu}}^2)^{-1} - (I - T_{{\mu}}^2)^{-1}} &=  \norm{(I - T_{\hat{\mu}}^2)^{-1} \off{T_\mu^2 - T_{\hat{\mu}}^2}(I - T_{{\mu}}^2)^{-1}}\\
        &\leq \norm{(I - T_{\hat{\mu}}^2)^{-1}} \norm{(I - T_{\mu}^2)^{-1}} \norm{T_{\hat{\mu}}^2 - T_\mu^2}\\
        &\leq \norm{(I - T_{\hat{\mu}}^2)^{-1}} \,\norm{(I - T_{\mu}^2)^{-1}} \, \of{\norm{T_{\hat{\mu}}} + \norm{T_\mu}} \, \norm{T_{\hat{\mu}} - T_\mu},
    \end{align*}
    where all operator norms are on the common function space $\cC(\cY)/\R$. As in Step 1, the operator norm of $(I - T_{\hat{\mu}}^2)^{-1}$ is uniformly bounded on $\cC(\cY) / \R$. \citet[Prop.~3.7]{lavenant2024riemannian} show that the operator norm of $T_\nu$ is uniformly bounded by $1$ for all $\nu \in \cP(\cY)$. Since centering is a contraction, $\|{ T_{\hat{\mu}} - T_\mu}\|_{\cC(\cY) / \R \to \cC(\cY) /\R} \leq \|T_{\hat{\mu}} - T_\mu\|_{\cC(\cY) \to \cC(\cY)}$. For any $f$ inside a unit ball in $\cC(\cY)$, 
    \begin{align*}
        \norm{(T_{\hat{\mu}} - T_\mu) f}_{\cC(\cY)} &\leq \| H_{\hat{\mu}} M_f \hat{\mu} - H_{{\mu}} M_f {\mu}\|_{\cC(\cY)}\\
        &\leq \| (H_{\hat{\mu}} - H_\mu) M_f \hat{\mu} + H_\mu M_f (\hat{\mu} - \mu)\|_{\cC(\cY)}\\
        &\leq \| H_{\hat{\mu}} - H_\mu \|_{\cM(\cY) \to \cC(\cY)} \| M_f \hat{\mu} \|_{ \ell^\infty(\cG_1)} + \|H_\mu\|_{\cM(\cY) \to \cC(\cY)} \|M_f(\hat{\mu} - \mu)\|_{\ell^\infty(\cG_1)}.
    \end{align*}
    Since $f$ belongs to a compact subset in $\cC(\cY)$, again by \citet[Lem.~22]{kokot2025coreset}, $\|M_f \gamma\|_{\ell^\infty(\cG_1)} \leq C_1 \|\gamma\|_{\ell^\infty(\cG_1)}$. Using this and the fact that $\|H_{\mu}\|_{\cM_0(\cY) \to \cC(\cY)} \leq C_1^2$,
    \begin{align*}
         \norm{(T_{\hat{\mu}} - T_\mu) f}_{\cC(\cY)} & \lesssim   \| H_{\hat{\mu}} - H_\mu \|_{\cM(\cY) \to\cC(\cY)} + \|\hat{\mu} - \mu\|_{\ell^\infty(\cG_1)} = \cO_p(n^{-r}).
    \end{align*}
    
    Together, terms (i) and (ii) are $\cO_p(n^{-r})$ and this completes the proof.
\end{proof}

\begin{lemma}[Rate for second-order drift term]\label{lem:drift_term}
    Let $P^* \in \ucH_0$ and assume that $\|n^r(\hat{\mu} - \mu)\|_{\ell^\infty(\cH_1)} = \cO_p(1)$ for some $r > 1/4$. Further, assume that the estimated nuisance parameters satisfy 
    \begin{align*}
        P^*_{X}\off{x \mapsto \of{\frac{e_{\hP}(a \mymid x)}{e_{P^*}(a \mymid x)}-1}\big({\hP_{Y \mymid a, x} - P^*_{Y \mymid a, x}}\big) k_{\hP_1}(\cdot, y)} = o_p(n^{-1/2}),
    \end{align*}
    uniformly in $y \in \cY$ and $a \in \{1,0\}$. Then the drift term satisfies
    \[
    \ucD_n = (P_n - P^*) \of{\dot{\STE}_{\hP}(\cdot) + \medint\int \ddot{\STE}_{\hP}(z, \cdot) \, dP^*(z)} = o_p(n^{-1}).
    \]
\end{lemma}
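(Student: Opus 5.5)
By \cref{lem:sufficient_condition_for_drift_term}, applied with $f_n := \dot{\STE}_{\hP} + \int \ddot{\STE}_{\hP}(z,\cdot)\,dP^*(z)$, it suffices to show $\|f_n\|_{L^2(P^*)} = o_p(n^{-1/2})$. This function depends only on the auxiliary sample, so it is independent of $P_n$. The plan is to show that $f_n$ is, up to a small error, the first-order influence function of a linearization that vanishes at the null.

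\textbf{Step 1: compute the $P^*$-marginal of $\ddot{\STE}_{\hP}$.} Write $\ddot{\STE}_{\hP} = Q_{\hP}^{\otimes 2} g_{\hP}$ with $g_{\hP} = \omega_{\hP}^{\otimes 2}\odot k_{\hP_1}$. Then
\[
\medint\int \ddot{\STE}_{\hP}(z,\cdot)\,dP^*(z) = Q_{\hP}\big[(Q_{P^*,\hP}\otimes I)\, g_{\hP}\big].
\]
The same algebra as in the proof of \cref{lem:first_order_remainder_term} (the $e^*/\hat e$ manipulation) gives, for each fixed second argument $z'$,
\[
(Q_{P^*,\hP}\otimes I) g_{\hP}(\cdot,z') = \omega_{\hP}(a',x')\Big[\textstyle\int k_{\hP_1}(y,y')\,d(P_1^*-P_0^*-\hP_1+\hP_0)(y) + \mathrm{DR}_n(y')\Big].
\]
Here $\mathrm{DR}_n(y')$ collects the doubly robust cross terms $P^*_X[(\hat e/e^*-1)(\hP_{Y\mid a,X}-P^*_{Y\mid a,X})k_{\hP_1}(\cdot,y')]$. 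By assumption, $\mathrm{DR}_n$ is $o_p(n^{-1/2})$ uniformly in $y'$. Under the null, $P_1^*=P_0^*$, so the main term reduces to $-\int k_{\hP_1}(y,\cdot)\,d(\hP_1-\hP_0)(y)$, which represents $-K_{\hP_1}(\hP_1-\hP_0)$.

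\textbf{Step 2: match with $\dot{\STE}_{\hP}$.} We have $\dot{\STE}_{\hP} = Q_{\hP} f_{\hP}$ with $f_{\hP} = \upsilon^{(\hP_1,\hP_0)}_a(y)/\hat e(a\mid x)$. The integrand of $\ucD_n$ is therefore
\[
Q_{\hP}\Big[\big(a,x,y\big)\mapsto \frac{1}{\hat e(a\mid x)}\big(\wh\upsilon_a(y) - (2a-1)\,K_{\hP_1}(\hP_1-\hP_0)(y)\big)\Big] + Q_{\hP}\big[\omega_{\hP}\,\mathrm{DR}_n\big].
\]
Apply \cref{cor:upsilon_hadamard_diff} at $(\mu,\mu)=(P_1^*,P_1^*)$. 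It gives $\wh\upsilon_1 = K_{P_1^*}(\hP_1-\hP_0) + \text{const} + O_p(\|\hP-P^*\|^2)$ and the opposite-sign expansion for $\wh\upsilon_0$. Constants are annihilated by $Q_{\hP}$, since $Q_{\hP}$ applied to a function of $(a,x)$ times $1/\hat e$ gives constant pieces that cancel under the centering $I-\hP_{Y\mid A,X}$. The operator $Q_{\hP}$ is $L^2(P^*)$-bounded uniformly by strong positivity and the density ratio bound, following Steps 1 to 4 of \cref{lem:first_order_drift_term}. The first bracket is then controlled in sup norm by three pieces:
\[
\|K_{P_1^*}-K_{\hP_1}\|\,\|\hP_1-\hP_0\|_{\ell^\infty(\cH_1)}, \qquad \|\hP_1-\hP_0\|_{\ell^\infty(\cH_1)}^2, \qquad \sup_{y'}\abs{\mathrm{DR}_n(y')}.
\]

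\textbf{Step 3: rates.} Under the null, $\|\hP_1-\hP_0\|_{\ell^\infty(\cH_1)} \le \|\hP_1-P_1^*\| + \|\hP_0-P_0^*\| = \cO_p(n^{-r})$. \cref{lem:sinkhorn_operator_stability} gives $\|K_{\hP_1}-K_{P_1^*}\| = \cO_p(n^{-r})$. Both products are therefore $\cO_p(n^{-2r}) = o_p(n^{-1/2})$ because $r>1/4$. The $\mathrm{DR}_n$ term is $o_p(n^{-1/2})$ by hypothesis.

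\textbf{Main obstacle.} The delicate step is making the second-order remainder in Step 2 rigorous. We need $\wh\upsilon_a - (\pm K_{P_1^*}(\hP_1-\hP_0)+c) = o_p(n^{-1/2})$ in sup norm. Hadamard differentiability alone gives only $o_p(n^{-r})$. The plan is to invoke the bounded second-order Hadamard derivative of the potentials (as used in \cref{lem:first_order_remainder_term}) via the functional delta method to obtain $\cO_p(n^{-2r})$. A secondary technical point is passing between the sup-norm, $\cC(\cY)/\R$ representatives and $L^2(P^*)$ bounds through $Q_{\hP}$. This works because $Q_{\hP}$ kills constants and is bounded uniformly.
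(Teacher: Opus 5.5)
Your proposal is correct and follows the paper's proof essentially step for step. You reduce via \cref{lem:sufficient_condition_for_drift_term}, compute the $P^*$-marginal of $\ddot{\STE}_{\hP}$ to get $\wh{\upsilon}_a \mp K_{\hP_1}(\hP_1-\hP_0)$ plus doubly robust cross terms, peel off $(K_{\hP_1}-K_{P_1^*})(\hP_a-P_a^*)$ with \cref{lem:sinkhorn_operator_stability}, and bound $\wh{\upsilon}_a \mp K_{P_1^*}(\hP_1-\hP_0)$ by the second-order Hadamard expansion of the potentials.

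Your observation that $Q_{\hP}$ annihilates $(x,a,y)\mapsto c_a/\hat e(a\mid x)$ is a small gain in rigor over the paper. The paper bounds $\zeta_1,\zeta_2$ in sup norm while silently dropping the $\eps\rho\mathbf{1}$ term of \cref{cor:upsilon_hadamard_diff} and the modulo-constants ambiguity of $K_\mu$. On the other hand, you do not need the density-ratio bound, which is not a hypothesis of this lemma: sup-norm control of the inner functions already gives the $L^2(P^*)$ bound from strong positivity alone.
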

\begin{proof}
    Since $\hP$ and $P_n$ are constructed from independent samples, the random function $ \dot{\STE}_{\hP}(\cdot) + \medint\int \ddot{\STE}_{\hP}(\cdot, z) \, dP^*(z)$ is independent of $P_n$. By \cref{lem:sufficient_condition_for_drift_term}, it suffices to show that $\big\|{\dot{\STE}_{\hP}(\cdot) + \medint\int \ddot{\STE}_{\hP}(\cdot, z) \, dP^*(z)}\big\|_{L^2(P^*)} = o_p(n^{-1/2})$ to show that $\ucD_n = o_p(n^{-1})$. 
   Recall from \eqref{eq:first-order-EIF} and \eqref{eq:second-order-EIF} the formulations of $\dot{\STE}_{\hP}$ and $\Ddot{\STE}_{\hP}$. Using $Q$ operator notation in \eqref{eq:Q_operators}, at evaluation points $z_j = (x_j,a_j,y_j) $ for $j = 1,2$ in $\cZ$, we have that 
   \begin{align*}
       \dot{\STE}_{\hP} &= Q_{\hP} f_{\hP}, \quad f_{\hP}(z_1) = \frac{a_1}{\hat{e}(1 \mid x_1)} \wh{\upsilon}_1(y_1) + \frac{1-a_1}{\hat{e}(0 \mid x_1)} \wh{\upsilon}_0(y_1), \\
       \ddot{\STE}_{\hP} &= Q_{\hP}^{\otimes 2} g_{\hP}, \quad g_{\hP}(z_1, z_2) = \omega_{\hP}(a_1, x_1) \, \omega_{\hP}(a_2, x_2) \, k_{\hP_1}(y_1, y_2), 
   \end{align*}
   where $\omega_{\hP}(a, x) = \tfrac{a}{\hat{e}(1 \mid x)} - \tfrac{1-a}{\hat{e}(0 \mid x)}$.
   Note that, with $k_{\hP_1} (y, \cdot) \otimes \omega_{\hP} : (y',a',x')\mapsto k_{\hP_1} (y, y') \otimes \omega_{\hP}(a',x')$,

    \begin{align*}
    &\dot{\STE}_{\hP}(\cdot) + \int \ddot{\STE}_{\hP}(z, \cdot) \, dP^*(z)= Q_{\hP} f_{\hP} + (Q_{\hP} \otimes Q_{P^*, \hP}) g_{\hP}\\
       &= Q_{\hP} \off{(y,a,x) \mapsto \frac{a}{\hat{e}(1 \mymid x)} \wh{\upsilon}_1(y) + \frac{1-a}{\hat{e}(0 \mymid x)}  \wh{\upsilon}_0(y) +  \omega_{\hP}(a, x) \,Q_{P^*, \hP} \, \of{k_{\hP_1} (y, \cdot) \otimes \omega_{\hP}}}\\
       &= Q_{\hP} \off{(y,a,x) \mapsto  \frac{a}{\hat{e}(1 \mymid x)} \wh{\upsilon}_1(y) + \frac{1-a}{\hat{e}(0 \mymid x)}  \wh{\upsilon}_0(y) + \of{\frac{a}{\hat{e}(1 \mid x)} - \frac{1-a}{\hat{e}(0 \mid x)}} \,Q_{P^*, \hP} \, \of{k_{\hP_1} (y, \cdot) \otimes \omega_{\hP}}}\\
        &= Q_{\hP} \off{(y,a,x) \mapsto \frac{a}{\hat{e}(1 \mymid x)}\zeta_1(y)}+ Q_{\hP} \off{(y,a,x) \mapsto \frac{1-a}{\hat{e}(0 \mymid x)} \zeta_2(y) },
    \end{align*}
    where
    \[
    \zeta_1(y) :=  \wh{\upsilon}_1(y) + Q_{P^*, \hP} \of{k_{\hP_1} (y, \cdot) \otimes \omega_{\hP}}\quad \tand \quad \zeta_2(y) :=  \wh{\upsilon}_0(y) - Q_{P^*, \hP} \of{k_{\hP_1} (y, \cdot) \otimes \omega_{\hP}}.
    \]
    Expanding the form of $Q_{\hP}$, we have that the above is equal to
    \begin{align*}
        \dot{\STE}_{\hP}(z) + \int \ddot{\STE}_{\hP}(z^\prime, z) \, dP^*(z^\prime) &= \frac{a}{\hat{e}(1 \mymid x)}(I - \hP_{Y \mymid A,X})\off{(y^\prime, a^\prime, x^\prime) \mapsto \zeta_1(y^\prime)}(z) \\
        & \quad+ \frac{1-a}{\hat{e}(0 \mymid x)}(I - \hP_{Y \mymid A,X})\off{(y^\prime, a^\prime, x^\prime) \mapsto \zeta_2(y^\prime)}(z)\\
        &\quad + (I - \hP_X) \hP_{Y \mymid 1, X} \off{ (y^\prime, a^\prime, x^\prime) \mapsto \zeta_1(y^\prime)}(z)\\
        & \quad + (I - \hP_X) \hP_{Y \mymid 0, X} \off{(y^\prime, a^\prime, x^\prime) \mapsto \zeta_2(y^\prime)}(z).
    \end{align*}
    Note that if $\|\zeta_1\|_\infty = \sup_{y \in \cY} |\zeta_1(y)| = o_p(n^{-1/2})$, then
    \begin{align*}
        &\norm{z \mapsto \frac{a}{\hat{e}(1 \mymid x)}(I - \hP_{Y \mymid A,X})\off{z^\prime \mapsto \zeta_1(y^\prime)}(z)}_{L^2(P^*)}^2\\
        &= \int \frac{a^2}{\hat{e}(1 \mymid x)^2} \of{\zeta_1(y) - \int \zeta_1(y^\prime) \hat{p}_{Y \mymid A, X}(y^\prime \mymid a, x)\, dy^\prime}^2 dP^*(z)\\
        &\leq 2 \|\zeta_1\|_\infty^2 \int \frac{a^2}{\hat{e}(1 \mymid x)^2} \, dP^*(x,a,y) \leq \frac{2}{E} \|\zeta_1\|_\infty^2,
    \end{align*}
    and 
    \begin{align*}
        \norm{(I- \hP_X)\hP_{Y \mymid 1, X} \off{z^\prime \mapsto \zeta_1(y^\prime)}}_{L^2(P^*)} &\leq \norm{\hP_{Y \mymid 1, X} \off{z^\prime \mapsto \zeta_1(y^\prime)}}_{L^2(P^*)} + \norm{\hP_X \hP_{Y \mymid 1, X} \off{z^\prime \mapsto \zeta_1(y^\prime)}}_{L^2(P^*)}\\
        &\leq 2\|\zeta_1\|_\infty .
    \end{align*}
    Similarly, the same upperbounds hold for terms involving $\zeta_2$, and therefore,
    \[
    \norm{\dot\STE_{\hP}(\cdot) + \int \ddot{\STE}_{\hP}(z^\prime, \cdot) \, dP^*(z^\prime)}_{L^2(P^*)} \lesssim \|\zeta_1\|_\infty + \|\zeta_2\|_\infty.
    \]
    It remains to show that both $\|\zeta_1\|_\infty$ and $\|\zeta_2\|_\infty$ are $o_p(n^{-1/2})$.
    For any $y$, $\zeta_1(y)$ can be simplified as follows:   
    \begin{align*}
        \zeta_1(y) &=  \wh{\upsilon}_1(y) \\
        &\quad+ \of{P^* - P^*_{A^\prime, X^\prime} \hP_{Y^\prime \mymid A^\prime, X^\prime} + P^*_{X^\prime} \hP_{Y^\prime, A^\prime \mymid X^\prime} - \hP} \off{(y^\prime, a^\prime, x^\prime) \mapsto \of{\frac{a^\prime}{\hat{e}(1 \mymid x^\prime)} - \frac{1-a^\prime}{\hat{e}(0 \mymid x^\prime)}} k_{\hP_1} (y, y^\prime)}\\
        &=  \wh{\upsilon}_1(y) + P^*_{X^\prime} \off{x^\prime \mapsto \frac{e^*(1 \mymid x^\prime)}{\hat{e}(1 \mymid x^\prime)} P^*_{Y^\prime \mymid 1, x^\prime} \, k_{\hP_1}(y, \cdot) - \frac{e^*(0 \mymid x^\prime)}{\hat{e}(0 \mymid x^\prime)} P^*_{Y^\prime \mymid 0, x^\prime} \, k_{\hP_1}(y, \cdot)} \\
        & \quad -   P^*_{X^\prime} \off{x^\prime \mapsto \frac{e^*(1 \mymid x^\prime)}{\hat{e}(1 \mymid x^\prime)} \hP_{Y^\prime \mymid 1, x^\prime} \, k_{\hP_1}(y, \cdot) - \frac{e^*(0 \mymid x^\prime)}{\hat{e}(0 \mymid x^\prime)} \hP_{Y^\prime \mymid 0, x^\prime} \, k_{\hP_1}(y, \cdot)}\\
        & \quad + P^*_{X^\prime} \off{x^\prime \mapsto \hP_{Y^\prime \mymid 1, x^\prime} \, k_{\hP_1}(y, \cdot) - \hP_{Y^\prime \mymid 0, x^\prime} k_{\hP_1}(y, \cdot)} - (\hP_{1} - \hP_{0}) \, k_{\hP_1}(y, \cdot)\\
        &= \wh{\upsilon}_1(y) - K_{\hP_1}(\hP_1 - \hP_0)(y)\\
        &\quad +  P^*_{X^\prime} \off{x^\prime \mapsto \of{\frac{e^*(1 \mymid x^\prime)}{\hat{e}(1 \mymid x^\prime)}-1} P^*_{Y^\prime \mymid 1, x^\prime}\, k_{\hP_1}(y, \cdot) - \of{\frac{e^*(0 \mymid x^\prime)}{\hat{e}(0 \mymid x^\prime)}-1} P^*_{Y^\prime \mymid 0, x^\prime} \, k_{\hP_1}(y, \cdot)} \\
        &\quad -  P^*_{X^\prime} \off{x^\prime \mapsto \of{\frac{e^*(1 \mymid x^\prime)}{\hat{e}(1 \mymid x^\prime)}-1} \hP_{Y^\prime \mymid 1, x^\prime} k_{\hP_1}(y, \cdot) - \of{\frac{e^*(0 \mymid x^\prime)}{\hat{e}(0 \mymid x^\prime)}-1} \hP_{Y^\prime \mymid 0, x^\prime} k_{\hP_1}(y, \cdot)} \\
        & \quad + (P_1^* - P_0^*) k_{\hP_1}(y, \cdot)
    \end{align*}
    Since the null holds by assumption, $P_1^* = P_0^*$, and so the last additive term is zero. Combining the terms above
    \begin{align*}
        \zeta_1(y) &=  \wh{\upsilon}_1(y) - K_{\hP_1} (\hP_1 - \hP_0) (y) \\
        & \quad + P^*_{X^\prime} \off{x^\prime \mapsto \of{\frac{e^*(1 \mymid x^\prime)}{\hat{e}(1 \mymid x^\prime)}-1} \of{P^*_{Y^\prime \mymid 1, x^\prime} - \hP_{Y^\prime \mymid 1, x^\prime}} k_{\hP_1}(y, \cdot)}\\
        & \quad -  P^*_{X^\prime} \off{x^\prime \mapsto \of{\frac{e^*(0 \mymid x^\prime)}{\hat{e}(0 \mymid x^\prime)}-1} \of{P^*_{Y^\prime \mymid 0, x^\prime} - \hP_{Y^\prime \mymid 0, x^\prime}} k_{\hP_1}(y, \cdot)}.
    \end{align*}
    A similar calculation yields
    \begin{align*}
        \zeta_2(y) &=  \wh{\upsilon}_0(y) + K_{\hP_1}(\hP_{1} - \hP_{0}) (y) \\
        & \quad - P^*_{X^\prime} \off{x^\prime \mapsto \of{\frac{e^*(1 \mymid x^\prime)}{\hat{e}(1 \mymid x^\prime)}-1} \of{P^*_{Y^\prime \mymid 1, x^\prime} - \hP_{Y^\prime \mymid 1, x^\prime}} k_{\hP_1}(y, \cdot)}\\
        & \quad +  P^*_{X^\prime} \off{x^\prime \mapsto \of{\frac{e^*(0 \mymid x^\prime)}{\hat{e}(0 \mymid x^\prime)}-1} \of{P^*_{Y^\prime \mymid 0, x^\prime} - \hP_{Y^\prime \mymid 0, X^\prime}} k_{\hP_1}(y, \cdot)}.
    \end{align*}
    The last two terms in both $\zeta_1$ and $\zeta_2$ involve products of estimation errors for propensity score $e_{P^*}$ and the outcome regression $P^*_{Y \mymid A, X}$. By assumption, these terms are $o_p(n^{-1/2})$ uniformly over $y \in \cY$, and hence negligible. It therefore remains to control the leading components: $ \wh{\upsilon}_1 - K_{\hP_1} (\hP_1 - \hP_0)$ and $ \wh{\upsilon}_0 + K_{\hP_1} (\hP_1 - \hP_0)$. Consider the breakdown
    \[
     \wh{\upsilon}_1 - K_{\hP_1} (\hP_1 - \hP_0) =  \wh{\upsilon}_1 - K_{P_1^*} (\hP_1 - \hP_0) + (K_{P_1^*} - K_{\hP_1})(\hP_1 - P_1^*) - (K_{P_1^*} - K_{\hP_1})(\hP_0 - P_0^*),
    \]
    and 
    \[
       \wh{\upsilon}_0 + K_{\hP_1} (\hP_1 - \hP_0) =  \wh{\upsilon}_0 + K_{P_1^*} (\hP_1 - \hP_0) - (K_{P_1^*} - K_{\hP_1})(\hP_1 - P_1^*) + (K_{P_1^*} - K_{\hP_1})(\hP_0 - P_0^*). 
    \]
    From \cref{lem:sinkhorn_operator_stability}, we have that for each $i, j \in \{1,0\}$
    \[
    \norm{(K_{\hP_i} - K_{P_i^*})(\hP_j - P^*_j)}_{\cC(\cY)} \leq \norm{K_{\hP_i} - K_{P_i^*}}_{\cM_0(\cY) \to \cC(\cY) / \R} \norm{\hP_j - P^*_j}_{\ell^\infty(\cH)} = \cO_p(n^{-2r}).
    \]
    Since $r > 1/4$, we have that $(K_{\hP_1} - K_{P_1^*})(\hP_a - P_a^*)$ is $o_p(n^{-1/2})$ in the $L^2(P^*)$ sense. 
    Finally, by \cref{cor:upsilon_hadamard_diff}, the quantities $\wh{\upsilon}_1 - K_{P_1^*} (\hP_1 - \hP_0)$ and $\wh{\upsilon}_0 + K_{P_1^*} (\hP_1 - \hP_0)$ constitute the second-order remainder terms in the Hadamard expansion of the centered entropic potentials. From assumption, $\|n^r (\hP_1 - P_1^*)\|_{\ell^\infty(\cH_1)}$ and $\|n^r (\hP_0 - P_0^*)\|_{\ell^\infty(\cH_1)}$ are $\cO_p(1)$ for some $r > 1/4$. Then by functional delta theorem and second-order Hadamard differentiability of entropic potentials \citep[Theorem 4]{goldfeld2024limit}, these remainders are $\cO_p(n^{-2r})$ in $\cC(\cY)$ topology, and hence $o_p(n^{-1/2})$ in $L^2(P^*)$ norm.
\end{proof}

\begin{lemma}[Rate for second-order U-process term]\label{lem:U-process_term}
    If $\|C_{P^*} \ddot{\STE}_{\hP} - C_{P^*} \ddot{\STE}_{P^*}\|_{L^2(P^* \otimes P^*)} \xrightarrow[]{p} 0$, then the U-process term
    \[
    \ucU_n = \frac{1}{2} \U_n \of{C_{P^*} \ddot{\STE}_{\hP} - C_{P^*} \ddot{\STE}_{P^*}} = o_p(n^{-1}).
    \]
\end{lemma}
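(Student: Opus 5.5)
The argument mirrors \cref{lem:sufficient_condition_for_drift_term}, with a conditional second-moment bound for degenerate U-statistics taking the place of the variance of an empirical mean. Write $h_n := C_{P^*}\ddot{\STE}_{\hP} - C_{P^*}\ddot{\STE}_{P^*}$. Since $\hP$ is built from a sample independent of the one defining $\U_n$, we may condition on the auxiliary sample and treat $h_n$ as a fixed kernel. We will show that for every $\delta>0$, $\Prob(n\abs{\U_n h_n} > \delta) \to 0$.

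The first step is to verify that $h_n$ is, conditionally, a symmetric and completely degenerate kernel under $P^*$. By construction the centering operator $C_{P^*}$ produces kernels $\ell$ with $\int C_{P^*}\ell(z,z')\,dP^*(z) = 0$ for every $z'$, and symmetrically in the other argument. This holds provided $\ell \in L^2(P^*\otimes P^*)$, which we may assume on the event where $\|h_n\|_{L^2}$ is finite. Symmetry of $h_n$ is inherited from the symmetry of $\ddot{\STE}_{\hP}$ and $\ddot{\STE}_{P^*}$, because $k_{\hP_1}$ and $k_{P_1^*}$ are symmetric and the operator $Q^{\otimes 2}$ acts identically on both coordinates. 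For a degenerate symmetric kernel, the Hoeffding orthogonality identity gives
\[
\E\!\left[(\U_n h_n)^2 \,\middle|\, \hP\right] = \binom{n}{2}^{-1}\|h_n\|_{L^2(P^*\otimes P^*)}^2 \le \frac{4}{n^2}\,\|h_n\|_{L^2(P^*\otimes P^*)}^2 .
\]
The reason is that cross terms $\E[h_n(Z_i,Z_j)h_n(Z_k,Z_l)]$ vanish unless $\{i,j\}=\{k,l\}$, since any unshared index can be integrated out using degeneracy.

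The second step applies conditional Chebyshev to obtain
\[
\Prob\!\left(n\abs{\U_n h_n}>\delta \,\middle|\, \hP\right) \le \min\!\left\{1,\ \frac{4\|h_n\|^2_{L^2(P^*\otimes P^*)}}{\delta^2}\right\}.
\]
By hypothesis the right-hand side tends to zero in probability, and it is bounded by $1$. Taking expectations and using dominated convergence, exactly as in \cref{lem:sufficient_condition_for_drift_term}, yields $\Prob(n\abs{\U_n h_n}>\delta)\to 0$. The conclusion for $\ucU_n = \frac12 \U_n h_n$ then follows immediately.

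The main point of care is the degeneracy step. The cancellation of cross terms depends on $h_n$ being degenerate with respect to $P^*$, the true sampling distribution, rather than $\hP$. This is exactly why the statement is phrased with $C_{P^*}$ applied to $\ddot{\STE}_{\hP}$ instead of relying on the $\hP$-centering built into $\ddot{\STE}_{\hP}$. One must also confirm measurability and that $\U_n$ applied to $C_{P^*}\ddot{\STE}_{\hP}$ is well defined. Both follow from square integrability, which holds with probability tending to one by the assumed convergence, and on the complementary event the probability bound is trivial.
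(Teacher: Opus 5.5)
Your proposal is correct and follows essentially the same argument as the paper. Both condition on the independent sample defining $\hP$, use $P^*$-degeneracy of the $C_{P^*}$-centered kernel to get $\E[(\U_n h_n)^2 \mid \hP] = \binom{n}{2}^{-1}\|h_n\|^2_{L^2(P^*\otimes P^*)}$, apply conditional Chebyshev with the $\min\{1,\cdot\}$ truncation, and conclude by bounded convergence; your explicit constant $4$ corresponds to the paper's $2n/(n-1)$ factor.
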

\begin{proof}
    Denote $f_n := C_{P^*} \ddot{\STE}_{\hP} - C_{P^*} \ddot{\STE}_{P^*}$. We will show that for every $\delta > 0$,
    \[
    \Prob\of{\abs{n \U_n f_n} > \delta} \leq \frac{C }{\delta^2} {\|f_n\|^2_{L^2(P^* \otimes P^*)}}.
    \]
    Because $C_{P^*} \ddot{\STE}_{\hP}$ is $P^*$-degenerate, $\E_{P^*} \off{U_n f_n \mymid \hP} = 0$. Let $Z_1, \dots, Z_n$ be the iid samples from $P^*$ used to construct $\hP$. By Chebyshev's inequality,
    \begin{align*}
        \Prob\of{\abs{n \U_n f_n} > \delta \mymid \hP} & \leq \frac{\E\off{(n \U_n f_n)^2 \mymid \hP}}{\delta^2}\\
        &=\frac{n^2}{\delta^2} \E\off{ \of{ {n \choose 2}^{-1} \sum_{i < j} f_n (Z_i, Z_j)}^2 \Big\vert \, \hP}\\
        &= \frac{2n}{\delta^2 (n-1)} \E\off{ f_n(Z_1, Z_2)^2 \mymid \hP} \leq \frac{C}{\delta^2} \norm{f_n}_{L^2(P^* \otimes P^*)}^2 \quad \text{a.s. for some } C>0.
    \end{align*}
    Taking expectation on both sides,
    \begin{align*}
        \Prob\of{\abs{n \U_n f_n} > \delta } = \E\off{\Prob\of{\abs{n \U_n f_n} > \delta \mymid \hP}} &\leq \E\off{\min\of{1, \frac{C}{\delta^2} \|f_n\|^2_{L^2(P^* \otimes P^*)}}}. 
    \end{align*}
    Since $\|f_n\|^2_{L^2(P^* \otimes P^*)} \xrightarrow[]{p} 0$, the integrand on the right converges to $0$ in probability. Since it is also dominated by the constant function $1$, by the dominated convergence theorem, the right-hand side converges to $0$ as $n\to\infty$.
\end{proof}

\begin{lemma}[Second-order remainder for the Sinkhorn divergence]
\label{lem:sinkhorn_second_order_remainder}
Consider paths $\mu_t=\mu+t\gamma_t^1$ and $\nu_t=\mu+t\gamma_t^2$ in $\cP_\mu$ such that $\gamma_t^i\to \gamma^i$ in $\ell^\infty(\cH_1)$, for
$i=1,2$, with $\gamma^1,\gamma^2\in \cM_{0,\mu}$. Suppose that $S_\eps: (\mu, \nu) \mapsto S_\eps(\mu, \nu)$
is twice differentiable along the path $t \mapsto (\mu+t\gamma_t^1,\mu+t\gamma_t^2)$ for $t$ small enough, and its second derivative satisfies the local
Lipschitz bound
\[
\left|
S_\eps^{\prime\prime}[\mu_t, \nu_t](h_1,h_2)
-
S_\eps^{\prime\prime}[\mu_t,\mu_t](h_1,h_2)
\right|
\leq
C_1 \|h_1-h_2\|^2_{\ell^\infty(\cH_1)} \|\nu_t-\mu_t\|_{\ell^\infty(\cH_1)}
\]
for all $h_1,h_2$ in $\ell^\infty(\mathcal H_1)$ and some constant $C_1>0$. Then
\[
    S_\eps(\mu_t,\nu_t)
    -
    \frac{t^2}{2}
    (\gamma_t^1-\gamma_t^2)K_\mu(\gamma_t^1-\gamma_t^2)
    =
    O(t^3).
\]
\end{lemma}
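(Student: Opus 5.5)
The plan is to use a Taylor expansion with integral remainder along the segment that joins the diagonal point $(\mu_t,\mu_t)$ to $(\mu_t,\nu_t)$. We will not expand around $(\mu,\mu)$ directly. Along this segment the zeroth- and first-order terms vanish exactly, and the Lipschitz hypothesis then lets us move the second derivative back to the diagonal.

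\textbf{Step 1.} Define $h(\tau) := S_\eps(\mu_t, \mu_t + \tau(\nu_t-\mu_t))$ for $\tau\in[0,1]$. Since $\mu_t,\nu_t\in\cP_\mu$, every point on the segment stays in $\cP_\mu$. We have $h(0)=S_\eps(\mu_t,\mu_t)=0$. By \cref{lem:Sinkhorn_first_order_Hadamard_diff}, $h'(0)=S_\eps'[\mu_t,\mu_t](0,\nu_t-\mu_t)=\mu_t$-integral of $\upsilon_0^{(\mu_t,\mu_t)}$ against $(\nu_t-\mu_t)$. This is $0$, because the centered potentials vanish on the diagonal: $\upsilon_0^{(\mu_t,\mu_t)}=\varphi_0^{(\mu_t,\mu_t)}-\varphi_0^{(\mu_t,\mu_t)}=0$. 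Using the assumed twice differentiability, Taylor's formula gives
\[
S_\eps(\mu_t,\nu_t)=\int_0^1(1-\tau)\,S_\eps''[\mu_t,\mu_t+\tau(\nu_t-\mu_t)](0,\nu_t-\mu_t)\,d\tau .
\]

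\textbf{Step 2.} Write $S_\eps''[\cdot](0,\nu_t-\mu_t)$ for the second derivative evaluated at the direction pair $(0,\nu_t-\mu_t)$. Apply the Lipschitz bound with $h_1=0$ and $h_2=\nu_t-\mu_t=t(\gamma_t^2-\gamma_t^1)$, evaluated at the intermediate point (whose distance to the diagonal is $\tau\|\nu_t-\mu_t\|$). Each integrand then differs from $S_\eps''[\mu_t,\mu_t](0,\nu_t-\mu_t)$ by at most $C_1\|\nu_t-\mu_t\|^3_{\ell^\infty(\cH_1)}=C_1t^3\|\gamma_t^1-\gamma_t^2\|^3=O(t^3)$. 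The $O(t^3)$ holds because convergent sequences $\gamma_t^i$ are bounded for small $t$. Integrating $(1-\tau)$ gives the factor $1/2$, so
\[
S_\eps(\mu_t,\nu_t)=\tfrac{t^2}{2}(\gamma_t^1-\gamma_t^2)K_{\mu_t}(\gamma_t^1-\gamma_t^2)+O(t^3),
\]
where we used \cref{lem:sinkhorn_second_order_Hadamard_diff} at the base point $\mu_t$ in the form $S_\eps''[\mu_t,\mu_t](0,\eta)=\eta K_{\mu_t}\eta$.

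\textbf{Step 3.} Replace $K_{\mu_t}$ by $K_\mu$. The difference is
\[
\tfrac{t^2}{2}(\gamma_t^1-\gamma_t^2)(K_{\mu_t}-K_\mu)(\gamma_t^1-\gamma_t^2).
\]
We bound it using the deterministic content of \cref{lem:sinkhorn_operator_stability}. That proof shows $\|K_{\mu'}-K_\mu\|\lesssim \|\mu'-\mu\|$ through the Lipschitz/Hadamard control of $b_\mu$, $H_\mu$, $T_\mu$ and the uniform bound on $(I-T_\mu^2)^{-1}$. Here $\|\mu_t-\mu\|=t\|\gamma_t^1\|=O(t)$, so the difference is $t^2\cdot O(t)=O(t^3)$.

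\textbf{Main obstacle.} We expect Step 3 to be the main obstacle. \cref{lem:sinkhorn_operator_stability} is phrased stochastically, and its norms mix $\ell^\infty(\cG_1)$ with $\ell^\infty(\cH_1)$. We therefore need to extract a deterministic local Lipschitz estimate $\|K_{\mu'}-K_\mu\|_{\cM_0\to\cC/\R}\le C\|\mu'-\mu\|_{\ell^\infty(\cH_1)}$ for $\mu'$ near $\mu$. We also need to pair the measures $\gamma_t^i$ with $\cC(\cY)/\R$, which we would justify via the bound $\|\cdot\|_{\ell^\infty(\cG_1)}\le C_s\|\cdot\|_{\ell^\infty(\cH_1)}$ and the $\cG$-valued factorization of $H_\mu$ used in that proof. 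If the Lipschitz constant is only available as an $O(\|\mu'-\mu\|)$ bound from Hadamard differentiability of $b_\mu$, then a first-order Taylor bound along the path suffices for the $O(t)$ rate.
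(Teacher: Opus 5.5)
Your argument is correct, and it differs from the paper's mainly in where the Taylor expansion is anchored.

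\textbf{The paper's route.} It expands $g_t(s)=S_\eps(\mu+st\gamma_t^1,\mu+st\gamma_t^2)$ about $(\mu,\mu)$. Inside the integral remainder it splits $g_t''(s)-g_t''(0)$ into two pieces:
\begin{itemize}
\item an off-diagonal piece, handled by the Lipschitz hypothesis at $(\mu+st\gamma_t^1,\mu+st\gamma_t^2)$;
\item a diagonal piece $t^2(\gamma_t^1-\gamma_t^2)(K_{\mu+st\gamma_t^1}-K_\mu)(\gamma_t^1-\gamma_t^2)$.
\end{itemize}
The diagonal piece requires control of $K_{\mu'}-K_\mu$ uniformly over the base points $\mu'=\mu+st\gamma_t^1$, $s\in[0,1]$; the paper invokes a local Lipschitz property of $\mu\mapsto K_\mu$ for this.

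\textbf{Your route.} You anchor at the diagonal point $(\mu_t,\mu_t)$ and move only the second argument. Only the off-diagonal comparison then sits inside the integral, and the base-point change $K_{\mu_t}\to K_\mu$ is done once, outside it. As you note, that single comparison along the given path follows from the ingredients of \cref{lem:sinkhorn_operator_stability} together with Hadamard differentiability of $\mu\mapsto b_\mu$, which gives $\|b_{\mu_t}-b_\mu\|_{\cC(\cY)}=O(t)$. You also correctly measure the shift by $\|\mu_t-\mu\|_{\ell^\infty(\cH_1)}=t\|\gamma_t^1\|_{\ell^\infty(\cH_1)}$.

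\textbf{What the paper's anchoring buys.} It is compatible with the tangential framework. Its segment consists of the convex combinations $((1-s)\mu+s\mu_t,(1-s)\mu+s\nu_t)$, all supported on $\supp(\mu)$ for $s<1$. The direction $(\mu_t-\mu,\nu_t-\mu)$ lies in $\cM_{0,\mu}\times\cM_{0,\mu}$ and in the corresponding cones at the intermediate points. That is exactly where \cref{lem:Sinkhorn_first_order_Hadamard_diff} and \cref{lem:sinkhorn_second_order_Hadamard_diff} are stated.

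\textbf{The gap in yours.} Your direction $(0,\nu_t-\mu_t)$ at the base point $(\mu_t,\mu_t)$ lies in $\cM_{0,\mu_t}\times\cM_{0,\mu_t}$ only if $\supp(\nu_t)\subset\supp(\mu_t)$. Membership $\mu_t,\nu_t\in\cP_\mu$ does not guarantee this: $\mu_t$ can lose part of $\supp(\mu)$ while $\gamma_t^1$ still converges in $\ell^\infty(\cH_1)$. So your evaluations $h'(0)=0$ and $S_\eps''[\mu_t,\mu_t](0,\eta)=\eta K_{\mu_t}\eta$ use those lemmas slightly outside their stated domain. This is easy to repair, and you should state the repair. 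Either restrict to paths with $\supp(\mu_t)=\supp(\mu)$, or check that the derivative formulas remain valid for such directions; they involve only the potentials extended to all of $\cY$ through the Schr\"odinger system.

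Otherwise the two proofs stand on the same footing. Each applies the Lipschitz hypothesis at points off the stated path, so each reads it as a local bound near the diagonal. Each also faces the mismatch you flag in Step 3, between the $\cM_0(\cY)\to\cC(\cY)/\R$ operator norm and directions bounded only in $\ell^\infty(\cH_1)$.
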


\begin{proof}
    Define $g_t(s) = S_\eps(\mu + st\gamma_t^1, \mu + st\gamma_t^2)$, where $\mu + ts\gamma_t^1$ and $\mu + ts\gamma_t^2$ belong to $\cP(\cY)$ for any $s \in [0,1]$ because $\mu + ts\gamma_t^1 = (1-s) \mu + s\mu_t$ and same argument for $\mu + st \gamma_t^2$. Then by the first and second order Hadamard differentiability of $S_\eps: (\mu, \nu) \mapsto S_\eps(\mu, \nu)$, 
    \[
    g_t^\prime(s) = S_\eps^\prime[\mu+st\gamma_t^1, \mu+st\gamma_t^2](t\gamma_t^1, t\gamma_t^2) \quad \tand \quad g_t^{\prime\prime}(s) = S_\eps^{\prime\prime}[\mu+st\gamma_t^1, \mu+st\gamma_t^2](t\gamma_t^1, t\gamma_t^2).
    \]
    Since $(\mu, \nu) \mapsto S_\eps^{\prime\prime}[\mu, \nu]$ is continuous in operator norm, this implies that $g_t^{\prime\prime}$ is a continuous function. Therefore, the path $(g_s : s\in[0,1])$ is $C^2$ and
    \[
    g_t(1) - g_t(0) - g_t^\prime(0) - \frac{1}{2} g_t^{\prime\prime}(0) = \frac{1}{2}\int_0^1 (1-s) (g^{\prime\prime}(s) - g^{\prime\prime}(0)) \, ds.
    \]
    Plugging in the value of $g, g^\prime$, and $g^{\prime\prime}$,
    \[
    S_\eps(\mu_t, \nu_t) - S_\eps(\mu, \mu) - t S_\eps^\prime[\mu, \mu](\gamma_t^1, \gamma_t^2) - \frac{t^2}{2} S_\eps^{\prime\prime}[\mu, \mu](\gamma_t^1, \gamma_t^2) = \frac{1}{2}\int_0^1 (1-s) (g^{\prime\prime}(s) - g^{\prime\prime}(0)) \, ds.
    \]
    Anchoring $g^{\prime\prime}(s) - g^{\prime\prime}(0)$ at the null distribution near $(\mu + st \gamma_t^1, \mu + st \gamma_t^2)$
    \begin{align*}
        \abs{g^{\prime\prime}(s) - g^{\prime\prime}(0)} &= \abs{S_\eps^{\prime\prime}[\mu+st\gamma_t^1, \mu+st\gamma_t^2](t\gamma_t^1, t\gamma_t^2) - S_\eps^{\prime\prime}[\mu, \mu](t\gamma_t^1, t\gamma_t^2)}\\
        &\leq \abs{S_\eps^{\prime\prime}[\mu+st\gamma_t^1, \mu+st\gamma_t^2](t\gamma_t^1, t\gamma_t^2) - S_\eps^{\prime\prime}[\mu+st\gamma_t^1, \mu+st\gamma_t^1](t\gamma_t^1, t\gamma_t^2)}\\
        & \quad + \abs{S_\eps^{\prime\prime}[\mu+st\gamma_t^1, \mu+st\gamma_t^1](t\gamma_t^1, t\gamma_t^2) - S_\eps^{\prime\prime}[\mu, \mu](t\gamma_t^1, t\gamma_t^2)}\\
        &= \abs{S_\eps^{\prime\prime}[\mu+st\gamma_t^1, \mu+st\gamma_t^2](t\gamma_t^1, t\gamma_t^2) - S_\eps^{\prime\prime}[\mu+st\gamma_t^1, \mu+st\gamma_t^1](t\gamma_t^1, t\gamma_t^2)}\\
        & \quad + t^2 \abs{\iprod{\gamma_t^1-\gamma_t^2, (K_{\mu + st\gamma_t^1} - K_{\mu}) \gamma_t^1-\gamma_t^2}}.
    \end{align*}
    From assumption, the first additive term on the right hand side is $O\of{st^3 \| \gamma_t^1 - \gamma_t^2\|_{\ell^\infty(\cH_1)}^3}$. Applying the local Lipschitz continuity of $\mu \mapsto K_\mu$ from \cref{lem:sinkhorn_operator_stability}, we have that
    \begin{align*}
        t^2 \abs{\iprod{\gamma_t^1-\gamma_t^2, (K_{\mu + st\gamma_t} - K_{\mu}) \gamma_t^1-\gamma_t^2}}
        &\leq t^2 \|\gamma_t^1-\gamma_t^2\|_{\ell^\infty(\cH_1)}^2 \norm{K_{\mu + st\gamma_t} - K_{\mu}}_{\ell^\infty(\cH_1) \to \cH}\\
        &\leq st^3 C_2\,\|\gamma_t^1 - \gamma_t^2\|_{\ell^\infty(\cH_1)}^3.
    \end{align*}
    Therefore,
    \[
     \abs{\frac{1}{2}\int_0^1 (1-s) (g^{\prime\prime}(s) - g^{\prime\prime}(0)) \, ds} \leq O\of{t^3 \| \gamma_t^1 - \gamma_t^2\|_{\ell^\infty(\cH_1)}^3}.
    \]
    This completes the proof.
    
\end{proof}

\begin{lemma}[Rate for second-order remainder term]\label{lem:remainder_term}
   Let $P^* \in \ucH_0$ and its initial estimator $\hP$ satisfy $\|n^r (\hP_a - P_a^*)\|_{\ell^\infty(\cH_1)} = \cO_p(1)$, $a \in \{1,0\}$, for some $r > 1/3$. Suppose $S_\eps: (\mu, \nu) \mapsto S_\eps(\mu, \nu)$ is second-order Hadamard differentiable along the path $s \mapsto (\hP_{1\,s}, \hP_{0\,s})$, where $\hP_{a\,s} = (1-s)P_a^*+s\hP_a$, $a \in \{1,0\}$ and $s \in [0,1]$. Further, assume that the estimated nuisance parameters satisfy the following conditions, uniformly across $a, a^\prime \in \{1,0\}$,
    \begin{enumerate}
        \item $P^*_{X}\off{x \mapsto \of{\frac{e_{P^*}(a \mymid x)}{e_{\hP}(a \mymid x)}-1} \big({P^*_{Y \mymid a, x} - \hP_{Y \mymid a, x}}\big) \big(\upsilon_a^{(\hP_1, \hP_0)} - \upsilon_a^{(P^*_1, P^*_0)}\big)} = o_p(n^{-1});$
        \item $P^*_X \, \off{x \mapsto \of{\frac{e_{P^*}(a \mymid x)}{e_{\hP}(a \mymid x)}-1} \of{P^*_{Y \mymid a, x} - \hP_{Y \mymid a, x}} K_{\hP_1}[\hP_{1} - \hP_{0}]} = o_p(n^{-1});$
        \item $(P^*_X \otimes P^*_{X}) \,\off{(x, x^\prime) \mapsto \of{\frac{e_{P^*}(a \mymid x)}{e_{\hP}(a \mymid x)}-1} \of{\frac{e_{P^*}(a^\prime \mymid x^\prime)}{e_{\hP}(a^\prime \mymid x^\prime)}-1}  \off{\big({P^*_{Y \mymid a, x} - \hP_{Y \mymid a, x}}\big) \otimes \big({P^*_{Y \mymid a^\prime, x^\prime} - \hP_{Y \mymid a^\prime, x^\prime}}\big)} k_{\hP_1}} = o_p(n^{-1});$
        \item $\abs{S_\eps^{\prime\prime}[\hP_{1\,s}, \hP_{0\,s}](\gamma^1, \gamma^2) - S_\eps^{\prime\prime}[\hP_{1\,s}, \hP_{1\,s}](\gamma^1, \gamma^2)} = o_p(n^{-1/3}) s \|\gamma^1 - \gamma^2\|^2_{\ell^\infty(\cH_1 )}$ for any $s \in [0,1]$ and $(\gamma^1, \gamma^2) \in \cM_{0, \mu} \times \cM_{0, \mu}$, 
    \end{enumerate} 
    Then, the remainder term satisfies
    \[
    \ucR_n = \STE(\hP) + (P^* - \hP)\dot{\STE}_{\hP} + \frac{1}{2} (P^* - \hP)^2\Ddot{\STE}_{\hP} - \STE(P^*) = o_p(n^{-1}).
    \]
\end{lemma}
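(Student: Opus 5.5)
We split $\ucR_n$ into a Sinkhorn part and doubly robust cross-product parts. Since $P^*\in\ucH_0$, we have $\STE(P^*)=0$. The first step is to compute $(P^*-\hP)\dot{\STE}_{\hP}=P^*\dot{\STE}_{\hP}$ exactly as in the proof of \cref{lem:first_order_remainder_term}. This gives
\[
P^*\dot{\STE}_{\hP}=\sum_a (P_a^*-\hP_a)\wh\upsilon_a+\sum_a \pm P^*_X\Big[\Big(\tfrac{e^*(a\mid\cdot)}{\hat e(a\mid\cdot)}-1\Big)(P^*_{Y\mid a,\cdot}-\hP_{Y\mid a,\cdot})\wh\upsilon_a\Big].
\]
We then write $\wh\upsilon_a=\upsilon_a^*+(\wh\upsilon_a-\upsilon_a^*)$, where $\upsilon_a^*=0$ under the null. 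This leaves exactly the cross term in condition 1, which is $o_p(n^{-1})$.

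The second step is the analogous computation for $\tfrac12 (P^*)^{\otimes2}\ddot{\STE}_{\hP}$; recall that $\ddot{\STE}_{\hP}$ is $\hP$-centered. Using $Q_{P^*,\hP}$ from \eqref{eq:Q_operators}, we have $\tfrac12(P^*)^{\otimes 2}\ddot{\STE}_{\hP}=\tfrac12 Q_{P^*,\hP}^{\otimes2}g_{\hP}$. The calculation done for $\zeta_1$ in \cref{lem:drift_term} applies in each coordinate: $Q_{P^*,\hP}(k_{\hP_1}(y,\cdot)\otimes\omega_{\hP})$ equals $(P_1^*-P_0^*)k_{\hP_1}(y,\cdot)-(\hP_1-\hP_0)k_{\hP_1}(y,\cdot)$ plus a propensity--outcome cross term $\Delta(y)$. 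Integrating a second time gives
\[
\tfrac12(\gamma)K_{\hP_1}(\gamma)-(\text{terms linear in }\Delta\text{ paired with }K_{\hP_1}[\hP_1-\hP_0])+\tfrac12(\text{double cross term}),
\]
with $\gamma:=(P^*_1-P_0^*)-(\hP_1-\hP_0)=-(\hP_1-\hP_0)$. The linear terms are controlled by condition 2 and the double cross term by condition 3. Collecting everything, $\ucR_n$ reduces to
\[
S_\eps(\hP_1,\hP_0)-\sum_a(\hP_a-P_a^*)\wh\upsilon_a+\tfrac12(\hP_1-\hP_0)K_{\hP_1}(\hP_1-\hP_0)+o_p(n^{-1}).
\]

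The third step handles this Sinkhorn-only expression; it is the main obstacle. Set $\gamma^a:=\hP_a-P_a^*$ and take the straight path $\hP_{a\,s}=P^*_1+s\gamma^a$, noting $P_0^*=P_1^*=:\mu$. We expand $S_\eps(\hP_1,\hP_0)$ by Taylor with integral remainder along $s\mapsto(\hP_{1\,s},\hP_{0\,s})$, as in \cref{lem:sinkhorn_second_order_remainder}, with condition 4 supplying the off-null Lipschitz control. The zeroth- and first-order terms at $(\mu,\mu)$ vanish, so
\[
S_\eps(\hP_1,\hP_0)=\tfrac12(\gamma^1-\gamma^2)K_\mu(\gamma^1-\gamma^2)+\text{rem}_1,\qquad |\text{rem}_1|\lesssim \big(o_p(n^{-1/3})+\|K_{\hP_{1\,s}}-K_\mu\|\big)\|\gamma^1-\gamma^2\|^2 .
\]
By \cref{lem:sinkhorn_operator_stability} the operator difference is $\cO_p(n^{-r})$, so $\text{rem}_1=o_p(n^{-2r-1/3})+\cO_p(n^{-3r})=o_p(n^{-1})$ because $r>1/3$.

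For the linear term $\sum_a\gamma^a\wh\upsilon_a$, we use the expansion of \cref{cor:upsilon_hadamard_diff} at the null, $\wh\upsilon_1=K_\mu(\gamma^1-\gamma^2)+\eps\rho\1+\text{rem}_2$ and $\wh\upsilon_0=-\wh\upsilon_1$ up to the same order. The constants integrate to zero against the balanced measures $\gamma^a$, so this term equals $(\gamma^1-\gamma^2)K_\mu(\gamma^1-\gamma^2)+\langle\gamma,\text{rem}_2\rangle$. The pairing $\langle\gamma,\text{rem}_2\rangle$ is $\cO_p(n^{-r})\cdot\cO_p(n^{-2r})=o_p(n^{-1})$ by the second-order differentiability of potentials from \citet[Thm.~4]{goldfeld2024limit} and the functional delta method. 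The delicate point is that we need the $\text{rem}_2$ bound in $\cH$ rather than only in $\cC(\cY)$ so it can pair with $\ell^\infty(\cH_1)$ errors; we would lift it as in \cref{lem:EOT_potential_derivative_H}.

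Finally, we replace $K_{\hP_1}$ by $K_\mu$ in the quadratic term, at a cost of $\cO_p(n^{-r})\cdot\cO_p(n^{-2r})=o_p(n^{-1})$ by \cref{lem:sinkhorn_operator_stability}. The three quadratic forms then carry coefficients $\tfrac12-1+\tfrac12=0$ and cancel, which gives $\ucR_n=o_p(n^{-1})$.
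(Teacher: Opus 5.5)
Your proposal is correct and follows essentially the same route as the paper: the same reduction of $\STE(\hP)+P^*\dot{\STE}_{\hP}$ using condition 1, the same evaluation of $\tfrac12 (P^*)^{\otimes 2}\ddot{\STE}_{\hP}$ with conditions 2--3, the swap $K_{\hP_1}\to K_{P_1^*}$ via \cref{lem:sinkhorn_operator_stability}, and the cancellation of the quadratic forms via \cref{cor:upsilon_hadamard_diff} and \cref{lem:sinkhorn_second_order_remainder} with condition 4. The differences are cosmetic: the paper regroups terms through the dual representation $S_\eps(\hP_1,\hP_0)=\hP_1\wh\upsilon_1+\hP_0\wh\upsilon_0$ instead of tallying coefficients $\tfrac12-1+\tfrac12$, and your remark that the potentials' second-order remainder must be controlled in $\cH$ (not just $\cC(\cY)$) to pair with the $\ell^\infty(\cH_1)$ errors makes explicit a step the paper simply asserts.
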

Applying the generalized H\"{o}lder inequality, it can be seen that the first, second, and fourth conditions amount to $o_p(n^{-1/3})$ conditions on the nuisance functions therein, while the third amounts to an $o_p(n^{-1/4})$ condition.
\begin{proof}
    Here we prove that the remainder from the second-order von Mises expansion is $o_p(n^{-1})$. 
    Under the null, the remainder term can be written as 
    \[
    \ucR_n = \STE(\hP) + P^* \dot{\STE}_{\hP} + \frac{1}{2} (P^*)^{\otimes 2}\,\Ddot{\STE}_{\hP}
    \]
Recall the expectation operator notations from \eqref{eq:expectation_operators}, \eqref{eq:expectation_operators_shorthand}, and \eqref{eq:composed_expectation_operators}. Consider the first two additive terms $\STE(\hP) + P^* \dot{\STE}_{\hP}$ first:
\begin{align*}
      &= \hP_1 \wh\upsilon_1 + \hP_0 \wh\upsilon_0 + Q_{\hP, P^*} \of{(y,a,x) \mapsto \frac{\wh\upsilon_1(y)}{\hat{e}(1 \mymid x)} + \frac{\wh\upsilon_0(y)}{\hat{e}(0 \mymid x)}}\\
     &= \of{P^* - P^*_{A, X} \hP_{Y \mymid A, X} + P^*_X \hP_{Y, A \mymid X}} \of{(y,a,x) \mapsto \frac{\wh\upsilon_1(y)}{\hat{e}(1 \mymid x)} + \frac{\wh\upsilon_0(y)}{\hat{e}(0 \mymid x)}}\\
     &= P^*_X \of{x \mapsto \frac{e^*(1 \mymid x)}{\hat{e}(1 \mymid x)} P^*_{Y \mymid 1, x} \, \wh\upsilon_1 + \frac{e^*(0 \mymid x)}{\hat{e}(0 \mymid x)} P^*_{Y \mymid 0, x} \, \wh\upsilon_0}\\
     &\quad - P^*_X \of{x \mapsto \frac{e^*(1 \mymid x)}{\hat{e}(1 \mymid x)} \hP_{Y \mymid 1, x} \, \wh\upsilon_1 + \frac{e^*(0 \mymid x)}{\hat{e}(0 \mymid x)} \hP_{Y \mymid 0, x} \, \wh\upsilon_0}\\
     & \quad + P^*_X \of{x \mapsto \hP_{Y \mymid 1, x} \, \wh\upsilon_1 + \hP_{Y \mymid 0, x} \, \wh\upsilon_0}\\
     & = P^*_X \of{x \mapsto \frac{e^*(1 \mymid x)}{\hat{e}(1 \mymid x)} (P^*_{Y \mymid 1, x} - \hP_{Y \mymid 1, x}) \, \wh\upsilon_1 + \frac{e^*(0 \mymid x)}{\hat{e}(0 \mymid x)} (P^*_{Y \mymid 0, x} - \hP_{Y \mymid 0, x}) \,\wh\upsilon_0}\\
     & \quad - P^*_X \of{ x \mapsto (P^*_{Y \mymid 1, x} - \hP_{Y \mymid 1, x}) \wh\upsilon_1 + (P^*_{Y \mymid 0, x} - \hP_{Y \mymid 0, x}) \, \wh\upsilon_0} + P_1^* \, \wh\upsilon_1 + P_0^* \, \wh\upsilon_0\\
     &= P_1^* \, \wh\upsilon_1 + P_0^* \, \wh\upsilon_0\\
     & \quad + P^*_X \of{x \mapsto \of{\frac{e^*(1 \mymid x)}{\hat{e}(1 \mymid x)}-1} (P^*_{Y \mymid 1, x} - \hP_{Y \mymid 1, x}) \,\wh\upsilon_1 + \of{\frac{e^*(0 \mymid x)}{\hat{e}(0 \mymid x)}-1} (P^*_{Y \mymid 0, x} - \hP_{Y \mymid 0, x}) \, \wh\upsilon_0}
\end{align*}
From the first assumption, the last additive term is $o_p(n^{-1})$. The assumption applies because, under the null, $\upsilon_1^* = 0$ and $\upsilon_0^* = 0$, $P_1^*$-a.e. It remains to show
\[
P_1^* \wh\upsilon_1 + P_0^* \wh\upsilon_0 + \frac{1}{2} (P^*)^{\otimes 2} \ddot{\STE}_{\hP} = o_p(n^{-1}).
\]
Let us focus on computing the form of $(P^*)^2 \ddot{\STE}_{\hP}$, where we have that
\[
(P^*)^{\otimes 2} \ddot{\STE}_{\hP} = \of{Q_{\hP, P^*}}^{\otimes 2} \of{\of{\omega_{\hP} \otimes \omega_{\hP}} \odot k_{\hP_1}}.
\]
Instead of working with the tensor product of the operator, to simplify the computation, let us characterize the action of operator $ \of{P^* - P^*_{A, X} \hP_{Y \mymid A, X} + P^*_X \hP_{Y, A \mymid X} - \hP}$ on $\omega_{\hP} \odot f$ for a generic function $f: \cY \to \R$. This is equal to
\begin{align*}
    Q_{\hP, P^*} \of{\omega_{\hP}\odot f} &= P^*_X \off{x \mapsto \frac{e^*(1 \mymid x)}{\hat{e}(1 \mymid x)} \of{P^*_{Y \mymid 1, x} - \hP_{Y \mymid 1, x} } f - \frac{e^*(0 \mymid x)}{\hat{e}(0 \mymid x)} \of{P^*_{Y \mymid 0, x} - \hP_{Y \mymid 0, x}} f}\\
    &= \quad +  \of{P^*_X -\hP_X} \off{ x \mapsto \of{ \hP_{Y \mymid 1, x} -  \hP_{Y \mymid 0, x} }f}\\
    &= P^*_X \off{x \mapsto \of{\frac{e^*(1 \mymid x)}{\hat{e}(1 \mymid x)}-1} \of{P^*_{Y \mymid 1, x} - \hP_{Y \mymid 1, x} } f}\\
    & \quad 
    - P^*_{X}\off{x \mapsto \of{\frac{e^*(0 \mymid x)}{\hat{e}(0 \mymid x)}-1} \of{P^*_{Y \mymid 0, x} - \hP_{Y \mymid 0, x}} f} - \of{\hP_1 - \hP_0} f\\
\end{align*}
Now replacing $f$ by $k_{\hP_1}(\cdot, y^\prime)$ for any $y^\prime \in \cY$, and applying the operator $Q_{\hP, P^*}$ along the second axis, we note that all cross terms involving the first two additive terms of the expression above are $o_p(n^{-1})$ by the second and third assumptions. Therefore, the dominant term from above expression is
\[
\frac{1}{2}(P^*)^{\otimes 2} \, \ddot{\STE}_{\hP}  = \frac{1}{2}(\hP_1 - \hP_0)^{\otimes 2} \, k_{\hP_1} + o_p(n^{-1}).
\]
Combining everything we have shown above yields
\begin{align*}
    \ucR_n &= P_1^* \upsilon_1^{(\hP_1, \hP_0)} + P_1^* \upsilon_0^{(\hP_1, \hP_0)} + \frac{1}{2}(\hP_1 - \hP_0)^2 k_{\hP_1} + o_p(n^{-1})\\
    &= \off{P_1^* \upsilon_1^{(\hP_1, \hP_0)} + P_1^* \upsilon_0^{(\hP_1, \hP_0)} + \frac{1}{2}(\hP_1 - \hP_0)^2 k_{P_1^*}} + \off{\frac{1}{2}(\hP_1 - \hP_0)^2 (k_{\hP_1} - k_{P_1^*}) } + o_p(n^{-1})\\
    &= \off{P_1^* \upsilon_1^{(\hP_1, \hP_0)} + P_1^* \upsilon_0^{(\hP_1, \hP_0)} + \frac{1}{2}(\hP_1 - \hP_0) K_{P_1^*} (\hP_1 - \hP_0)} + \off{\frac{1}{2}(\hP_1 - \hP_0) (K_{\hP_1} - K_{P_1^*})(\hP_1 - \hP_0)} + o_p(n^{-1}).
\end{align*}
From \cref{lem:sinkhorn_operator_stability}, the second additive term in the above expression is $\cO_p(n^{-3r})$, given $\|n^r(\hP_a - P_a^*)\|_{\ell^\infty(\cH_1)} = \cO_p(1)$ for $a \in \{1,0\}$. Now we focus on showing that the first additive term $P_1^* \upsilon_1^{(\hP_1, \hP_0)} + P_1^* \upsilon_0^{(\hP_1, \hP_0)} + \frac{1}{2}(\hP_1 - \hP_0)^2 k_{P^*_1} = o_p(n^{-1})$. 

Consider the following breakdown
\begin{align*}
    & (P_1^* - \hP_1) \upsilon_1^{(\hP_1, \hP_0)} + (P_1^* - \hP_0) \upsilon_0^{(\hP_1, \hP_0)} + S_\eps(\hP_1, \hP_0) + \frac{1}{2}(\hP_1 - \hP_0)^2 k_{P^*_1}\\
    & = -(\hP_1 - P_1^* ) \of{\upsilon_1^{(\hP_1, \hP_0)} - K_{P_1^*}(\hP_1 - \hP_0) - \eps \rho \mathbf{1}}\\
    &\quad - (\hP_0 - P_1^*) \of{\upsilon_0^{(\hP_1, \hP_0)} + K_{P_1^*}(\hP_1 - \hP_0) + \eps \rho \mathbf{1}}\\
    & \quad + S_\eps(\hP_1, \hP_0) - \frac{1}{2}(\hP_1 - \hP_0) K_{P^*_1}(\hP_1 - \hP_0).
\end{align*}
From \cref{cor:upsilon_hadamard_diff}, the quantities
$$\upsilon_1^{(\hP_1, \hP_0)} - K_{P_1^*}(\hP_1 - \hP_0) - \eps \rho \mathbf{1} \quad  \tand \quad \upsilon_0^{(\hP_1, \hP_0)} + K_{P_1^*}(\hP_1 - \hP_0) + \eps \rho \mathbf{1}$$ are remainder from the first-order Hadamard expansion of the maps $(\mu, \nu) \mapsto \upsilon_1^{(\mu, \nu)}$ and $(\mu, \nu) \mapsto \upsilon_0^{(\mu, \nu)}$. Therefore, the first two additive terms are $\cO_p{\of{\|\hP_a - P^*_a\|^3_{\ell^\infty(\cH_1)}}} = \cO_p(n^{-3r}) = o_p(n^{-1})$ for $r > 1/3$. Finally, the fourth condition allows application of \cref{lem:sinkhorn_second_order_remainder} to obtain the rate
\begin{align*}
    S_\eps(\hP_1, \hP_0) - \frac{1}{2}(\hP_1 - \hP_0) K_{P^*_1}(\hP_1 - \hP_0) &= S_\eps(\hP_1, \hP_0) - \frac{1}{2}S_\eps^{\prime\prime}[P_1^*, P_1^*](\hP_1-P_1^*, \hP_0-P_0^*)\\
    &= \cO_p(\|\hP_1 - \hP_0\|^3_{\ell^\infty(\cH_1)}) = \cO_p(n^{-3r}).
\end{align*}
\end{proof}

\begin{lemma}[Consistency under fixed alternative]\label{lem:power_result}
    Let $P^* \notin \ucH_0$ and let $(\lambda_j, j>0)$ are eigenvalues of the integral operator $f \mapsto \int \ddot{\STE}_{P^*} (\cdot, z) f(z) dP^*(y)$ repeated according to their multiplicity. Suppose conditions of \cref{appendix:proof_of_alt_asymptotics} hold, $\ddot{\STE}_{P^*} \in L^2(P^* \otimes P^*)$, and further $\|C_{P^*}\ddot{\STE}_{\hP} - C_{P^*}\ddot{\STE}_{P^*}\|_{L^2(P^* \otimes P^*)} \xrightarrow[]{p} 0$.
    Then, 
    $$P^*(n \overline{\STE} > q_{1-\alpha}) \to 1, \quad \text{as } n \to \infty,$$  
    where $q_{1-\alpha}$ is the $(1-\alpha)$th quantile of $\sum_{j=1}^n \lambda_j (N_j^2 - 1)$.
\end{lemma}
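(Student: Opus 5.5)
We will use the identity $\overline{\STE} = \wh{\STE} + \tfrac12 \U_n \Ddot{\STE}_{\hP}$ from \eqref{eq:second_order_one_step} and split
\[
n\overline{\STE} = \sqrt{n}\cdot \sqrt{n}\,\wh{\STE} + \tfrac{n}{2}\U_n \Ddot{\STE}_{\hP}.
\]
For the first piece, the conditions of \cref{appendix:proof_of_alt_asymptotics} let us invoke \cref{lem:first_order_drift_term} and \cref{lem:first_order_remainder_term}, which give \eqref{eq:first_order_asymptotics}: $\sqrt{n}(\wh{\STE} - \STE^*) = \cO_p(1)$. Since $P^*\notin\ucH_0$ and $k$ is characteristic, $P_1^*\neq P_0^*$. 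The Sinkhorn divergence is positive definite on $\cP(\cY)$ \citep{feydy2019interpolating}, so $\STE^* > 0$. Hence $\sqrt{n}\,\wh{\STE} = \sqrt{n}\STE^* + \cO_p(1)\to\infty$ in probability, and the first piece is $n\STE^* + \cO_p(\sqrt{n})$.

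For the second piece, we aim to show $n\U_n\Ddot{\STE}_{\hP} = \cO_p(\sqrt{n})$, which is negligible against $n\STE^*$. We decompose $\Ddot{\STE}_{\hP} = C_{P^*}\Ddot{\STE}_{\hP} + (\Ddot{\STE}_{\hP} - C_{P^*}\Ddot{\STE}_{\hP})$. The second summand consists of the one-dimensional projections $z\mapsto P^*\Ddot{\STE}_{\hP}(\cdot,z)$ (twice, by symmetry) and a constant $(P^*)^{\otimes 2}\Ddot{\STE}_{\hP}$.

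\begin{itemize}
\item \emph{Degenerate part.} Split $C_{P^*}\Ddot{\STE}_{\hP} = C_{P^*}\Ddot{\STE}_{P^*} + (C_{P^*}\Ddot{\STE}_{\hP} - C_{P^*}\Ddot{\STE}_{P^*})$. Since $\Ddot{\STE}_{P^*}\in L^2(P^*\otimes P^*)$, the degenerate U-statistic satisfies $n\U_n C_{P^*}\Ddot{\STE}_{P^*} = \cO_p(1)$. By the argument of \cref{lem:U-process_term}, the assumed $L^2$ convergence makes the remainder $o_p(1)$ after multiplying by $n$.
\item \emph{Projection part.} Write $\U_n$ of this part as $2 P_n h_{\hP} - (P^*)^{\otimes2}\Ddot{\STE}_{\hP}$, with $h_{\hP}(z) := \int \Ddot{\STE}_{\hP}(z',z)\,dP^*(z')$. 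Because $\Ddot{\STE}_{\hP}$ is $\hP$-centered, $h_{\hP}$ and the constant are small. The computation in the proof of \cref{lem:remainder_term} (the action of $Q_{P^*,\hP}$ on $\omega_{\hP}\odot f$) gives $h_{\hP} = \cO(\|\hP_1-\hP_0\|_{\ell^\infty(\cH_1)})$ plus doubly-robust product terms. Combined with \cref{lem:sufficient_condition_for_drift_term}-type Chebyshev conditioning, this yields $n(P_n-P^*)h_{\hP} = \cO_p(\sqrt{n})$.
\end{itemize}

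The main obstacle is the deterministic terms $nP^*h_{\hP}$ and $n(P^*)^{\otimes2}\Ddot{\STE}_{\hP}$. These are of order $n\,(\hP_1-\hP_0)K_{\hP_1}(\hP_1-\hP_0)$, which is of order $n$ and does not vanish under the alternative. We therefore cannot aim to show they are small. Instead we will identify their sign: up to $o_p(n)$, $\tfrac n2(P^*)^{\otimes2}\Ddot{\STE}_{\hP} \approx \tfrac n2 (P_1^*-P_0^*)K_{P_1^*}(P_1^*-P_0^*)$, and by \cref{lem:K_mu_kernel} this is nonnegative because $k_\mu$ is positive semidefinite. The same positivity argument controls the linear term's mean, which reduces to $-n(\hP_1-\hP_0)^{\otimes 2}k_{\hP_1}$ up to lower order. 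We will combine the two mean contributions into $-\tfrac n2 B_{\hP_1}(\hP_1-\hP_0,\hP_1-\hP_0) + o_p(n)$.

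If this negative contribution could exceed $n\STE^*$, we would need the comparison $S_\eps(\mu,\nu) > \tfrac12 B_\mu(\mu-\nu,\mu-\nu)$, which is not obviously true. The fallback is to note that the paper's test uses the extension defined via the same analytic expression. We will then argue directly that $\overline{\STE}\xrightarrow{p} \STE^* - \tfrac12 B_{P_1^*}(P_1^*-P_0^*,P_1^*-P_0^*) + \tfrac12 B_{P_1^*}(\cdot)$, i.e. that the mean contributions in fact cancel to leading order. We will check this cancellation carefully, since it is the crux. With $\overline{\STE}\to c>0$ in probability and $q_{1-\alpha}$ fixed, $P^*(n\overline{\STE}>q_{1-\alpha})\to1$.
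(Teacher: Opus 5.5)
Your decomposition is essentially the paper's. The paper writes $\sqrt{n}\,\overline{\STE}$ as $\sqrt{n}\,\STE^*$, plus the first-order fluctuation, plus two degenerate U-terms, plus $\tfrac{\sqrt{n}}{2}\U_n(\ddot{\STE}_{\hP}-C_{P^*}\ddot{\STE}_{\hP})$. Your handling of $\wh{\STE}$ and of the degenerate part is fine. The gap is at the step you flag as the crux, and it comes from a computational error. You import the formula for $Q_{P^*,\hP}(\omega_{\hP}\odot f)$ from the proof of \cref{lem:remainder_term}, but there the term $(P_1^*-P_0^*)f$ was dropped because the null holds. Without the null, the same calculation gives
\[
Q_{P^*,\hP}\of{\omega_{\hP}\odot f}
= \off{(P_1^*-\hP_1)-(P_0^*-\hP_0)}f
+ \sum_{a\in\{1,0\}}(2a-1)\,P^*_X\off{\of{\frac{e^*(a\mymid\cdot)}{\hat{e}(a\mymid\cdot)}-1}\of{P^*_{Y\mymid a,\cdot}-\hP_{Y\mymid a,\cdot}}f}.
\]
So the $(\hP_1-\hP_0)f$ term you keep is cancelled, leaving a term first order in $\hP_a-P_a^*$ plus doubly robust products. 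Take $\zeta(y)=Q_{P^*,\hP}(\omega_{\hP}\odot k_{\hP_1}(\cdot,y))$ as in the proof of \cref{lem:drift_term}; then $h_{\hP}$ is $Q_{\hP}$ applied to $(x,a,y)\mapsto\omega_{\hP}(a,x)\,\zeta(y)$. Under the paper's rate conditions, $h_{\hP}$ therefore tends to zero in $L^2(P^*)$, and $(P^*)^{\otimes 2}\ddot{\STE}_{\hP}$ is second order in the nuisance errors. The deterministic terms are not $n$ times a nonvanishing constant. This approximate $1$-degeneracy from $\hP$-centering is what the paper's one-line treatment of the fourth term relies on.

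Separately from rates, your sign bookkeeping is inconsistent. By Fubini, $P^*h_{\hP}=(P^*)^{\otimes 2}\ddot{\STE}_{\hP}=:c$. Hence $\tfrac n2\U_n(\ddot{\STE}_{\hP}-C_{P^*}\ddot{\STE}_{\hP})=\tfrac n2(2P_nh_{\hP}-c)$ has mean $+\tfrac n2 c$. Yet you give $nP^*h_{\hP}$ and $\tfrac n2 c$ opposite signs and arrive at a net $-\tfrac n2 B_{\hP_1}(\hP_1-\hP_0,\hP_1-\hP_0)$. In fact $c=\iint k_{\hP_1}\,d\eta\,d\eta\ge 0$ by the positive semidefiniteness in \cref{lem:K_mu_kernel}, where $\eta$ is the finite signed measure $E\mapsto Q_{P^*,\hP}(\omega_{\hP}\odot \1_E)$. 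So the feared negative contribution does not exist, and the comparison $S_\eps(\mu,\nu)>\tfrac12 B_\mu(\mu-\nu,\mu-\nu)$ is not needed. The fallback limit you write is not a well-defined expression, so as submitted the argument stops exactly where the work is.

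Either of two repairs closes it:
\begin{enumerate}
\item With the corrected identity and the rate conditions, $\sqrt{n}\,\U_n\ddot{\STE}_{\hP}=o_p(1)$. Then $\sqrt{n}\,\overline{\STE}=\sqrt{n}\,\STE^*+\cO_p(1)\to\infty$, as in the paper.
\item Using only $c\ge 0$ and $\|h_{\hP}\|_{L^2(P^*)}=\cO_p(1)$, you get $\overline{\STE}\ge \wh{\STE}+\tfrac12\U_nC_{P^*}\ddot{\STE}_{\hP}+(P_n-P^*)h_{\hP}=\STE^*+o_p(1)$. This already forces $n\overline{\STE}\to\infty$ in probability.
\end{enumerate}
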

\begin{proof}
    For $\overline{\STE} = \wh{\STE} + \frac{1}{2} \U_n\ddot\STE_{\hP}$, consider the breakdown
    \[
    \sqrt{n} \,\overline{\STE} = \sqrt{n}\, \STE^* + \sqrt{n}(\wh{\STE} - \STE^*) + \frac{\sqrt{n}}{2} \U_n(C_{P^*}\Ddot{\STE}_{\hP} - C_{P^*}\Ddot{\STE}_{P^*})   +  \frac{\sqrt{n}}{2} \U_n(\Ddot{\STE}_{\hP} - C_{P^*}\Ddot{\STE}_{\hP})
    +  \frac{\sqrt{n}}{2} \U_n\Ddot{\STE}_{P^*}.
    \]
    Under the alternative ($P^* \notin \ucH_0$), we have that $\STE^* >0$ and therefore, $\sqrt{n} \, \STE^*$ diverges to infinity. Under the sufficient conditions presented in \cref{lem:first_order_drift_term} and \cref{lem:first_order_remainder_term}, the second additive term converges weakly to a mean-zero normal distribution \eqref{eq:first_order_asymptotics}. Since $\|C_{P^*}\Ddot{\STE}_{\hP}-C_{P^*}\Ddot{\STE}_{P^*}\|_{L^2(P^*\otimes P^*)}=o_p(1)$ by assumption, the third term is $o_p(n^{-1/2})$ by Chebyshev's inequality for U-statistics (see \cref{lem:U-process_term}). The degeneracy of $\Ddot{\STE}_{\hP} - C_{P^*} \Ddot{\STE}_{\hP}$ and $\Ddot{\STE}_{P^*}$, together with Chebyshev's inequality for U-statistics, implies that the fourth and fifth terms are $o_p(1)$. Putting the four terms together, we have that $n^{1/2} \, \overline{\STE}$ diverges to infinity in probability as $n \to \infty$, and so the test statistic $n \overline{\STE}$ does as well.

    Because $\ddot{\STE}_{P^*} \in L^2(P^* \otimes P^*)$, the Hilbert-Schmidt norm $\sum_{j=1}^\infty \lambda_j^2 < \infty$. Consequently, all quantiles of $\sum_{j=1}^\infty \lambda_j (N_j^2 - 1)$ are finite. Hence, $P(n\overline{\STE} > q_{1-\alpha}) \to 1$ as $n \to \infty$ for any $\alpha \in (0,1)$.
\end{proof}

\section{One-Step Estimator of MMD Distribution Treatment Effect}\label{appendix:MMD_DTE}

As a benchmark, we also compute first- and second-order one-step estimators for the maximum mean discrepancy between counterfactual outcome distributions; we call this the MMD treatment effect (MTE). The corresponding EIFs are derived using the same framework as for the Sinkhorn divergence; however, smoothness with respect to the underlying kernel mean embeddings is immediate for MMD due to its quadratic form. 

The (squared) MMD functional is defined as 
\begin{equation}
    \ucM: P \in \cP \mapsto \frac{1}{2}\MMD{2}{\psi^1(P))}{\psi^0(P)} = \frac{1}{2}\norm{\psi^1(P)) - \psi^0(P))}^2_\cG,
\end{equation}
where recall that $\cG$ is the Gaussian RKHS with kernel $g_\eps(y_1, y_2) = \exp(- \|y_1-y_2\|^2/2\eps)$ and KME operator $\mathfrak{g}(\mu) = \int g_\eps(\cdot, y) d\mu(y)$.

\begin{lemma}[MMD first-order local parameter, Ex.~3 in \citet{luedtke2024one}]\label{lem:MMD_first_order_EIF}
    The parameter $\ucM: \cP \to \R$ is pathwise differentiable at $P \in \cP$ with first-order EIF given by
    \[
    \dot{\ucM}_P = (I - (I - (I - P_X)P_{A \mymid X}) P_{Y \mymid A, X}) f,
    \]
    where $f(x,a,y) = \frac{2a-1}{e_P(a \mymid x)} \of{\mathfrak{g}(P_1)(y) - \mathfrak{g}(P_0)(y)}$.
\end{lemma}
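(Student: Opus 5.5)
The plan is to treat $\ucM$ as the composition of the smooth quadratic map $F:(h_1,h_2)\mapsto \tfrac12\|h_1-h_2\|_\cG^2$ on $\cG\times\cG$ with the counterfactual mean-embedding parameter. Here the embedding is formed with the Gaussian kernel $g_\eps$ in place of the Sobolev kernel $k$. Then I apply the chain rule exactly as in the proof of \cref{lem:first_order_EIF}.

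\textbf{Step 1 (pathwise derivative of the embedding).} \cref{lem:KME_pathwise_derivative} applies verbatim with $k$ replaced by $g_\eps$. It only uses that the kernel is bounded, symmetric and positive definite, together with strong positivity, and $g_\eps$ satisfies all three. This gives, for the Gaussian embedding $\Psi_\cG(P)=(\mathfrak g(P_1),\mathfrak g(P_0))$, the local parameter
\[
D\psi^a_P(s)=\int_\cX\int_\cY g_\eps(y,\cdot)\big[s_{Y\mid A,X}(y\mid a,x)+s_X(x)\big]\,P_{Y\mid A,X}(dy\mid a,x)\,P_X(dx),\qquad a\in\{1,0\}.
\]

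\textbf{Step 2 (differentiating the quadratic map).} $F$ is Fréchet differentiable on $\cG\times\cG$, with derivative
\[
F'[h_1,h_2](u_1,u_2)=\langle h_1-h_2,\,u_1-u_2\rangle_\cG .
\]
Fréchet differentiability passes to the composition along QMD paths, so
\[
\tfrac{d}{dt}\ucM(P_t)\big|_{t=0}=\langle \mathfrak g(P_1)-\mathfrak g(P_0),\,D\psi^1_P(s)-D\psi^0_P(s)\rangle_\cG .
\]
By the reproducing property this equals
\[
\sum_{a'\in\{1,0\}}(2a'-1)\int\!\!\int u(y)\big[s_{Y\mid A,X}(y\mid a',x)+s_X(x)\big]\,P_{Y\mid A,X}(dy\mid a',x)\,P_X(dx),
\]
where $u:=\mathfrak g(P_1)-\mathfrak g(P_0)$. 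The interchange of inner product and Bochner integral is justified because $g_\eps$ is bounded.

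\textbf{Step 3 (the AIPW algebra).} This step reuses the computation in the proof of \cref{lem:first_order_EIF}, replacing $\upsilon_{a'}^{(P_1,P_0)}$ by $(2a'-1)u$. That substitution turns $\upsilon_a/e_P(a\mid x)$ into $f$.

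\begin{itemize}
\item First, use the definitions \eqref{eq:score_conditionals} of $s_X$ and $s_{Y\mid A,X}$. This rewrites each term as an integral against $s(z)\,P(dz)$, with the weights $\mathbbm 1(a=a')/e_P(a'\mid x)$.
\item Next, use the identities $P_{Y\mid a',X}h=P_{Y,A\mid X}\big(\mathbbm 1(A=a')h/e_P(a'\mid X)\big)$ to collect terms.
\item The result is the operator expression $\big(I-(I-(I-P_X)P_{A\mid X})P_{Y\mid A,X}\big)f$, paired with $s$ in $L^2(P)$.
\item This expression is $P$-centered, and the model is locally nonparametric, so it is the EIF.
\end{itemize}

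\textbf{Main obstacle.} Nothing here is deep. The only points needing care are the following.
\begin{itemize}
\item Checking that the cited \citet{luedtke2024one} result is stated for a generic bounded kernel, so it covers $\cG$ rather than only $\cH$.
\item Tracking the sign $(2a-1)$ correctly through the four additive components of the operator.
\item Confirming that $u$ is bounded, so that $f\in L^2(P)$ under strong positivity.
\end{itemize}
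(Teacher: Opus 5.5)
Your proposal is correct and follows essentially the same route as the paper. The paper obtains this EIF by composing the kernel-mean-embedding primitive with the squared RKHS-norm primitive and invoking the automatic-differentiation chain rule (Thm.~1 of \citet{luedtke2026simplifying}). That is exactly the computation you carry out by hand: \cref{lem:KME_pathwise_derivative} with $g_\eps$, then the AIPW algebra from the proof of \cref{lem:first_order_EIF} with $\upsilon_{a'}$ replaced by $(2a'-1)\bigl(\mathfrak{g}(P_1)-\mathfrak{g}(P_0)\bigr)$.
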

\begin{proof}
    This lemma follows by applying the automatic differentiation algorithm in Thm.~1 of \citet{luedtke2026simplifying} with the composition of two primitives from that work: the kernel mean embedding in Lem.~S11 in Appx.~C.2.5 and the squared (RKHS) norm in Appx.~C.4.3.
\end{proof}

\begin{lemma}[MMD second-order local parameter]\label{lem:MMD_second_order_EIF}
    The parameter $\ucM: \cP \to \R$ is second-
order pathwise differentiable at all $P \in \ucH_0$ with second-
order EIF, denoted by $\ddot{\ucM}_P\in L^2(P^{\otimes 2})$, and equal to 
\[
\ddot{\ucM}_P = (I - (I - (I - P_X)P_{A \mymid X}) P_{Y \mymid A, X})^{\otimes 2} g,
\]
where $g = (\omega_P \otimes \omega_P) \odot g_\eps$, and 
\[
\omega_P(a,x) = \of{\frac{a}{e_P(1 \mymid x)} - \frac{1-a}{e_P(0 \mymid x)}}.
\]
\end{lemma}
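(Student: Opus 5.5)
The plan is to mirror the proof of \cref{thm:second_order_EIF}, with the Sinkhorn divergence replaced by the squared RKHS norm. Write $\ucM = N \circ \Psi_\cG$, where $\Psi_\cG(P) = (\mathfrak{g}(P_1), \mathfrak{g}(P_0)) \in \cG \times \cG$ collects the counterfactual mean embeddings in the Gaussian RKHS $\cG$, and $N(f,g) = \tfrac12\|f-g\|_\cG^2$.

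First I would record the derivatives of $N$. It is a continuous quadratic form on $\cG\times\cG$, hence Fréchet (so Hadamard) differentiable to all orders. Its first derivative is $N'[f,g](h_1,h_2) = \langle f-g, h_1-h_2\rangle_\cG$. Its second derivative is $N''[f,g](h_1,h_2) = \|h_1-h_2\|_\cG^2$, which does not depend on $(f,g)$.

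The pathwise derivative of $\Psi_\cG$ is given by \cref{lem:KME_pathwise_derivative}, applied with the bounded kernel $g_\eps$ in place of $k$; this application is legitimate because that lemma only uses boundedness and strong positivity. Under the null $P\in\ucH_0$ we have $\mathfrak{g}(P_1) = \mathfrak{g}(P_0)$, so $N'[\Psi_\cG(P)] \equiv 0$. Repeating the add-and-subtract argument of the proof of \cref{thm:second_order_EIF}, the term involving the second-order derivative of $\Psi_\cG$ is multiplied by $N'$ and therefore vanishes. This yields
\[
D^2\ucM_P(s) = \|D\psi^1_P(s) - D\psi^0_P(s)\|_\cG^2 .
\]
No Hadamard-operator machinery is needed here. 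The remainder term is
\[
N(\Psi_\cG(P_t)) - N(\Psi_\cG(P)) - N'(\cdot) - \tfrac12 N''(\cdot) = 0
\]
exactly, because $N$ is quadratic. So only $\|\Psi_\cG(P_t)-\Psi_\cG(P) - tD\Psi_\cG(s)\|_\cG = o(t)$ is needed, together with $N'=0$, to identify the limit of $2t^{-2}\{\ucM(P_t)-\ucM(P) - t D\ucM_P(s)\}$.

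Next I would expand this norm as a double integral. Writing $\gamma^a$ for the signed measure $h\mapsto \int\int h(y)\{s_{Y\mid A,X}(y\mid a,x)+s_X(x)\}P_{Y\mid A,X}(dy\mid a,x)P_X(dx)$, the reproducing property gives
\[
\|D\psi^1_P(s)-D\psi^0_P(s)\|_\cG^2 = \iint g_\eps(y_1,y_2)\,d(\gamma^1-\gamma^0)(y_1)\,d(\gamma^1-\gamma^0)(y_2).
\]
Using the same manipulations as in the proof of \cref{lem:first_order_EIF}, each factor $d(\gamma^1-\gamma^0)$ is rewritten as $\int [Q_{P,z}\,(\omega_P \cdot h)](z)\, s(z)\,P(dz)$, where $Q_{P,z} = \ucE_z - (\ucE_{a,x} - (\ucE_x - P_X)P_{A\mid X})P_{Y\mid A,X}$. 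The sign in $\omega_P$ comes from taking the difference of the $a=1$ and $a=0$ terms, exactly as in the displayed expression in the proof of \cref{thm:second_order_EIF}. Applying this in each coordinate of $g_\eps$ via Fubini, which is justified because $g_\eps$ and $\omega_P$ are bounded by strong positivity, gives
\[
D^2\ucM_P(s) = \iint \ddot{\ucM}_P(z_1,z_2)\,s(z_1)s(z_2)\,dP(z_1)\,dP(z_2),
\]
with $\ddot{\ucM}_P = Q^{\otimes 2}_P[(\omega_P\otimes\omega_P)\odot g_\eps]$.

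Finally I would check the remaining requirements. $\ddot{\ucM}_P$ lies in $L^2(P^{\otimes 2})$ because it is bounded. It is symmetric because $g_\eps$ is symmetric. It is degenerate because $Q_P$ maps into $P$-mean-zero functions, $Q_P$ being the AIPW-centering operator of \cref{lem:first_order_EIF}. The first-order condition in \eqref{eq:second-order_IF_conditions} holds trivially, since both sides vanish under the null: the first-order EIF is proportional to $\mathfrak{g}(P_1)-\mathfrak{g}(P_0)=0$ by \cref{lem:MMD_first_order_EIF}, and degeneracy kills the right-hand side. Because the model is nonparametric, this influence function is the efficient one.

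The main obstacle I expect is the bookkeeping in the operator rewriting, that is, verifying that $\gamma^1-\gamma^0$ pairs with $h$ exactly as $\langle Q_P(\omega_P h), s\rangle_{L^2(P)}$, including the $s_X$ terms. I would handle this by reusing verbatim the identity $P_{Y\mid a,X}h = P_{Y,A\mid X}(\mathbbm{1}\{A=a\}h/e_P(a\mid X))$ from that proof.
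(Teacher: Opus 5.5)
Your proposal is correct and follows the same route as the paper's proof. Under the null the cross term involving $\psi^1(P)-\psi^0(P)$ vanishes, leaving $D^2\ucM_P(s)=\|D\psi^1_P(s)-D\psi^0_P(s)\|_\cG^2$, which is then rewritten via \cref{lem:KME_pathwise_derivative} as the $Q^{\otimes 2}$-transform of $(\omega_P\otimes\omega_P)\odot g_\eps$. Your exact quadratic expansion of $N$ is a mild improvement: it needs only the first-order expansion of $t\mapsto\Psi_\cG(P_t)$, whereas the paper's product-rule computation formally invokes $\frac{d^2}{dt^2}(\psi^1(P_t)-\psi^0(P_t))$. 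You also check symmetry, degeneracy and the first-order condition explicitly, which the paper leaves implicit.
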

\begin{proof}
    Consider a submodel $(P_t: t\in[0,\delta)]) \in \fP(P, \cP, s)$ starting at $P_0 = P$. Then, the second-order local parameter of $\ucM$ can be derived as 
    \begin{align*}
        \frac{d^2}{dt^2} \ucM(P_t) \Big \vert_{t=0} &= \norm{\frac{d}{dt} \of{\psi^1(P_t) - \psi^0(P_t)}}^2_\cH\Big \vert_{t=0} + \iprod{\psi^1(P_t) - \psi^0(P_t), \frac{d^2}{dt^2}(\psi^1(P_t) - \psi^0(P_t))}_\cH\Big \vert_{t=0}\\
        &= \norm{D\psi^1_P(s) - D\psi^0_P(s)}_\cH^2 + 0.
    \end{align*}
    The last equality follows from the fact that, under the null, $\psi^1(P) = \psi^0(P)$. Using the form of $D\psi^1_P$ and $D\psi^0_P$ from \cref{lem:KME_pathwise_derivative}, we have that $\frac{d^2}{dt^2} \ucM(P_t) \Big \vert_{t=0}$ is equal to
    \begin{align*}
        \int \int  &(\ucE_z - (\ucE_{a,x} - (\ucE_x - P_X)P_{A \mymid X}) P_{Y \mymid A, X}) (\ucE_z^\prime - (\ucE_{a^\prime,x^\prime} - (\ucE_x^\prime - P_X)P_{A \mymid X}) P_{Y \mymid A, X}) \, \\
        &\times g(z, z^\prime)\, s(z)\, s(z^\prime) \, dP(z) \, dP(z^\prime),
    \end{align*}
    where $g(z, z^\prime) = \omega_P(a,x) \,\omega_P(a^\prime,x^\prime) \, g_\eps(y, y^\prime)$. Since the above expression is bilinear in $s$, it is equal to $D^2 \ucM_P(s)$. The proof concludes by noting that (i) $D^2 \ucM_P(s) = \int \langle \ddot{\ucM}_P(z, \cdot), s\rangle_{L^2(P)} \, s(z)\, dP(z)$ and (ii) $\ddot{\ucM}_P$ is $P^{\otimes 2}$-square integrable since the kernel is bounded and strong positivity holds.
\end{proof}

\cref{lem:MMD_first_order_EIF} and \cref{lem:MMD_second_order_EIF} yield the first and second order EIF used to construct the one-step estimator of $\ucM(P^*)$ and the test statistic for testing the null hypothesis. 
The one-step estimator of $\ucM(P^*)$ is 
\[
\wh{\ucM} = \ucM(\hP) + P_n \dot\ucM_{\hP}.
\]
The corresponding test statistic is
\[
\overline{\ucM} = \ucM(\hP) + P_n \dot\ucM_{\hP} + \frac{\U_n}{2} \ddot{\ucM}_{\hP},
\]
where $\U_n$ denotes the U-statistic operator constructed using the same sample split as $P_n$.

\paragraph{Finite sample implementation.}
From a computational perspective, the first-order EIF of MMD is obtained by applying \cref{alg:first-order-eif} with $\bU^1 = \bG(\bP^1 - \bP^0)$ and $\bU^2 = -\bG(\bP^1 - \bP^0)$. The second-order EIF is computed via \cref{alg:second-order-eif} by omitting the intermediate operator application (Step~3) and replacing $\bK$ with $\bG$ in Step~4.

\section{Algorithms for Finite-Sample Implementation}\label{appendix:finite_sample_alg}

Here we present the algorithms for debiasing the STE estimator using first-order EIF computations in \cref{alg:first-order-eif} and second-order EIF computations in \cref{alg:second-order-eif}. Refer to the precomputations in \cref{sec:finite-sample}.

\begin{algorithm}[tb]
\caption{First-order EIF evaluations}
\label{alg:first-order-eif}
\begin{algorithmic}[1]
\STATE {\bfseries Input:} $\bU^1,\bU^0,\bP, \cD_n^1$ \\
\STATE {\bfseries Output:} $\bI^1 \in \mathbb R^n$
\STATE $\bW^1 \gets a_i/\bE_i$ and $\bW^0 \gets (1-a_i)/(1-\bE_i)$
\STATE $T_1 \gets \bW^1 \odot \bU^1 + \bW^0 \odot \bU^0$
\STATE $T_2 \gets \bW^1 \odot \bD_{A,X}(\bP\bU^1) + \bW^0 \odot \bD_{A,X}(\bP\bU^0)$
\STATE $T_3 \gets \bD_{1,X}(\bP\bU^1) + \bD_{0,X}(\bP\bU^0)$
\STATE $T_4 \gets \mathbf (1_n^\top T_3 / n) 1_n$
\STATE $\bI^1 \gets T_1 - T_2 + T_3 - T_4$
\end{algorithmic}
\end{algorithm}

\begin{algorithm}[tb]
\caption{Second-order EIF evaluations}
\label{alg:second-order-eif}
\begin{algorithmic}[1]
\STATE {\bfseries Input:}  $\bP,\bP^1, \bE,\bF,\bG$ \\
\STATE {\bfseries Output:}  $\bI^2 \in \mathbb R^{n\times n}$
\STATE $\bX_{ij} \gets \exp((\bF_i+\bF_j-\bG_{ij})/\varepsilon)$
\STATE $\mathbf\Omega_{ij} \gets 1 / \off{\1(i=1)\bE_j - \1(i=2)(1-\bE_j)}$
\STATE $\bM \gets (I-(\bP^1\odot\bX)^2)^{-1}(\bX-\mathbf 1_{n\times n}/n)$
\FOR{$i=1$ {\bfseries to} $2$}
    \STATE $T_1 \gets (\bD_{A,X}\mathbf\Omega \otimes \bD_{A,X}\mathbf\Omega)\odot\bM$
    \STATE $T_2 \gets (\bD_{A,X}\mathbf\Omega \otimes \bD_{A,X}\mathbf\Omega)\odot\bD_{A,X}(\bP\bM)$
    \STATE $T_3 \gets \big((\bP\bM)_{[1,:,:]}-(\bP\bM)_{[2,:,:]}\big)$
    \STATE $T_4 \gets \mathbf 1_n^\top T_3/n$
    \STATE $\bM \gets (T_1 - T_2 + T_3 - T_4)^\top$
\ENDFOR
\STATE Set $\bI^2 = \bM$
\end{algorithmic}
\end{algorithm}

\section{Max-Aggregated Test}\label{appendix:agg}

Recall that $\cT_{n,\eps}=n\overline{\STE}$ is the test statistic for a fixed $\eps > 0$ with explicit dependence on $n$ and $\eps$. Under the null, from \cref{thm:null_asymptotics}, we have that $\cT_{n, \eps} = n{\U_n} h_\eps / 2 + o_p(1)$, where $h_\eps$ is the symmetric, one-degenerate kernel $h_\eps = \ddot{\STE}_{P^*}\in L^2(P^*\otimes P^*)$. For each $\eps$, define the Hilbert-Schmidt operator $(T_\eps f)(z)=\int h_\eps(z,z')f(z')\,dP^*(z')$.
Since $h_\eps$ is symmetric, $T_\eps$ is compact and self-adjoint on $L^2(P^*)$. The spectral theorem yields that there exist real eigenvalues $(\lambda_{j,\eps})_{j \ge 1}$ and an orthonormal family $(\phi_{j,\eps})_{j \ge 1}$ such that $h_\eps(z,z')=\sum_{j=1}^\infty \lambda_{j,\eps}\phi_{j,\eps}(z)\phi_{j,\eps}(z')$ in $L^2(P^*\otimes P^*)$ and $\sum_{j=1}^\infty \lambda_{j,\eps}^2<\infty$.
Because $h_\eps$ is one-degenerate, every eigenfunction corresponding to a nonzero eigenvalue belongs to $L_0^2(P^*)$. 

Let $G=\{G(f):f\in L_0^2(P^*)\}$ be an isonormal Gaussian process on $L_0^2(P^*)$. The Gaussian coordinates $(G(\phi_{j,\eps}))_{j\ge1}$ are i.i.d.\ standard normal random variables. Let $I_2$ denote the second-order Wiener--It\^o integral with respect to $G$. Then for a tensor product $f \otimes g$, with $f, g \in L_0^2(P^*)$, we have that $I_2(f \otimes g) = G(f)G(g) - \iprod{f,g}_{L^2(P^*)}$. Therefore, 
\begin{equation}\label{eq:second_gaussian_chaos}
I_2(h_\eps) = I_2\of{\sum_{j=1}^\infty \lambda_{j, \eps} \of{\phi_{j, \eps} \otimes \phi_{j, \eps}}} =  \sum_{j=1}^\infty \lambda_{j, \eps} I_2\of{\phi_{j, \eps} \otimes \phi_{j, \eps}} = \sum_{j=1}^\infty \lambda_{j, \eps} \of{G(\phi_{j, \eps})^2 - 1}.
\end{equation}
The above is the spectral representation of the so-called Gaussian chaos random variable \citep[Chap.~2]{janson1997gaussian} associated with the symmetric first-order degenerate kernel $h_\eps$.

We now provide results we use to establish the asymptotic distribution of the STEAgg test statistic. For each fixed $\eps>0$, \cref{thm:null_asymptotics} proves that
\[
\mathcal T_{n,\varepsilon}
=
\frac{n}{2}U_n h_\varepsilon + o_p(1)
\xrightarrow{d}
\frac12\sum_{j=1}^\infty \lambda_{j,\varepsilon}(N_j^2-1),
\]
where $(N_j)_{j\ge1}$ are i.i.d.\ standard normal random variables. The distributional limit follows from standard U-statistics limit theory \citep[Thm.~1]{leucht2013degenerate}. Using the fact that $(G(\phi_{j, \eps}))_{j=1}^\infty$ are independent normal random variables and \eqref{eq:second_gaussian_chaos}, we have that $\cT_{n, \eps} \xrightarrow[]{d} W(\eps) := \frac{I_2(h_\eps)}{2}$. 

\begin{lemma}[Joint null distribution of test statistics]\label{lem:joint_eps_null_distribution}
    Suppose the conditions of \cref{thm:null_asymptotics} hold for each $\eps \in \Xi$, then
    \begin{equation}\label{eq:joint_eps_null_distribution}
        \of{\cT_{n, \eps_1}, \dots, \cT_{n, \eps_m}} \xrightarrow[]{d} \of{W(\eps_1), \dots, W(\eps_m)}, \quad \text{in } \R^m.
    \end{equation}
    The limit vector is a jointly defined second-order Gaussian-chaos vector. Explicitly, for every $a = (a_1, \dots, a_m) \in \R^m$,
    \[
    a^\top \of{W(\eps_1), \dots, W(\eps_m)} = \frac{1}{2} I_2\of{\sum_{r=1}^m a_r h_{\eps_r}}.
    \]
    Particularly, $\Cov(W(\eps_i), W(\eps_j)) = \frac{1}{2} \iprod{h_{\eps_i}, h_{\eps_j}}_{L^2(P^* \otimes P^*)}$.
    Each marginal $W(\eps_r)$ admits the expansion
    \[
    W(\eps_r) = \sum_{j=1}^\infty \frac{\lambda_{j, \eps_r}}{2} \of{G(\phi_{j, \eps_r})^2 - 1},
    \]
    with dependence across $r$ induced by the common isonormal process $G$. 
\end{lemma}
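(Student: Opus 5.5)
The plan is to reduce the vector statement to a scalar one via the Cramér--Wold device, and then reuse the single-kernel limit theory already invoked for \cref{thm:null_asymptotics}.

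\textbf{Step 1 (reduction to U-statistics).} By \cref{thm:null_asymptotics}, applied separately for each $\eps_r\in\Xi$, we have $\cT_{n,\eps_r}=\tfrac n2\U_n h_{\eps_r}+o_p(1)$. A finite vector of $o_p(1)$ terms is $o_p(1)$ in $\R^m$, so by Slutsky it suffices to establish the joint limit of $\big(\tfrac n2 \U_n h_{\eps_1},\dots,\tfrac n2\U_n h_{\eps_m}\big)$.

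\textbf{Step 2 (Cramér--Wold).} Fix $a\in\R^m$. By linearity of $\U_n$, $\sum_r a_r\tfrac n2\U_n h_{\eps_r}=\tfrac n2\U_n h_a$ with $h_a:=\sum_r a_r h_{\eps_r}$. This kernel $h_a$ is again symmetric, lies in $L^2(P^*\otimes P^*)$, and is one-degenerate, since each $h_{\eps_r}$ is $P^*$-centered in each argument. Applying the degenerate U-statistic limit theorem \citep[Thm.~1]{leucht2013degenerate} to $h_a$ gives
\[
\tfrac n2\U_n h_a\xrightarrow{d}\tfrac12\sum_j\lambda_j(h_a)(N_j^2-1)\overset{d}{=}\tfrac12 I_2(h_a),
\]
where the last identity follows from \eqref{eq:second_gaussian_chaos} with $h_\eps$ replaced by $h_a$. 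Since $I_2$ is linear on $L^2_{\mathrm{sym}}$ for a fixed isonormal process $G$, we have $\tfrac12 I_2(h_a)=\sum_r a_r W(\eps_r)$ with $W(\eps_r)=\tfrac12 I_2(h_{\eps_r})$, all built from the same $G$. Cramér--Wold then yields \eqref{eq:joint_eps_null_distribution}, and the explicit form of $a^\top W$ is exactly this identity.

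\textbf{Step 3 (covariance and marginals).} Use the Wiener--Itô isometry $\E[I_2(f)I_2(g)]=2\langle f,g\rangle_{L^2(P^{*\otimes2})}$ for symmetric $f,g$. This gives $\Cov(W(\eps_i),W(\eps_j))=\tfrac14\cdot2\langle h_{\eps_i},h_{\eps_j}\rangle=\tfrac12\langle h_{\eps_i},h_{\eps_j}\rangle$. The marginal expansion is \eqref{eq:second_gaussian_chaos} divided by $2$. Interchanging $I_2$ with the infinite sum is justified by $L^2$-convergence of the spectral expansion together with the isometry.

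\textbf{Main obstacle.} The delicate point is that the single-kernel theorem gives only a distributional limit, not a coupling. Identifying the limit of $\tfrac n2 \U_n h_a$ as the \emph{same} functional $I_2(h_a)$ of a common $G$ therefore requires that the law of $\tfrac12\sum_j\lambda_j(h_a)(N_j^2-1)$ coincide with that of $\tfrac12 I_2(h_a)$. This holds because both are determined by the spectrum of $h_a$, via the spectral representation applied to $h_a$ itself. If one prefers a direct argument instead, I would truncate each $h_{\eps_r}$ to a finite-rank projection onto a common finite orthonormal family in $L_0^2(P^*)$. On that family the multivariate CLT gives joint convergence of $(\sqrt n P_n\phi_k)_k$ to $(G(\phi_k))_k$. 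The truncation error is then controlled uniformly in $n$ by the $L^2$ variance bound for degenerate U-statistics used in \cref{lem:U-process_term}.
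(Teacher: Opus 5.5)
Your proposal is correct and follows the paper's proof essentially step for step. You absorb the finitely many $o_p(1)$ remainders, apply Cram\'er--Wold with the combined one-degenerate kernel $h_a=\sum_r a_r h_{\eps_r}$ together with the single-kernel degenerate U-statistic limit, and read off the covariance from the Wiener--It\^o isometry. Your ``main obstacle'' is resolved exactly as you say: Cram\'er--Wold only requires that the law of each linear combination match that of $\tfrac12 I_2(h_a)$, which is fixed by the spectrum of $h_a$. The paper uses this point implicitly.
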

\begin{proof}
    Since $\Xi$ is finite and for each $\eps \in \Xi$, $\cT_{n, \eps} = \frac{n}{2}\U_n h_{\eps} + r_{n, \eps}$ such that $r_{n, \eps} = o_p(1)$, we have that $\sup_{\eps \in \Xi} \abs{r_{n, \eps}} = o_p(1)$. Thus it suffices to prove joint convergence of the leading $U$-statistic vector $\of{\U_n h_{\eps_1}, \dots, \U_n h_{\eps_m}}$. By the Cramér-Wold device \citep[Thm.~2.1]{van2000asymptotic}, it is enough to show that for every fixed $a=(a_1,\dots,a_m)\in\R^m$,
    \[
    \sum_{r=1}^m a_r \cT_{n, \eps_r}
    = \frac{n }{2}\U_n\of{\sum_{r=1}^m a_r h_{\eps_r}} \xrightarrow[]{d} \sum_{r=1}^m a_r W(\eps_r).
    \]
    Set $h_a := \sum_{r=1}^m a_r h_{\eps_r}$. Because the grid is finite and each $h_{\eps_r}$ is measurable, symmetric, and square-integrable, the same properties hold for $h_a$. Likewise, one-degeneracy is preserved under finite linear combinations. Hence, using the same asymptotic result for U-statistic, associated with kernel $h_a$, we have that
    \[
    \frac{n }{2}\U_n h_a \xrightarrow[]{d} \frac{1}{2} I_2(h_a) = \frac{1}{2}I_2 \of{\sum_{r=1}^m a_r h_{\eps_r}}.
    \]
    This proves the desired joint convergence. The covariance identity follows because of the isometry property of multiple Wiener–Itô integrals \citep[Def.~3.1]{nualart2019malliavin} 
    \begin{align*}
        \Cov(W(\eps_i), W(\eps_j)) &= \frac{1}{4} \Cov(I_2(h_{\eps_i}), I_2(h_{\eps_j})) = \frac{1}{4} \E\off{I_2(h_{\eps_i}) I_2(h_{\eps_j})} = \frac{1}{2} \iprod{h_{\eps_i}, h_{\eps_j}}_{L^2(P^* \otimes P^*)}.
    \end{align*}

\end{proof}
For a fixed $0 < \beta < 1$ and for each $r$, let $F_r$ denote the distribution function of $W(\varepsilon_r)$, and let $q_{\varepsilon_r}:=F_r^{-1}(1-\beta)$.
If $h_{\varepsilon_r}\not\equiv 0$, then $W(\varepsilon_r)$ is a non-degenerate second-order Gaussian-chaos random variable and $F_r$ is continuous.

\begin{proof}[Proof \cref{thm:agg_null_distribution}]
  
    Suppose the null holds. Under the assumption of the theorem, we have that $q_{n, \eps_r} \xrightarrow[]{P} q_{\eps_r}$ for all $r \in \{1, \dots, m\}$ and $(q_{n, \eps_1}, \dots, q_{n, \eps_m}) \to (q_{\eps_1}, \dots, q_{\eps_m})$ in $(0, \infty)^m$. 
    By Slutsky's theorem,
    \[
        \of{\frac{\cT_{n, \eps_1}}{q_{n, \eps_1}}, \dots, \frac{\cT_{n, \eps_m}}{q_{n, \eps_m}}} \xrightarrow[]{d} \of{\frac{W(\eps_1)}{q_{\eps_1}}, \dots, \frac{W(\eps_m)}{q_{\eps_m}}}, \quad \text{in } \R^m.
    \]
    Applying the continuous mapping theorem to $\max(\cdot) : \R^m\to \R$ gives the first result.   

    Now consider the fixed alternative $P^*\notin\mathcal H_0$.
    By the assumptions of \cref{lem:first_order_drift_term} and \cref{lem:first_order_remainder_term}, together with the
    displayed $L^2$ convergence assumption, the same argument as in
    \cref{lem:power_result} implies that for each $\eps \in \Xi$, $\mathcal T_{n,\eps}\xrightarrow{p}\infty$.
    On the other hand, since
    $\ddot{\STE}_{\eps,P^*}\in L^2(P^*\otimes P^*)$ for each $\eps\in\Xi$,
    the associated Hilbert--Schmidt operator has square-summable eigenvalues $\sum_{j=1}^\infty \lambda_{\eps,j}^2<\infty$.
    Therefore $W(\eps)=\sum_{j=1}^\infty \lambda_{\eps,j}(N_j^2-1)$
    has finite quantiles. Since $q_{n,\eps}\xrightarrow{p}q_\eps$ with $q_\eps>0$, we have that $q_{n,\eps}=O_p(1)$ and $\frac{1}{q_{n,\eps}}=O_p(1)$ for each $\eps\in\Xi$. It follows that $\frac{\mathcal T_{n,\eps}}{q_{n,\eps}}
    \xrightarrow{p}\infty$ for all $\eps \in \Xi$.
    Hence, 
    \[
    \mathcal T_n^{\mathrm{agg}}
    =
    \max_{\eps\in\Xi}\frac{\mathcal T_{n,\eps}}{q_{n,\eps}}
    \ge
    \frac{\mathcal T_{n,\eps_1}}{q_{n,\eps_1}}
    \xrightarrow{P}\infty.
    \]
    
    Finally, because $\Xi$ is finite and each $W(\eps)/q_\eps$ is tight, the random variable $W^\agg$ has finite $(1-\alpha)$-quantile.
    Therefore,
    \[
    \Pr\!\left(\mathcal T_n^{\mathrm{agg}}>q_{1-\alpha}(W^{\mathrm{agg}})\right)\to 1,
    \]
    so the proposed test rejects with probability tending to $1$.
\end{proof}

\section{Experimental Details}\label{appendix:experiments}
For our experiments, we use a sample splitting approach in which an effective sample of size $2n$ is divided evenly. The first $n$ observations are used to construct an initial estimator $\hP$ of $P^*$, and the remaining $n$ observations are used to form the empirical distribution $P_n$.

\subsection{Datasets}\label{appendix:datasets}
\subsubsection{Simulations.}\label{appendix:datasets-simulations}

The covariates $X \sim \cN(\mathbf{0}_3, I_3)$, and the treatment assignment model is logistic such that, 
$$A \mymid X = x \sim \text{Bernoulli}(\text{expit}(\mathbf{1}_3^\top x))).$$
Throughout Exp (i) and (ii), we maintain a fixed covariance matrix $\Sigma$. For Exp (i) (mean difference), the outcome $Y$ given $(A, X)$ is sampled as
\[
Y \mymid (A=a,X=x) \sim \begin{cases}
    \cN({ \mathbf{0}_2,\, \Sigma}) & \quad \text{if } a = 0,\\
    \cN({ \theta \mathbf{1}_2,\, \Sigma}) & \quad \text{if } a = 1.
\end{cases},
\]
For Exp (ii) (covariance difference), the outcome $Y$ given $(A, X)$ is sampled as
\[
Y \mymid (A=a,X=x) \sim \begin{cases}
    \cN({ \mathbf{0}_2,\, \Sigma}) & \quad \text{if } a = 0,\\
    \cN({  \mathbf{0}_2,\, \Sigma + \theta (uv^\top + vu^\top)}) & \quad \text{if } a = 1,
\end{cases}
\]
where $u$ and $v$ are the eigenvectors of $\Sigma$.
Figure~\ref{fig:simulations-mean_cov_exp} illustrates the resulting $95\%$ ellipsoids for varying $\theta$. 
\begin{figure}[h]
  \vskip 0.2in
  \begin{center}
    \centerline{\includegraphics[width=0.5\textwidth]{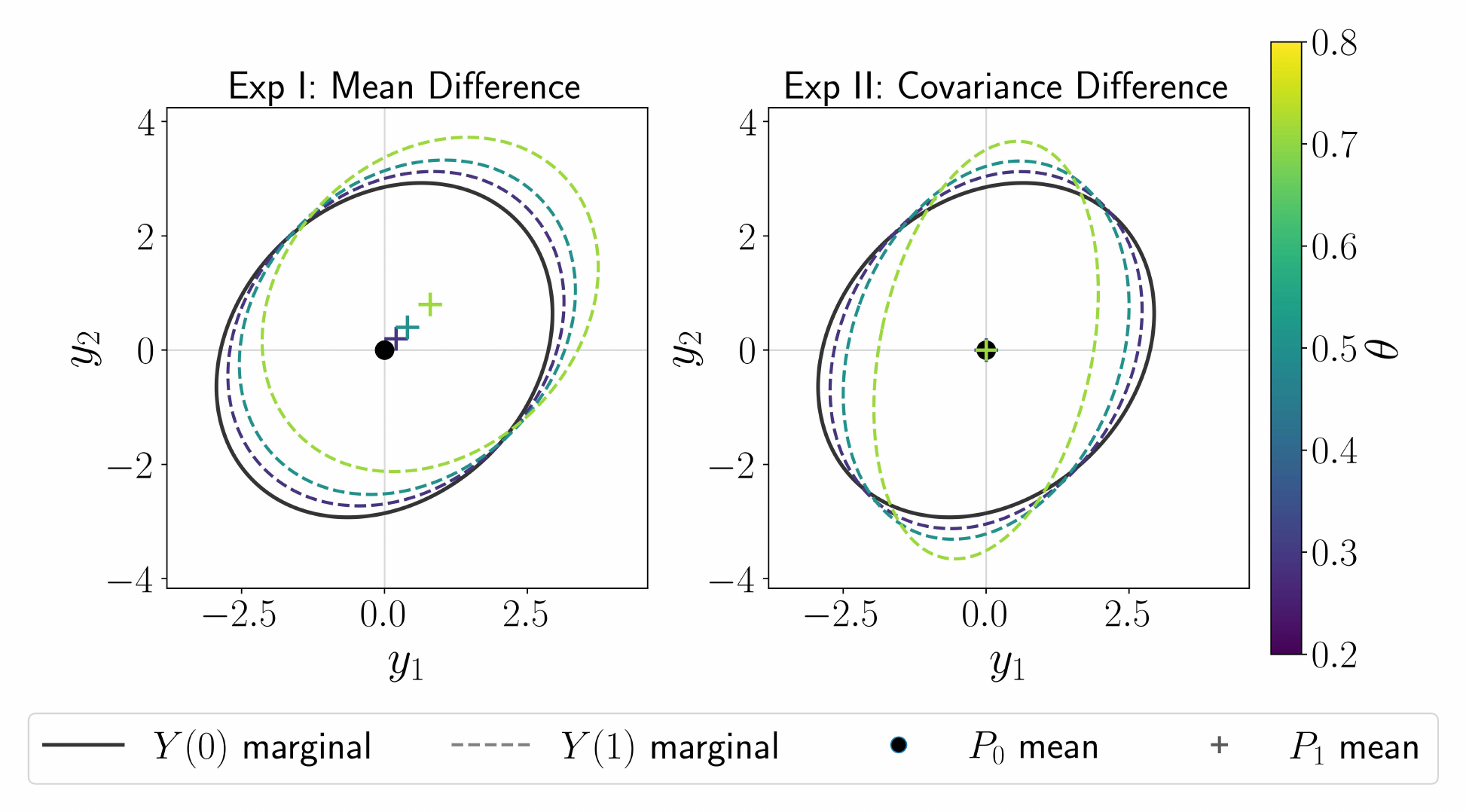}}
    \caption{Mean and covariance ellipsoids ($95\%$) of counterfactual outcome distributions under varying gap between $P_0$ and $P_1$, parametrized by $\theta$. Exp (i): Mean difference experiment $P_0 = \N(\mathbf{0}_2, \Sigma)$ and $P_1 = \N(\theta \mathbf{1}_2, \Sigma)$. Exp (ii): Covariance difference experiment $P_0 = \N(\mathbf{0}_2, \Sigma)$ and $P_1 = \N(\mathbf{0}_2, \Sigma + \theta \Delta)$.
    }
    \label{fig:simulations-mean_cov_exp}
  \end{center}
\end{figure}

\paragraph{Simulations for aggregated test.}

We replicate the simulation setup of Exp (ii), but now evaluate the tests over a grid of \(\eps\) values of the form \(\eps=\eta m\), where \(m\) is the median heuristic and \(\eta \in \{0.25,0.5,1,2,4\}\). For each fixed \(\eps\), we compute the corresponding MTE- and STE-based test statistics and p-values, and we additionally evaluate the aggregated procedures MTEAgg and STEAgg. The results are shown in \cref{fig:agg-results} under both the null (Exp (ii) with \(\theta=0.0\)) and an alternative (Exp (ii) with \(\theta=0.8\)). Under the null, both MTE- and STE-based tests exhibit lower type-I error as \(\eps\) increases. Under the alternative, for STE, power initially improves with \(\eps\), but eventually deteriorates when \(\eps\) becomes too large, reflecting oversmoothing and a loss of sensitivity to distributional differences; in this regime, Sinkhorn divergence begins to behave more like MMD \citep{feydy2019interpolating}. Interestingly, for STE power is also lower when \(\eps\) is too small, which is consistent with the numerical instability of Sinkhorn iterations at low regularization. Overall, this experiment recovers the expected bias-variance tradeoff in the choice of \(\eps\) for STE. Among the values considered, the strongest performance is attained near the median heuristic, which we therefore recommend as a practical default.

\begin{figure}[t]
  \vskip 0.2in
  \begin{center}
    \centerline{\includegraphics[width=0.9\textwidth]{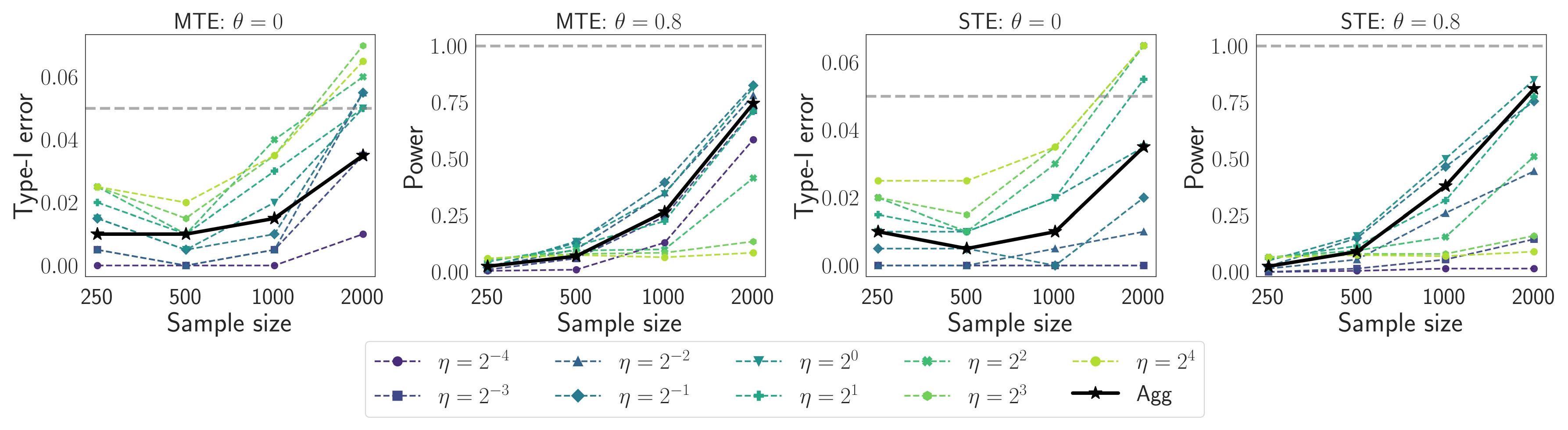}}
    \caption{Type-I error and power in Exp (ii) for the aggregated procedures {MTEAgg} and {STEAgg}, together with the corresponding MTE- and STE-based tests evaluated on a finite grid of kernel bandwidth parameters $\eps = \eta m$; $m=$ median heuristic}
    \label{fig:agg-results}
  \end{center}
\end{figure}

\subsubsection{PCam Dataset}\label{appendix:datasets-pcam}
We provide here the exact data-generating mechanism used for the image-outcome experiments. For each unit, covariates are generated as $X \sim \mathcal N(0,I_5)$. Conditional on $X=x$, a latent disease status $S \in \{0,1\}$ is sampled from a Bernoulli distribution with success probability $\mathrm{expit}(\pi_0 + \pi_1^\top x)$, where $S=1$ denotes metastatic tissue. Treatment assignment $A \in \{0,1\}$ is generated independently given $X=x$ from a Bernoulli distribution with success probability $\mathrm{expit}(\gamma_0 + \gamma_1^\top x)$, inducing confounding through shared dependence on $X$. 

Outcome images $Y$ are sampled from the empirical PCam image distributions conditional on $(S,A,X)$. Specifically, if $S=0$, outcomes are drawn from the empirical distribution supported on $\mathcal D_0$. If $S=1$ and $A=0$, outcomes are drawn from $\mathcal D_1$. If $S=1$ and $A=1$, outcomes are drawn from $\mathcal D_0$ with probability $q(x)=\mathrm{expit}(\beta_0 + \beta_1^\top x)$ and from $\mathcal D_1$ otherwise. This construction induces heterogeneous, covariate-dependent treatment effects on the counterfactual distributions while preserving overlap and positivity.

\begin{figure}[ht]
  \vskip 0.2in
  \begin{center}
    \centerline{\includegraphics[width=0.25\columnwidth]{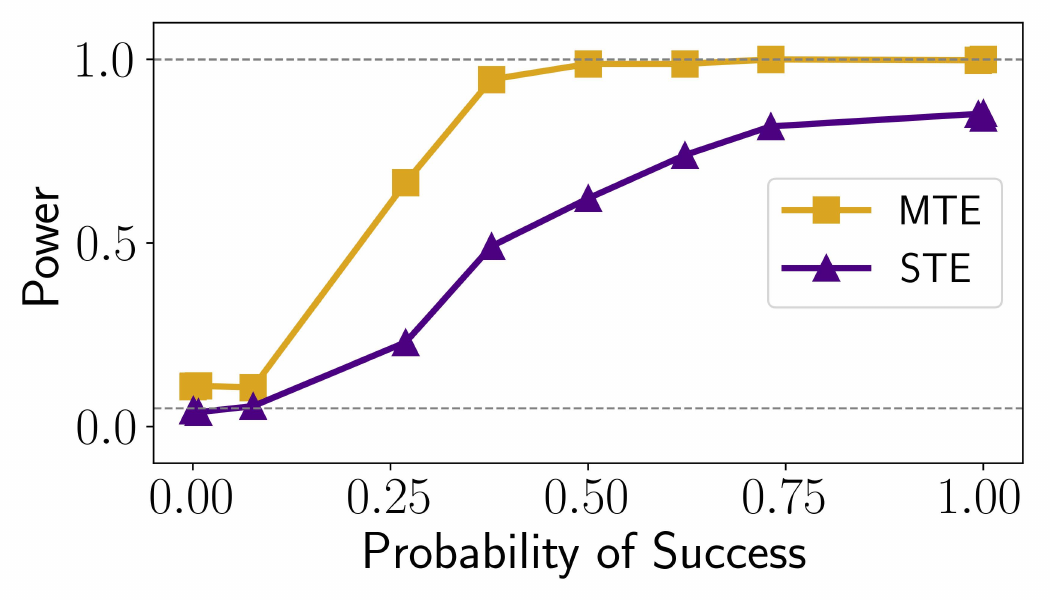}}
    \caption{
    Type 1 error (far left point) and power (all other points) of STE and MTE as a function of the treatment success probability for the PCam dataset.
    }
    \label{fig:pcam}
  \end{center}
\end{figure}

\subsection{Compute details}\label{appendix:compute}

The code was written in Python 3 and we use PyTorch for automatic differentiation. All our experiments were conducted on a CUDA-enabled machine with 12GB GPU memory, 64GB RAM, and 24 vCPUs. Although the experiments were run on a GPU, we observe similar complexity trends for CPU implementations. \cref{fig:runtime} plots the average wall-clock time (averaged over 20 Monte Carlo simulations) and memory usage across increasing sample sizes $n$ for both GPU and CPU implementations.

\begin{figure}[t]
  \vskip 0.2in
  \begin{center}
    \centerline{\includegraphics[width=0.9\textwidth]{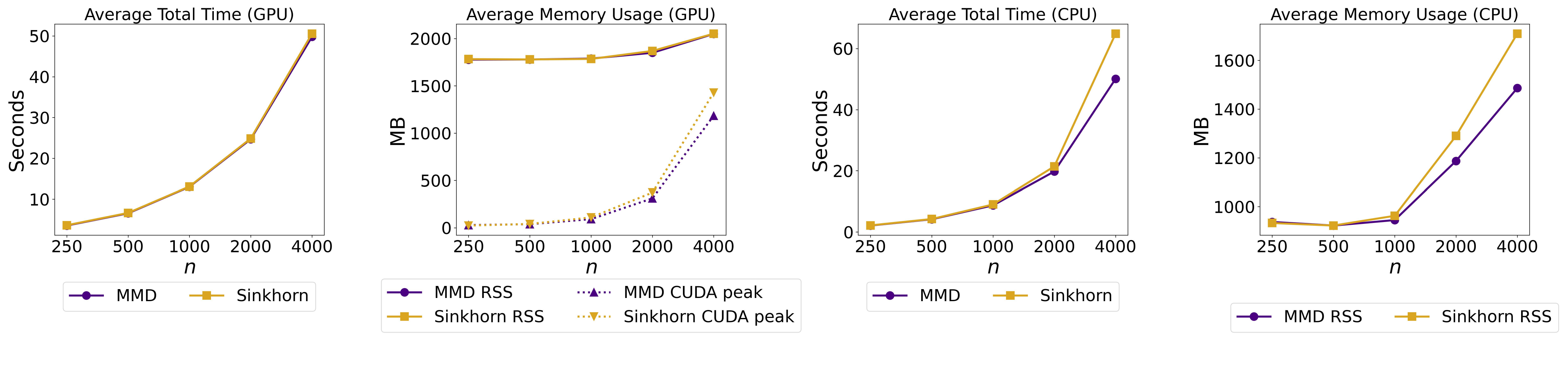}}
    \caption{Wall-clock runtime (in seconds) and memory usage (in MB) for the simulation setup across increasing sample sizes $n$, averaged over $20$ Monte Carlo simulations for both GPU and CPU implementations.}
    \label{fig:runtime}
  \end{center}
\end{figure}

\subsection{Acceleration recommendations}\label{appendix:acceleration}

We now make the computational bottlenecks of the second-order one-step STE estimator explicit and summarize practical acceleration strategies. From \cref{sec:finite-sample}, the total computational complexity is \(O(n^{2}(n+t))\), where $n$ is the sample size and \(t\) is the number of Sinkhorn iterations. This cost is driven primarily by the Sinkhorn algorithm, which contributes \(O(n^{2}t)\), and Algorithm 2, which contributes \(O(n^{3})\). In addition, once the Gram matrix of the second-order EIF has been formed, the second-order bias-correction term requires a U-statistic computation, which incurs an additional quadratic computational load. We now discuss standard acceleration techniques that can be incorporated directly into our framework. While we expect these strategies to yield empirical gains similar to those reported in the existing literature, a full asymptotic analysis of the resulting accelerated procedures is beyond the scope of the present paper.

\paragraph{Sinkhorn algorithm.}
To accelerate the Sinkhorn step, several complementary strategies are available. 
Greenkhorn \citep{altschuler2017near} replaces full alternating row/column normalizations by greedy coordinate updates, which often improves practical performance while retaining the same dense-kernel representation. Batch Greenkhorn methods \citep{kostic2022batch} extend this idea by updating batches of coordinates and provide refined convergence guarantees. These approaches aim to reduce the amount of scaling work required for convergence, but they do not by themselves eliminate the $O(n^2)$ cost of storing and multiplying by a dense Gibbs kernel. To address this bottleneck, Nystr\"om-Sinkhorn \citep{altschuler2019massively} approximates the Gibbs kernel $K_{ij}=e^{-C_{ij}/\varepsilon}$ using a low-rank Nystr\"om factorization, reducing per-iteration kernel-vector multiplication costs and memory usage from $O(n^2)$ to $O(nr)$, where $r$ is the approximation rank. These approaches can be incorporated into our one-step STE estimator to reduce the per-iteration computational cost. 

\paragraph{Algorithm 2 acceleration.}

A natural way to reduce the \(O(n^{3})\) cost of \cref{alg:second-order-eif} is to replace the dense discretized self-transport kernel \(\bX\), defined in line 3 by
\[
\bX_{ij}=\exp\!\left\{\frac{\bF_i+\bF_j-\bG_{ij}}{\eps}\right\},
\]
with a rank-\(r\) Nystr\"om approximation \(\tilde \bX=CW^{\dagger}C^{\top}=UV^{\top}\) with \(r\ll n\). This is well aligned with prior work on scalable entropic OT, where low-rank Nystr\"om approximations are used to compress the Gibbs kernel while preserving the Sinkhorn scaling structure \citep{altschuler2019massively}. In our setting, the cubic bottleneck arises because line 5 requires solving a dense \(n\times n\) linear system,
\[
\bM=\bigl(I-(\bP^{1}\odot \bX)^{2}\bigr)^{-1}\left(\bX-\mathbf 1\mathbf 1^{\top}/n\right),
\]
and because the subsequent contractions \(\bP\bM\) in lines 8-10 amount to repeated dense matrix multiplications, each of overall order \(n^{3}\). Replacing \(\bX\) by \(\tilde \bX\) lowers both costs: first, if \(A:= \bP^{1}\odot \tilde \bX\) is represented in low-rank form as \(A=\tilde U V^{\top}\), then
\[
A^{2}=(\tilde U V^{\top})(\tilde U V^{\top})=\tilde U B V^{\top}, \qquad B:=V^{\top}\tilde U \in \mathbb R^{r\times r},
\]
so \(\bigl(I-A^{2}\bigr)^{-1}\) can be applied via the Woodbury identity using only an \(r\times r\) inner solve, reducing line 5 from \(O(n^{3})\) to \(O(nr^{2}+r^{3})\) after the low-rank factors are formed. Second, every multiplication by an \(n\times n\) matrix generated from \(\bX\) can be evaluated through the factorization \(UV^{\top}\), costing \(O(nr)\) per vector and \(O(n^{2}r)\) per matrix block, so the contractions \(\bP\bM\) drop from \(O(n^{3})\) to \(O(n^{2}r)\). The remaining operations in the algorithm: forming \(\bX\), applying \(\mathbf \bD_{A,X}\), and elementwise products are at most quadratic, so under a Nystr\"om representation the total complexity of the second-order routine is reduced from \(O(n^{3})\) to roughly \(O(n^{2}r+nr^{2}+r^{3})\), with memory reduced from \(O(n^{2})\) to \(O(nr)\). Therefore, a Nystr\"om approximation to the discretized Sinkhorn self-transport kernel is a direct and computationally efficient way to scale the second-order correction while retaining the kernel structure induced by the entropic potentials.

\paragraph{\(U\)-statistic computation.}
The \(O(n^{2})\) complexity of the U-statistic, standard in kernel two-sample testing, can be mitigated using linear-time approximations \citep{gretton2012kernel} or incomplete U-statistics \citep{schrab2022efficient}. Both can be directly substituted into the second-order bias-correction term of the STE one-step estimator.


\end{document}